\def\BibTeX{{\rm B\kern-.05em{\sc i\kern-.025em b}\kern-.08em
    T\kern-.1667em\lower.7ex\hbox{E}\kern-.125emX}}
\newcommand\numberthis{\addtocounter{equation}{1}\tag{\theequation}}
\def \bX {{\mathbf{X}}}
\def \bx {{\mathbf{x}}}
\def \bZ {{\mathbf{Z}}}
\def \bV {{\mathbf{V}}}
\def \bu {{\mathbf{u}}}
\def \bW {{\mathbf{W}}}
\def \bA {{\mathbf{A}}}
\def \bB {{\mathbf{B}}}
\def \bC {{\mathbf{C}}}
\def \bD {{\mathbf{D}}}
\def \bE {{\mathbf{E}}}
\def \bI {{\mathbf{I}}}
\def \bP {{\mathbf{P}}}
\def \bLam {{\mathbf{\Lambda}}}
 \declaretheorem[name=Proposition]{proposition}
\newtheorem{example}{Example}
\newtheorem{theorem}{Theorem}
\newtheorem{lemma}{Lemma}
\newtheorem{remark}{Remark}
\newtheorem{corollary}{Corollary}
\newtheorem{definition}{Definition}
\newtheorem{assumption}{Assumption}
\newtheorem*{assumption*}{Assumption}
\newcommand{\emphblockoption}{drop shadow,
    colframe=black!60,
    colback=black!10,
    coltitle=white!, 
    left=.2pt,
    right=.2pt,
    boxrule=1pt,
    arc=1pt}
\title{Understanding Benign Overfitting in \\
Gradient-Based Meta Learning}
\author{%
\begin{minipage}[t]{0.33\textwidth}
\centering
   Lisha Chen \\
   \textnormal{Rensselaer Polytechnic Institute \\
   Troy, NY, USA} \\
   \texttt{chenl21@rpi.edu} 
\end{minipage}  
\begin{minipage}[t]{0.35\textwidth}
\centering
   Songtao Lu \\
   \textnormal{IBM Research   \\
   Yorktown Heights, NY, USA} \\
   \texttt{songtao@ibm.com} 
\end{minipage}  
\begin{minipage}[t]{0.33\textwidth}
\centering
   Tianyi Chen \\
   \textnormal{Rensselaer Polytechnic Institute \\
   Troy, NY, USA} \\
   \texttt{chentianyi19@gmail.com} 
\end{minipage}  
}
\begin{document}

\maketitle
\doparttoc 
\faketableofcontents 

\begin{abstract}
  Meta learning has demonstrated tremendous success in few-shot learning with  limited supervised data. In those settings, the meta model is usually overparameterized. While the conventional statistical learning theory suggests that overparameterized models tend to overfit, empirical evidence reveals that overparameterized meta learning methods still work well -- a phenomenon often called ``benign overfitting.'' To understand this phenomenon, we focus on the meta learning settings with a challenging bilevel structure that we term the gradient-based meta learning, and analyze its generalization performance under an overparameterized meta linear regression model. While our analysis uses the relatively tractable linear models, our theory contributes to understanding the delicate interplay among data heterogeneity, model adaptation and benign overfitting in gradient-based meta learning tasks. We corroborate our theoretical claims through numerical simulations. 
\end{abstract}

\section{Introduction} 
\label{sec:introduction}

\begin{wrapfigure}{R}{0.43\textwidth}
  \vspace{-0.2cm}
   \begin{minipage}{0.42\textwidth}
    \centering
    \includegraphics[width=0.95\textwidth]{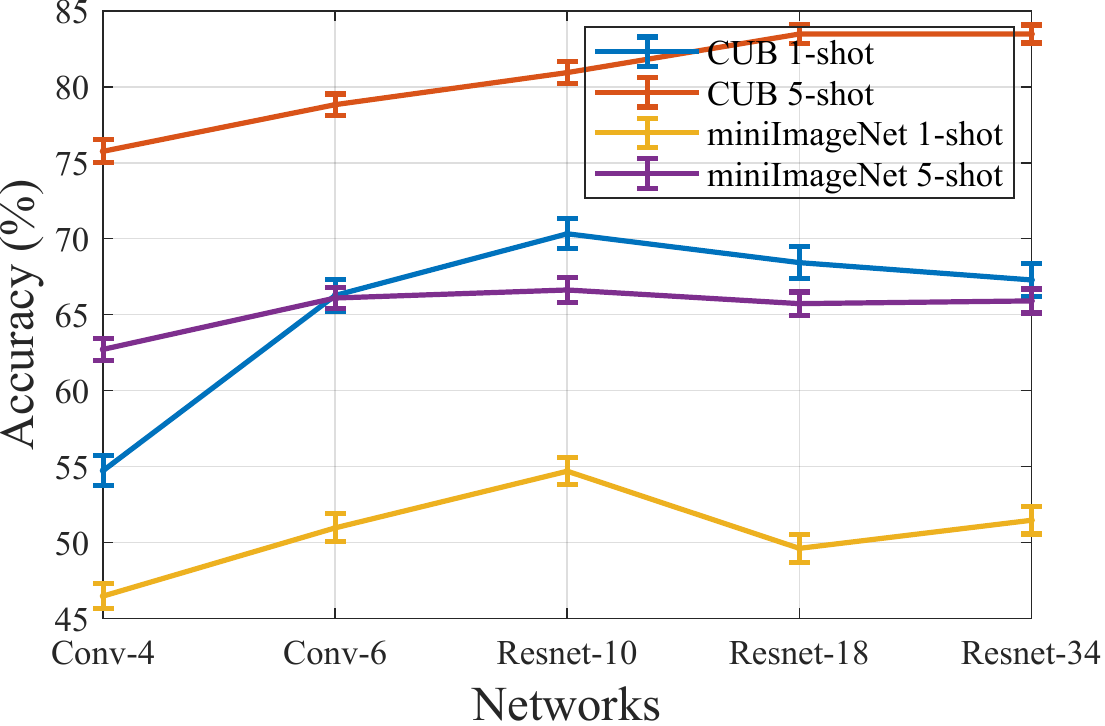}
    \caption{\small Accuracy vs networks with increasing dimensions for MAML on few-shot image classification with different datasets~\citep{chen2018closer}.}
    \label{fig:MAML_acc_vs_depth}
   \end{minipage}
\end{wrapfigure}
Meta learning, also referred to as ``learning to learn'', usually learns a prior model from multiple tasks so that the learned model is able to quickly adapt to unseen tasks \citep{schmidhuber1993_recurrent,hochreiter2001_l2l}. 
Meta learning has been successfully applied to few-shot learning   learning~\citep{andrychowicz2016_l2l,chen2017_l2l}, image recognition \citep{zoph2018learning}, federated learning \citep{jiang2019improving}, reinforcement learning \citep{Finn2017_maml} and communication systems \citep{chen2022fnt}. 
While there are many exciting meta learning methods today, in this paper, we will study a representative meta learning setting where the goal is to learn a shared initial model that can quickly adapt to task-specific models. 
This adaptation may take an explicit form such as the output of one gradient descent step, which is referred to as the model agnostic meta learning (MAML) method~\citep{Finn2017_maml}. Alternatively, the adaptation step may take an implicit form such as the solution of another optimization problem, which is referred to as the implicit MAML (iMAML) method \citep{rajeswaran2019_imaml}. Since both MAML and iMAML will solve a bilevel optimization problem, we term them the \emph{gradient-based meta learning} thereafter. 
In many cases, overparameterized models are  used as the initial models in meta learning for quick adaptation. For example, Resnet-based MAML models typically have around $6$ million parameters, but are trained on $1$-$3$ million meta-training data \citep{chen2018closer}.
Training such initial models is often difficult in meta learning because the number of training data is much smaller than the dimension of the model parameter.

Previous works on meta learning mainly focus on addressing the optimization challenges or analyzing the generalization performance with sufficient data~\citep{fallah2020_convergence_maml,fallah2021_generalization_unseen,chua2021fine}. 
Different from these works, we are particularly interested in the generalization performance of the sought initial model in practical scenarios where the total number of data from all tasks is \emph{smaller than} the dimension of the initial model, which we term  \emph{overparameterized meta learning}.
Empirical studies have demonstrated that the  MAML with overparameterized models generally perform better than MAML with underparameterized models~\citep{chen2018closer} -- a phenomenon often called ``benign overfitting.'' 
 To show this, we plot in Figure~\ref{fig:MAML_acc_vs_depth} the empirical results of MAML from Table~A5 in~\citep{chen2018closer}. Resnets are overparameterized, and Convnets are underparameterized in the meta learning settings. MAML with Resnets generally perform better than MAML with Convnets on different datasets.
 However, in those overparameterized regimes, the generalization error and benign overfitting condition of gradient-based meta learning models are not fully understood. 
Motivated by this, we ask:
\begin{center}
  \emph{
  If and when overparameterized MAML models lead to benign overfitting, provably?
  }
\end{center}
Complementing the empirical observations, we take an initial step by answering this theoretical question in the meta linear regression setting.

\subsection{Prior art} 
\label{sec:related_work}
We review prior art that we group in the following three categories.

\paragraph{Benign overfitting analysis.}
The empirical success of overparameterized deep neural networks has inspired theoretical studies of overparameterized learning.
The most closest line of work is \emph{benign overfitting} in linear regression~\citep{Bartlett_benign_linear}, which provides excess risk
that measures the difference between expected population risk of the empirical solution and the optimal population risk.
Analysis of  overparameterized linear regression model with the minimum-norm solution. It concludes that certain data covariance matrices lead to benign overfitting, explaining why overparameterized models that perfectly fit the noisy training data can work well during testing.
The analysis has been extended to ridge regression~\citep{Tsigler2020Benign_ridge}, multi-class classification~\citep{wang2021benign}, and  adversarial learning with linear models~\citep{chen2021benign_adverse}.
While previous theoretical efforts on benign overfitting largely focused on linear models,
most recently, the analysis of benign overfitting has been extended to two-layer neural networks~\citep{cao2022benign_nn,li2021towards_nn,frei2022benign}. 
However, existing works mainly study benign overfitting for empirical risk minimization problems, rather than bilevel problems such as gradient-based meta learning, which is the focus of this work. 

\paragraph{Meta learning.}
Early works of meta learning build black-box recurrent models that can make predictions based on a few examples from new tasks~\citep{schmidhuber1993_recurrent,hochreiter2001_l2l,andrychowicz2016_l2l,chen2017_l2l}, or learn shared feature representation among multiple tasks~\citep{snell2017prototypical,vinyals2016matching}. 
More recently, meta learning approaches aim to find the initialization of model parameters that can quickly adapt to new tasks with a few number of  optimization steps such as MAML 
~\citep{Finn2017_maml,nichol2018first,rothfuss2018promp}.
The empirical success of meta learning has also stimulated recent interests on building the theoretical foundation of meta learning methods.

\paragraph{Generalization of meta learning.}
The \emph{excess risk},  {as a metric of generalization ability} of gradient-based meta learning has been analyzed recently~\citep{denevi2018_l2l_linear_centroid,bai2021_trntrn_trnval,chen2022_bamaml,wang2020_global_converge_maml_dnn,balcan2019_gbml_online_convex,fallah2021_generalization_unseen}.
The generalization of meta learning has been studied in \citep{kong2020meta} in the context of mixed linear regression, where the focus is on investigating when abundant tasks with small data can compensate for lack of tasks with big data. 
Generalization performance has also been studied in a relevant but different setting - representation based meta learning ~\citep{chua2021fine,du2020few}.
Information theoretical  bounds have been proposed in~\citep{jose2021information,chen2021generalization}, which bound the generalization error in terms of mutual information between the input training data and the output of the meta-learning algorithms.
The PAC-Bayes framework has been extended to meta learning to provide a PAC-Bayes meta-population risk bound~\citep{amit2018_pac_bayes_ml_prior,rothfuss2021_pacoh_pac_bayes_ml,ding2021bridging,farid2021generalization}. 
These works mostly focus on the case where the meta learning model is underparameterized; that is, the total number of meta training data from all tasks is larger than the dimension of the model parameter. 
Recently, \emph{overparameterized} meta learning has attracted more attention.  Bernacchia~\citep{bernacchia2020meta} suggests that in overparameterized MAML, negative learning rate in the inner loop is optimal during meta training for linear models with Gaussian data.
Sun et al.~\citep{sun2021towards} shows that the optimal representation in representation-based meta learning is overparameterized and provides sample complexity for the method of moment estimator.  Our work and a concurrent work \citep{huang2022_overpara_maml_sgd} study a common setting where the meta learning models incur overparameterization in the meta level, and we both cover the nested MAML method. However, the two studies differ in terms of how the empirical solution of the meta parameter is obtained. In our case, we consider the minimum $\ell$-2 norm solution, while \citep{huang2022_overpara_maml_sgd} consider the solution trained with $T$-step stochastic gradient descent (SGD). Furthermore, our analysis covers both MAML and iMAML, while \citep{huang2022_overpara_maml_sgd} only considers MAML.

Our work differs with the most relevant works in the following aspects. 
Compared to the works that also analyze generalization error or sample complexity in linear meta learning models such as~\citep{denevi2018_l2l_linear_centroid,bai2021_trntrn_trnval,chen2022_bamaml}, we focus on the overparameterized case when the total number of training data is smaller than the dimension of the model parameter.
Compared to the work that focus on \emph{representation-based} meta learning with a bilinear structure~\citep{sun2021towards}, we consider initialization-based meta learning methods with a bilevel structure such as MAML and iMAML. 
Furthermore, we provide tight analysis of the excess risk and explicitly consider the benign overfitting condition.

A summary of key differences compared to prior art is provided in Table~\ref{tab:compare_prior}. 
We distinguish two different overparameterization settings: i) the \emph{per-task level overparameterization} where the dimension of model parameter is larger than the number of training data per task, but smaller than the total number of data across all tasks; and, 
ii) the \emph{meta level overparameterization} where the dimension of model parameter is larger than the total number of training data from all tasks.

\begin{table}[bt]
  \vspace{-0.4cm}
\small
  \caption{A comparison with closely related prior work on meta learning  {with linear models}. ``Reps.'' and ``Gradient'' refer to representation based methods  and gradient-based methods; ``Per-task'' refers to the per-task level overparameterization and ``Meta'' refers to the meta level overparameterization. 
}
  \label{tab:compare_prior}
  \centering
  \begin{tabular}{c c c c c c c}
  \toprule
   Prior work &\multicolumn{2}{c}{\!\!\!\!\!\!Type of meta learning} & \multicolumn{2}{c}{\!\!\!Overparameterization\!\!\!} & Methods & Focus of analysis \\
   &Reps. &\!\!\!Gradient &Per-task &\!\!\!Meta &\\
  \midrule 
  Bai et al.~\citep{bai2021_trntrn_trnval} 
  & &\checkmark &\checkmark 
  & & iMAML & Train-validation split\\
  Bernacchia~\citep{bernacchia2020meta} 
  & &\checkmark & & \checkmark 
  & MAML & Optimal step size \\
  Chen et al.~\citep{chen2022_bamaml} 
  & -  & \checkmark &\checkmark & 
  & MAML, BMAML &\!\!\!\!  {Test risk comparison} \!\!\!\! \\
  Huang et al.~\citep{huang2022provable}    
  & &\checkmark & &\checkmark 
  & MAML &\!\!\!\! Excess risk of SGD solution \!\!\\
  Kong et al.~\citep{kong2020meta} 
  & -  &  - &\checkmark & 
  & - &\!\!\!\!  {Effect of small data tasks }\!\!\!\! \\
  \!\!\!Saunshi et al.~\citep{saunshi2021representation}  
  &\checkmark & &\checkmark 
  & &- & Train-validation split\\
  Sun et al.~\citep{sun2021towards}    
  &\checkmark& & &\checkmark 
  & - &\!\! Optimal representation \!\!\\
  \hline
  Ours & &\checkmark & & \checkmark 
  & \!\!\!MAML, iMAML \!\!\!& Benign overfitting\\
  \bottomrule
  \end{tabular}
  \vspace{-0.5cm}
\end{table}

\subsection{This work}
This paper provides a unifying analysis of the generalization performance for meta learning problems with overparameterized meta linear models. 
To our best knowledge, 
this is the first work that analyzes the condition of benign overfitting for gradient-based meta learning including MAML and iMAML.

\textbf{Technical challenges.} Before we introduce the key result of our paper, we first highlight the challenges of analyzing the generalization of gradient-based meta learning and characterizing its benign overfitting condition, compared to the non-bilevel setting such as in~\citep{Bartlett_benign_linear,Tsigler2020Benign_ridge,sun2021towards}. 

\textbf{T1)} Due to the bilevel structure of gradient-based meta learning, the solution to the meta training objective involves polynomial functions of data covariance. 
As a result, the dominating term in the excess risk propagated from the label noise contains higher order moment terms, which is harder to quantify and can potentially lead to much higher excess risk than the linear regression case \citep{Bartlett_benign_linear,Tsigler2020Benign_ridge,sun2021towards}.

\textbf{T2)} 
The existing analysis of benign overfitting in single-level problems~\citep{Bartlett_benign_linear,Tsigler2020Benign_ridge} has a solution that is directly related to the data covariance matrix. However, due to the nested structure of gradient-based meta learning and thus the solution matrix, the solution matrix is a function of both the data covariance matrix and the hyperparameters such as the step size.
Therefore, what kind of data matrices can satisfy the benign overfitting condition cannot be directly implied.

\textbf{T3)} Due to the multi-task learning nature of meta learning, the excess risk of MAML depends on the  heterogeneity across different tasks in terms of both the task data covariance and the ground truth task parameter. 
As a result, the data covariance matrices from different tasks have different eigenvectors.
This is in contrast to the linear regression case where all the data follow the same distribution.

\textbf{Contributions.}
In view of challenges, our contributions can be summarized as follows.
\vspace{-0.1cm}
\begin{itemize}
  \item[\bf C1)] 
Focusing on the relatively tractable linear models, we derive the excess risk for the minimum-norm solution to overparameterized gradient-based meta learning including MAML and iMAML.
  Specifically, the excess risk upper bound adopts the following form
  \begin{align*}
    &\text{Cross-task variance } 
    +\text{Per-task variance }
    +\text{Bias }
  \end{align*}
  where the \emph{cross-task variance} quantifies the error caused by finite task number and the variation of the ground truth task specific parameter, which is a unique term compared to single task learning.
  The \emph{bias} quantifies the bias resulting from the minimum $\ell$-2 norm solution.
  And the \emph{per-task variance} quantifies the error caused by noise in the training data. 
  \item[\bf C2)] We compare the benign overfitting condition for the overparameterized gradient-based meta learning models and that for the empirical risk minimization (ERM) which learns a single shared parameter for all tasks. We show that overfitting is more likely to happen in MAML and its variants such as iMAML than in ERM.
  In addition, larger data heterogeneity across tasks will make overfitting more likely to happen.
  \item[\bf C3)] We discuss the choice of hyperparameter, e.g., the step size in MAML and the weight of the regularizer in iMAML, such that if the data leads to benign overfitting in ERM, it also leads to benign overfitting in MAML and iMAML.
\end{itemize}

\section{Problem Formulation and Methods} 
\label{sec:problem_definition_and_solutions}

In this section, we will introduce the problem setup and the considered   meta learning methods. 

\textbf{Problem setup.} 
In the meta-learning setting, 
assume task $m$ is drawn from a task distribution, i.e. $m \sim \mathcal{M}$.
For each task $m$, we observe $N$ samples with input feature ${x}_{m} \in \mathcal{X}_{m} \subset \mathbb{R}^d$ and target label $y_{m} \in \mathcal{Y}_{m} \subset \mathbb{R}$ drawn i.i.d. from a task-specific data distribution $\mathcal{P}_{m}$. These samples are collected in the dataset
$\mathcal{D}_{m} = \{({x}_{m,n},y_{m,n})\}_{n=1}^N$, 
which is divided into the train and validation datasets, denoted as $\mathcal{D}_{m}^{\rm tr}$ and $\mathcal{D}_{m}^{\rm va}$. And $|\mathcal{D}_{m}^{\rm tr}|=N_{\rm tr}$ and $|\mathcal{D}_{m}^{\rm va}|=N_{\rm va}$ with $N=N_{\rm tr}+N_{\rm va}$.
We use the empirical loss  ${\ell_{m}}(\theta_{m}, \mathcal{D}_{m})$ of per-task parameter $\theta_{m} \in {\Theta}_{m}$ as a measure of the performance. In this paper, we consider regression problems, where $\ell_{m}$ is defined as the mean squared error.

The goal for gradient-based meta learning methods, such as MAML~\citep{Finn2017_maml} and iMAML~\citep{rajeswaran2019_imaml},
is to learn an initial parameter $\theta_0 \in {\Theta}_0$,
which, with an adaptation method ${\mathcal{A}}: {\Theta}_0 \times (\mathcal{X}_{m}\times \mathcal{Y}_{m})^{N_{\rm tr}} \rightarrow {\Theta}_{m}$, can generate a per-task parameter $\theta_{m}$ that performs well on the validation data for task $m$.
Given $M$ tasks, our meta-learning objective is computed as the average of the per-task objective, given by
\begin{equation}\label{eq:emp_loss}
\text{Meta training objective} ~~~~~~~~~~  {\mathcal{L}}^{\cal A}(\theta_0, \mathcal{D})
\coloneqq 
\frac{1}{M} \sum_{m=1}^{M} \ell_{m}({\mathcal{A}}(\theta_0,\mathcal{D}_{m}^{\rm tr}),\mathcal{D}_{m}^{\rm va}).
\end{equation}
Obtaining the empirical solution $\hat{\theta}_0^{\mathcal{A}}$ by minimizing~\eqref{eq:emp_loss} under a meta learning method $\mathcal{A}$, in the meta testing stage, we evaluate  $\hat{\theta}_0^{\mathcal{A}}$ on  the population risk, given by
\begin{equation}
\label{eq:R}
\text{Meta testing objective}~~~ ~~~\mathcal{R}^{\cal A}(\hat{\theta}_0^{\mathcal{A}})
\coloneqq  
\mathbb{E}_{m } 
\left[ \mathbb{E}_{\mathcal{D}_{m}} \big[\ell_{m}({\mathcal{A}}(\hat{\theta}_0^{\mathcal{A}},\mathcal{D}_{m}^{\rm tr}), \mathcal{D}_{m}^{\rm va})\big]\right]. 
\end{equation}

\begin{wrapfigure}{R}{0.3\textwidth}
\vspace{-0.2cm}
 \begin{minipage}{0.29\textwidth} 
  \centering
  \includegraphics[width=0.95\textwidth]{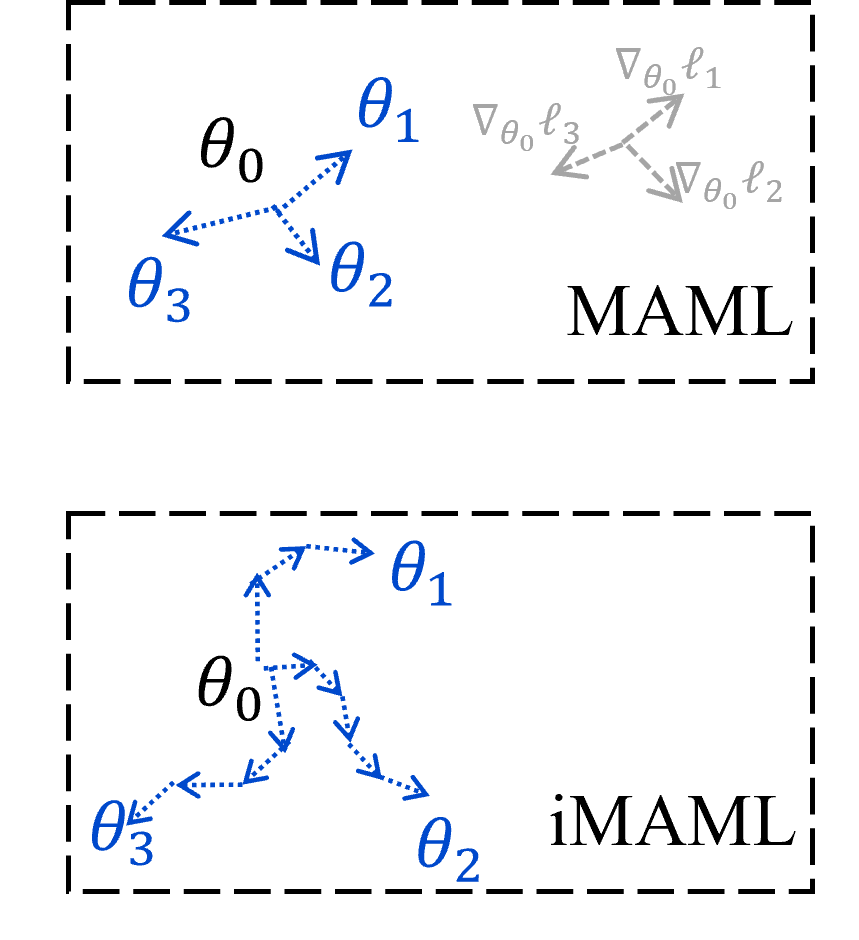}
  \caption{\small Two types of meta learning.}
  \label{fig:maml_imaml}
\end{minipage}
\end{wrapfigure}
\textbf{Methods.} 
We focus on understanding the generalization of two representative gradient-based meta learning methods MAML~\citep{Finn2017_maml} and iMAML~\citep{rajeswaran2019_imaml} in the overparameterized regime. 
MAML obtains the task-specific parameter $\hat{\theta}_{m}(\theta_0)$  by taking one step gradient descent with step size $\alpha$ of the per-task loss function $\ell_{m}$ from the initial parameter $\theta_0$, that is
\begin{equation}\label{eq:MAML_problem}
  {\mathcal{A}}(\theta_0,\mathcal{D}_{m}^{\rm tr}) = \theta_0  - \alpha   \nabla_{\theta_0} {\ell}_{m}(\theta_0,\mathcal{D}_{m}^{\rm tr}).
\end{equation}
On the other hand, iMAML obtains  
the task-specific parameter $\hat{\theta}_{m}$ from the initial parameter $\theta_0$ by optimizing the task-specific loss regularized by the distance between $\hat{\theta}_{m}$ and $\theta_0$, that is
\begin{equation}
\label{eq:iMAML_problem}
  {\mathcal{A}}(\theta_0,\mathcal{D}_{m}^{\rm tr}) 
  =\underset{\theta}{\arg \min }\, 
  ~\ell_{m} (\theta, \mathcal{D}_{m}^{{\rm tr}})+\frac{\gamma}{2}\|\theta-\theta_0\|^{2}  
\end{equation}
where $\gamma>0$ is the weight of the regularizer. 
As summarized in Figure \ref{fig:maml_imaml}, MAML has smaller computation complexity than iMAML
since iMAML requires solving an inner problem during adaptation, while iMAML may achieve smaller test error  since it explicitly minimize the loss.

\section{Main Results: Benign Overfitting for Gradient-based Meta Learning} 
\label{sec:benign_overfitting_analysis}

In this section, we introduce the meta linear regression model and some necessary assumptions for the analysis. We present the main results, highlight the key steps of the proof and conduct simulations to verify our results. Due to space limitations, we will defer  the proofs to the supplementary document.

\subsection{Meta linear regression setting}
To make a precise analysis, we will assume the following linear regression data model.
Denoting the ground truth parameter on task $m$ as $\theta_{m}^{\star} \in \mathbb{R}^d$, 
and the noise as $\epsilon_m$,
we assume the data model for task $m$ is 
\begin{equation}\label{eq:linear_data_generate}
  y_{m}={\theta}^{\star \top}_{m} {x}_{m}+\epsilon_{m}.
\end{equation}

Given the linear model~\eqref{eq:linear_data_generate}, the meta training problem~\eqref{eq:emp_loss} with adaptation method \eqref{eq:MAML_problem} or \eqref{eq:iMAML_problem} generally have unique solutions when $d \leq NM$.
However, when the meta model $\theta_{0}$ and thus the per-task model $\theta_m$ are overparameterized, i.e. $d > NM$, the training problem~\eqref{eq:emp_loss} may have multiple solutions. In the subsequent analysis, we will analyze the performance of the \emph{minimum norm} solution because recent advances in training overparameterized models reveal  that gradient descent-based methods converge to the minimum norm  solution~\citep{gunasekar18a,Vidya2020harmless_interpolation}. We provide a formal definition below.

\begin{definition}[Minimum $\ell$-2 norm solution]
  \label{def:mne}
  Denote 
$
\mathbf{X}_{m}^{\rm va} := 
[{x}_{m, 1},\ldots,
{x}_{m,N_{\rm va}}]^{\top} \in \mathbb{R}^{N_{\rm va}\times d}$, 
$\mathbf{y}_{m}^{\rm va} := 
[{y}_{m, 1},\ldots,
{y}_{m,N_{\rm va}}]^{\top} \in \mathbb{R}^{N_{\rm va}} 
$. 
With $\mathcal{A}(\theta, \mathcal{D}_m^{\rm tr})$ being either \eqref{eq:MAML_problem} or \eqref{eq:iMAML_problem}, the minimum norm  solution to the meta training problem~\eqref{eq:emp_loss} under the linear regression loss is expressed by 
\begin{align*}
\label{eq:mne}
\hat{\theta}_{0}^{\mathcal{A}} 
\coloneqq \mathop{\arg\min}_{\theta_{0}} 
\|\theta_{0}\|^2
\quad 
 {\rm s.t. }~~~
 & \theta_{0}~
 \in
 \mathop{\arg\min}_{{\theta}} 
 {\mathcal{L}}^{\cal A}(\theta, \mathcal{D})
 = \frac{1}{M} \sum_{m=1}^{M} \left\|\mathbf{X}_{m}^{\rm va} \mathcal{A}(\theta, \mathcal{D}_m^{\rm tr})-\mathbf{y}_{m}^{\rm va}\right\|^{2}.~~~
\numberthis
\end{align*}
\end{definition}

In our analysis, we make the following basic assumptions.
\begin{assumption}[Overparameterized model]\label{assmp:overpara} 
  The total number of meta training data is smaller than the dimension of the model parameter; i.e. $NM < d$.
\end{assumption}
\begin{assumption}[SubGaussian data]\label{assmp:subGaussian_data} 
The noise $\epsilon_m$ is subGaussian with $\mathbb{E} [\epsilon_m] = 0$ and $ \mathbb{E}[\epsilon_m ^2] = \sigma^2$. For the $m$-th task, data ${x}_m = \mathbf{V}_m \mathbf{\Lambda}_m^{\frac{1}{2}} \mathbf{z}_m$, where $\mathbf{z}_m$ has centered, independent, $\sigma_x$-subGaussian entries;
$\mathbb{E}[\mathbf{z}_m] = \mathbf{0}, 
\mathbb{E}[\mathbf{z}_m\mathbf{z}_m^{\top}] = \mathbf{I}_d$, with $\mathbf{I}_d$ being a $d\times d$ identity matrix. 
\end{assumption}
\begin{assumption}[Data covariance matrix]\label{assmp:V} 
1) Assume  for all
$ m \in [M], i \in [d], \lambda_{m,i} > 0 $, 
$\mathrm{Tr}(\mathbf{\Lambda}_m), \mathrm{Tr}(\mathbf{\Lambda}) $ are bounded, i.e. for all
$ m \in [M]$, $\mathrm{Tr}(\mathbf{\Lambda}_m) \leq c_{\lambda}$. 
2) Cross-task  data heterogeneity $ \mathbb{V}(\{\mathbf{Q}_m\}_{m=1}^M ) \coloneqq \max_{i,m}| (\lambda_{i} -  \lambda_{m,i})/\lambda_i|$ is bounded above and below. 
\end{assumption}
\begin{assumption}[Task parameter]\label{assmp:task_para} 
The ground truth parameter 
$\theta_{m}^{\star}$ is independent of $\mathbf{X}_{m}$ and satisfies
$\mathrm{Cov}[\theta_m^{\star}] = (R^2/d) \mathbf{I}_d$,
where $R$ is a constant, and the   entries  of $\theta_m^{\star}$ 
are i.i.d.   $\mathcal{O}( R / \sqrt{d})$-subGaussian. 
\end{assumption}

Assumption~\ref{assmp:overpara} defines the setting that the meta level is overparameterized, which has also been used in~\citep{sun2021towards}. 
Note that Assumptions~\ref{assmp:subGaussian_data}-\ref{assmp:subGaussian_data} are common in the analysis of meta learning  in~\citep{denevi2018_l2l_linear_centroid,bai2021_trntrn_trnval,chen2022_bamaml,gao2020_model_opt_tradeoff_ml}.

With the linear data model \eqref{eq:linear_data_generate}, the (minimum norm) solutions to the meta training objective \eqref{eq:emp_loss} and the meta testing objective \eqref{eq:R} can be computed analytically which we will summarize next.

\begin{wraptable}{R}{0.6\textwidth}
\vspace{0.1cm}
 \begin{minipage}{0.6\textwidth} 
  \caption{Weight matrices under different method $\mathcal{A}$.}
  \label{tab:weight_matrices}
  \centering
  \begin{tabular}{ c c }
  \toprule
  Method & Weight matrices  \\
  \hline
  ERM 
  &$\mathbf{W}_{m}^{\mathrm{er}} =  \mathbf{Q}_{m}$ \\
  &$\hat{\mathbf{W}}_{m}^{\mathrm{er}} =  \hat{\mathbf{Q}}_{m}$ \\
  \hline
  MAML 
  & $\mathbf{W}_{m}^{\mathrm{ma}} = (\mathbf{I}-\alpha \mathbf{Q}_{m}) \mathbf{Q}_{m}(\mathbf{I}-\alpha \mathbf{Q}_{m})$ \\
  & $\hat{\mathbf{W}}_{m}^{\mathrm{ma}} 
  = 
  (\mathbf{I}-
  {\alpha}\hat{\mathbf{Q}}_{m}^{\rm tr}) 
  \hat{\mathbf{Q}}_m^{\rm va}
  (\mathbf{I}- 
  {\alpha}\hat{\mathbf{Q}}_{m}^{\rm tr})$ \\
  \hline
  iMAML
  &\!\! $\mathbf{W}_{m}^{\mathrm{im}} = (\gamma ^{-1}\mathbf{Q}_{m}+\mathbf{I})^{-1}\mathbf{Q}_{m}(\gamma ^{-1}\mathbf{Q}_{m}+\mathbf{I})^{-1}$ \!\!\!\\ 
  &\!\! $\hat{\mathbf{W}}_{m}^{\mathrm{im}} 
    = (\gamma ^{-1}\hat{\mathbf{Q}}_{m}^{\rm tr}+\mathbf{I})^{-1}\hat{\mathbf{Q}}_m^{\rm va}(\gamma ^{-1}\hat{\mathbf{Q}}_{m}^{\rm tr}+\mathbf{I})^{-1}$\!\!\!\\
  \bottomrule
\end{tabular}
\end{minipage}
\vspace{-0.2cm}
\end{wraptable}

\begin{restatable}{proposition}{propone}
\label{prop:solutions}
\emph{\textbf{(Empirical and population level solutions)}} Under the meta linear regression model \eqref{eq:linear_data_generate},
the meta testing objective of method $\mathcal{A}$ in  \eqref{eq:R} can be equivalently written as
\begin{align}
   &\mathcal{R}^{\mathcal{A}}({\theta_0} )
    \label{eq:meta_risk_form}
    = \mathbb{E}_{m}\big[\|\theta_0-\theta^{\star}_{m}\|^2_{\mathbf{W}_{m}^{\mathcal{A}}}\big] 
 \end{align} 
 where the matrix $\mathbf{W}_{m}^{\mathcal{A}}$ and its empirical version $\hat{\mathbf{W}}_{m}^{\mathcal{A}}$ are given in Table~\ref{tab:weight_matrices} with $\hat{\mathbf{Q}}_{m}^{\rm al} \coloneqq \frac{1}{N} \mathbf{X}^{\rm al \top}_{m}\mathbf{X}_{m}^{\rm al}$. 
The optimal solutions to the meta-test risk and the minimum-norm solutions to the empirical meta training loss are given below respectively
\begin{subequations}
\begin{align}\label{eq:theta_0_star_A_sln}
  &\theta_{0}^{\mathcal{A}} 
  \coloneqq \mathop{\arg\min}_{\theta_0}
  \mathcal{R}^{\mathcal{A}}({\theta_0} )
  =\mathbb{E}_{m}\big[\mathbf{W}_{m}^{\mathcal{A}}\big]^{-1} \mathbb{E}_{m}\big[\mathbf{W}_{m}^{\mathcal{A}} {\theta}_{m}^{\star} \big] \\
  \label{eq:theta_0_hat_A_sln}
  &\hat{\theta}_{0}^{\mathcal{A}} 
  = \Big({\sum}_{m=1}^{M}
  \hat{\mathbf{W}}_{m}^{\mathcal{A}}\Big)^{\dag}
  \Big({\sum}_{m=1}^{M}\hat{\mathbf{W}}_{m}^{\mathcal{A}}\theta_{m}^{\star} 
  \Big)
  + \Delta_{M}^{\mathcal{A}} 
\end{align}
\end{subequations}
where $^{\dag}$ denotes the Moore-Penrose pseudo inverse; $\Delta_{M}^{\mathcal{A}}$ is an error term that depends on $\mathbf{X}_m, \epsilon_m$, and specified in the supplementary document.  
\end{restatable}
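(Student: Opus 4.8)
The plan is to treat ERM, MAML, and iMAML uniformly by exploiting that, for the squared loss, each adaptation map is \emph{affine} in the initialization: writing $\mathcal{A}(\theta_0,\mathcal{D}_m^{\rm tr}) = \mathbf{M}_m\theta_0 + c_m$, one reads off $\mathbf{M}_m = \mathbf{I}$ (ERM), $\mathbf{M}_m = \mathbf{I} - \alpha\hat{\mathbf{Q}}_m^{\rm tr}$ (MAML, since $\nabla_\theta\ell_m$ is proportional to $\hat{\mathbf{Q}}_m^{\rm tr}\theta_0 - \tfrac{1}{N}\mathbf{X}_m^{\rm tr\top}\mathbf{y}_m^{\rm tr}$, the constant being absorbed into $\alpha$), and $\mathbf{M}_m = (\gamma^{-1}\hat{\mathbf{Q}}_m^{\rm tr}+\mathbf{I})^{-1}$ (iMAML, from the stationarity condition of the strongly convex inner problem \eqref{eq:iMAML_problem}). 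Each $\mathbf{M}_m$ is symmetric, and the offset $c_m$ splits into a ``signal'' piece proportional to $\theta_m^\star$ and a ``noise'' piece proportional to $\mathbf{X}_m^{\rm tr\top}\epsilon_m^{\rm tr}$; this decomposition is what ultimately separates the two summands of \eqref{eq:theta_0_hat_A_sln}.

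For the risk identity \eqref{eq:meta_risk_form}, I would evaluate the test-time adaptation at the population level, replacing $\hat{\mathbf{Q}}_m^{\rm tr}$ by the task covariance $\mathbf{Q}_m = \mathbf{V}_m\mathbf{\Lambda}_m\mathbf{V}_m^\top$, so that $\mathcal{A}(\theta_0) - \theta_m^\star = \mathbf{M}_m(\mathbf{Q}_m)\,(\theta_0 - \theta_m^\star)$ with $\mathbf{M}_m(\mathbf{Q}_m)=\mathbf{I}-\alpha\mathbf{Q}_m$ for MAML and $(\gamma^{-1}\mathbf{Q}_m+\mathbf{I})^{-1}$ for iMAML. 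Substituting into the population validation loss and using $y_m = \theta_m^{\star\top}x_m+\epsilon_m$ with $\mathbb{E}[\epsilon_m]=0$ and $\mathbb{E}[x_mx_m^\top]=\mathbf{Q}_m$, the residual variance equals $(\theta_0-\theta_m^\star)^\top \mathbf{M}_m^\top\mathbf{Q}_m\mathbf{M}_m(\theta_0-\theta_m^\star) + \sigma^2$; the quadratic form is exactly $\mathbf{W}_m^{\mathcal{A}}$ of Table~\ref{tab:weight_matrices}, and the irreducible $\sigma^2$ is independent of $\theta_0$, giving \eqref{eq:meta_risk_form}. Then \eqref{eq:theta_0_star_A_sln} is immediate: $\mathcal{R}^{\mathcal{A}}$ is a convex quadratic, so setting $\nabla_{\theta_0}\mathcal{R}^{\mathcal{A}}=2\,\mathbb{E}_m[\mathbf{W}_m^{\mathcal{A}}(\theta_0-\theta_m^\star)]=0$ and inverting $\mathbb{E}_m[\mathbf{W}_m^{\mathcal{A}}]$ yields the stated minimizer.

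For the empirical minimum-norm solution \eqref{eq:theta_0_hat_A_sln}, I would insert the affine adaptation into the meta-training loss of Definition~\ref{def:mne}, rewriting each per-task term as $\|\tilde{\mathbf{X}}_m\theta_0 - \tilde{y}_m\|^2$ with effective design $\tilde{\mathbf{X}}_m = \mathbf{X}_m^{\rm va}\mathbf{M}_m$ and effective target $\tilde{y}_m = \mathbf{y}_m^{\rm va} - \mathbf{X}_m^{\rm va}c_m = \tilde{\mathbf{X}}_m\theta_m^\star + \zeta_m$, where $\zeta_m$ gathers the validation noise $\epsilon_m^{\rm va}$ and the propagated inner-loop noise. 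Stacking the $M$ tasks turns the constraint into a single overparameterized least-squares problem $\min\|\tilde{\mathbf{X}}\theta_0 - \tilde{y}\|^2$, whose minimum-norm solution is $\tilde{\mathbf{X}}^\dagger\tilde{y} = (\tilde{\mathbf{X}}^\top\tilde{\mathbf{X}})^\dagger\tilde{\mathbf{X}}^\top\tilde{y}$. The two key identities, $\tilde{\mathbf{X}}^\top\tilde{\mathbf{X}} = \sum_m\mathbf{M}_m^\top(N\hat{\mathbf{Q}}_m^{\rm va})\mathbf{M}_m = N\sum_m\hat{\mathbf{W}}_m^{\mathcal{A}}$ and $\tilde{\mathbf{X}}^\top\tilde{y} = N\sum_m\hat{\mathbf{W}}_m^{\mathcal{A}}\theta_m^\star + \sum_m\tilde{\mathbf{X}}_m^\top\zeta_m$, together with $(N\mathbf{A})^\dagger = N^{-1}\mathbf{A}^\dagger$, produce exactly the signal term $(\sum_m\hat{\mathbf{W}}_m^{\mathcal{A}})^\dagger(\sum_m\hat{\mathbf{W}}_m^{\mathcal{A}}\theta_m^\star)$ plus the residual $\Delta_M^{\mathcal{A}} = N^{-1}(\sum_m\hat{\mathbf{W}}_m^{\mathcal{A}})^\dagger\sum_m\tilde{\mathbf{X}}_m^\top\zeta_m$, which depends only on $\mathbf{X}_m,\epsilon_m$ as claimed.

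The main obstacle is the empirical part. Two points need care: (i) verifying that $(\tilde{\mathbf{X}}^\top\tilde{\mathbf{X}})^\dagger\tilde{\mathbf{X}}^\top\tilde{y}$ is genuinely the minimum-norm minimizer, i.e.\ that it satisfies the normal equations and lies in $\mathrm{range}(\tilde{\mathbf{X}}^\top)=\mathrm{range}(\sum_m\hat{\mathbf{W}}_m^{\mathcal{A}})$, so the pseudoinverse manipulations act on the correct subspace; and (ii) deriving the closed-form iMAML map and confirming $\mathbf{M}_m^\top\hat{\mathbf{Q}}_m^{\rm va}\mathbf{M}_m$ reproduces $\hat{\mathbf{W}}_m^{\rm im}$ precisely. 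A secondary hazard is keeping the normalization constants consistent (the prefactor $N$ versus $N_{\rm tr},N_{\rm va}$, and the gradient factor folded into $\alpha,\gamma$) so that they cancel cleanly; mishandling them is exactly where the separation into the signal term and $\Delta_M^{\mathcal{A}}$ would fail.
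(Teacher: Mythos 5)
Your proposal is correct, and its skeleton coincides with the paper's: the adaptation maps are affine in $\theta_0$ with symmetric matrices $\mathbf{M}_m$, the test risk becomes a quadratic form $\mathbf{M}_m^{\top}\mathbf{Q}_m\mathbf{M}_m = \mathbf{W}_m^{\mathcal{A}}$, the population minimizer follows from the first-order condition, and the empirical min-norm solution comes from stacking the per-task losses into $\min_{\theta_0}\|\tilde{\mathbf{X}}\theta_0 - \tilde{y}\|^2$ and applying $\tilde{\mathbf{X}}^{\dag} = (\tilde{\mathbf{X}}^{\top}\tilde{\mathbf{X}})^{\dag}\tilde{\mathbf{X}}^{\top}$; indeed your $N_{\rm va}^{-1}\tilde{\mathbf{X}}_m^{\top}\zeta_m$ reproduces the paper's $\Delta_M^{\rm ma}$ term for term. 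The one genuine difference is how you obtain the risk identity \eqref{eq:meta_risk_form}: you replace $\hat{\mathbf{Q}}_m^{\rm tr}$ by $\mathbf{Q}_m$ at test time in a single step, whereas the paper first computes the meta-test risk with \emph{finite} adaptation data — for MAML this yields $\mathbf{W}_{m,N}^{\rm ma} = (\mathbf{I}-\alpha\mathbf{Q}_m)\mathbf{Q}_m(\mathbf{I}-\alpha\mathbf{Q}_m) + \frac{\alpha^2}{N}\left(\mathbb{E}[x_m x_m^{\top}\mathbf{Q}_m x_m x_m^{\top}] - \mathbf{Q}_m^3\right)$ plus an additive $\frac{\alpha^2}{N}\mathbb{E}_m[\mathrm{Tr}(\mathbf{Q}_m^2)]$, and for iMAML requires a fairly long Woodbury-identity computation — and only then takes the limit $N\to\infty$. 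Your shortcut is legitimate, but only under the paper's explicit convention $\mathcal{R}^{\mathcal{A}} = \lim_{N\to\infty}\mathcal{R}_N^{\mathcal{A}}$: read against \eqref{eq:R} literally, where adaptation uses the finite set $\mathcal{D}_m^{\rm tr}$, the equality \eqref{eq:meta_risk_form} with the Table~\ref{tab:weight_matrices} matrices is false by exactly those $O(1/N)$ correction terms, so you should state that convention (or take the limit explicitly) rather than silently evaluating adaptation ``at the population level.'' What your route buys is a substantially shorter iMAML derivation — the population inner solution $(\mathbf{Q}_m+\gamma\mathbf{I})^{-1}(\mathbf{Q}_m\theta_m^{\star}+\gamma\theta_0)$ makes $\hat{\theta}_m - \theta_m^{\star} = (\gamma^{-1}\mathbf{Q}_m+\mathbf{I})^{-1}(\theta_0-\theta_m^{\star})$ immediate, bypassing the paper's page of algebra; what it loses is the explicit finite-$N$ risk, which the paper later reuses (e.g., the form of $\mathbf{W}_{m,N}$ and the trace terms) and which quantifies exactly what the limit discards.
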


To study overfitting in the meta learning model, we quantify its generalization ability via the widely used metric - \emph{excess risk}.
The excess risk of method $\mathcal{A}$ (which can be ``$\rm ma$'' for MAML and ``$\rm im$'' for iMAML), with an empirical solution $\hat{\theta}_0^{\cal A}$ and population solution ${\theta}_0^{\cal A}$, is defined as
\begin{align}
  \label{eq:def_excess_risk}
  &\mathcal{E}^{\cal A}(\hat{\theta}_0^{\cal A}) \coloneqq \mathcal{R}^{\cal A}(\hat{\theta}_0^{\cal A}) - \mathcal{R}^{\cal A}({\theta}_0^{\cal A}). 
\end{align}
In \eqref{eq:def_excess_risk}, the excess risk measures the difference between the population risk of the empirical solution, $\hat{\theta}_0$ and the optimal population risk. 
Given total number of training samples $MN$, if $d\rightarrow \infty$, the classic learning theory implies that the excess risk $\mathcal{E}^{\cal A}(\hat{\theta}_0^{\cal A})$ also grows, which leads to overfitting~\citep{hastie2009elements}. 
The larger the excess risk, the further the empirical solution $\hat{\theta}_0^{\cal A}$ is from the optimal population solution $\theta_0^{\cal A}$, indicating more severe \emph{overfitting}.

\subsection{Main results}
With the closed-form solutions given in Proposition~\ref{prop:solutions}, we are ready to bound the excess risk of MAML and iMAML in the overparameterized linear regime. 
For notation brevity, we first introduce some universal constants such as $c_0, c_1, c_2, \dots$, and only present the dominating terms in the subsequent results. The precise presentation of remaining terms are deferred to the supplementary document.

We first decompose the excess risk into three terms in Proposition~\ref{prop:excess_risk}.
\begin{proposition}
  \label{prop:excess_risk}
  Define $\mathbf{W}^{\cal A} 
  \coloneqq  {\mathbb{E}_m[\mathbf{W}_m^{\cal A}]}$. The excess risk of a meta learning method $\mathcal{A}$ can be bounded by 
  \begin{align*}\label{eq:excess_risk_bound}
 &\mathcal{E}^{\cal A} (\hat{\theta}_0^{\cal A})
      \lesssim 
      \mathcal{E}_{\theta_m^{\star}}
      + \mathcal{E}_{\epsilon_m}
      + \mathcal{E}_b 
    \numberthis
  \end{align*}
 where the first term $\mathcal{E}_{\theta_m^{\star}}$  is a function of $\theta_m^{\star}, \theta_0^{\cal A}, \mathbf{W}^{\cal A}, \hat{\mathbf{W}}_m^{\cal A}$, which quantifies
  the weighted variance of the ground truth task specific parameters $\theta_m^{\star}$; the second term $\mathcal{E}_{\epsilon_m}$, as a function of $\epsilon_m$, is the weighted noise variance; and the third term $\mathcal{E}_b$, as a function of $\theta_0^{\cal A}, \mathbf{W}^{\cal A}, \hat{\mathbf{W}}_m^{\cal A}$, is the bias of the minimum-norm solution in overparameterized MAML or iMAML. 
  \end{proposition}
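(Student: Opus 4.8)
The plan is to exploit the quadratic structure of the population risk already established in Proposition~\ref{prop:solutions} and then perform a three-way algebraic split of the gap between the empirical and population solutions. First I would observe that since $\mathcal{R}^{\cal A}(\theta_0)=\mathbb{E}_m[\|\theta_0-\theta_m^{\star}\|^2_{\mathbf{W}_m^{\cal A}}]$ is a quadratic form in $\theta_0$ with Hessian $2\mathbf{W}^{\cal A}$, where $\mathbf{W}^{\cal A}=\mathbb{E}_m[\mathbf{W}_m^{\cal A}]$ is invertible under Assumption~\ref{assmp:V}, and since $\theta_0^{\cal A}$ is its exact minimizer by \eqref{eq:theta_0_star_A_sln}, the excess risk collapses to the clean identity
\[
\mathcal{E}^{\cal A}(\hat{\theta}_0^{\cal A})
=\mathcal{R}^{\cal A}(\hat{\theta}_0^{\cal A})-\mathcal{R}^{\cal A}(\theta_0^{\cal A})
=\|\hat{\theta}_0^{\cal A}-\theta_0^{\cal A}\|^2_{\mathbf{W}^{\cal A}}.
\]
This reduces the entire problem to understanding the displacement vector $\hat{\theta}_0^{\cal A}-\theta_0^{\cal A}$.

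Next I would substitute the closed form of $\hat{\theta}_0^{\cal A}$ from \eqref{eq:theta_0_hat_A_sln} and split the displacement into three additive pieces. Writing $\bar{\mathbf{W}}:=\sum_{m}\hat{\mathbf{W}}_m^{\cal A}$ and $\mathbf{P}:=\bar{\mathbf{W}}^{\dagger}\bar{\mathbf{W}}$ for the orthogonal projection onto $\mathrm{range}(\bar{\mathbf{W}})$ (well defined since each $\hat{\mathbf{W}}_m^{\cal A}$ in Table~\ref{tab:weight_matrices} is symmetric positive semidefinite), I would first peel off the noise term $\Delta_M^{\cal A}$, which by Proposition~\ref{prop:solutions} depends only on $\mathbf{X}_m$ and $\epsilon_m$. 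For the remaining signal part I would insert the decomposition $\theta_m^{\star}=\theta_0^{\cal A}+(\theta_m^{\star}-\theta_0^{\cal A})$ inside $\bar{\mathbf{W}}^{\dagger}\sum_m\hat{\mathbf{W}}_m^{\cal A}\theta_m^{\star}$ and use $\bar{\mathbf{W}}^{\dagger}\bar{\mathbf{W}}=\mathbf{P}$ to obtain
\[
\hat{\theta}_0^{\cal A}-\theta_0^{\cal A}
=\underbrace{-(\mathbf{I}-\mathbf{P})\theta_0^{\cal A}}_{\text{bias}}
+\underbrace{\bar{\mathbf{W}}^{\dagger}{\textstyle\sum_m}\hat{\mathbf{W}}_m^{\cal A}(\theta_m^{\star}-\theta_0^{\cal A})}_{\text{cross-task variation}}
+\underbrace{\Delta_M^{\cal A}}_{\text{noise}}.
\]
The first piece is exactly the component of the optimal initialization lying outside the range of the empirical weight matrix, which the minimum-norm constraint cannot recover; the second aggregates the task-to-task fluctuation of $\theta_m^{\star}$ about $\theta_0^{\cal A}$; the third carries the label noise.

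Finally I would apply the elementary inequality $\|a+b+c\|^2_{\mathbf{W}^{\cal A}}\le 3(\|a\|^2_{\mathbf{W}^{\cal A}}+\|b\|^2_{\mathbf{W}^{\cal A}}+\|c\|^2_{\mathbf{W}^{\cal A}})$, which follows from the triangle inequality for the norm induced by the positive definite $\mathbf{W}^{\cal A}$, absorbing the constant $3$ into the $\lesssim$ notation. Identifying the three resulting summands with $\mathcal{E}_b$, $\mathcal{E}_{\theta_m^{\star}}$, and $\mathcal{E}_{\epsilon_m}$ respectively, the functional dependencies then match those asserted in the statement: $\mathcal{E}_b$ depends on $\theta_0^{\cal A},\mathbf{W}^{\cal A},\hat{\mathbf{W}}_m^{\cal A}$ through $\mathbf{P}$; $\mathcal{E}_{\theta_m^{\star}}$ additionally on $\theta_m^{\star}$; and $\mathcal{E}_{\epsilon_m}$ on $\epsilon_m$ through $\Delta_M^{\cal A}$.

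The main obstacle I anticipate is not the decomposition itself, which is essentially bookkeeping once the quadratic identity is in hand, but rather justifying the exact form of $\Delta_M^{\cal A}$ inherited from Proposition~\ref{prop:solutions} and confirming that it isolates the noise contribution cleanly, so that the cross terms discarded by the triangle inequality do not secretly reintroduce label noise into the bias or cross-task pieces. A secondary care point is the pseudo-inverse algebra: in the overparameterized regime of Assumption~\ref{assmp:overpara}, $\bar{\mathbf{W}}$ is rank-deficient, so I must work with $\mathbf{P}=\bar{\mathbf{W}}^{\dagger}\bar{\mathbf{W}}\neq\mathbf{I}$ rather than a genuine inverse, and verify the scale invariance $(c\bar{\mathbf{W}})^{\dagger}(c\bar{\mathbf{W}})=\bar{\mathbf{W}}^{\dagger}\bar{\mathbf{W}}$ so that the normalization $\tfrac{1}{M}$ implicit in $\mathbf{W}^{\cal A}$ versus the unnormalized $\bar{\mathbf{W}}$ is immaterial.
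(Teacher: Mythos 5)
Your proposal is correct and follows essentially the same route as the paper: the paper likewise reduces the excess risk to the exact identity $\|\hat{\theta}_0^{\cal A}-\theta_0^{\cal A}\|^2_{\mathbf{W}^{\cal A}}$ via first-order optimality of $\theta_0^{\cal A}$, then splits off $\Delta_M^{\cal A}$ and decomposes the signal part using $\theta_m^{\star}=\theta_0^{\cal A}+(\theta_m^{\star}-\theta_0^{\cal A})$, producing exactly your three pieces (its bias matrix $\mathbf{B}^{\cal A}=(\mathbf{P}-\mathbf{I})\mathbf{W}^{\cal A}(\mathbf{P}-\mathbf{I})$ is your $(\mathbf{I}-\mathbf{P})$ term). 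The only cosmetic differences are that the paper uses two nested factor-$2$ inequalities instead of your one-shot factor $3$, and it pushes $\mathcal{E}_{\epsilon_m}=\|\Delta_M^{\cal A}\|^2_{\mathbf{W}^{\cal A}}$ one step further into the high-probability trace form $c_1\sigma^2\log(1/\delta)\,\mathrm{Tr}(\mathbf{C}^{\cal A})$ via sub-Gaussian concentration.
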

  Based on this decomposition, as we will show in Section \ref{sec.proof-sketch}, the bound of the excess risk can be derived from the bound of these three terms  $\mathcal{E}_{\theta_m^{\star}}, \mathcal{E}_{\epsilon_m^{\star}}, \mathcal{E}_b$, respectively, which gives Theorem~\ref{thm:maml_excess_risk_bound}.

\begin{theorem}[Excess risk bound]
\label{thm:maml_excess_risk_bound}
  Suppose Assumptions~\ref{assmp:overpara}-\ref{assmp:task_para} hold. 
Let $\mu_1(\cdot) \geq \mu_2(\cdot) \dots$ denote the eigenvalues of a matrix in the descending order.
  For the meta linear regression problem with the minimum-norm solution \eqref{eq:mne}, for $0 \leq k \leq d$,
  define the effective ranks as 
\begin{equation}\label{eq.eff-rank}
r_k\left(\bW^{\cal A}\right) \coloneqq 
  \frac{\sum_{i>k} {\mu}_{i}\left(\bW^{\cal A}\right)}{{\mu}_{k+1}\left(\bW^{\cal A}\right)};~~~~~~~~~
    R_k\left({\mathbf{W}}^{\cal A}\right) \coloneqq 
  \frac{\left(\sum_{i>k} {\mu}_{i}({\mathbf{W}}^{\cal A})\right)^{2}}{\sum_{i>k} {\mu}_{i}^{2}\left({\mathbf{W}}^{\cal A}\right)}.
\end{equation}
With the cross-task  data heterogeneity $\mathbb{V}$ defined in Assumption~\ref{assmp:V}, if there exist universal constants $c_1, c_2, c_3 > 1$ such that the effective dimension $k^{*}=\min \{k \geq 0: r_{k}(\bW^{\cal A}) \geq c_1 NM \}$, $c_2 \log(1/\delta) < NM$ and $k^* < NM/c_3$,
then with probability at least $1 - \delta$, the excess risk  satisfies 
\begin{align}\label{eq:maml_excess_risk}
 \!\!\! \mathcal{E}^{\cal A}(\hat{\theta}_0^{\cal A}) \lesssim 
  \|\mathbb{E}[\theta_m^{\star}]\|^2 \|\bW^{\cal A}\| \sqrt{\frac{r_0(\mathbf{W}^{\cal A})} {MN}} 
  +\sigma^{2} \Bigg(\frac{k^{*}}{MN}+\frac{MN}{R_{k^{*}}({\mathbf{W}}^{\cal A} )}\Bigg) \Bigg(1+\mathbb{V}(\{{\mathbf{W}}_m^{\cal A}\}_{m=1}^M) \Bigg).\!
\end{align}
\end{theorem}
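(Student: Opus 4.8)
The plan is to start from the quadratic structure of the meta-test risk in Proposition~\ref{prop:solutions} and reduce the excess risk to a single weighted-norm term, then control it through the three-way split of Proposition~\ref{prop:excess_risk}. Since $\mathcal{R}^{\mathcal{A}}(\theta_0)=\mathbb{E}_m[\|\theta_0-\theta_m^{\star}\|^2_{\mathbf{W}_m^{\mathcal{A}}}]$ is quadratic in $\theta_0$ with curvature $\mathbf{W}^{\mathcal{A}}=\mathbb{E}_m[\mathbf{W}_m^{\mathcal{A}}]$, and $\theta_0^{\mathcal{A}}$ is its exact minimizer, a first-order expansion at $\theta_0^{\mathcal{A}}$ kills the linear term and yields $\mathcal{E}^{\mathcal{A}}(\hat{\theta}_0^{\mathcal{A}})=\|\hat{\theta}_0^{\mathcal{A}}-\theta_0^{\mathcal{A}}\|^2_{\mathbf{W}^{\mathcal{A}}}$. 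Because Assumption~\ref{assmp:task_para} makes $\theta_m^{\star}$ independent of $\mathbf{X}_m$ (hence of $\mathbf{W}_m^{\mathcal{A}}$), the population minimizer collapses to $\theta_0^{\mathcal{A}}=\mathbb{E}[\theta_m^{\star}]$, which is exactly why $\|\mathbb{E}[\theta_m^{\star}]\|^2$ surfaces in the final bound. Substituting the closed form \eqref{eq:theta_0_hat_A_sln} for $\hat{\theta}_0^{\mathcal{A}}$ then separates the error into (i) the deviation of the task-averaged parameter $(\sum_m\hat{\mathbf{W}}_m^{\mathcal{A}})^{\dag}\sum_m\hat{\mathbf{W}}_m^{\mathcal{A}}(\theta_m^{\star}-\mathbb{E}[\theta_m^{\star}])$, (ii) the min-norm projection bias of $\theta_0^{\mathcal{A}}$ onto $\mathrm{range}(\sum_m\hat{\mathbf{W}}_m^{\mathcal{A}})$, and (iii) the noise term $\Delta_M^{\mathcal{A}}$, matching $\mathcal{E}_{\theta_m^{\star}}$, $\mathcal{E}_b$, $\mathcal{E}_{\epsilon_m}$ respectively.

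The common engine for all three bounds is a concentration statement for the empirical curvature $\frac{1}{M}\sum_m\hat{\mathbf{W}}_m^{\mathcal{A}}$ around $\mathbf{W}^{\mathcal{A}}$, split at the effective dimension $k^{*}$. First I would decompose the spectrum of $\mathbf{W}^{\mathcal{A}}$ into the top-$k^{*}$ block and its tail, and use the hypotheses $r_{k^{*}}(\mathbf{W}^{\mathcal{A}})\geq c_1 NM$ and $k^{*}<NM/c_3$ (the benign-overfitting condition) to show that $\sum_m\hat{\mathbf{W}}_m^{\mathcal{A}}$ is invertible and well-conditioned on the top block while the tail behaves like a nearly-flat, isotropic spectrum --- the meta analogue of the argument in \citep{Bartlett_benign_linear,Tsigler2020Benign_ridge}. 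For the cross-task variance $\mathcal{E}_{\theta_m^{\star}}$ and the bias $\mathcal{E}_b$ I would then apply a vector/matrix Bernstein inequality to the sub-Gaussian sum over the $M$ tasks: the centered task parameters contribute an $\mathcal{O}(1/\sqrt{MN})$ fluctuation whose effective dimension is captured by $r_0(\mathbf{W}^{\mathcal{A}})$, giving the term $\|\mathbb{E}[\theta_m^{\star}]\|^2\|\mathbf{W}^{\mathcal{A}}\|\sqrt{r_0(\mathbf{W}^{\mathcal{A}})/MN}$, into which the projection bias is absorbed once the tail spectrum is controlled by the effective-rank condition.

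For the noise variance $\mathcal{E}_{\epsilon_m}$ I would write $\Delta_M^{\mathcal{A}}$ explicitly as a linear functional of the stacked noise $\{\epsilon_m\}$ weighted by $(\sum_m\hat{\mathbf{W}}_m^{\mathcal{A}})^{\dag}$ times the data, and evaluate its $\mathbf{W}^{\mathcal{A}}$-weighted squared norm via a Hanson--Wright-type bound for quadratic forms in sub-Gaussian vectors. Splitting at $k^{*}$ produces the two familiar pieces $\sigma^2 k^{*}/(MN)$ (top block, head variance) and $\sigma^2 MN/R_{k^{*}}(\mathbf{W}^{\mathcal{A}})$ (tail block, governed by the second effective rank $R_{k^{*}}$). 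The extra multiplicative factor $1+\mathbb{V}(\{\mathbf{W}_m^{\mathcal{A}}\})$ is the price of task heterogeneity (challenge T3): because the per-task matrices $\hat{\mathbf{W}}_m^{\mathcal{A}}$ have different eigenbases, the cross terms in the quadratic form no longer cancel, and I would bound their misalignment by the relative spectral deviation $\mathbb{V}$ of $\{\mathbf{W}_m^{\mathcal{A}}\}$ about their mean.

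The hard part will be the concentration step for $\hat{\mathbf{W}}_m^{\mathcal{A}}$ itself (challenge T1): for MAML, $\hat{\mathbf{W}}_m^{\mathrm{ma}}=(\mathbf{I}-\alpha\hat{\mathbf{Q}}_m^{\rm tr})\hat{\mathbf{Q}}_m^{\rm va}(\mathbf{I}-\alpha\hat{\mathbf{Q}}_m^{\rm tr})$ is a quartic polynomial in the sub-Gaussian data, so its deviation from $\mathbf{W}_m^{\mathrm{ma}}=(\mathbf{I}-\alpha\mathbf{Q}_m)\mathbf{Q}_m(\mathbf{I}-\alpha\mathbf{Q}_m)$ involves fourth- and higher-order moments of $\mathbf{z}_m$ rather than the second-order moments that suffice in plain linear regression. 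I expect this to require moment bounds for products of independent sample covariances (for example via decoupling and matrix moment inequalities) together with careful tracking of the step size $\alpha$ (resp.\ $\gamma^{-1}$ for iMAML) so that $\mathbf{I}-\alpha\hat{\mathbf{Q}}_m^{\rm tr}$ stays spectrally bounded; isolating the dominating fourth-moment contribution and showing it still concentrates at the $1/\sqrt{MN}$ rate under Assumptions~\ref{assmp:subGaussian_data}--\ref{assmp:V} is where most of the technical effort will go, and it is what makes the meta bound genuinely different from the single-level case.
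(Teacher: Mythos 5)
Your skeleton matches the paper's at the top level: the exact quadratic identity $\mathcal{E}^{\cal A}(\hat{\theta}_0^{\cal A})=\|\hat{\theta}_0^{\cal A}-\theta_0^{\cal A}\|^2_{\mathbf{W}^{\cal A}}$, the three-term split of Proposition~\ref{prop:excess_risk}, the identification $\theta_0^{\cal A}=\mathbb{E}[\theta_m^{\star}]$, a head/tail split at $k^*$, and Hanson--Wright for the noise. However, there are two genuine gaps. The first is a misattribution that breaks your derivation of the leading bias-type term: you claim the centered task parameters' fluctuation produces $\|\mathbb{E}[\theta_m^{\star}]\|^2\|\mathbf{W}^{\cal A}\|\sqrt{r_0(\mathbf{W}^{\cal A})/MN}$, with the min-norm projection bias merely ``absorbed'' into it. A concentration bound for centered variables scales with their variance, which under Assumption~\ref{assmp:task_para} is $R^2/d$ per coordinate; it cannot be proportional to the squared mean $\|\mathbb{E}[\theta_m^{\star}]\|^2$. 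In the paper the logic is the reverse of yours: that term \emph{is} the bias $\mathcal{E}_b=\theta_0^{\top}\mathbf{B}\theta_0$ of the minimum-norm solution (Lemma~\ref{lm:bound_B}), where $\mathbf{B}=(\mathbf{P}-\mathbf{I})\mathbf{W}^{\cal A}(\mathbf{P}-\mathbf{I})$ with $\mathbf{P}$ the projection onto the row space of the design, and the $\sqrt{r_0/MN}$ factor comes from Koltchinskii--Lounici-type operator-norm concentration of the sample covariance, $\|\mathbf{W}^{\cal A}-\frac{1}{MN_{\rm va}}\bar{\mathbf{X}}^{\top}\bar{\mathbf{X}}\|\lesssim\|\mathbf{W}^{\cal A}\|\sqrt{r_0(\mathbf{W}^{\cal A})/(MN_{\rm va})}$ (Lemma~\ref{lm:concentrate_sample_cov}); meanwhile the cross-task variance is shown by Hanson--Wright to satisfy $\mathcal{E}_{\theta_m^{\star}}\leq\tilde{\mathcal{O}}(N/d)\,\mathcal{E}_{\epsilon_m}$ (Lemma~\ref{lm:bound_task_para_dist}), i.e.\ it is non-dominant in the overparameterized regime and contributes no $\sqrt{r_0/MN}$ term at all.

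The second gap is in the step you yourself flag as the hard one. You plan to confront the quartic dependence of $\hat{\mathbf{W}}_m^{\rm ma}=(\mathbf{I}-\alpha\hat{\mathbf{Q}}_m^{\rm tr})\hat{\mathbf{Q}}_m^{\rm va}(\mathbf{I}-\alpha\hat{\mathbf{Q}}_m^{\rm tr})$ head-on with decoupling and fourth-moment matrix inequalities, but the paper never needs such machinery: it exploits the algebraic identity $\sum_m\hat{\mathbf{W}}_m^{\cal A}=\frac{1}{N_{\rm va}}\tilde{\mathbf{X}}^{\cal A\top}\tilde{\mathbf{X}}^{\cal A}$ with the modified design $\tilde{\mathbf{X}}^{\rm ma}=[\mathbf{X}_m^{\rm va}(\mathbf{I}-\alpha\hat{\mathbf{Q}}_m^{\rm tr})]$ (resp.\ $\tilde{\mathbf{X}}^{\rm im}$). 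This converts the bilevel min-norm solution into a min-norm linear regression with design $\tilde{\mathbf{X}}$, so the Bartlett-style leave-one-out analysis of $\mathbf{A}=\tilde{\mathbf{X}}\tilde{\mathbf{X}}^{\top}$ (Lemmas~\ref{lm:first_k_terms}, \ref{lm:last_terms}, \ref{lm:eigenv_A}) applies directly; and since $\mathbf{X}_m^{\rm tr}$ and $\mathbf{X}_m^{\rm va}$ are independent, conditioning on the training data makes $\tilde{\mathbf{X}}$ a linear image of sub-Gaussian validation data, so only second-order sample-covariance concentration is ever invoked --- once to compare $\tilde{\mathbf{X}}$ with its population surrogate $\bar{\mathbf{X}}$ (Lemmas~\ref{lm:eigenvalues_diff_tilde_bar} and \ref{lm:singularvalues_diff_tilde_bar}, which is where the step-size restrictions on $\alpha$ and $\gamma$ enter), and once for the standard benign-overfitting bounds. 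Without this reduction, your direct higher-moment program is not obviously executable at the claimed $1/\sqrt{MN}$ rate, and it is precisely the part of your sketch that remains unresolved.
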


Theorem~\ref{thm:maml_excess_risk_bound} provides the excess risk bound via the effective ranks.  In \eqref{eq.eff-rank}, the effective ranks $r_k$ and $R_k$ of a matrix capture the distribution of the eigenvalues of this matrix, and the effective dimension $k^*$ determines the above upper bound by considering the asymmetry of the eigenvalues of the solution matrix. The idea is to choose $k^*$ that makes $R_{k^*}$ large enough and keeps $k^*$ small enough compared to $MN$ so that the variance term of the excess risk is controlled.
For example, $r_0$ is the trace normalized by the largest eigenvalue, which is bounded above by $R_0$. And both $r_0$ and $R_0$
are no larger than the rank of the matrix, and they are equal to the rank only when all non-zero eigenvalues are equal. If the eigenvalues distribute more uniformly, the effective rank will be larger, otherwise smaller.

\begin{remark} 

\emph{1) The definition of effective rank has been also given in~\citep{Bartlett_benign_linear}  but only on the data matrix $\mathbf{Q}$.
And our setting reduces to the single task ERM learning, 
or the linear regression case in~\citep{Bartlett_benign_linear},
when $M=1$, $\theta_m^{\star} = \theta_0$,
$\mathbf{W}_m^{\cal A} = \mathbf{Q}$, which implies that the cross-task variance in \eqref{eq:excess_risk_bound} as well as the data heterogeneity $\mathbb{V}(\cdot)$ reduces to zero.
Accordingly, Theorem~\ref{thm:maml_excess_risk_bound} reduces to Theorem 4 in \citep{Bartlett_benign_linear}.}\\ 
\emph{2) Given Theorem~\ref{thm:maml_excess_risk_bound}, in order to control the excess risk of solution $\hat{\theta}_0^{\cal A}$, we want $r_0(\bW^{\cal A})$ to be small compared to the total number of training samples $MN$, but $r_{k^*}(\bW^{\cal A})$ and $R_{k^*}(\bW^{\cal A})$ to be large compared to $MN$.
In addition, the cross-task heterogeneity $\mathbb{V}$ should be small.
Since for a matrix $\mathbf{W}$, $r_k(\mathbf{W}) \leq R_k(\mathbf{W}) \leq d$, this suggests the model benefits from overparameterization.} 
\end{remark}

Building upon Theorem~\ref{thm:maml_excess_risk_bound}, we now discuss the conditions for ``benign overfitting'', which refers to the situation that overparameterization does not ``harm'' the excess risk, or the excess risk still vanishes when $d > MN$ and $N,M,d$ increase.
\begin{figure*}[t]
\centering
\begin{subfigure}{0.45\textwidth}
\centering
\includegraphics[width=0.9\textwidth]{./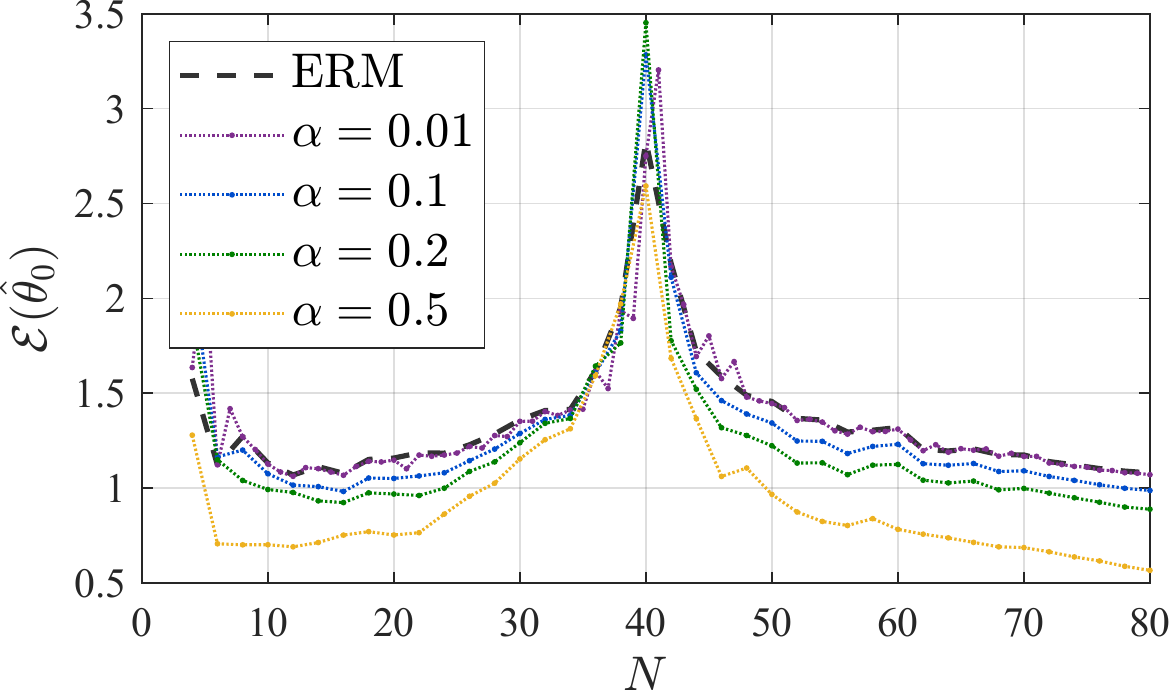}
\vspace{-0.1cm}
\caption{MAML with different $\alpha$. 
}
\end{subfigure}
~~
\begin{subfigure}{0.45\textwidth}
\centering
\includegraphics[width=0.9\textwidth]{./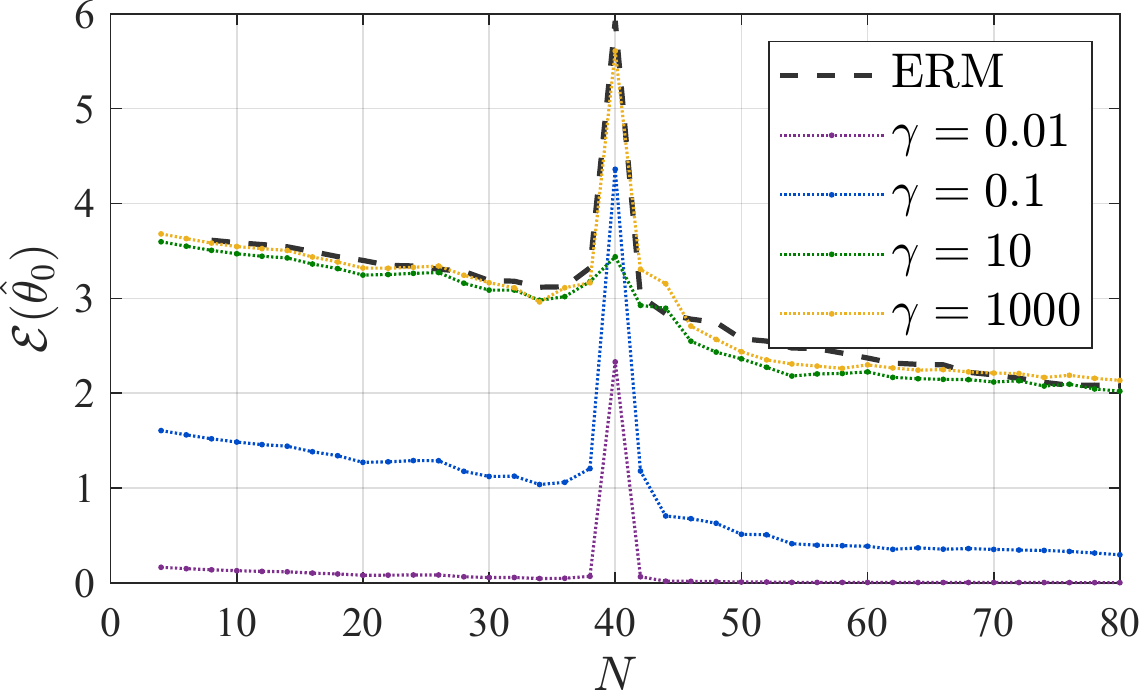}
\vspace{-0.1cm}
\caption{iMAML with different $\gamma$. }
\end{subfigure}
\caption{Excess risk vs number of samples ($N$) with different hyperparameters ($M=10, d=200$). 
}
\label{fig:test_risk_vs_N}
\end{figure*}

\begin{tcolorbox}[emphblock]
\begin{definition}[Benign overfitting condition in meta learning]
Under Assumptions~\ref{assmp:overpara}-\ref{assmp:task_para}, the weight matrices  $\bW^{\cal A}$ for method $\mathcal{A}$ satisfy the \emph{benign overfitting condition} in gradient-based meta learning, if and only if 
\begin{equation}
  \label{eq:benign}
  \lim _{NM,d \rightarrow \infty} \frac{r_{0}(\bW^{\cal A} )}{NM} =
  \lim _{NM,d \rightarrow \infty} \frac{k^{*}}{NM}
  =\lim _{NM,d \rightarrow \infty} \frac{NM}{R_{k^{*}}(\bW^{\cal A}  )} =0.
\end{equation}
\end{definition}
\end{tcolorbox}

This guarantees the excess risk~\eqref{eq:maml_excess_risk} goes to zero in overparameterized meta learning models with sufficient training data from all tasks.
To provide an intuitive explanation,
Figure~\ref{fig:test_risk_vs_N} plots the population risk versus the number of the training data, which demonstrates the ``double descent'' curve. Namely, as $N$ increases,  $\mathcal{E}(\hat{\theta}_0)$ first decreases, then increases and then decreases again, as is discovered in overparameterized neural networks \citep{nakkiran2021deep}. 
The trend in Figure~\ref{fig:test_risk_vs_N} is similar to the trend observed in~\citep{nakkiran2020optimal}.
When $d/(NM) > 1$, the model is overparameterized, which can overfit the training data, leading to larger excess risk as $N$ decreases. However, Figure~\ref{fig:test_risk_vs_N} shows the excess risk does not become too large as $N$ decreases, indicating that overfitting does not severely harm the population risk in this case.
%

\subsection{Examples and discussion} 
\label{sec:experiments}
In this section, we discuss how the benign overfitting condition \eqref{eq:benign} in gradient-based meta learning reduces to
that in single task linear regression; e.g.,  in~\citep{Bartlett_benign_linear,Tsigler2020Benign_ridge}.
We also provide examples to show 
\begin{enumerate}
    \item [\bf Q1)] {\em how certain properties of meta training data affect the excess risk; and,}
    \item [\bf Q2)] {\em how to choose the hyperparameters that preserve benign overfitting.}    
\end{enumerate}

\textbf{Data covariance and cross-task heterogeneity.}
Theorem~\ref{thm:maml_excess_risk_bound} reveals that the excess risk depends on both the eigenvalues of the data covariance matrix $\mathbf{Q}_m$, and the cross-task data heterogeneity, measured by $\mathbb{V}(\{\mathbf{Q}_m\}_{m=1}^M )$.
We give an example below to better demonstrate how these two properties of gradient-based meta training data affect the excess risk. 

\begin{example}[Data covariance]
\label{exmp:simple_eigenvalue}
Suppose $\mathbf{Q}_m=\operatorname{diag}(\mathbf{I}_{d_1}, \beta \mathbf{I}_{d-d_1}),\,\forall m$. 
Set $M=10, d = 200, d_1 = 20, $ $\alpha = 0.1 $ for MAML and $\gamma = 10^3$ for iMAML.
Then the benign overfitting condition \eqref{eq:benign} is satisfied by MAML and iMAML.
We plot the excess risk under different $\beta$
  in Figure~\ref{fig:example_data_covariance}.
\end{example}
\begin{figure}
  \centering
  \begin{subfigure}{0.45\textwidth}
  \centering
  \includegraphics[width=0.9\textwidth]{./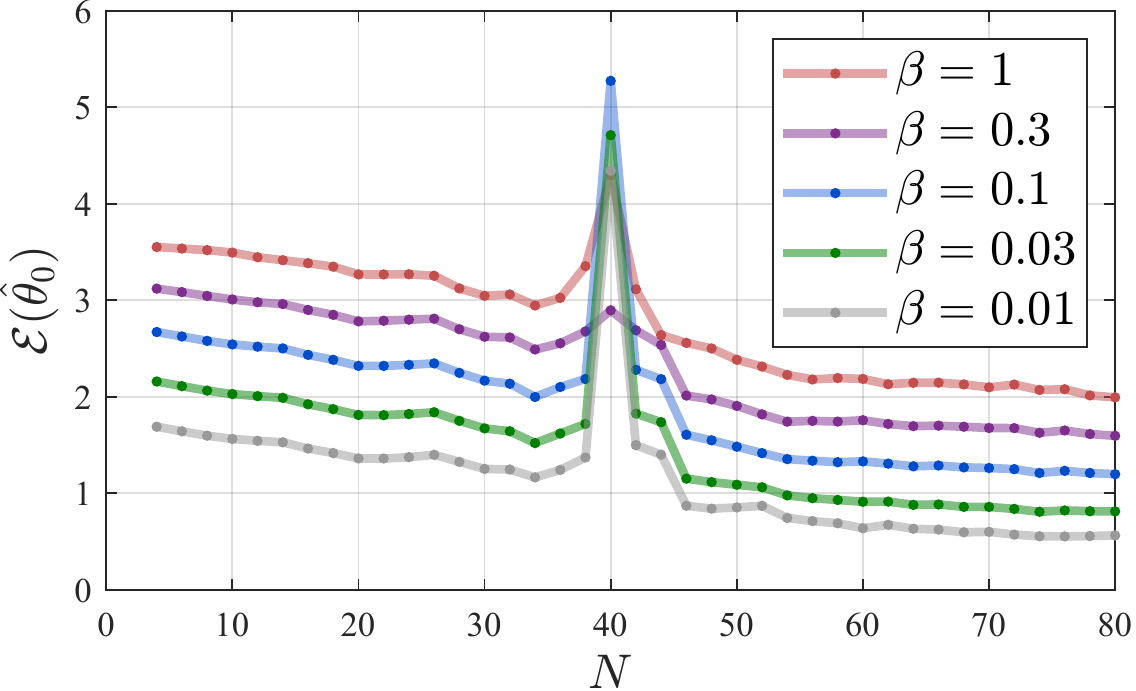}
  \vspace{-0.1cm}
  \caption{MAML}
  \label{fig:exmp1-onestep-maml}
\end{subfigure}
~~
\begin{subfigure}{0.45\textwidth}
  \centering
  \includegraphics[width=0.9\textwidth]{./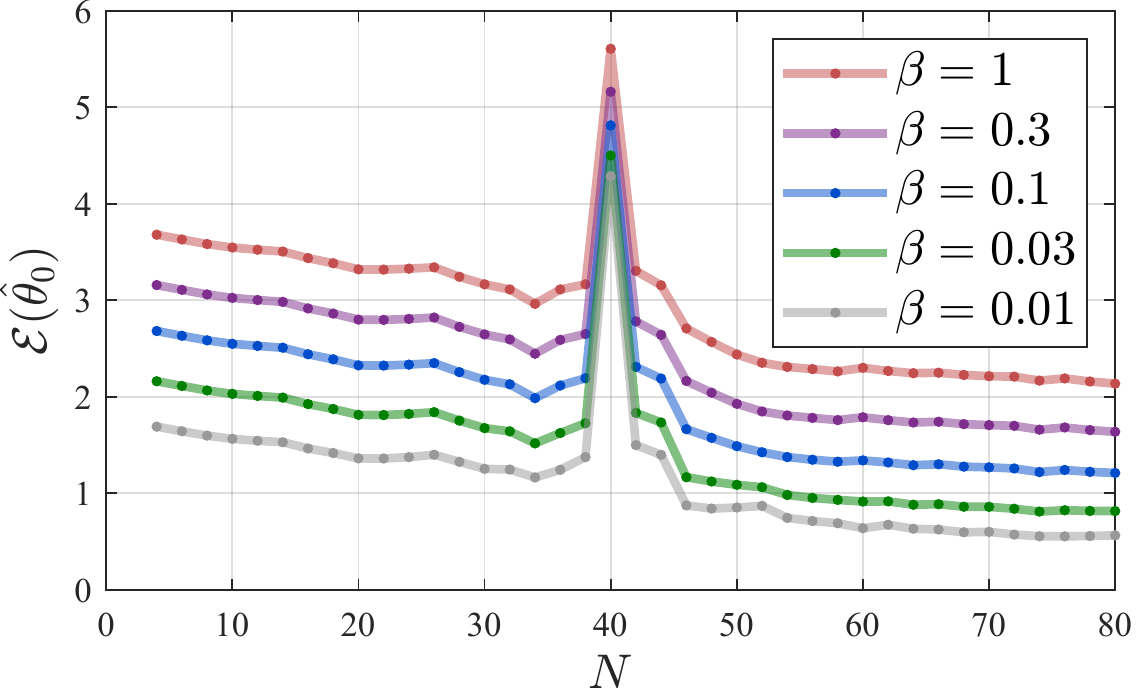}
  \vspace{-0.1cm}
  \caption{iMAML}
  \label{fig:exmp1-imaml}
\end{subfigure}
  \caption{Excess risks vs number of samples ($N$) for  $\mathbf{Q}_m=\operatorname{diag}(\mathbf{I}_{d_1}, \beta \mathbf{I}_{d-d_1})$ with different $\beta$.}
  \label{fig:example_data_covariance}
\end{figure}

\begin{figure}[t]
  \centering
  \begin{subfigure}{0.45\textwidth}
  \centering
  \includegraphics[width=0.9\textwidth]{./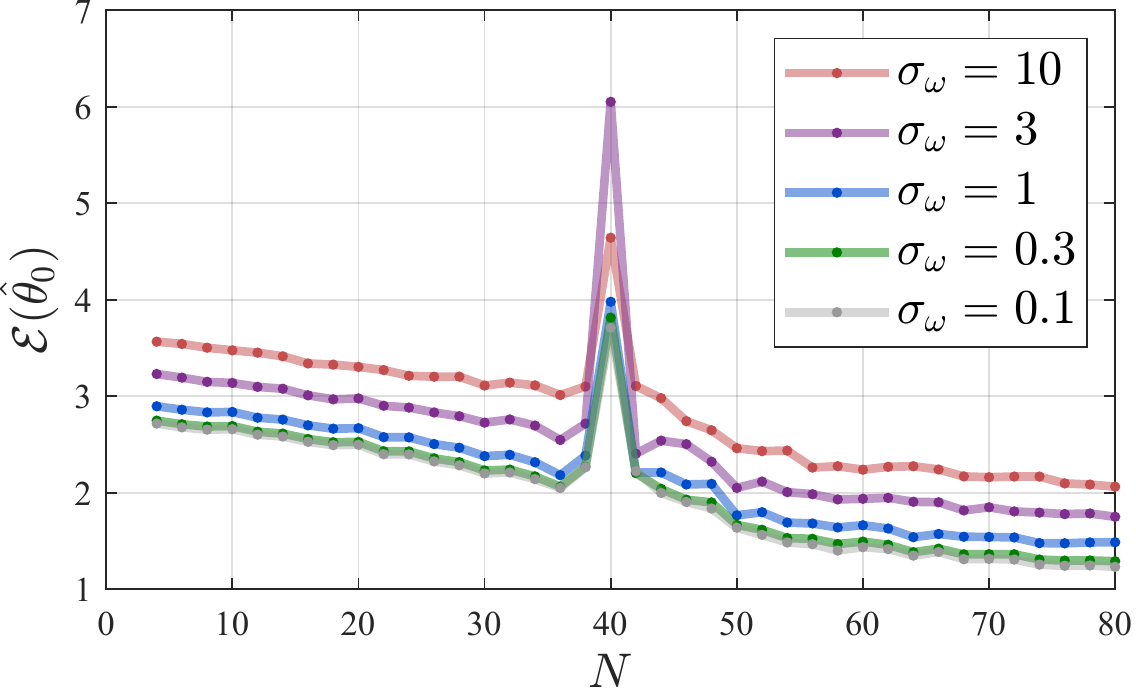}
  \vspace{-0.1cm}
  \caption{MAML.}
  \label{fig:exmp2-maml}
\end{subfigure}
~~
\begin{subfigure}{0.45\textwidth}
\centering
\includegraphics[width=0.9\textwidth]{./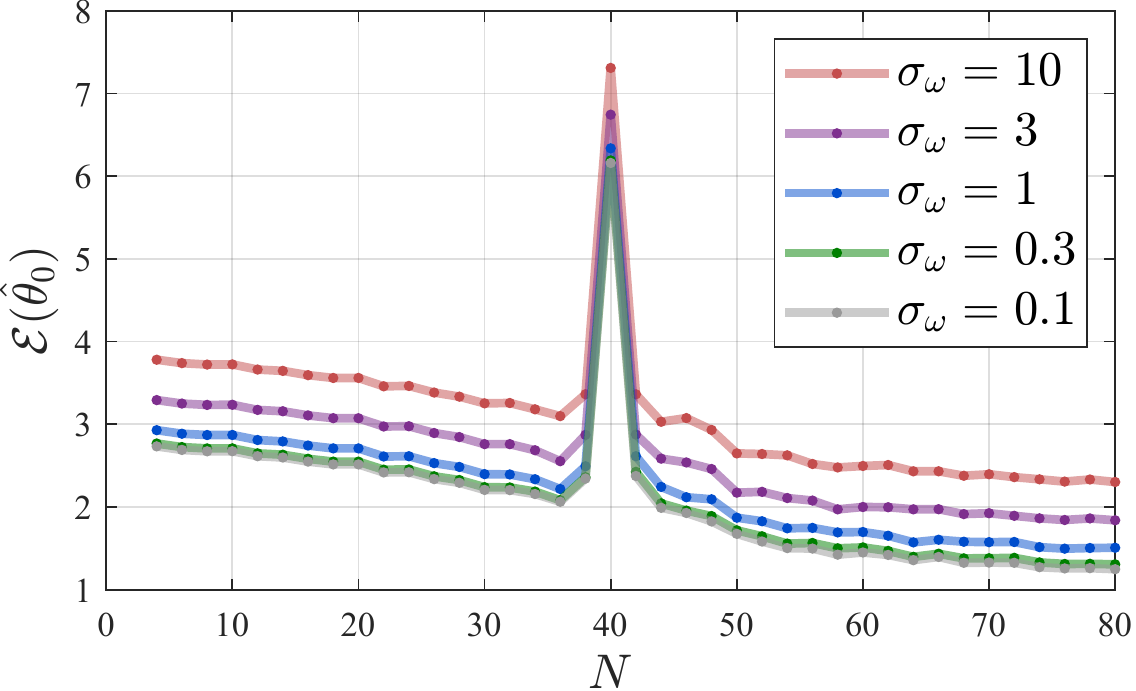}
\vspace{-0.1cm}
\caption{iMAML}
\label{fig:exmp2-imaml}
\end{subfigure}
  \caption{Excess risks of MAML and iMAML vs number of training samples ($N$) for  $\mathbf{Q}_m=|1+\omega_m|\operatorname{diag}(\mathbf{I}_{d_1}, \beta \mathbf{I}_{d-d_1}), \omega_m \sim \mathcal{N}(0, \sigma_{\omega}^2)$ with different $\sigma_{\omega}$.}
  \label{fig:example2_data_covariance}
  \vspace{-0.2cm}
\end{figure}
From Figure~\ref{fig:example_data_covariance} we can  observe that given a fixed number of training data $N$, the excess risk increases with $\beta$ for both MAML and iMAML. This observation verifies our theory since larger $\beta$ results in a smaller $R_k^{\cal A}({\mathbf{W}}^{\cal A} )$, leading to a larger upper bound on the variance term in~\eqref{eq:maml_excess_risk}.

Example~\ref{exmp:simple_eigenvalue} demonstrates how the per-task data matrix $\mathbf{Q}_m$ affects the excess risk.
We consider another example that demonstrates how the data heterogeneity across tasks affects the excess risk.
\begin{example}[Data heterogeneity]
  Suppose $\mathbf{Q}_m= |\omega_m + 1| \operatorname{diag}(\mathbf{I}_{d_1},  \beta \mathbf{I}_{d-d_1}) $ with $\omega_m \sim \mathcal{N}(0, \sigma_{\omega}^2)$  for all $m$. 
Set $M=10, d=200, d_1 = 20, \beta = 0.3, $ $\alpha = 0.1$ for MAML and $\gamma = 0.1$ for iMAML.
Then it satisfies the benign overfitting condition \eqref{eq:benign} for MAML and iMAML.
Figure~\ref{fig:example2_data_covariance} plots the excess risk with different choices of $\sigma_{\omega}$.
\end{example}

Observing from Figure~\ref{fig:example2_data_covariance} that the larger $\sigma_{\omega}^2$, the higher the excess risk, and the more difficult for the benign overfitting condition to be satisfied for both MAML and iMAML.
Therefore, compared to ERM with a single task, the benign overfitting condition for MAML is more restrictive as it imposes constraints for both the expected data covariance $\mathbf{Q}_m$, and the data heterogeneity $\mathbb{V}(\{{\mathbf{W}}_m^{\cal A}\}_{m=1}^M )$.
 
\begin{figure*}[t]
  \centering
  \begin{subfigure}{0.45\textwidth}
    \centering
    \includegraphics[width=0.9\linewidth]{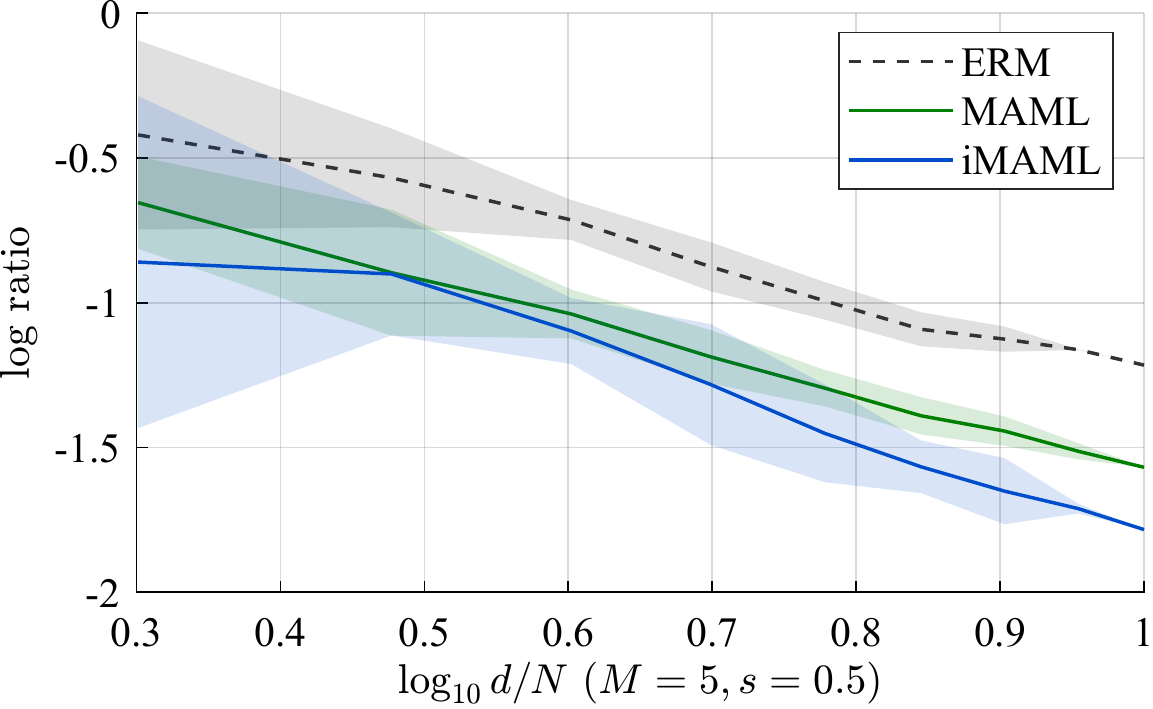}
    \caption{Cross-task variance ratio vs model dimension}
    \label{fig:task_para_dist_M}
  \end{subfigure}~~
  \begin{subfigure}{0.45\textwidth}
    \centering
    \includegraphics[width=0.9\linewidth]{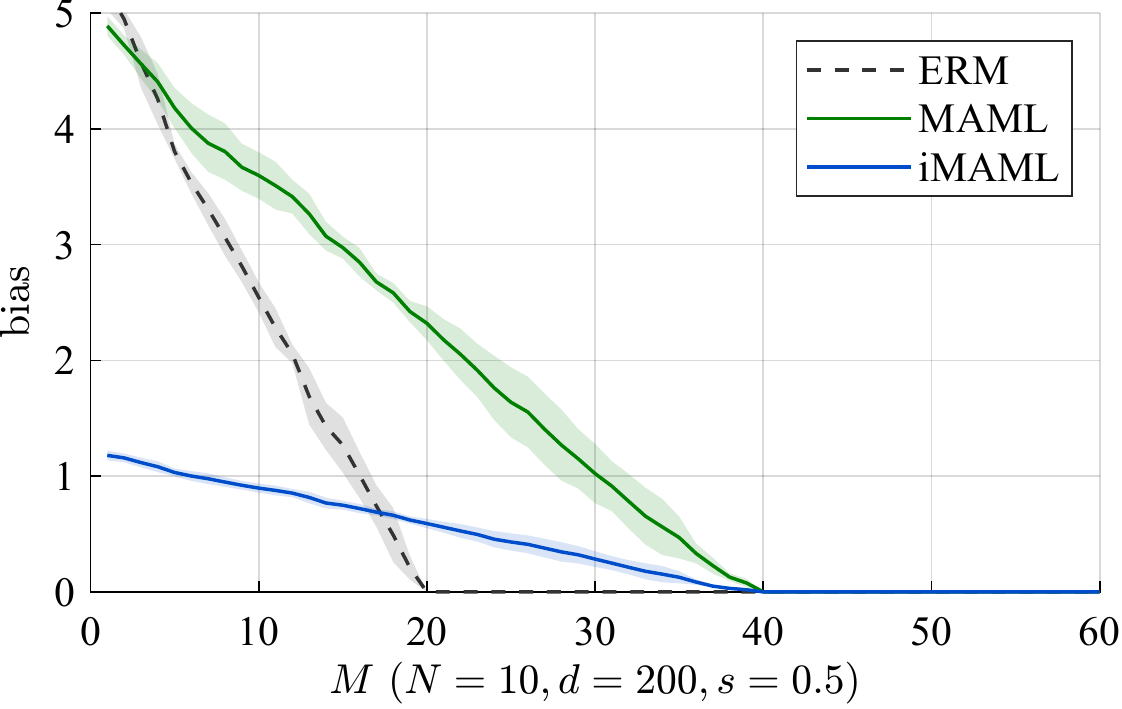}
    \caption{Bias vs task number}
    \label{fig:bias_M}
  \end{subfigure}
  \caption{Cross-task variance and bias versus task number to elaborate Lemma~\ref{lm:bound_task_para_dist} and Lemma~\ref{lm:bound_B}.}
  \label{fig:verify_}
\end{figure*}

\textbf{Connection to multi-task ERM.}
To compare benign overfitting in the gradient-based meta learning with that in the conventional ERM, where $\theta_m =\theta_0$, we can set the step size $\alpha = 0$ in MAML, or $\gamma \to \infty$ in iMAML, and $N_{\rm va} = N$,  which reduces to conventional ERM without adaptation. 

Compared to that of MAML and iMAML in~\eqref{eq:benign}, the benign overfitting condition is less restrictive for ERM since it does not impose constraints on $\alpha$ or $\gamma$. 
Intuitively, benign overfitting is more likely to happen in MAML or iMAML than in ERM.
The hyperparameters $\alpha$ and $\gamma$ will affect the eigenvalues of $\mathbf{W}_m^{\rm ma}$, $\mathbf{W}_m^{\rm im}$, respectively, thus affecting their corresponding excess risk.
Here we provide a sufficient condition where the benign overfitting condition in ERM is preserved in MAML or iMAML.
We summarize the results in the corollary below.

\begin{corollary}[Hyperparameters that preserve benign overfitting]
\label{crlr:hyperpara}
Recall $\lambda_1$ is the largest eigenvalue of $\mathbf{Q}$.
For MAML, when $0 < \alpha \leq \frac{1}{3\lambda_1 }$, 
and for iMAML, when $\gamma \geq \lambda_1$, 
then the effective ranks of $\mathbf{W}^{\rm ma}$ and $\mathbf{W}^{\rm im}$ are bounded above and below by a positive constant times the effective rank of $\mathbf{Q}$, and
therefore the benign overfitting condition holds for MAML and iMAML if it holds for ERM. 
To summarize, there are constants $c_1,c_2,c_3, c$ 
such that for $k^{*}=\min \{k \geq 0: r_{k}(\mathbf{Q}) \geq c_1 NM \}$.
For $\delta < 1$, $c_2 \log(1/\delta) < NM$ and $k^* < NM/c_3$, with probability at least $1 - 7e^{-2N M/c}$, it follows
\begin{align}\label{eq:maml_excess_risk_}
  \mathcal{E}^{\cal A}(\hat{\theta}_0^{\cal A}) \lesssim 
  \|\mathbb{E}[\theta_m^{\star}]\|^2 \bar{\lambda} \sqrt{\frac{r_0(\mathbf{Q})} {MN}} 
  +\sigma^{2} \Bigg(\frac{k^{*}}{MN}+\frac{MN}{R_{k^{*}}(\mathbf{Q} )}\Bigg)  \Bigg(1+\mathbb{V}(\{\mathbf{Q}_m\}_{m=1}^M) \Bigg).
\end{align}
\end{corollary}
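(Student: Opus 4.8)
The plan is to exploit the fact that, for each task $m$, the weight matrix $\mathbf{W}_m^{\cal A}$ from Table~\ref{tab:weight_matrices} is a \emph{spectral function} of $\mathbf{Q}_m$: since $\mathbf{I}-\alpha\mathbf{Q}_m$ and $(\gamma^{-1}\mathbf{Q}_m+\mathbf{I})^{-1}$ both commute with $\mathbf{Q}_m$, the matrices $\mathbf{W}_m^{\rm ma}$ and $\mathbf{W}_m^{\rm im}$ share the eigenvectors of $\mathbf{Q}_m$, and their eigenvalues are obtained by applying the scalar maps $g_\alpha(\lambda)=(1-\alpha\lambda)^2\lambda$ and $h_\gamma(\lambda)=\lambda/(\gamma^{-1}\lambda+1)^2$ to the eigenvalues $\lambda_{m,i}$ of $\mathbf{Q}_m$. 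The first step is therefore purely scalar: I would show that under $0<\alpha\le 1/(3\lambda_1)$, and using Assumption~\ref{assmp:V} to bound $\lambda_{m,i}\le(1+\mathbb{V})\lambda_1$, the quantity $\alpha\lambda_{m,i}$ stays uniformly bounded away from $1$, so that $c\,\lambda\le g_\alpha(\lambda)\le\lambda$ for some constant $c\in(0,1)$; the analogous inequality $c'\lambda\le h_\gamma(\lambda)\le\lambda$ holds for iMAML whenever $\gamma\ge\lambda_1$, since then $\gamma^{-1}\lambda_{m,i}$ is bounded.

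The second, and conceptually central, step lifts these scalar sandwich inequalities to the matrix level while sidestepping the eigenvector-mismatch difficulty (T3). Because $\mathbf{W}_m^{\cal A}$ and $\mathbf{Q}_m$ are simultaneously diagonalizable, the scalar bounds give the per-task Loewner ordering $c\,\mathbf{Q}_m\preceq\mathbf{W}_m^{\cal A}\preceq\mathbf{Q}_m$. The key observation is that the Loewner order is preserved under averaging over tasks, so $c\,\mathbf{Q}\preceq\mathbf{W}^{\cal A}\preceq\mathbf{Q}$ with $\mathbf{Q}=\mathbb{E}_m[\mathbf{Q}_m]$ and $\mathbf{W}^{\cal A}=\mathbb{E}_m[\mathbf{W}_m^{\cal A}]$ — even though the individual tasks have different eigenbases. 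Applying the Courant--Fischer min--max characterization then yields the entrywise eigenvalue sandwich $c\,\mu_i(\mathbf{Q})\le\mu_i(\mathbf{W}^{\cal A})\le\mu_i(\mathbf{Q})$ for every $i$.

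Given this uniform eigenvalue comparison, the third step is routine: substituting into the definitions \eqref{eq.eff-rank} shows that $r_k(\mathbf{W}^{\cal A})$ and $R_k(\mathbf{W}^{\cal A})$ differ from $r_k(\mathbf{Q})$ and $R_k(\mathbf{Q})$ by at most constant factors, e.g. $c\,r_k(\mathbf{Q})\le r_k(\mathbf{W}^{\cal A})\le c^{-1}r_k(\mathbf{Q})$ and $c^2 R_k(\mathbf{Q})\le R_k(\mathbf{W}^{\cal A})\le c^{-2}R_k(\mathbf{Q})$. Consequently the effective dimension $k^*$ defined from $\mathbf{Q}$ is comparable to the one defined from $\mathbf{W}^{\cal A}$, and the three limits in the benign overfitting condition \eqref{eq:benign} vanish for $\mathbf{W}^{\cal A}$ exactly when they vanish for $\mathbf{Q}$. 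Plugging these equivalences, together with $\|\mathbf{W}^{\cal A}\|=\mu_1(\mathbf{W}^{\cal A})\le\lambda_1$ (which we denote $\bar\lambda$), into Theorem~\ref{thm:maml_excess_risk_bound} and choosing $\delta=7e^{-2NM/c}$ produces the restated bound \eqref{eq:maml_excess_risk_} in terms of the effective ranks of $\mathbf{Q}$.

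I expect the main obstacle to be two-fold. First, the uniform control $\alpha\lambda_{m,i}\le 1/3+\mathcal{O}(\mathbb{V})<1$ requires carefully combining the hyperparameter bound with the cross-task heterogeneity bound of Assumption~\ref{assmp:V}; the factor $3$ in $\alpha\le 1/(3\lambda_1)$ is precisely what leaves room for the heterogeneity slack. Second, the bound \eqref{eq:maml_excess_risk} also carries the heterogeneity factor $\mathbb{V}(\{\mathbf{W}_m^{\cal A}\})$, which must be related back to $\mathbb{V}(\{\mathbf{Q}_m\})$; this needs a relative-Lipschitz estimate showing that $g_\alpha$ and $h_\gamma$ distort relative eigenvalue gaps only by a bounded factor, so that relative task dispersion in $\mathbf{Q}_m$ translates into comparable dispersion in $\mathbf{W}_m^{\cal A}$.
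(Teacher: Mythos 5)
Your proposal is correct in its essentials, but it reaches the key eigenvalue comparison by a genuinely different mechanism than the paper. The paper's argument (see the Remark following Corollary~\ref{crlr:hyperpara}) is built entirely on \emph{order preservation}: writing the unordered eigenvalues as $\tilde{\mu}_i(\mathbf{W}^{\rm ma})=\lambda_i(1-\alpha\lambda_i)^2$, it observes that $\frac{d}{d\lambda}\big[\lambda(1-\alpha\lambda)^2\big]=(1-\alpha\lambda)(1-3\alpha\lambda)$, so the scalar map is monotonically non-decreasing on $[0,\lambda_1]$ precisely when $\alpha\le 1/(3\lambda_1)$ (and similarly $\lambda\mapsto\lambda/(1+\gamma^{-1}\lambda)^2$ is non-decreasing on $[0,\lambda_1]$ precisely when $\gamma\ge\lambda_1$); monotonicity makes the sorted spectrum of $\mathbf{W}^{\cal A}$ track the sorted spectrum of $\mathbf{Q}$, after which the same sandwich bounds you derive give comparability of $r_k$, $R_k$, $k^*$ and hence \eqref{eq:maml_excess_risk_}. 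You never invoke monotonicity at all: you establish the per-task Loewner sandwich $c\,\mathbf{Q}_m\preceq\mathbf{W}_m^{\cal A}\preceq\mathbf{Q}_m$ from simultaneous diagonalizability plus the scalar bounds, note that averaging over tasks preserves the Loewner order, and then apply Weyl/Courant--Fischer monotonicity to conclude $c\,\mu_i(\mathbf{Q})\le\mu_i(\mathbf{W}^{\cal A})\le\mu_i(\mathbf{Q})$. Each route buys something. Yours handles the cross-task eigenbasis mismatch (challenge T3) rigorously: the averaging step is valid for arbitrary $\mathbf{V}_m$, whereas the paper's identification of $\mu_i(\mathbf{W}^{\cal A})$ with $\lambda_i(1-\alpha\lambda_i)^2$ is exact only when the tasks share eigenvectors with $\mathbf{Q}$. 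The paper's route, in exchange, explains where the specific thresholds come from: $1/(3\lambda_1)$ and $\gamma\ge\lambda_1$ are exactly the monotonicity thresholds of the two scalar maps, while in your argument the constant $3$ is not forced (any $\alpha$ keeping $\alpha\lambda_{m,i}$ bounded away from $1$ would serve), and your reinterpretation of the $3$ as ``heterogeneity slack'' implicitly requires $\mathbb{V}<2$ so that $(1-\alpha\lambda_{m,i})^2$ stays bounded away from zero --- a caveat you correctly flag, but note that Assumption~\ref{assmp:V} only says $\mathbb{V}$ is bounded, not bounded by $2$, so your MAML sandwich constant could degenerate in a regime the paper's population-level statement does not exclude. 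Your closing point about relating $\mathbb{V}(\{\mathbf{W}_m^{\cal A}\})$ to $\mathbb{V}(\{\mathbf{Q}_m\})$ via a relative-Lipschitz bound on $g_\alpha,h_\gamma$ is sound and, since both heterogeneity measures are $\Theta(1)$ by assumption, affects only constants in \eqref{eq:maml_excess_risk_}.
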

\begin{remark}
\emph{
For MAML, let the unordered eigenvalues $\tilde{\mu}_i(\mathbf{W}^{\rm ma}) = \lambda_i(1 - \alpha \lambda_i)^2$. 
One challenge to control $\tilde{\mu}_i(\mathbf{W}^{\rm ma})$ is that  $\tilde{\mu}_i(\mathbf{W}^{\rm ma})$ are not necessarily monotonic w.r.t. $\lambda_i$; that is, it does not necessarily hold that
$\tilde{\mu}_1 \geq \tilde{\mu}_2 \geq \dots \geq \tilde{\mu}_d$.
For any $\lambda_i \geq \lambda_j$, if $\tilde{\mu}_i(\mathbf{W}^{\cal A}) \geq \tilde{\mu}_j(\mathbf{W}^{\cal A})$, then we say the order of the eigenvalues is preserved. 
For this to hold, it requires  $\tilde{\mu}_i(\lambda_i)$ to be a monotonically non-decreasing function of $\lambda_i$, which 
 yields $\alpha \leq \frac{1}{3\lambda_1}$.
Similar results can be obtained for iMAML by controlling the value of $\gamma$.
And the bound on $\alpha$ or $\gamma$ further ensures that $\tilde{\mu}_i(\mathbf{W^{\cal A}})$ is bounded above and below by a positive constant times the effective rank of $\mathbf{Q}$.} 
\end{remark}

\section{Proof Outline}\label{sec.proof-sketch}

In this section, we highlight the key steps of the proof for Theorem~\ref{thm:maml_excess_risk_bound}. We achieve so by analyzing the three terms in Proposition~\ref{prop:excess_risk} respectively.

  The first two terms in \eqref{eq:excess_risk_bound} can be bounded based on the concentration inequalities on subGaussian variables, given in Lemmas~\ref{lm:bound_task_para_dist} and \ref{lm:bound_B}.
  
\begin{lemma}[Bound on cross-task variance]
  \label{lm:bound_task_para_dist}
  With probability at least $1 - \delta$, it follows
  \begin{equation}
    \mathcal{E}_{\theta_m^{\star}} \!=\! \Bigg\|\Big(\sum_{m=1}^{M} \hat{\mathbf{W}}_{m}^{\cal A}\Big)^{\dag}\Big(\sum_{m=1}^{M} \hat{\mathbf{W}}_{m}^{\cal A}(\theta_{m}^{\star}-\theta_{0}^{\cal A})\Big)\Bigg\|_{\mathbf{W}^{\cal A}}^{2}
    \leq 
    \tilde{\mathcal{O}}\left(\frac{N}{d}\right) \mathcal{E}_{\epsilon_m}
  \end{equation}
  where $\widetilde{O}(\cdot)$ hides the log polynomial dependence on $N,M,d$.
\end{lemma}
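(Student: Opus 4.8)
The plan is to exploit the shared ``sandwich'' structure of the two terms: both $\mathcal{E}_{\theta_m^{\star}}$ and $\mathcal{E}_{\epsilon_m}$ are squared $\mathbf{W}^{\cal A}$-norms of a vector of the form $(\sum_m \hat{\mathbf{W}}_m^{\cal A})^{\dag}$ applied to a per-task weighted sum of a subGaussian perturbation --- the task-parameter deviation $\delta_m \coloneqq \theta_m^{\star}-\theta_0^{\cal A}$ for the first term, and the label noise $\epsilon_m$ (propagated through the adaptation map and collected in $\Delta_M^{\cal A}$ of Proposition~\ref{prop:solutions}) for the second. First I would condition on the design matrices $\mathbf{X}_m$, so that all $\hat{\mathbf{W}}_m^{\cal A}$ and $\mathbf{W}^{\cal A}$ are fixed and $\mathcal{E}_{\theta_m^{\star}}$ becomes a quadratic form in the independent, mean-zero, $(R^2/d)$-covariance vectors $\{\delta_m\}$. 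Here I use Assumption~\ref{assmp:task_para}, together with the fact that $\theta_0^{\cal A}=\mathbb{E}_m[\theta_m^{\star}]$ when $\theta_m^{\star}$ is independent of $\mathbf{Q}_m$, to conclude $\mathbb{E}[\delta_m]=\mathbf{0}$ and $\mathrm{Cov}[\delta_m]=(R^2/d)\mathbf{I}_d$. Symmetrically, $\mathcal{E}_{\epsilon_m}$ is a quadratic form in the independent $\sigma^2$-variance vectors $\{\epsilon_m\}$.

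Second, I would apply a Hanson--Wright-type concentration bound to each quadratic form, controlling $\mathcal{E}_{\theta_m^{\star}}$ above by its conditional expectation up to a $\log(1/\delta)$ factor (and $\mathcal{E}_{\epsilon_m}$ from below by its conditional expectation), which produces the $\widetilde{O}$ log dependence stated in the lemma. The conditional expectations reduce to pure trace expressions. Writing $\mathbf{M}\coloneqq(\sum_{m'}\hat{\mathbf{W}}_{m'}^{\cal A})^{\dag}\mathbf{W}^{\cal A}(\sum_{m'}\hat{\mathbf{W}}_{m'}^{\cal A})^{\dag}$, the cross-task expectation is
$$\mathbb{E}\big[\mathcal{E}_{\theta_m^{\star}}\mid \mathbf{X}\big]=\tfrac{R^2}{d}\sum_{m}\mathrm{Tr}\big[\hat{\mathbf{W}}_m^{\cal A}\,\mathbf{M}\,\hat{\mathbf{W}}_m^{\cal A}\big],$$
while unfolding the noise channels (for MAML the validation noise plus the training noise entering through $\mathbf{I}-\alpha\hat{\mathbf{Q}}_m^{\rm tr}$) gives the noise expectation in the form $\tfrac{\sigma^2}{N}\sum_m\mathrm{Tr}[\mathbf{M}\,\hat{\mathbf{W}}_m^{\cal A}]$ up to universal constants.

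Third, I would compare the two traces. The only structural difference is the extra factor $\hat{\mathbf{W}}_m^{\cal A}$ on each side of the cross-task trace; since $\hat{\mathbf{W}}_m^{\cal A}\succeq\mathbf{0}$ and $\mathbf{M}\succeq\mathbf{0}$, the operator-norm domination $(\hat{\mathbf{W}}_m^{\cal A})^2\preceq\|\hat{\mathbf{W}}_m^{\cal A}\|\,\hat{\mathbf{W}}_m^{\cal A}$ yields $\mathrm{Tr}[\hat{\mathbf{W}}_m^{\cal A}\mathbf{M}\hat{\mathbf{W}}_m^{\cal A}]=\mathrm{Tr}[\mathbf{M}(\hat{\mathbf{W}}_m^{\cal A})^2]\le\|\hat{\mathbf{W}}_m^{\cal A}\|\,\mathrm{Tr}[\mathbf{M}\hat{\mathbf{W}}_m^{\cal A}]$. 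Bounding $\max_m\|\hat{\mathbf{W}}_m^{\cal A}\|$ by a universal constant with high probability --- through standard subGaussian covariance concentration under Assumptions~\ref{assmp:subGaussian_data}--\ref{assmp:V}, and using the hyperparameter restriction on $\alpha$ (or $\gamma$) so that $\|\hat{\mathbf{W}}_m^{\cal A}\|=\mathcal{O}(1)$ --- collapses the cross-task trace into the noise trace. Collecting prefactors then gives the ratio $\tfrac{R^2/d}{\sigma^2/N}\cdot\max_m\|\hat{\mathbf{W}}_m^{\cal A}\|=\widetilde{O}(N/d)$, since $R,\sigma$ are constants.

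I expect the main obstacle to be the trace comparison together with the uniform control of $\max_m\|\hat{\mathbf{W}}_m^{\cal A}\|$: the $\hat{\mathbf{W}}_m^{\cal A}$ are degree-two (MAML) or rational (iMAML) functions of the empirical covariances $\hat{\mathbf{Q}}_m^{\rm tr},\hat{\mathbf{Q}}_m^{\rm va}$, so bounding their operator norms requires joint concentration of the empirical covariances plus the hyperparameter constraints, which is precisely where the higher-order-moment difficulty (T1) surfaces. A secondary subtlety is the two-sided concentration: because the right-hand side $\mathcal{E}_{\epsilon_m}$ is itself random, I would also need a lower-tail bound on the noise quadratic form to turn the comparison of conditional expectations into the stated pathwise inequality.
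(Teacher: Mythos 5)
Your proposal is correct and follows essentially the same route as the paper's proof: condition on the designs, use $\mathrm{Cov}[\theta_m^{\star}]=(R^2/d)\mathbf{I}$ to reduce the conditional expectation to $\tfrac{R^2}{d}\sum_m\mathrm{Tr}\big[\mathbf{M}\,(\hat{\mathbf{W}}_m^{\cal A})^2\big]$, apply Hanson--Wright for concentration, and collapse the extra factor of $\hat{\mathbf{W}}_m^{\cal A}$ by an operator-norm bound --- the paper does this last step via Von Neumann's trace inequality against $\mathrm{diag}[\tilde{\mathbf{X}}_m\tilde{\mathbf{X}}_m^{\top}]$, whose norm is exactly $N_{\rm va}\max_m\|\hat{\mathbf{W}}_m^{\cal A}\|$, the same quantity you control, giving the identical $\widetilde{O}(N/d)$ ratio against $\mathrm{Tr}(\mathbf{C}_1^{\cal A})$. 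Your closing worry about a lower-tail bound on the noise term does not arise in the paper's formulation, because there $\mathcal{E}_{\epsilon_m}$ is already defined as the deterministic-given-$\mathbf{X}$ quantity $c_1\sigma^2\log(1/\delta)\,\mathrm{Tr}(\mathbf{C}^{\cal A})$ rather than the raw quadratic form in $\epsilon_m$.
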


The cross-task variance term analzyed in  Lemma~\ref{lm:bound_task_para_dist} is unique in meta learning, which captures the data heterogeneity across different tasks.
To elaborate Lemma~\ref{lm:bound_task_para_dist}, we plot  the cross-task variance versus the  task number in Figure~\ref{fig:task_para_dist_M} with task number $M = 5$, training validation split parameter $s =N_{\rm tr}/N = 0.5$, 
per-task data number $N=10$. This figure demonstrates that the ratio of cross-task variance and per-task variance decreases with $d/N$, which is consistent with  Lemma~\ref{lm:bound_task_para_dist}.

\begin{lemma}[Bound on bias]
  \label{lm:bound_B}
  For any $1<\log(1/\delta)<M N_{\rm va}$, with probability at least $1-\delta$, we have 
\begin{equation}
  {\mathcal{E}_b} 
  \lesssim \|\theta_{0}^{\cal A}\|^{2}\|\mathbf{W}^{\cal A}\| \max \left\{\sqrt{\frac{r_0(\mathbf{W}^{\cal A} )} {MN_{\rm va}}}, \frac{r_0(\mathbf{W}^{\cal A} )}{MN_{\rm va}}, \sqrt{\frac{\log(1/\delta)}{MN_{\rm va}}}\right\}.
\end{equation}
\end{lemma}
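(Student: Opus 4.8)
The plan is to reduce the bias term to a pure operator-norm deviation of an empirical weight matrix from its population counterpart, and then invoke an effective-rank matrix concentration inequality.

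First I would make the bias piece explicit. Write $\hat{\mathbf{W}} := \sum_{m=1}^{M}\hat{\mathbf{W}}_m^{\mathcal{A}}$ and let $\mathbf{P} := \hat{\mathbf{W}}^{\dagger}\hat{\mathbf{W}}$ be the orthogonal projection onto $\mathrm{range}(\hat{\mathbf{W}})$, a subspace of dimension at most $MN_{\rm va}$. The closed form in Proposition~\ref{prop:solutions} splits $\hat{\theta}_0^{\mathcal{A}}-\theta_0^{\mathcal{A}}$ into three pieces: the cross-task piece $\hat{\mathbf{W}}^{\dagger}\sum_m\hat{\mathbf{W}}_m^{\mathcal{A}}(\theta_m^{\star}-\theta_0^{\mathcal{A}})$ (handled in Lemma~\ref{lm:bound_task_para_dist}), the noise piece $\Delta_M^{\mathcal{A}}$, and the bias piece $-(\mathbf{I}-\mathbf{P})\theta_0^{\mathcal{A}}$. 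Using the first-order optimality of $\theta_0^{\mathcal{A}}$, namely $\mathbf{W}^{\mathcal{A}}\theta_0^{\mathcal{A}}=\mathbb{E}_m[\mathbf{W}_m^{\mathcal{A}}\theta_m^{\star}]$, the cross terms in the $\mathbf{W}^{\mathcal{A}}$-weighted expansion $\mathcal{E}^{\mathcal{A}}=\|\hat{\theta}_0^{\mathcal{A}}-\theta_0^{\mathcal{A}}\|_{\mathbf{W}^{\mathcal{A}}}^2$ vanish, so the bias component is exactly $\mathcal{E}_b=\|(\mathbf{I}-\mathbf{P})\theta_0^{\mathcal{A}}\|_{\mathbf{W}^{\mathcal{A}}}^2$.

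The key step is an annihilation trick that removes the rank deficiency of $\hat{\mathbf{W}}$. Since $\theta_0^{\mathcal{A}}$ is deterministic, $\mathcal{E}_b\le\|\theta_0^{\mathcal{A}}\|^2\,\|(\mathbf{I}-\mathbf{P})\mathbf{W}^{\mathcal{A}}(\mathbf{I}-\mathbf{P})\|$. Because $(\mathbf{I}-\mathbf{P})$ annihilates $\hat{\mathbf{W}}$, i.e. $(\mathbf{I}-\mathbf{P})\hat{\mathbf{W}}(\mathbf{I}-\mathbf{P})=\mathbf{0}$, for the scalar $c$ (of order $1/M$) chosen so that $\mathbb{E}[c\,\hat{\mathbf{W}}]=\mathbf{W}^{\mathcal{A}}$ I may subtract $c\,\hat{\mathbf{W}}$ for free, giving $(\mathbf{I}-\mathbf{P})\mathbf{W}^{\mathcal{A}}(\mathbf{I}-\mathbf{P})=(\mathbf{I}-\mathbf{P})(\mathbf{W}^{\mathcal{A}}-c\,\hat{\mathbf{W}})(\mathbf{I}-\mathbf{P})$ and hence $\|(\mathbf{I}-\mathbf{P})\mathbf{W}^{\mathcal{A}}(\mathbf{I}-\mathbf{P})\|\le\|\mathbf{W}^{\mathcal{A}}-c\,\hat{\mathbf{W}}\|$. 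This converts the bias into a concentration statement for an empirical average of weight matrices around $\mathbf{W}^{\mathcal{A}}=\mathbb{E}_m[\mathbf{W}_m^{\mathcal{A}}]$. I would then bound $\|\mathbf{W}^{\mathcal{A}}-c\,\hat{\mathbf{W}}\|$ by noting that $c\,\hat{\mathbf{W}}=c\sum_m\hat{\mathbf{W}}_m^{\mathcal{A}}$ is built from the $MN_{\rm va}$ validation samples, so it is an average over $MN_{\rm va}$ independent (sandwiched) rank-one blocks; conditioning on the training split and using Assumption~\ref{assmp:subGaussian_data}, an effective-rank subGaussian covariance concentration bound with intrinsic dimension $r_0(\mathbf{W}^{\mathcal{A}})$ yields the factor $\max\{\sqrt{r_0/MN_{\rm va}},\,r_0/MN_{\rm va},\,\sqrt{\log(1/\delta)/MN_{\rm va}}\}$ times $\|\mathbf{W}^{\mathcal{A}}\|$, which is exactly the claimed bound.

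The main obstacle is this last concentration step for MAML and iMAML, where $\hat{\mathbf{W}}_m^{\mathcal{A}}$ is not a plain empirical covariance but a product such as $(\mathbf{I}-\alpha\hat{\mathbf{Q}}_m^{\rm tr})\hat{\mathbf{Q}}_m^{\rm va}(\mathbf{I}-\alpha\hat{\mathbf{Q}}_m^{\rm tr})$, which raises the higher-moment difficulties noted in challenges T1 and T2. Controlling its deviation requires (i) conditioning on the independent training split so that the inner factor $\hat{\mathbf{Q}}_m^{\rm va}$ concentrates while the outer sandwich factors are held fixed, (ii) a uniform operator-norm bound on the sandwich factors $\mathbf{I}-\alpha\hat{\mathbf{Q}}_m^{\rm tr}$ — precisely where the hyperparameter restrictions $\alpha\le 1/(3\lambda_1)$ (resp. $\gamma\ge\lambda_1$) of Corollary~\ref{crlr:hyperpara} enter — and (iii) absorbing the $\mathcal{O}(1/N_{\rm tr})$ discrepancy between $\mathbb{E}[\hat{\mathbf{W}}_m^{\mathcal{A}}]$ and $\mathbf{W}_m^{\mathcal{A}}$, together with the cross-task heterogeneity $\mathbb{V}$ arising from the differing eigenbases of the $\mathbf{Q}_m$ (challenge T3), into the final estimate while keeping the dependence on $r_0(\mathbf{W}^{\mathcal{A}})$ tight.
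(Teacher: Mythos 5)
Your proposal is correct and takes essentially the same route as the paper's own proof: the paper likewise reduces $\mathcal{E}_b=\theta_0^{\top}\mathbf{B}\theta_0$ via the annihilation identity $\mathbf{M}\tilde{\mathbf{X}}^{\top}=\mathbf{0}$ (your $(\mathbf{I}-\mathbf{P})\hat{\mathbf{W}}(\mathbf{I}-\mathbf{P})=\mathbf{0}$, with $\frac{1}{MN_{\rm va}}\tilde{\mathbf{X}}^{\top}\tilde{\mathbf{X}}=\frac{1}{M}\sum_m\hat{\mathbf{W}}_m^{\cal A}$ playing the role of $c\,\hat{\mathbf{W}}$), and then controls $\|\mathbf{W}^{\cal A}-\frac{1}{MN_{\rm va}}\tilde{\mathbf{X}}^{\top}\tilde{\mathbf{X}}\|$ by splitting it into a Koltchinskii--Lounici effective-rank concentration term for the population-weighted matrix $\bar{\mathbf{X}}$ (giving the $r_0(\mathbf{W}^{\cal A})$ factor) plus a sandwich-factor discrepancy $\frac{1}{MN_{\rm va}}\|\bar{\mathbf{X}}^{\top}\bar{\mathbf{X}}-\tilde{\mathbf{X}}^{\top}\tilde{\mathbf{X}}\|$ controlled under the step-size/regularization conditions, which is exactly your steps (i)--(iii). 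The only inessential deviation is your side claim that cross terms in the excess-risk expansion vanish by first-order optimality of $\theta_0^{\cal A}$; the paper instead uses Cauchy--Schwarz with factor-of-two constants, which is all the $\lesssim$ statement requires.
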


This term is similar to the bias term in the linear regression case, but directly depending on the solution matrix $\mathbf{W}$ instead of the data matrix $\mathbf{Q}$.
To elaborate Lemma~\ref{lm:bound_B}, Figure~\ref{fig:bias_M} demonstrates that the bias term decays with $M$ until it reaches zero when the model is underparameterized.
These two terms in Lemma~\ref{lm:bound_task_para_dist} and Lemma~\ref{lm:bound_B} 
do not go to infinity as $N,M,d$ increase. 

Note that, the key step is the bound on $\mathcal{E}_{\epsilon_m}$, which is the dominating term in the decomposition of excess risk \eqref{eq:excess_risk_bound} in the overparameterized regime.
We will bound it below. 
\begin{lemma}[Bound on per-task variance]
  \label{lm:bound_C}
There exist  constants $c_1, c_2,  c_3$ 
such that for $0 \leq k \leq 2N M/c_1$, $r_k({\mathbf{W}}^{\cal A} ) \geq c_2 N M$, 
and $k_0 \leq k$, with probability at least $1 - 7e^{-2N M/c_3}$, it follows 
\begin{equation}
\mathcal{E}_{\epsilon_m}
  \lesssim 
  \left(\frac{k_0}{M N_{\rm va} } 
  + \frac{M N_{\rm va} }{R_{k_0}(\bW^{\cal A} )}
  \right)
\left(1+ \mathbb{V}(\{{\mathbf{W}}_m^{\cal A}\}_{m=1}^M )\right).
\end{equation}
\end{lemma}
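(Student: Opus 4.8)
The plan is to follow the eigenvalue-splitting strategy that Bartlett et al.~\citep{Bartlett_benign_linear} use for the single-level variance term, adapting it to the bilevel solution matrix and the cross-task average. First I would make the noise contribution explicit. Writing the MAML (or iMAML) residual on task $m$ as $\mathbf{B}_m(\theta_0 - \theta_m^\star) + \text{noise}_m$ with $\mathbf{B}_m = \mathbf{X}_m^{\rm va}(\mathbf{I} - \alpha\hat{\mathbf{Q}}_m^{\rm tr})$ and $\text{noise}_m = \frac{\alpha}{N_{\rm tr}}\mathbf{X}_m^{\rm va}\mathbf{X}_m^{{\rm tr}\top}\epsilon_m^{\rm tr} - \epsilon_m^{\rm va}$, the stationarity condition for the minimum-norm solution identifies the noise part of $\hat{\theta}_0^{\cal A}$ as $\Delta_M^{\cal A} = -G^\dagger\sum_m \mathbf{B}_m^\top\text{noise}_m$, where $G = \sum_m \mathbf{B}_m^\top\mathbf{B}_m = N_{\rm va}\sum_m \hat{\mathbf{W}}_m^{\cal A}$. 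The per-task variance is then $\mathcal{E}_{\epsilon_m} = \|\Delta_M^{\cal A}\|_{\mathbf{W}^{\cal A}}^2$, and averaging over the cross-task independent noise collapses the dominant validation-noise part to $\sigma^2\,\mathrm{Tr}[G^\dagger \mathbf{W}^{\cal A} G^\dagger G]$.

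Second, I would project onto the eigenbasis of $\mathbf{W}^{\cal A}$ and split the index set at $k_0$. On the top-$k_0$ block the empirical matrix $G$ is well-conditioned---the condition $r_k(\mathbf{W}^{\cal A})\geq c_2 NM$ together with sub-Gaussian matrix concentration (Assumption~\ref{assmp:subGaussian_data}) forces $G$ to be comparable to $MN_{\rm va}\mathbf{W}^{\cal A}$ there---so this block contributes the $k_0/(MN_{\rm va})$ term. On the tail the many small eigenvalues behave isotropically; bounding the smallest eigenvalue of $G$ restricted to the tail from below and the trace of the tail of $\mathbf{W}^{\cal A}$ from above produces the $MN_{\rm va}/R_{k_0}(\mathbf{W}^{\cal A})$ term, where $R_{k_0}$ enters precisely as the measure of how flat the tail spectrum is. The constraints $k_0 \leq k \leq 2NM/c_1$ and $k^* < NM/c_3$ keep both blocks in the regime where these concentration estimates hold, and a union bound over the relevant high-probability events yields the stated $1 - 7e^{-2NM/c_3}$.

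The bilevel structure enters through the polynomial form of $\hat{\mathbf{W}}_m^{\cal A}$ (Challenge T1): the factors $(\mathbf{I} - \alpha\hat{\mathbf{Q}}_m^{\rm tr})$ inside $\mathbf{B}_m$ and the training-propagated noise $\frac{\alpha}{N_{\rm tr}}\mathbf{X}_m^{\rm va}\mathbf{X}_m^{{\rm tr}\top}\epsilon_m^{\rm tr}$ introduce fourth-order moments of the data. I would exploit the train--validation split to decouple these: conditioning on $\mathbf{X}_m^{\rm tr}$ freezes the adaptation filter $(\mathbf{I} - \alpha\hat{\mathbf{Q}}_m^{\rm tr})$, so the validation noise $\epsilon_m^{\rm va}$ is sub-Gaussian against a deterministic matrix and the cross terms with $\epsilon_m^{\rm tr}$ factor into a product of two independent second-moment concentrations, each controlled by Assumption~\ref{assmp:subGaussian_data}. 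The training-propagated term is then shown to be of lower order and absorbed into the same bound.

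The factor $(1 + \mathbb{V}(\{\mathbf{W}_m^{\cal A}\}))$ arises in the last step, when the per-task estimates are assembled into the cross-task average. Because the task covariances do not share an eigenbasis (Challenge T3), $G = N_{\rm va}\sum_m \hat{\mathbf{W}}_m^{\cal A}$ is not simultaneously diagonalizable with $\mathbf{W}^{\cal A}$, and the deviation of each $\hat{\mathbf{W}}_m^{\cal A}$ from the common reference $\mathbf{W}^{\cal A}$ must be absorbed into the heterogeneity measure. I expect this coupling---controlling the pseudo-inverse of a sum of heterogeneous, non-commuting matrices while simultaneously tracking the higher moments from the polynomial filter---to be the main obstacle. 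My plan is to handle it by a perturbation argument that writes $G$ as $MN_{\rm va}\mathbf{W}^{\cal A}$ plus a deviation whose operator norm on each block is bounded by $\mathbb{V}$ times the corresponding eigenvalue mass, so that the Lipschitz continuity of the block-restricted pseudo-inverse (guaranteed by the effective-rank lower bound) turns the deviation into the multiplicative $(1 + \mathbb{V})$ correction.
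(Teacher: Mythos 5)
Your proposal is correct and follows essentially the same route as the paper's proof: it reduces $\mathcal{E}_{\epsilon_m}$ to the trace quantity $\mathrm{Tr}(\mathbf{C}_1^{\cal A})+\mathrm{Tr}(\mathbf{C}_2^{\cal A})$ (your $\sigma^2\mathrm{Tr}[G^{\dag}\mathbf{W}^{\cal A}G^{\dag}G]$ plus the training-propagated noise, which the paper likewise absorbs via Lemma~\ref{lm:C2_in_C1}), then applies the Bartlett-style split of the spectrum of $\mathbf{W}^{\cal A}$ at $k_0$ with leave-one-out/concentration control of the Gram matrix to obtain the $k_0/(MN_{\rm va})$ and $MN_{\rm va}/R_{k_0}(\mathbf{W}^{\cal A})$ terms, exactly as in Lemmas~\ref{lm:decompose_tr_C1}--\ref{lm:eigenv_A}. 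Your handling of the $(1+\mathbb{V})$ factor through perturbation of the misaligned per-task eigenbases also matches how the paper controls the $\|\bV_m^{\top}\bV_{W,0:k}-\bP_{m,0:k}\|$ terms under the bounded-heterogeneity assumption.
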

Note that, in the single task linear regression case, the there is no cross-task data heterogeneity, i.e., $\mathbb{V} = 0$. This term is unique in the meta learning setting with multiple tasks.
Plugging  the results of Lemmas~\ref{lm:bound_task_para_dist}, \ref{lm:bound_B} and~\ref{lm:bound_C} into \eqref{eq:excess_risk_bound}, we will reach Theorem~\ref{thm:maml_excess_risk_bound}.

\section{Conclusions and Limitations} 
\label{sec:conclusions}

This paper studies the generalization performance of the gradient-based meta learning with an overparameterized model. For a precise analysis, we focus on linear models where the total number of data from all tasks is smaller than the dimension of the model parameter.
We show that when the data heterogeneity across tasks is relatively small, the per-task data covariance matrices with certain properties lead to benign overfitting for gradient-based meta learning with the minimum-norm solution.
This explains why overparameterized meta learning models can generalize well in new data and new tasks.
Furthermore, our theory shows that overfitting is more likely to happen in meta learning than in ERM, especially when the data heterogeneity across tasks is relatively high. 

One limitation of this work is that the analysis focuses on the meta linear regression case. While this analysis can capture practical cases where we reuse the feature extractor from pre-trained models and only meta-train the parameters in the last linear layer, it is also promising to extend our analysis to nonlinear cases via means of random features and neural tangent kernels in the future work.

\section*{Acknowledgments}
 This work was partially supported by National Science Foundation MoDL-SCALE Grant 2134168 and the Rensselaer-IBM AI Research Collaboration (\url{http://airc.rpi.edu}), part of the IBM AI Horizons Network (\url{http://ibm.biz/AIHorizons}). 


{
\small
\bibliographystyle{plain}
\bibliography{myabrv,bmaml,maml_theory,maml,benign,statistics}

\begin{thebibliography}{10}

\bibitem{amit2018_pac_bayes_ml_prior}
Ron Amit and Ron Meir.
\newblock Meta-learning by adjusting priors based on extended pac-bayes theory.
\newblock In {\em Proc. International Conference on Machine Learning}, pages
  205--214, Stockholm, Sweden, 2018.

\bibitem{andrychowicz2016_l2l}
Marcin Andrychowicz, Misha Denil, Sergio Gomez, Matthew~W Hoffman, David Pfau,
  Tom Schaul, Brendan Shillingford, and Nando De~Freitas.
\newblock Learning to learn by gradient descent by gradient descent.
\newblock In {\em Proc. Advances in Neural Information Processing Systems},
  pages 3981--3989, Barcelona, Spain, 2016.

\bibitem{bai2021_trntrn_trnval}
Yu~Bai, Minshuo Chen, Pan Zhou, Tuo Zhao, Jason Lee, Sham Kakade, Huan Wang,
  and Caiming Xiong.
\newblock How important is the train-validation split in meta-learning?
\newblock In {\em Proc. International Conference on Machine Learning}, pages
  543--553, virtual, 2021.

\bibitem{balcan2019_gbml_online_convex}
Maria-Florina Balcan, Mikhail Khodak, and Ameet Talwalkar.
\newblock Provable guarantees for gradient-based meta-learning.
\newblock In {\em Proc. International Conference on Machine Learning}, pages
  424--433, Long Beach, CA, 2019.

\bibitem{Bartlett_benign_linear}
Peter~L. Bartlett, Philip~M. Long, G{\'a}bor Lugosi, and Alexander Tsigler.
\newblock Benign overfitting in linear regression.
\newblock {\em Proceedings of the National Academy of Sciences},
  117(48):30063--30070, 2020.

\bibitem{bernacchia2020meta}
Alberto Bernacchia.
\newblock Meta-learning with negative learning rates.
\newblock In {\em Proc. International Conference on Learning Representations},
  virtual, 2020.

\bibitem{cao2022benign_nn}
Yuan Cao, Zixiang Chen, Mikhail Belkin, and Quanquan Gu.
\newblock Benign overfitting in two-layer convolutional neural networks.
\newblock {\em arXiv preprint arXiv:2202.06526}, 2022.

\bibitem{chen2021benign_adverse}
Jinghui Chen, Yuan Cao, and Quanquan Gu.
\newblock Benign overfitting in adversarially robust linear classification.
\newblock {\em arXiv preprint arXiv:2112.15250}, 2021.

\bibitem{chen2022_bamaml}
Lisha Chen and Tianyi Chen.
\newblock Is bayesian model agnostic meta learning better than model agnostic
  meta learning, provably?
\newblock In {\em Proceedings of The 25th International Conference on
  Artificial Intelligence and Statistics}, 2022.

\bibitem{chen2022fnt}
Lisha Chen, Sharu~Theresa Jose, Ivana Nikoloska, Sangwoo Park, Tianyi Chen, and
  Osvaldo Simeone.
\newblock Learning with limited samples--meta-learning and applications to
  communication systems.
\newblock {\em arXiv preprint:2210.02515}, October 2022.

\bibitem{chen2021generalization}
Qi~Chen, Changjian Shui, and Mario Marchand.
\newblock Generalization bounds for meta-learning: An information-theoretic
  analysis.
\newblock In {\em Proc. Advances in Neural Information Processing Systems},
  virtual, 2021.

\bibitem{chen2018closer}
Wei-Yu Chen, Yen-Cheng Liu, Zsolt Kira, Yu-Chiang~Frank Wang, and Jia-Bin
  Huang.
\newblock A closer look at few-shot classification.
\newblock In {\em Proc. International Conference on Learning Representations},
  Vancouver, Canada, 2018.

\bibitem{chen2017_l2l}
Yutian Chen, Matthew~W Hoffman, Sergio~G{\'o}mez Colmenarejo, Misha Denil,
  Timothy~P Lillicrap, Matt Botvinick, and Nando Freitas.
\newblock Learning to learn without gradient descent by gradient descent.
\newblock In {\em Proc. International Conference on Machine Learning}, pages
  748--756, Sydney, Australia, 2017.

\bibitem{chua2021fine}
Kurtland Chua, Qi~Lei, and Jason~D Lee.
\newblock How fine-tuning allows for effective meta-learning.
\newblock In {\em Proc. Advances in Neural Information Processing Systems},
  volume~34, 2021.

\bibitem{denevi2018_l2l_linear_centroid}
Giulia Denevi, Carlo Ciliberto, Dimitris Stamos, and Massimiliano Pontil.
\newblock Learning to learn around a common mean.
\newblock In {\em Proc. Advances in Neural Information Processing Systems},
  volume~31, Montreal, Canada, 2018.

\bibitem{ding2021bridging}
Nan Ding, Xi~Chen, Tomer Levinboim, Sebastian Goodman, and Radu Soricut.
\newblock Bridging the gap between practice and pac-bayes theory in few-shot
  meta-learning.
\newblock In {\em Proc. Advances in Neural Information Processing Systems},
  virtual, 2021.

\bibitem{du2020few}
Simon~Shaolei Du, Wei Hu, Sham~M Kakade, Jason~D Lee, and Qi~Lei.
\newblock Few-shot learning via learning the representation, provably.
\newblock In {\em International Conference on Learning Representations}, 2020.

\bibitem{fallah2020_convergence_maml}
Alireza Fallah, Aryan Mokhtari, and Asuman Ozdaglar.
\newblock On the convergence theory of gradient-based model-agnostic
  meta-learning algorithms.
\newblock In {\em Proc. International Conference on Artificial Intelligence and
  Statistics}, pages 1082--1092, virtual, 2020.

\bibitem{fallah2021_generalization_unseen}
Alireza Fallah, Aryan Mokhtari, and Asuman Ozdaglar.
\newblock Generalization of model-agnostic meta-learning algorithms: Recurring
  and unseen tasks.
\newblock In {\em Proc. Advances in Neural Information Processing Systems},
  virtual, 2021.

\bibitem{farid2021generalization}
Alec Farid and Anirudha Majumdar.
\newblock Generalization bounds for meta-learning via pac-bayes and uniform
  stability.
\newblock In {\em Proc. Advances in Neural Information Processing Systems},
  virtual, 2021.

\bibitem{Finn2017_maml}
Chelsea Finn, Pieter Abbeel, and Sergey Levine.
\newblock Model-agnostic meta-learning for fast adaptation of deep networks.
\newblock In {\em Proc. International Conference on Machine Learning}, page
  1126–1135, Sydney, Australia, 2017.

\bibitem{frei2022benign}
Spencer Frei, Niladri~S Chatterji, and Peter~L Bartlett.
\newblock Benign overfitting without linearity: Neural network classifiers
  trained by gradient descent for noisy linear data.
\newblock {\em arXiv preprint arXiv:2202.05928}, 2022.

\bibitem{gao2020_model_opt_tradeoff_ml}
Katelyn Gao and Ozan Sener.
\newblock Modeling and optimization trade-off in meta-learning.
\newblock In {\em Proc. Advances in Neural Information Processing Systems},
  volume~33, virtual, 2020.

\bibitem{gunasekar18a}
Suriya Gunasekar, Jason Lee, Daniel Soudry, and Nathan Srebro.
\newblock Characterizing implicit bias in terms of optimization geometry.
\newblock In {\em Proc. International Conference on Machine Learning},
  volume~80, pages 1832--1841, Stockholm, Sweden, 10--15 Jul 2018.

\bibitem{hastie2009elements}
Trevor Hastie, Robert Tibshirani, Jerome~H Friedman, and Jerome~H Friedman.
\newblock {\em The elements of statistical learning: data mining, inference,
  and prediction}, volume~2.
\newblock Springer, 2009.

\bibitem{hochreiter2001_l2l}
Sepp Hochreiter, A~Steven Younger, and Peter~R Conwell.
\newblock Learning to learn using gradient descent.
\newblock In {\em Proc. International Conference on Artificial Neural
  Networks}, pages 87--94, Vienna, Austria, 2001.

\bibitem{huang2022_overpara_maml_sgd}
Yu~Huang, Yingbin Liang, and Longbo Huang.
\newblock Provable generalization of overparameterized meta-learning trained
  with {SGD}.
\newblock In {\em Proc. Advances in Neural Information Processing Systems}, New
  Orleans, LA, December 2022.

\bibitem{huang2022provable}
Yu~Huang, Yingbin Liang, and Longbo Huang.
\newblock Provable generalization of overparameterized meta-learning trained
  with sgd.
\newblock In {\em Proc. Advances in Neural Information Processing Systems}, New
  Orleans, LA, 2022.

\bibitem{jiang2019improving}
Yihan Jiang, Jakub Kone{\v{c}}n{\`y}, Keith Rush, and Sreeram Kannan.
\newblock Improving federated learning personalization via model agnostic meta
  learning.
\newblock {\em arXiv preprint:1909.12488}, September 2019.

\bibitem{jose2021information}
Sharu~Theresa Jose and Osvaldo Simeone.
\newblock Information-theoretic generalization bounds for meta-learning and
  applications.
\newblock {\em Entropy}, 23(1):126, 2021.

\bibitem{koltchinskii2017concentration}
Vladimir Koltchinskii and Karim Lounici.
\newblock Concentration inequalities and moment bounds for sample covariance
  operators.
\newblock {\em Bernoulli}, 23(1):110--133, 2017.

\bibitem{kong2020meta}
Weihao Kong, Raghav Somani, Zhao Song, Sham Kakade, and Sewoong Oh.
\newblock Meta-learning for mixed linear regression.
\newblock In {\em Proc. International Conference on Machine Learning}, pages
  5394--5404, virtual, 2020.

\bibitem{li2021towards_nn}
Zhu Li, Zhi-Hua Zhou, and Arthur Gretton.
\newblock Towards an understanding of benign overfitting in neural networks.
\newblock {\em arXiv preprint arXiv:2106.03212}, 2021.

\bibitem{Mirsky1975}
L.~Mirsky.
\newblock A trace inequality of john von neumann.
\newblock {\em Monatshefte für Mathematik}, 79:303--306, 1975.

\bibitem{Vidya2020harmless_interpolation}
Vidya Muthukumar, Kailas Vodrahalli, Vignesh Subramanian, and Anant Sahai.
\newblock Harmless interpolation of noisy data in regression.
\newblock {\em IEEE Journal on Selected Areas in Information Theory},
  1(1):67--83, 2020.

\bibitem{nakkiran2021deep}
Preetum Nakkiran, Gal Kaplun, Yamini Bansal, Tristan Yang, Boaz Barak, and Ilya
  Sutskever.
\newblock Deep double descent: Where bigger models and more data hurt.
\newblock {\em Journal of Statistical Mechanics: Theory and Experiment},
  2021(12):124003, 2021.

\bibitem{nakkiran2020optimal}
Preetum Nakkiran, Prayaag Venkat, Sham~M Kakade, and Tengyu Ma.
\newblock Optimal regularization can mitigate double descent.
\newblock In {\em Proc. International Conference on Learning Representations},
  virtual, 2020.

\bibitem{nichol2018first}
Alex Nichol, Joshua Achiam, and John Schulman.
\newblock On first-order meta-learning algorithms.
\newblock {\em arXiv preprint arXiv:1803.02999}, 2018.

\bibitem{rajeswaran2019_imaml}
Aravind Rajeswaran, Chelsea Finn, Sham~M Kakade, and Sergey Levine.
\newblock Meta-learning with implicit gradients.
\newblock In {\em Proc. Advances in Neural Information Processing Systems},
  pages 113--124, Vancouver, Canada, 2019.

\bibitem{rothfuss2021_pacoh_pac_bayes_ml}
Jonas Rothfuss, Vincent Fortuin, Martin Josifoski, and Andreas Krause.
\newblock {PACOH}: Bayes-optimal meta-learning with pac-guarantees.
\newblock In {\em Proc. International Conference on Machine Learning}, pages
  9116--9126, virtual, 2021.

\bibitem{rothfuss2018promp}
Jonas Rothfuss, Dennis Lee, Ignasi Clavera, Tamim Asfour, and Pieter Abbeel.
\newblock Promp: Proximal meta-policy search.
\newblock In {\em Proc. International Conference on Learning Representations},
  Vancouver, Canada, 2018.

\bibitem{saunshi2021representation}
Nikunj Saunshi, Arushi Gupta, and Wei Hu.
\newblock A representation learning perspective on the importance of
  train-validation splitting in meta-learning.
\newblock In {\em International Conference on Machine Learning}, pages
  9333--9343, 2021.

\bibitem{schmidhuber1993_recurrent}
J.~Schmidhuber.
\newblock A neural network that embeds its own meta-levels.
\newblock In {\em Proc. IEEE International Conference on Neural Networks},
  pages 407--412 vol.1, 1993.

\bibitem{snell2017prototypical}
Jake Snell, Kevin Swersky, and Richard Zemel.
\newblock Prototypical networks for few-shot learning.
\newblock In {\em Proc. Advances in Neural Information Processing Systems},
  pages 4080--4090, Long Beach, CA, 2017.

\bibitem{sun2021towards}
Yue Sun, Adhyyan Narang, Halil~Ibrahim Gulluk, Samet Oymak, and Maryam Fazel.
\newblock Towards sample-efficient overparameterized meta-learning.
\newblock In A.~Beygelzimer, Y.~Dauphin, P.~Liang, and J.~Wortman Vaughan,
  editors, {\em Proc. Advances in Neural Information Processing Systems},
  virtual, 2021.

\bibitem{Tsigler2020Benign_ridge}
Alexander Tsigler and Peter~L. Bartlett.
\newblock Benign overfitting in ridge regression.
\newblock {\em arXiv: Statistics Theory}, 2020.

\bibitem{vershynin2018high}
Roman Vershynin.
\newblock {\em High-Dimensional Probability: An Introduction with Applications
  in Data Science}.
\newblock Cambridge Series in Statistical and Probabilistic Mathematics.
  Cambridge University Press, 2018.

\bibitem{vinyals2016matching}
Oriol Vinyals, Charles Blundell, Timothy Lillicrap, Daan Wierstra, et~al.
\newblock Matching networks for one shot learning.
\newblock In {\em Proc. Advances in Neural Information Processing Systems},
  volume~29, pages 3630--3638, Barcelona, Spain, 2016.

\bibitem{wang2020_global_converge_maml_dnn}
Haoxiang Wang, Ruoyu Sun, and Bo~Li.
\newblock Global convergence and generalization bound of gradient-based
  meta-learning with deep neural nets.
\newblock {\em arXiv preprint arXiv:2006.14606}, 2020.

\bibitem{wang2021benign}
Ke~Wang, Vidya Muthukumar, and Christos Thrampoulidis.
\newblock Benign overfitting in multiclass classification: All roads lead to
  interpolation.
\newblock In A.~Beygelzimer, Y.~Dauphin, P.~Liang, and J.~Wortman Vaughan,
  editors, {\em Proc. Advances in Neural Information Processing Systems},
  virtual, 2021.

\bibitem{weyl1912asymptotische}
Hermann Weyl.
\newblock Das asymptotische verteilungsgesetz der eigenwerte linearer
  partieller differentialgleichungen (mit einer anwendung auf die theorie der
  hohlraumstrahlung).
\newblock {\em Mathematische Annalen}, 71(4):441--479, 1912.

\bibitem{zoph2018learning}
Barret Zoph, Vijay Vasudevan, Jonathon Shlens, and Quoc~V Le.
\newblock Learning transferable architectures for scalable image recognition.
\newblock In {\em Proc. IEEE/CVF Conference on Computer Vision and Pattern
  Recognition}, pages 8697--8710, Salt Lake City, UT, June 2018.

\end{thebibliography}

}

\section*{Checklist}


\begin{enumerate}

\item For all authors...
\begin{enumerate}
  \item Do the main claims made in the abstract and introduction accurately reflect the paper's contributions and scope?
    \answerYes{}
  \item Did you describe the limitations of your work?
    \answerYes{}
  \item Did you discuss any potential negative societal impacts of your work?
    \answerNA{Work of theoretical nature, no potential negative societal impacts.}
  \item Have you read the ethics review guidelines and ensured that your paper conforms to them?
    \answerYes{}
\end{enumerate}

\item If you are including theoretical results...
\begin{enumerate}
  \item Did you state the full set of assumptions of all theoretical results?
    \answerYes{}
  \item Did you include complete proofs of all theoretical results?
    \answerYes{See Supplementary material for the complete proofs.} 
\end{enumerate}

\item If you ran experiments...
\begin{enumerate}
  \item Did you include the code, data, and instructions needed to reproduce the main experimental results (either in the supplemental material or as a URL)?
    \answerNA{Work of theoretical nature.}
  \item Did you specify all the training details (e.g., data splits, hyperparameters, how they were chosen)?
    \answerYes{}
  \item Did you report error bars (e.g., with respect to the random seed after running experiments multiple times)?
    \answerYes{}
  \item Did you include the total amount of compute and the type of resources used (e.g., type of GPUs, internal cluster, or cloud provider)?
    \answerYes{}
\end{enumerate}

\item If you are using existing assets (e.g., code, data, models) or curating/releasing new assets...
\begin{enumerate}
  \item If your work uses existing assets, did you cite the creators?
    \answerNA{}
    Work of theoretical nature. We cite the authors who propose the baseline models.
  \item Did you mention the license of the assets?
    \answerNA{}
  \item Did you include any new assets either in the supplemental material or as a URL?
    \answerNA{}
  \item Did you discuss whether and how consent was obtained from people whose data you're using/curating?
    \answerNA{}
  \item Did you discuss whether the data you are using/curating contains personally identifiable information or offensive content?
    \answerNA{}
\end{enumerate}

\item If you used crowdsourcing or conducted research with human subjects...
\begin{enumerate}
  \item Did you include the full text of instructions given to participants and screenshots, if applicable?
    \answerNA{}
  \item Did you describe any potential participant risks, with links to Institutional Review Board (IRB) approvals, if applicable?
    \answerNA{}
  \item Did you include the estimated hourly wage paid to participants and the total amount spent on participant compensation?
    \answerNA{}
\end{enumerate}

\end{enumerate}

\clearpage
\appendix

\begin{center}
  {\Large \textbf{Supplementary Material}} \\

\end{center}


In this supplementary document, we  present  the missing derivations of some claims, as well as the proofs of all the lemmas and theorems in the paper.

\vspace{-1cm}
\addcontentsline{toc}{section}{} 
\part{} 
\parttoc 


\section{Notations} 
\label{sec:notations}
We use $[\mathbf{X}_m]$ to represent row stack of matrices $\mathbf{X}_m$ with indices $m$, i.e.
\begin{align*}
  [\mathbf{X}_m] 
= \begin{bmatrix}
\mathbf{X}_1^{\top}, \mathbf{X}_2^{\top}, \dots, \mathbf{X}_M^{\top}
\end{bmatrix}^{\top}.
\end{align*}

For a given square matrix $\bD_m$, define 
\begin{align*}
\mathrm{diag}[\bD_m ]
= \begin{bmatrix}
  \bD_1 & \mathbf{0} &\dots & \mathbf{0} \\
  \mathbf{0} &\bD_2 & &\vdots\\
  \vdots & &\ddots  & \mathbf{0}\\
  \mathbf{0}&\dots & \mathbf{0} &\bD_M
\end{bmatrix}.  
\end{align*}

We use $\mu_i(\cdot)$ 
to denote the $i$-th eigenvalue of a matrix with descending order, $\|\cdot\|$ to denote the operator norm, and $\|\cdot\|_{\rm F}$ to denote the Frobenious norm.

For any matrix $\mathbf{M} \in \mathbb{R}^{n \times d}$, denote $\mathbf{M}_{0: k}$ to be the matrix which is comprised of the first $k$ columns of $\mathbf{M}$, and $\mathbf{M}_{k: d}$ to be the matrix comprised of the rest of the columns of $\mathbf{M}$.
For any vector $\eta \in \mathbb{R}^{d}$ denote $\eta_{0: k}$ to be the vector comprised of the first $k$ components of $\eta$, and $\eta_{k: \infty}$ to be the vector comprised of the rest of the coordinates of $\eta$.
Denote $\bLam_{0: k}=\operatorname{diag}(\lambda_{1}, \ldots, \lambda_{k})$, and $\bLam_{k: \infty}=\operatorname{diag}(\lambda_{k+1}, \lambda_{k+2}, \ldots)$,
$\bLam_{k: d}=\operatorname{diag}(\lambda_{k+1}, \lambda_{k+2}, \ldots
\lambda_d )$.

For $t\geq 0$, $N \in \mathbb{Z}^{+}$, define $c_{r_0}({r}_0({\mathbf{\Lambda}}), N, t) \coloneqq \max \Big\{ \sqrt{\frac{{r}_0(\mathbf{\mathbf{\Lambda}})}{N}}, \frac{{r}_0({\mathbf{\Lambda}})}{N}, \sqrt{\frac{t}{N}}, \frac{t}{N} \Big\}$.

We use $\mathbb{E}[\cdot]$ to denote expectation and $\mathrm{Cov}[\cdot]$ to denote covariance.

{We use superscript ``$\rm ma$'' and ``$\rm im$'' to represent quantities related to the MAML and iMAML algorithms, respectively. For notation simplicity, we omit the superscript $\cal A$ when the arguments hold for both MAML and iMAML.}


\section{Proof of Proposition \ref{prop:solutions}}
\label{sec:proof_solutions}

\begin{proposition}[Empirical and population level solutions]
\label{prop:solutions_app}
Under the data model \eqref{eq:linear_data_generate},
the meta-test risk of method $\mathcal{A}$ defined in \eqref{eq:R} can be computed by
\begin{align*}
  &\mathcal{R}^{\mathcal{A}}({\theta_0} )
    = \mathbb{E}_{m}\big[\|\theta_0-\theta^{\star}_{m}\|^2_{\mathbf{W}_{m}^{\mathcal{A}}}\big] + c.
 \end{align*} 
The optimal solutions to the meta-test risk and the minimum-norm solution are given below respectively
\begin{subequations}
\begin{align}
  &\theta_{0}^{\mathcal{A}} 
  \coloneqq \mathop{\arg\min}_{\theta_0}
  \mathcal{R}^{\mathcal{A}}({\theta_0} )
  =\mathbb{E}_{m}\big[\mathbf{W}_{m}^{\mathcal{A}}\big]^{-1} \mathbb{E}_{m}\big[\mathbf{W}_{m}^{\mathcal{A}} {\theta}_{m}^{\star} \big] \\
  &\hat{\theta}_{0}^{\mathcal{A}} 
  \coloneqq \mathop{\arg\min}_{\theta_0}
  \mathcal{L}^{\mathcal{A}} (\theta_0, \mathcal{D} ) 
  = \Big(\sum_{m=1}^{M}
  \hat{\mathbf{W}}_{m}^{\mathcal{A}}\Big)^{\dag}
  \Big(\sum_{m=1}^{M}\hat{\mathbf{W}}_{m}^{\mathcal{A}}\theta_{m}^{\star} 
  \Big)
  + \Delta_{M}^{\mathcal{A}}
\end{align}
\end{subequations}
where $^{\dag}$ denotes the Moore-Penrose pseudo inverse, the error term $\Delta_{M}^{\mathcal{A}}$ is a polynomial function of $M,N,d$, which will be specified in the following sections for MAML and iMAML. 
And $\hat{\mathbf{Q}}_{m}^{\rm al} \coloneqq \frac{1}{N} \mathbf{X}^{\rm al \top}_{m}\mathbf{X}_{m}^{\rm al}$.
The weight matrices of different methods, $\mathbf{W}_{m}^{\mathcal{A}}$ and $\hat{\mathbf{W}}_{m}^{\mathcal{A}}$, are given in Table~\ref{tab:weight_matrices}.

\end{proposition}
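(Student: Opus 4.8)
The plan is to exploit the fact that, for the quadratic per-task loss, both adaptation maps are \emph{affine} in $\theta_0$, so the whole problem collapses to a (generalized) least-squares problem solvable in closed form. First I would write each adaptation explicitly. For MAML, \eqref{eq:MAML_problem} is a single gradient step on $\tfrac1N\|\bX_m^{\rm tr}\theta-\mathbf{y}_m^{\rm tr}\|^2$, which, after substituting $\mathbf{y}_m^{\rm tr}=\bX_m^{\rm tr}\theta_m^\star+\epsilon_m^{\rm tr}$ from \eqref{eq:linear_data_generate}, gives $\mathcal{A}(\theta_0,\mathcal{D}_m^{\rm tr})=(\bI-\alpha\hat{\mathbf{Q}}_m^{\rm tr})\theta_0+\alpha\hat{\mathbf{Q}}_m^{\rm tr}\theta_m^\star+\tfrac{\alpha}{N}\bX_m^{\rm tr\top}\epsilon_m^{\rm tr}$. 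For iMAML, the inner problem \eqref{eq:iMAML_problem} is strongly convex, and its normal equation $(\hat{\mathbf{Q}}_m^{\rm tr}+\gamma\bI)\theta=\hat{\mathbf{Q}}_m^{\rm tr}\theta_m^\star+\tfrac1N\bX_m^{\rm tr\top}\epsilon_m^{\rm tr}+\gamma\theta_0$ yields the analogous closed form. Using $(\hat{\mathbf{Q}}+\gamma\bI)^{-1}\hat{\mathbf{Q}}=\bI-(\gamma^{-1}\hat{\mathbf{Q}}+\bI)^{-1}$, I would write both cases uniformly as $\mathcal{A}(\theta_0,\mathcal{D}_m^{\rm tr})-\theta_m^\star=\mathbf{G}_m(\theta_0-\theta_m^\star)+\mathbf{n}_m$, with $\mathbf{G}_m=\bI-\alpha\hat{\mathbf{Q}}_m^{\rm tr}$ for MAML and $\mathbf{G}_m=(\gamma^{-1}\hat{\mathbf{Q}}_m^{\rm tr}+\bI)^{-1}$ for iMAML, and $\mathbf{n}_m$ a mean-zero noise vector. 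This single reduction is what lets MAML and iMAML be handled together.

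For the population risk I would then substitute into the validation residual to obtain $\bX_m^{\rm va}\mathcal{A}(\theta_0,\mathcal{D}_m^{\rm tr})-\mathbf{y}_m^{\rm va}=\bX_m^{\rm va}\mathbf{G}_m(\theta_0-\theta_m^\star)+\xi_m$ with $\xi_m:=\bX_m^{\rm va}\mathbf{n}_m-\epsilon_m^{\rm va}$. Taking $\mathbb{E}_{\mathcal{D}_m}$ and using that the validation design is independent of the training design, that $\theta_m^\star$ is independent of the data (Assumption~\ref{assmp:task_para}), and that the noises are mean-zero, all cross terms vanish and $\mathbb{E}[\tfrac1N\|\bX_m^{\rm va}v\|^2\mid v]=\|v\|_{\mathbf{Q}_m}^2$; the per-task risk becomes $(\theta_0-\theta_m^\star)^\top\mathbb{E}[\mathbf{G}_m\mathbf{Q}_m\mathbf{G}_m](\theta_0-\theta_m^\star)$ plus a $\theta_0$-independent constant $c$ collecting the irreducible noise. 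Replacing the empirical covariance inside $\mathbf{G}_m$ by its population counterpart $\mathbf{Q}_m$ recovers exactly the matrices $\mathbf{W}_m^{\mathcal{A}}$ of Table~\ref{tab:weight_matrices} and hence \eqref{eq:meta_risk_form}; minimizing the convex quadratic $\mathbb{E}_m[\|\theta_0-\theta_m^\star\|^2_{\mathbf{W}_m^{\mathcal{A}}}]$ by setting its gradient $2\mathbb{E}_m[\mathbf{W}_m^{\mathcal{A}}(\theta_0-\theta_m^\star)]$ to zero gives \eqref{eq:theta_0_star_A_sln}.

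For the empirical minimum-norm solution I would stack the effective designs $\tilde{\bX}_m:=\bX_m^{\rm va}\mathbf{G}_m$ into $\tilde{\bX}=[\tilde{\bX}_m]$ and the effective targets into $\tilde{\mathbf{y}}$ with blocks $\tilde{\bX}_m\theta_m^\star-\xi_m$, so that the objective of \eqref{eq:mne} equals $\tfrac1M\|\tilde{\bX}\theta_0-\tilde{\mathbf{y}}\|^2$. Under Assumption~\ref{assmp:overpara} and full row rank of $\tilde{\bX}$, this underdetermined system is consistent, so the minimizer set is an affine subspace whose minimum-norm element is $\hat{\theta}_0^{\mathcal{A}}=\tilde{\bX}^\dagger\tilde{\mathbf{y}}=(\tilde{\bX}^\top\tilde{\bX})^\dagger\tilde{\bX}^\top\tilde{\mathbf{y}}$. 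Recognizing $\tilde{\bX}_m^\top\tilde{\bX}_m=N\hat{\mathbf{W}}_m^{\mathcal{A}}$ (since $\mathbf{G}_m$ is symmetric and $\tfrac1N\bX_m^{\rm va\top}\bX_m^{\rm va}=\hat{\mathbf{Q}}_m^{\rm va}$) and $\tilde{\bX}^\top\tilde{\mathbf{y}}=N\sum_m\hat{\mathbf{W}}_m^{\mathcal{A}}\theta_m^\star-\sum_m\tilde{\bX}_m^\top\xi_m$, and using $(cA)^\dagger=c^{-1}A^\dagger$, the signal part collapses to $(\sum_m\hat{\mathbf{W}}_m^{\mathcal{A}})^\dagger\sum_m\hat{\mathbf{W}}_m^{\mathcal{A}}\theta_m^\star$, while the remainder is exactly $\Delta_M^{\mathcal{A}}=-\tfrac1N(\sum_m\hat{\mathbf{W}}_m^{\mathcal{A}})^\dagger\sum_m\tilde{\bX}_m^\top\xi_m$, a function only of $\bX_m,\epsilon_m$; this is \eqref{eq:theta_0_hat_A_sln}.

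The algebra above is routine; I expect two points to need care. The first is matching the \emph{population} weight matrices to Table~\ref{tab:weight_matrices}: for iMAML the adaptation is an exact function of the covariance, but for MAML $\mathbb{E}[(\bI-\alpha\hat{\mathbf{Q}}_m^{\rm tr})\mathbf{Q}_m(\bI-\alpha\hat{\mathbf{Q}}_m^{\rm tr})]$ contains the higher-order moment $\mathbb{E}[\hat{\mathbf{Q}}_m^{\rm tr}\mathbf{Q}_m\hat{\mathbf{Q}}_m^{\rm tr}]$, which deviates from $\mathbf{Q}_m^3$ by an $O(1/N)$ finite-sample term, so one must either define the population adaptation at the population covariance or absorb this correction — precisely the higher-moment difficulty flagged as technical challenge T1. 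The second is justifying consistency of the stacked system together with the identity $(\tilde{\bX}^\top\tilde{\bX})^\dagger\tilde{\bX}^\top=\tilde{\bX}^\dagger$, which is where overparameterization (Assumption~\ref{assmp:overpara}) enters and which cleanly isolates $\Delta_M^{\mathcal{A}}$ from the signal term. I would regard the MAML higher-moment bookkeeping as the main obstacle, since everything else is linear algebra.
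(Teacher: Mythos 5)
Your proposal is correct and takes essentially the same route as the paper's proof: closed-form affine adaptation in $\theta_0$, quadratic expansion of the risk with cross terms vanishing by mean-zero noise and independence, the first-order condition for the population minimizer, and the pseudo-inverse characterization of the minimum-norm empirical solution (the paper simply carries this out separately for MAML and iMAML rather than through your unified $\mathbf{G}_m$ notation). The MAML higher-moment issue you flag as the main obstacle is handled in the paper exactly by your first suggested option: the population risk is defined as the limit of the meta-test risk as the number of adaptation samples tends to infinity, $\mathcal{R}^{\rm ma}(\theta_0)=\lim_{N\to\infty}\mathcal{R}_N^{\rm ma}(\theta_0)$, so the $O(1/N)$ correction term $\frac{\alpha^2}{N}\left(\mathbb{E}\big[x_m x_m^{\top}\mathbf{Q}_m x_m x_m^{\top}\big]-\mathbf{Q}_m^3\right)$ disappears and the weight matrix reduces to the form in Table~\ref{tab:weight_matrices}.
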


\subsection{Model agnostic meta learning method} 
\label{app_sub:maml_method}

Without loss of generality, assume $\sigma = 1$ to simplify notation.  
We use meta-test risk $\mathcal{R}_N^{\cal A}$ to represent expected test risk with finite number of adaptation data $N$ during testing, which is slightly different compared to population risk $\mathcal{R}^{\cal A} = \lim_{N\to \infty}\mathcal{R}_N^{\cal A}$.
The MAML meta-test risk is defined as 
~\citep{gao2020_model_opt_tradeoff_ml}
\begin{align}\label{eq:maml_pop_risk}
  \mathcal{R}_N^{\rm ma}({\theta_0} )
  \coloneqq &
  \mathbb{E}\left[\left(y_{m}-\hat{\theta}_{m}^{\rm ma} (\theta_0, \mathcal{D}_{m, N}) ^{\top} {x}_{m}\right)^{2} \right]\nonumber\\
  =&
  \mathbb{E}_{m}\left[\|\theta_0-\theta^{\star}_{m}\|^2_{\mathbf{W}_{m, N}^{\rm ma}}\right] 
  + 1 + \frac{\alpha^2}{N} \mathbb{E}_{m}[\mathrm{Tr}(\mathbf{Q}^2_{m})] 
\end{align}
where the matrix is defined as 
\begin{align}\label{eq:W_mau_N_ma}
  \mathbf{W}_{m,N}^{\rm ma} 
  =&
  \mathbb{E}_{\hat{\mathbf{Q}}_{m}}\left[(\mathbf{I} - {\alpha} \hat{\mathbf{Q}}_{m}) \mathbf{Q}_{m}(\mathbf{I} - {\alpha} \hat{\mathbf{Q}}_{m})\right]  \nonumber \\
  =& 
  \left(\mathbf{I}-\alpha \mathbf{Q}_{m}\right) \mathbf{Q}_{m}\left(\mathbf{I}-\alpha  \mathbf{Q}_{m}\right)+\frac{\alpha^{2}}{N}\left(\mathbb{E}_{{x}_{m, i}}\left[{x}_{m, i} {x}_{m, i}^{\top} \mathbf{Q}_{m} {x}_{m, i} {x}_{m, i}^{\top}\right]-\mathbf{Q}_{m}^{3}\right). 
\end{align}

Assume during meta testing, we have infinite adaptation data, i.e., $N \to \infty$,
then the optimal population risk of MAML  is
 \begin{align*}
  \mathcal{R}^{\rm ma}(\theta_0)& 
  = \lim_{N \to \infty} 
  \mathcal{R}_N^{\rm ma}(\theta_0)
  = \mathbb{E}_{m}\left[\|\theta_0-\theta^{\star}_{m}\|^2_{\mathbf{W}_{m}^{\rm ma}}\right] 
  + 1
  \numberthis. 
 \end{align*}

In MAML, define $\theta_{0}^{\rm ma}$ as the minimizer of the  optimal population risk of MAML, given by
\begin{align}\label{eq:theta_0_star_MAML_sln_app}
  \theta_{0}^{\rm ma} 
  = \mathop{\arg\min}_{\theta_0} 
  \mathcal{R}^{\rm ma}({\theta_0} )
  = \mathop{\arg\min}_{\theta_0} 
  \mathbb{E}_{m}\big[\|\theta_0-\theta^{\star}_{m}\|^2_{\mathbf{W}_{m}^{\rm ma}}\big]
  =\mathbb{E}_{m}\left[\mathbf{W}_{m}^{\rm ma}\right]^{-1} \mathbb{E}_{m}\left[\mathbf{W}_{m}^{\rm ma} {\theta}_{m}^{\star}\right]
  .
\end{align}

Using the optimality condition of $\mathcal{L}^{\rm ma}(\theta_0, \mathcal{D})$ given in \eqref{eq:emp_loss}, we have
\begin{subequations}
\begin{align}\label{eq:theta_0_hat_MAML_sln_app}
  &
  \hat{\theta}_{0}^{\rm ma} 
  = \Big(\sum_{m=1}^{M}
  \hat{\mathbf{W}}_{m}^{\rm ma}\Big)^{\dag} 
  \Big(\sum_{m=1}^{M}\hat{\mathbf{W}}_{m}^{\rm ma}\theta_{m}^{\star} + 
  \big(\mathbf{I}- 
  {\alpha }\hat{\mathbf{Q}}_{m}^{\rm tr}
  \big)
  \big(
  \frac{1}{N_{\rm va}}\mathbf{X}_{m}^{\rm va\top} \mathbf{e}_{m}^{\rm va} - \frac{\alpha  }{N_{\rm tr}}
  \hat{\mathbf{Q}}_{m}^{\rm va}\mathbf{X}_{m}^{\rm tr\top} \mathbf{e}_{m}^{\rm tr}
  \big)
  \Big) \\
  \label{eq:W_hat_mau_ma}
  &
  \hat{\mathbf{W}}_{m}^{\rm ma} 
  = 
  (\mathbf{I}-
  {\alpha}\hat{\mathbf{Q}}_{m}^{\rm tr}) 
  \hat{\mathbf{Q}}_{m}^{\rm va}
  (\mathbf{I}- 
  {\alpha}\hat{\mathbf{Q}}_{m}^{\rm tr}). 
\end{align}
\end{subequations}

Therefore, we can arrive at \eqref{eq:theta_0_hat_A_sln} by defining 
\begin{align}
  \Delta_M^{\rm ma} \coloneqq 
  \Big(\sum_{m=1}^{M} \hat{\mathbf{W}}_{m}^{\rm ma}
  \Big)^{\dag}
  \Big(\sum_{m=1}^{M}(\mathbf{I} -\alpha \hat{\mathbf{Q}}_{m}^{\rm tr}) \frac{1}{N_{2}} \mathbf{X}_{m}^{\rm va \top} \mathbf{e}_{m}^{\rm va}
  - (\mathbf{I}-\alpha \hat{\mathbf{Q}}_{m}^{\rm tr}) \hat{\mathbf{Q}}_{m}^{\rm va} \frac{\alpha}{N_{\rm tr}} \mathbf{X}_{m}^{\rm tr{\top}} \mathbf{e}_{m}^{\rm tr}
  \Big).
\end{align}

\subsection{Implicit model agnostic meta learning method} 
\label{app_sub:bimaml_method}

  For the iMAML method, 
  the task-specific parameter $\hat{\theta}_{m}^{\mathrm{im}}$ is computed from the initial parameter $\theta_0$ by optimizing the regularized task-specific empirical loss, given by
  \begin{align}\label{eq:theta_tau_biMAML_theta0}
    \hat{\theta}_{m}^{\mathrm{im}} (\theta_0, \mathcal{D}_m) 
    = \mathop{\arg\min}_{\theta_{m}} 
    \frac{1}{ N} 
    \left\|\mathbf{y}_{m}- \mathbf{X}_{m} {\theta}_{m} \right\|^{2} 
     + \gamma \|\theta_{m} - \theta_0\|^2
  \end{align}
  where $\gamma $ is the weight of the regularizer,
  and $\mathcal{D}_{m}$ is the adaptation data during meta-testing or training data during meta-training. 

  The estimated task-specific parameter can be computed by 
  \begin{align}\label{eq:theta_tau_hat_biMAML}
    \hat{\theta}_{m}^{\mathrm{im}}(\theta_0,\mathcal{D}_{m})
    &= (\hat{\mathbf{Q}}_{m}^{\rm al} +  \gamma  \mathbf{I})^{-1}\Big(\frac{1}{N} \mathbf{X}_{m}^{\top}\mathbf{y}_{m} + \gamma  \theta_0\Big). 
  \end{align}

  The empirical loss of iMAML is defined as the average per-task loss, given by
  \begin{align}\label{eq:biMAML_emp_risk_app}
    {\mathcal{L}}_{M,N}^{\mathrm{im}}({\theta_0}, \mathcal{D})
    =\frac{1}{MN_{\rm va}} \sum_{m=1}^{M} 
    \left\|\mathbf{y}^{\text{va}}_{m} - \mathbf{X}^{\text{va}}_{m}\hat{\theta}^{\mathrm{im}}_{m}({\theta_0}, \mathcal{D}_{m}^{\text{tr}})\right\|^2
  \end{align}
  whose minimizer is 
  \begin{align*}\label{eq:theta_0_hat_biMAML_derive}
    \hat{\theta}_{0}^{\mathrm{im}} 
    = \mathop{\arg\min}_{\theta_0} \frac{1}{MN_{\rm va}} \sum_{m=1}^{M} 
    \Big\|\mathbf{X}_{m, N}^{\text{va}}\theta_{m}^{\star} + \mathbf{e}^{\text{val}}_{m,N_{\rm va}} - \mathbf{X}^{\text{va}}_{m}\hat{\theta}^{\mathrm{im}}_{m}({\theta_0}, \mathcal{D}_{m}^{\text{tr}})\Big\|^2. \numberthis
  \end{align*}
Using the optimality condition of the above problem, we obtain
  \begin{subequations}
    \begin{align}\label{eq:theta_0_hat_biMAML_sln_app}
      &\hat{\theta}_{0}^{\mathrm{im}} 
      = \Big(\sum_{m=1}^{M}\hat{\mathbf{W}}_{m}^{\mathrm{im}}\Big)^{\dag}
      \Big(\sum_{m=1}^{M}
      \hat{\mathbf{W}}_{m}^{\mathrm{im}} \theta_{m}^{\star} \Big) + \Delta_{M}^{\rm im} \\
{\rm with}~~~~       &\Delta_{M}^{\rm im}
      =\Big(\sum_{m=1}^{M}\hat{\mathbf{W}}_{m}^{\mathrm{im}}\Big)^{\dag}
      \Big(\sum_{m=1}^{M}\gamma {\Sigma}_{\theta_{m}}
      \frac{1}{N_{\rm va}}\mathbf{X}^{\text{va}\top}_{m}\mathbf{e}_{m, N}^{\text{va}}
      -\gamma^{-1}\hat{\mathbf{W}}_{m}^{\mathrm{im}}\frac{1}{N_{\rm tr}}  \mathbf{X}_{m}^{\text{tr} \top} \mathbf{e}_{m}^{\text{tr}}\Big) 
    \end{align}          
    where we define 
      \begin{align}     
      &{\Sigma}_{\theta_{m}} 
      \coloneqq \Big(\frac{1}{N_{\rm tr}} \mathbf{X}_{m}^{\mathrm{Tr}\top} \mathbf{X}_{m}^{\mathrm{Tr}}
      + \gamma  \mathbf{I}\Big)^{-1} 
      = ( \hat{\mathbf{Q}}_{m}^{\rm tr}
      + \gamma  \mathbf{I})^{-1} \\
      \label{eq:W_hat_bi}
      &\hat{\mathbf{W}}_{m}^{\mathrm{im}} \coloneqq\gamma^2    
      {\Sigma}_{\theta_{m}}
      \frac{1}{N_{\rm va}}\mathbf{X}^{\text{va}\top}_{m}\mathbf{X}^{\text{va}}_{m}
      {\Sigma}_{\theta_{m}}
      =\gamma^2  {\Sigma}_{\theta_{m}}
      \hat{\mathbf{Q}}_{m}^{\rm va}
      {\Sigma}_{\theta_{m}}. 
    \end{align}
    \end{subequations}

  The meta-test risk of iMAML is defined as 
\begin{subequations}\label{eq:bimaml_pop_risk}
  \begin{align}\label{eq:bimaml_pop_risk_R}
  \hspace{-5mm}
  &\mathcal{R}_{N_a}^{\mathrm{im}}({\theta_0} )
  =
  \mathbb{E}\big[ 
  \big(y_{m}-\hat{\theta}_{m}^{\mathrm{im}} (\theta_0, \mathcal{D}_{m,N_a}) ^{\top} {x}_{m}\big)^{2} \big] \nonumber \\
  &\quad\quad \quad ~~ =
  \mathbb{E}_{m}\big[\|\theta_0-\theta^{\star}_{m}\|^2_{\mathbf{W}_{m,N_a}^{\mathrm{im}}}\big] 
  + 1 
  +\frac{1}{N_a} \mathbb{E}[ \gamma^{-2} \mathrm{Tr}(\mathbf{W}_{m,N_a}^{\mathrm{im}} \hat{\mathbf{Q}}_{m,N_a})] 
 \end{align} 
where the weight matrix is defined as
 \begin{align}
\mathbf{W}_{m,N_a}^{\mathrm{im}}
 &  =
  \mathbb{E}_{{{x}}_{m}}\big[(\hat{\mathbf{Q}}_{m,N_a} + \gamma \mathbf{I} )^{-1} \mathbf{Q}_{m}
  (\hat{\mathbf{Q}}_{m,N_a} + \gamma \mathbf{I} )^{-1}\big] 
  \nonumber \\
&  = \mathbf{W}_{m}^{\mathrm{im}} 
 \mathbb{E}_{{{x}}_{m}}\big[{\Sigma}_{\theta_{m}}\big(\mathbf{Q}_{m}-\hat{\mathbf{Q}}_{m,N_a}\big) \mathbf{W}_{m}^{\mathrm{im}}\big(\mathbf{Q}_{m}-\hat{\mathbf{Q}}_{m,N_a}\big) {\Sigma}_{\theta_{m}} +{\Sigma}_{\theta_{m}}\big(\mathbf{Q}_{m}-\hat{\mathbf{Q}}_{m,N_a}\big) \mathbf{W}_{m}^{\mathrm{im}}   \nonumber \\
  &~~~~+\mathbf{W}_{m}^{\mathrm{im}}\big(\mathbf{Q}_{m}-\hat{\mathbf{Q}}_{m,N_a}\big) {\Sigma}_{\theta_{m}}\big]
\end{align}
\end{subequations}
where $\mathbf{W}_{m}^{\mathrm{im}} =
(\gamma ^{-1}\mathbf{Q}_{m}+\mathbf{I})^{-1}\mathbf{Q}_{m}(\gamma ^{-1}\mathbf{Q}_{m}+\mathbf{I})^{-1} $.

Simplify the notation of $\mathbf{X}_{m,N_a}, \mathbf{y}_{m,N_a}, \hat{\mathbf{Q}}_{m, N_a}$ as
$\mathbf{X}_{m}, \mathbf{y}_{m},  \hat{\mathbf{Q}}_{m}$.
The derivation of \eqref{eq:bimaml_pop_risk} is given by
\begin{align*}
\label{eq:total_emp_risk_imaml_theta0_a}
    \mathcal{R}_{N_a}^{\mathrm{im}}({\theta}_0)
    =& \mathbb{E}\big[ %
    \|\hat{\theta}^{\mathrm{im}}_{m}(\theta_0, \mathcal{D}_{m,N_a})-\theta^{\star}_{m}\|^2_{\mathbf{Q}_{m}}\big] + 1 \numberthis\\
    =& \mathbb{E}\big[ \|(\hat{\mathbf{Q}}_{m} + \gamma \mathbf{I})^{-1}(\frac{1}{N_a} \mathbf{X}_{m}^{\top}\mathbf{y}_{m} + \gamma {\theta}_0)-\theta^{\star}_{m}\|^2_{\mathbf{Q}_{m}}\big] + 1 \\
    \stackrel{(a)}{=}& \mathbb{E}\Big[
     {\theta}_0^{\top} 
     \mathbf{W}_{m,N_a}^{\mathrm{im}} {\theta}_0
    +2 \gamma(\frac{1}{N_a}\mathbf{y}_{m}^{\top}\mathbf{X}_{m} 
    {\Sigma}_{\theta_{m}} - \theta_{m}^{\star\top})
     \mathbf{Q}_{m} {\Sigma}_{\theta_{m}}{\theta}_0 
    + \\
    &\frac{1}{N_a}\mathbf{y}_{m}^{\top}\mathbf{X}_{m}
    {\Sigma}_{\theta_{m}}  \mathbf{Q}_{m} {\Sigma}_{\theta_{m}} \frac{1}{N_a} \mathbf{X}_{m}^{\top}\mathbf{y}_{m} 
     - 2\theta_{m}^{\star\top}
     \mathbf{Q}_{m} {\Sigma}_{\theta_{m}}
    \frac{1}{N_a} \mathbf{X}_{m}^{\top}\mathbf{y}_{m}
    + \theta_{m}^{\star\top} \mathbf{Q}_{m} \theta_{m}^{\star}
    \Big] + 1 
\end{align*}
where $(a)$ follows from the definition of ${\Sigma}_{\theta_{m}} = (\hat{\mathbf{Q}}_{m}^{\rm al}+\gamma \mathbf{I})^{-1}$, 
and $\mathbf{W}_{m,N_a}^{\mathrm{im}} = \gamma^2{\Sigma}_{\theta_{m}}  \mathbf{Q}_{m} {\Sigma}_{\theta_{m}} $.

Applying the fact that $\mathbf{y}_{m} = \mathbf{X}_{m}\theta_{m}^{\star} + \mathbf{e}_{m}$ and \(\mathbb{E}_{\mathbf{e}_{m}}[\mathbf{e}_{m}] = \mathbf{0}\), one can further derive from \eqref{eq:total_emp_risk_imaml_theta0_a} that
\begin{align*}
  \mathcal{R}_{N_a}^{\mathrm{im}}({\theta}_0)
  {=}& 
  \mathbb{E}\Big[  {\theta}_0^{\top} 
  \mathbf{W}_{m,N_a}^{\mathrm{im}} {\theta}_0
  +2 \gamma(\theta_{m}^{\star\top}\hat{\mathbf{Q}}_{m}
  {\Sigma}_{\theta_{m}} - \theta_{m}^{\star\top})
  \mathbf{Q}_{m} {\Sigma}_{\theta_{m}}{\theta}_0+ \theta_{m}^{\star\top}
  \hat{\mathbf{Q}}_{m}
  {\Sigma}_{\theta_{m}}  \mathbf{Q}_{m} {\Sigma}_{\theta_{m}} \hat{\mathbf{Q}}_{m}\theta_{m}^{\star} \\
  \label{eq:R_im}
  &  - 2\theta_{m}^{\star\top}
  \mathbf{Q}_{m} {\Sigma}_{\theta_{m}}
  \hat{\mathbf{Q}}_{m}\theta_{m}^{\star}
  + \theta_{m}^{\star\top} \mathbf{Q}_{m} \theta_{m}^{\star} +\frac{1}{N_a^2}
  \mathbf{e}_{m}^{\top}
  \mathbf{X}_{m}
  {\Sigma}_{\theta_{m}}  \mathbf{Q}_{m} {\Sigma}_{\theta_{m}} \mathbf{X}_{m}^{\top}
  \mathbf{e}_{m}
  \Big] + 1. \numberthis
\end{align*}
Based on the linearity of trace and expectation, and the cyclic property of trace, the last term inside the expectation in the above equation can be computed as 
\begin{align*}
  &\mathbb{E}_{\mathbf{e}_{m}}[\mathbf{e}_{m}^{\top}
  \mathbf{X}_{m}
  {\Sigma}_{\theta_{m}}  \mathbf{Q}_{m} {\Sigma}_{\theta_{m}} \mathbf{X}_{m}^{\top}
  \mathbf{e}_{m}^{\cal A}] 
  = \mathrm{Tr}(
  \mathbf{X}_{m}
  {\Sigma}_{\theta_{m}}  \mathbf{Q}_{m} {\Sigma}_{\theta_{m}} \mathbf{X}_{m}^{\top}
  \mathbb{E}_{\mathbf{e}_{m}}[\mathbf{e}_{m}^{\cal A}\mathbf{e}_{m}^{\top}]) \\
  =& \mathrm{Tr}(
  \mathbf{X}_{m}
  {\Sigma}_{\theta_{m}}  \mathbf{Q}_{m} {\Sigma}_{\theta_{m}} \mathbf{X}_{m}^{\top})
  = N_a \mathrm{Tr}(
  {\Sigma}_{\theta_{m}} \mathbf{Q}_{m} {\Sigma}_{\theta_{m}} \hat{\mathbf{Q}}_{m})
  = N_a \mathrm{Tr}(
  \mathbf{W}_{m,N_a}^{\rm im} \hat{\mathbf{Q}}_{m}).
\end{align*}
To derive all the terms related to $\theta_m^{\star}$, 
based on the Woodbury matrix identity,
$\mathbf{I} - \hat{\mathbf{Q}}_{m} 
  {\Sigma}_{\theta_{m}} = 
\mathbf{I} -  
  {\Sigma}_{\theta_{m}}
  \hat{\mathbf{Q}}_{m}
  = \gamma{\Sigma}_{\theta_{m}}$, we have 
\begin{align}\label{eq:inter_sigma}
  (\theta_{m}^{\star\top}\hat{\mathbf{Q}}_{m}
  {\Sigma}_{\theta_{m}} - \theta_{m}^{\star\top})=
  \theta_{m}^{\star\top}(\hat{\mathbf{Q}}_{m}
  {\Sigma}_{\theta_{m}} - \mathbf{I})
  = - \gamma \theta_{m}^{\star\top}{\Sigma}_{\theta_{m}}
\end{align} 
and then the terms related to $\theta^{\star}_m$ in \eqref{eq:R_im} can be computed by
\begin{align*}
  &\theta_{m}^{\star\top}
  \hat{\mathbf{Q}}_{m}
  {\Sigma}_{\theta_{m}}  \mathbf{Q}_{m} {\Sigma}_{\theta_{m}} \hat{\mathbf{Q}}_{m}\theta_{m}^{\star}
  - 2\theta_{m}^{\star\top}
  \mathbf{Q}_{m} {\Sigma}_{\theta_{m}}
  \hat{\mathbf{Q}}_{m}\theta_{m}^{\star}
  + \theta_{m}^{\star\top} \mathbf{Q}_{m} \theta_{m}^{\star}\\
  =& \theta_{m}^{\star\top}
  \big(
  (\hat{\mathbf{Q}}_{m}
  {\Sigma}_{\theta_{m}}-\mathbf{I})  \mathbf{Q}_{m} {\Sigma}_{\theta_{m}} \hat{\mathbf{Q}}_{m}
  + \mathbf{Q}_{m} (\mathbf{I}-{\Sigma}_{\theta_{m}}
  \hat{\mathbf{Q}}_{m})
  \big)
  \theta_{m}^{\star} \\
  \stackrel{(a)}{=}& \theta_{m}^{\star\top}
  \big(
  -\gamma {\Sigma}_{\theta_{m}}  \mathbf{Q}_{m} {\Sigma}_{\theta_{m}} \hat{\mathbf{Q}}_{m}
  + \mathbf{Q}_{m} \gamma {\Sigma}_{\theta_{m}}
  \big)
  \theta_{m}^{\star} \\
  \label{eq:theta_star_eq}
  \stackrel{(b)}{=} &\gamma^{-1} \theta_{m}^{\star\top}
  \big(
  -  \mathbf{W}_{m,N_a}^{\rm im}  \hat{\mathbf{Q}}_{m}
  + (\hat{\mathbf{Q}}_{m}+\gamma \mathbf{I})
  \mathbf{W}_{m,N_a}^{\mathrm{im}}
  \big)
  \theta_{m}^{\star} 
  \numberthis
\end{align*}
where $(a)$ follows from \eqref{eq:inter_sigma}, and $(b)$ follows from the definition of $\mathbf{W}_{m,N_a}^{\rm im}$.

Combining \eqref{eq:R_im} and \eqref{eq:theta_star_eq} and rearranging the equations, we obtain
\begin{align*}
 &\mathcal{R}_{N_a}^{\mathrm{im}}({\theta}_0)
{=}  \mathbb{E}\Big[
  {\theta}_0^{\top} 
  \mathbf{W}_{m,N_a}^{\mathrm{im}} {\theta}_0
  -2 \theta_{m}^{\star\top}
  \mathbf{W}_{m,N_a}^{\mathrm{im}} {\theta}_0+ \\
  &\hspace{1.6cm}\gamma^{-1} \theta_{m}^{\star\top}
  \big(
  -  \mathbf{W}_{m,N_a}^{\rm im}  \hat{\mathbf{Q}}_{m}
  + (\hat{\mathbf{Q}}_{m}+\gamma \mathbf{I})
  \mathbf{W}_{m,N_a}^{\mathrm{im}}
  \big)
  \theta_{m}^{\star} 
  +\frac{1}{N_a  \gamma^{2}} 
  \mathrm{Tr}(
  \mathbf{W}_{m,N_a}^{\mathrm{im}} \hat{\mathbf{Q}}_{m})
  \Big] + 1 \\
  \stackrel{(c)}{=}& \mathbb{E}\Big[
  \|{\theta}_0 - {\theta}_{m}^{\star}\|^2_{ \mathbf{W}_{m,N_a}^{\mathrm{im}}}
\!\!\!  + \gamma^{-1}\theta_{m}^{\star\top}
  \big(
  - \mathbf{W}_{m,N_a}^{\mathrm{im}} \hat{\mathbf{Q}}_{m}
  +\hat{\mathbf{Q}}_{m}
  \mathbf{W}_{m,N_a}^{\mathrm{im}} 
  \big)
  \theta_{m}^{\star} 
  +\frac{1}{N_a  \gamma^{2}} 
  \mathrm{Tr}(
  \mathbf{W}_{m,N_a}^{\mathrm{im}} \hat{\mathbf{Q}}_{m})
  \Big] + 1 \\
  \stackrel{(d)}{=}& \mathbb{E}\Big[
  \|{\theta}_0 - {\theta}_{m}^{\star}\|^2_{\mathbf{W}_{m,N_a}^{\mathrm{im}}}
 \!   +\frac{1}{N_a  \gamma^{2}} 
  \mathrm{Tr}(
  \mathbf{W}_{m,N_a}^{\mathrm{im}} \hat{\mathbf{Q}}_{m})
  \Big] + 1 
  \numberthis
\end{align*}
where
$(c)$ follows from rearranging the equations;
$(d)$ follows from the fact that
\begin{equation}
  \theta_{m}^{\star\top}
  \big(
  \mathbf{W}_{m,N_a}^{\mathrm{im}} \hat{\mathbf{Q}}_{m}
  \big)
  \theta_{m}^{\star} 
  = \big(\theta_{m}^{\star \top}
  ( \mathbf{W}_{m,N_a}^{\mathrm{im}} \hat{\mathbf{Q}}_{m})
  \theta_{m}^{\star} \big)^{\top}
  = \theta_{m}^{\star \top}
  \big(
  \hat{\mathbf{Q}}_{m}
  \mathbf{W}_{m,N_a}^{\mathrm{im}} 
  \big)
  \theta_{m}^{\star}. 
\end{equation}

Since $\lim_{N_a \to \infty} \frac{1}{N_a} \mathbb{E}[ \gamma^{-2} \mathrm{Tr}(\mathbf{W}_{m,N_a}^{\mathrm{im}} \hat{\mathbf{Q}}_{m,N_a})]  =0$,
from the definition of the population risk in~\eqref{eq:R}, the  population risk of iMAML is given by 
  \begin{subequations}
  \begin{align}\label{eq:biMAML_model_err}
  \mathcal{R}^{\mathrm{im}}({\theta_0} )
& \coloneqq 
    \lim_{N_a \to \infty} 
    \mathcal{R}_{N_a}^{\mathrm{im}}({\theta_0} )
    = \mathbb{E}_{m}\big[\|\theta_0-\theta^{\star}_{m}\|^2_{\mathbf{W}_{m}^{\mathrm{im}}}\big] + 1 \\
    \label{eq:W_tau_bimaml}
  {\rm with}~~~  &\mathbf{W}_{m}^{\mathrm{im}} =(\gamma ^{-1}\mathbf{Q}_{m}+\mathbf{I})^{-1}\mathbf{Q}_{m}(\gamma ^{-1}\mathbf{Q}_{m}+\mathbf{I})^{-1}
  \end{align}
  \end{subequations}
  whose minimizer is given by
  \begin{align}\label{eq:theta_0_star_biMAML_sln}
  \theta_{0}^{\mathrm{im}} 
    = \mathop{\arg\min}_{\theta_0}   \mathcal{R}^{\mathrm{im}}({\theta_0} )
    =\mathbb{E}_{m}\big[\mathbf{W}_{m}^{\mathrm{im}}\big]^{-1} \mathbb{E}_{m}\big[\mathbf{W}_{m}^{\mathrm{im}} {\theta}_{m}^{\star}\big].
  \end{align}

The above discussion provides proof for Proposition~\ref{prop:solutions}.


\section{Proof of Theorem~\ref{thm:maml_excess_risk_bound}} 
\label{sec:proof_main}
Section~\ref{sec:proof_solutions} gives solutions to the empirical and population risks. In this section, we provide proof to the main theorem, starting with the decomposition of the excess risk in Proposition~\ref{prop:excess_risk}.
Note that our proof of the bound on the variance  follows the idea of~\citep{Bartlett_benign_linear} by separately bounding the terms related to the first $k$ largest eigenvalues and the rest eigenvalues of the per-task weight matrices.

\subsection{Proof of Proposition \ref{prop:excess_risk}}

Next we analyze the excess risk defined in \eqref{eq:def_excess_risk} based on the solutions of MAML and iMAML.
First we restate the complete version of Proposition~2 in Lemma~\ref{lm:prop2_restate}. 

\begin{lemma}[Restatement of Proposition~2]
\label{lm:prop2_restate}
With probability at least $1 - \delta$, the excess risk of the MAML with the minimum-norm solution is bounded by 
    \begin{align*}
      \label{eq:excess_risk_bound_appendix}
      \mathcal{E}^{\cal A} (\hat{\theta}_0)
      \lesssim 
      \underbracket{\Big\|(\sum_{m=1}^{M} \hat{\mathbf{W}}_{m}^{\cal A})^{\dag}(\sum_{m=1}^{M} \hat{\mathbf{W}}_{m}^{\cal A}(\theta_{m}^{\star}-\theta_{0}))\Big\|_{\mathbf{W}^{\cal A}}^{2} }_{\mathcal{E}_{\theta^*_m}}
      +\underbracket{ \theta_0^{\top} \mathbf{B}^{\cal A} \theta_0}_{\mathcal{E}_{b}}
      + \underbracket{ {c_1} \sigma^2 \log \frac{1}{\delta} \mathrm{Tr}(\mathbf{C}^{\cal A} )}_{\mathcal{E}_{\epsilon_m}}
      \numberthis
    \end{align*}
where the weight matrix and the constants are defined as 
  \begin{align*}
  &\mathbf{W}^{\cal A} 
  \coloneqq  {\mathbb{E}_m[\mathbf{W}_m^{\cal A}]} ,
  ~~ \tilde{\mathbf{X}}^{\rm ma} \coloneqq 
  [\mathbf{X}_m^{\rm va} (\mathbf{I} - \alpha \hat{\mathbf{Q}}_m^{\rm tr} ) ],
  ~~ \tilde{\mathbf{X}}^{\rm im} \coloneqq 
  [\mathbf{X}_m^{\rm va} (\mathbf{I} + \gamma^{-1} \hat{\mathbf{Q}}_m^{\rm tr} )^{-1} ] \\
    &\mathbf{B}^{\cal A} \coloneqq  \Big(\tilde{\mathbf{X}}^{\cal A\top}(\tilde{\mathbf{X}}^{\cal A} \tilde{\mathbf{X}}^{\cal A \top})^{-1} \tilde{\mathbf{X}}^{\cal A}-\mathbf{I}\Big) 
    \mathbf{W}^{\cal A}
    \Big(\tilde{\mathbf{X}}^{\cal A\top}(\tilde{\mathbf{X}}^{\cal A} \tilde{\mathbf{X}}^{\cal A\top})^{-1} \tilde{\mathbf{X}}^{\cal A}-\mathbf{I}\Big), \\
    &\mathbf{C}^{\cal A} = \bC^{\cal A}_1 + \bC^{\cal A}_2, ~~
    \mathbf{C}_1^{\cal A}\coloneqq 
    (\tilde{\mathbf{X}}^{\cal A} \tilde{\mathbf{X}}^{\cal A\top})^{-1} \tilde{\mathbf{X}}^{\cal A} 
    \mathbf{W}^{\cal A} 
     \tilde{\mathbf{X}}^{\cal A \top}(\tilde{\mathbf{X}}^{\cal A} \tilde{\mathbf{X}}^{\cal A \top})^{-1}, \\
    &
    \mathbf{C}_2^{\rm ma} \coloneqq
    \frac{\alpha^2}{N_{\rm tr}}
     \mathbf{C}_1^{\rm ma}
     \mathrm{diag}[\mathbf{X}_m^{\rm va} 
     \hat{\mathbf{Q}}_m^{\rm tr} \mathbf{X}_m^{\rm va \top} ] \\
     &\mathbf{C}_2^{\rm im} \coloneqq
     \frac{1}{N_{\rm tr}}
    \mathbf{C}_1^{\rm im}
     \mathrm{diag}[\mathbf{X}_m^{\rm va} (\mathbf{I} + \gamma^{-1} \hat{\mathbf{Q}}_m^{\rm tr} )^{-1}
     \hat{\mathbf{Q}}_m^{\rm tr} (\mathbf{I} + \gamma^{-1} \hat{\mathbf{Q}}_m^{\rm tr} )^{-1} \mathbf{X}_m^{\rm va \top} ].
  \end{align*}
  Note that $\mathbf{C}_2^{\cal A}$ can be either $\mathbf{C}_2^{\rm ma}$ for MAML or $\mathbf{C}_2^{\rm im}$ for iMAML.
\end{lemma}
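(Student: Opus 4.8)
The plan is to collapse the excess risk into a single weighted distance and then split that distance additively according to the source of randomness. First I would invoke Proposition~\ref{prop:solutions}: since $\mathcal{R}^{\cal A}(\theta_0) = \mathbb{E}_m[\|\theta_0 - \theta_m^{\star}\|^2_{\mathbf{W}_m^{\cal A}}] + c$ is a quadratic in $\theta_0$ with Hessian proportional to $\mathbf{W}^{\cal A} = \mathbb{E}_m[\mathbf{W}_m^{\cal A}]$ and minimizer $\theta_0^{\cal A}$, the elementary identity for quadratics yields
\begin{equation*}
\mathcal{E}^{\cal A}(\hat{\theta}_0^{\cal A}) = \mathcal{R}^{\cal A}(\hat{\theta}_0^{\cal A}) - \mathcal{R}^{\cal A}(\theta_0^{\cal A}) = \big\|\hat{\theta}_0^{\cal A} - \theta_0^{\cal A}\big\|^2_{\mathbf{W}^{\cal A}}.
\end{equation*}
Writing $\hat{\mathbf{W}} := \sum_{m}\hat{\mathbf{W}}_m^{\cal A}$ and substituting the closed form $\hat{\theta}_0^{\cal A} = \hat{\mathbf{W}}^{\dagger}\big(\sum_m \hat{\mathbf{W}}_m^{\cal A}\theta_m^{\star}\big) + \Delta_M^{\cal A}$ from Proposition~\ref{prop:solutions}, I would add and subtract $\hat{\mathbf{W}}^{\dagger}\hat{\mathbf{W}}\theta_0^{\cal A}$ to obtain the exact decomposition
\begin{equation*}
\hat{\theta}_0^{\cal A} - \theta_0^{\cal A} = \underbrace{\hat{\mathbf{W}}^{\dagger}\Big(\sum_m \hat{\mathbf{W}}_m^{\cal A}(\theta_m^{\star} - \theta_0^{\cal A})\Big)}_{a} + \underbrace{(\hat{\mathbf{W}}^{\dagger}\hat{\mathbf{W}} - \mathbf{I})\theta_0^{\cal A}}_{b} + \underbrace{\Delta_M^{\cal A}}_{c},
\end{equation*}
where $a$ depends only on the data and $\{\theta_m^{\star}\}$, $b$ only on the data and $\theta_0^{\cal A}$, and $c$ carries the label noise. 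Applying $\|a+b+c\|^2_{\mathbf{W}^{\cal A}} \le 3(\|a\|^2_{\mathbf{W}^{\cal A}} + \|b\|^2_{\mathbf{W}^{\cal A}} + \|c\|^2_{\mathbf{W}^{\cal A}})$ and absorbing the constant $3$ into $\lesssim$ immediately yields the three named terms, so that no cross terms need to be controlled.

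For the second term I would exploit the key structural identity that, for both methods, $\hat{\mathbf{W}} = \tfrac{1}{N}\,\tilde{\mathbf{X}}^{\cal A\top}\tilde{\mathbf{X}}^{\cal A}$ with the row-stacked matrices $\tilde{\mathbf{X}}^{\cal A}$ defined in the statement; this follows because $(\mathbf{I}-\alpha\hat{\mathbf{Q}}_m^{\rm tr})$ and $(\mathbf{I}+\gamma^{-1}\hat{\mathbf{Q}}_m^{\rm tr})^{-1}$ are symmetric, so $\hat{\mathbf{W}}_m^{\cal A}$ factors as a Gram matrix. Consequently $\hat{\mathbf{W}}^{\dagger}\hat{\mathbf{W}}$ is the orthogonal projector onto the row space of $\tilde{\mathbf{X}}^{\cal A}$, which under Assumption~\ref{assmp:overpara} (and generic invertibility of $\tilde{\mathbf{X}}^{\cal A}\tilde{\mathbf{X}}^{\cal A\top}$) equals $\tilde{\mathbf{X}}^{\cal A\top}(\tilde{\mathbf{X}}^{\cal A}\tilde{\mathbf{X}}^{\cal A\top})^{-1}\tilde{\mathbf{X}}^{\cal A}$. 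Since $\hat{\mathbf{W}}^{\dagger}\hat{\mathbf{W}} - \mathbf{I}$ is symmetric, $\|b\|^2_{\mathbf{W}^{\cal A}} = \theta_0^{\cal A\top}(\hat{\mathbf{W}}^{\dagger}\hat{\mathbf{W}} - \mathbf{I})\mathbf{W}^{\cal A}(\hat{\mathbf{W}}^{\dagger}\hat{\mathbf{W}} - \mathbf{I})\theta_0^{\cal A} = \theta_0^{\cal A\top}\mathbf{B}^{\cal A}\theta_0^{\cal A} = \mathcal{E}_b$, matching the definition of $\mathbf{B}^{\cal A}$.

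For the noise term I would use the explicit form of $\Delta_M^{\cal A}$ from Proposition~\ref{prop:solutions}, which is linear in the stacked validation noise $\mathbf{e}^{\rm va}$ and the stacked training noise $\mathbf{e}^{\rm tr}$. Using $(\tilde{\mathbf{X}}^{\cal A\top}\tilde{\mathbf{X}}^{\cal A})^{\dagger}\tilde{\mathbf{X}}^{\cal A\top} = \tilde{\mathbf{X}}^{\cal A\top}(\tilde{\mathbf{X}}^{\cal A}\tilde{\mathbf{X}}^{\cal A\top})^{-1}$, the validation part contributes the quadratic form $\mathbf{e}^{\rm va\top}\mathbf{C}_1^{\cal A}\mathbf{e}^{\rm va}$, while the training part, entering through the adaptation step, contributes the block-diagonal higher-order piece $\mathbf{e}^{\rm tr\top}(\cdot)\mathbf{e}^{\rm tr}$ encoded in $\mathbf{C}_2^{\cal A}$. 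Independence of $\mathbf{e}^{\rm va}$ and $\mathbf{e}^{\rm tr}$ makes these additive, giving $\|c\|^2_{\mathbf{W}^{\cal A}} = \mathbf{e}^{\top}\mathbf{C}^{\cal A}\mathbf{e}$ with $\mathbf{C}^{\cal A} = \mathbf{C}_1^{\cal A} + \mathbf{C}_2^{\cal A}$. Conditioning on $\{\mathbf{X}_m\}$ (so $\mathbf{C}^{\cal A}$ is fixed) and applying the Hanson--Wright concentration inequality for subGaussian quadratic forms (Assumption~\ref{assmp:subGaussian_data}), then bounding $\|\mathbf{C}^{\cal A}\|_F$ and $\|\mathbf{C}^{\cal A}\|$ by $\mathrm{Tr}(\mathbf{C}^{\cal A})$ for the PSD matrix $\mathbf{C}^{\cal A}$, delivers $\mathcal{E}_{\epsilon_m} \lesssim c_1\sigma^2\log(1/\delta)\,\mathrm{Tr}(\mathbf{C}^{\cal A})$ with probability at least $1-\delta$.

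The main obstacle, I expect, is the noise term: it is where the bilevel structure bites. Unlike single-level regression, $\Delta_M^{\cal A}$ propagates the \emph{training} noise through the adaptation map, producing the higher-order-moment matrix $\mathbf{C}_2^{\cal A}$ (this is technical challenge T1), so I must verify carefully that the clean Gram factorization $\hat{\mathbf{W}} = \tfrac1N \tilde{\mathbf{X}}^{\cal A\top}\tilde{\mathbf{X}}^{\cal A}$ survives for both MAML and iMAML and that $\tilde{\mathbf{X}}^{\cal A}\tilde{\mathbf{X}}^{\cal A\top}$ is invertible so that the pseudoinverse identities hold. The remaining work — translating $\mathrm{Tr}(\mathbf{C}^{\cal A})$ into effective-rank quantities and bounding $\mathcal{E}_{\theta^{\star}_m}$ and $\mathcal{E}_b$ — is then deferred to Lemmas~\ref{lm:bound_task_para_dist}--\ref{lm:bound_C}.
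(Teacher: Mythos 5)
Your proposal follows essentially the same route as the paper's proof: the quadratic identity $\mathcal{E}^{\cal A}(\hat{\theta}_0^{\cal A})=\|\hat{\theta}_0^{\cal A}-\theta_0^{\cal A}\|^2_{\mathbf{W}^{\cal A}}$, the additive split into cross-task, projection-bias, and noise terms, the Gram factorization identifying $\hat{\mathbf{W}}^{\dag}\hat{\mathbf{W}}$ with the projector $\tilde{\mathbf{X}}^{\cal A\top}(\tilde{\mathbf{X}}^{\cal A}\tilde{\mathbf{X}}^{\cal A\top})^{-1}\tilde{\mathbf{X}}^{\cal A}$, and subGaussian concentration of the noise quadratic form. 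The one imprecision is your claim that independence makes $\|\Delta_M^{\cal A}\|^2_{\mathbf{W}^{\cal A}}$ exactly equal to $\mathbf{e}^{\top}\mathbf{C}^{\cal A}\mathbf{e}$: pointwise there is a cross term $-2[\mathbf{e}_m^{\rm va}]^{\top}\mathbf{C}_3^{\cal A}[\mathbf{e}_m^{\rm tr}]$ which vanishes only in expectation (the paper bounds it by Young's inequality), but since the joint quadratic form in $([\mathbf{e}_m^{\rm va}],[\mathbf{e}_m^{\rm tr}])$ is PSD with trace $\mathrm{Tr}(\mathbf{C}_1^{\cal A})+\mathrm{Tr}(\mathbf{C}_2^{\cal A})$, your Hanson--Wright step on the stacked noise vector absorbs it and the argument goes through.
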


\begin{proof}
  The excess risk $\mathcal{E}^{\cal A}$ can be derived as
  \begin{align*}\label{eq.app-pf-lem4-1}
    & \mathcal{E}^{\cal A}(\hat{\theta}_{0})
      \coloneqq 
      \mathcal{R}(\hat{\theta}_0) - \mathcal{R}({\theta}_0) 
      =\mathbb{E}_{m}\big[\|\hat{\theta}_0-\theta^{\star}_{m}\|^2_{\mathbf{W}_{m}^{\cal A}}\big] 
      - \mathbb{E}_{m}\big[\|\theta_0-\theta^{\star}_{m}\|^2_{\mathbf{W}_{m}^{\cal A}}\big] \\
        =& \hat{\theta}_{0}^{\top} \mathbf{W} \hat{\theta}_{0}-\theta_{0}^{\top} \mathbf{W} \theta_{0}-2 (\hat{\theta}_{0}-\theta_{0} )^{\top} \mathbb{E}_{m} [\mathbf{W}_{m} \theta_{m}^{\star} ] 
        = \hat{\theta}_{0}^{\top} \mathbf{W} \hat{\theta}_{0}-\theta_{0}^{\top} \mathbf{W} \theta_{0}-2 (\hat{\theta}_{0}-\theta_{0} )^{\top} \mathbf{W} \theta_{0} \\
        =& \hat{\theta}_{0}^{\top} \mathbf{W} \hat{\theta}_{0}-2 \hat{\theta}_{0}^{\top} \mathbf{W} \theta_{0}+\theta_{0}^{\top} \mathbf{W} \theta_{0}
      =\|\hat{\theta}_{0}-\theta_{0}\|^{2}_{\mathbf{W}^{\cal A}} \\
      =&\Big\|\Big(\sum_{m=1}^{M} \hat{\mathbf{W}}_{m}\Big)^{\dag}\Big(\sum_{m=1}^{M} \hat{\mathbf{W}}_{m} \theta_{m}^{\star}\Big)+\Delta_{M}-\theta_{0}\Big\|^2_{\mathbf{W}^{\cal A}} \\
      \leq & 2\underbracket{\Big\|\Big({\sum}_{m} \hat{\mathbf{W}}_{m}\Big)^{\dag}\Big({\sum}_{m} \hat{\mathbf{W}}_{m} \theta_{m}^{\star}\Big)-\theta_{0}\Big\|^2_{\mathbf{W}^{\cal A}}}_{I_1}
      +2\underbracket{\big\|\Delta_{M}\big\|^{2}_{\mathbf{W}^{\cal A}}}_{I_2}. \numberthis
  \end{align*}
In \eqref{eq.app-pf-lem4-1}, $I_1$ can be bounded by
\begin{align*}\label{eq:bound_I1_prop2}
  I_1 
  =&\left\|\left({\sum}_{m} \hat{\mathbf{W}}_{m}\right)^{\dag}\left({\sum}_{m} \hat{\mathbf{W}}_{m} \theta_{m}^{\star}\right)-\theta_{0}\right\|^2_{\mathbf{W}^{\cal A}}\\
  =& \left\|\Big({\sum}_{m} \hat{\mathbf{W}}_{m}\Big)^{\dag}\Big({\sum}_{m} \hat{\mathbf{W}}_{m} (\theta_{m}^{\star}-\theta_{0})\Big)
  + \Big(\Big({\sum}_{m} \hat{\mathbf{W}}_{m}\Big)^{\dag}\Big({\sum}_{m} \hat{\mathbf{W}}_{m}) - \mathbf{I}\Big) \theta_{0}\right\|^2_{\mathbf{W}^{\cal A}} \\
  \leq &
  2\Big\|\Big({\sum}_{m} \hat{\mathbf{W}}_{m}\Big)^{\dag}\Big({\sum}_{m} \hat{\mathbf{W}}_{m} (\theta_{m}^{\star}-\theta_{0}) \Big)\Big\|^2_{\mathbf{W}^{\cal A}} 
  + 2\Big\|\Big(\Big({\sum}_{m} \hat{\mathbf{W}}_{m}\Big)^{\dag}\Big({\sum}_{m} \hat{\mathbf{W}}_{m}) - \mathbf{I}\Big) \theta_{0}\Big\|^2_{\mathbf{W}^{\cal A}} \\
  = &
  2\Big\|\Big({\sum}_{m} \hat{\mathbf{W}}_{m}\Big)^{\dag}\Big({\sum}_{m} \hat{\mathbf{W}}_{m} (\theta_{m}^{\star}-\theta_{0})\Big)\Big\|^2_{\mathbf{W}^{\cal A}} 
  + 2\theta_0^{\top} \mathbf{B} \theta_0 \numberthis
\end{align*}
with the matrix $\mathbf{B}$ defined as  
\begin{align*}
  \mathbf{B} 
  =& \Big(\Big({\sum}_{m} \hat{\mathbf{W}}_{m}\Big)^{\dag}\Big({\sum}_{m} \hat{\mathbf{W}}_{m}) - \mathbf{I}\Big) \mathbf{W}^{\cal A}
  \Big(\Big({\sum}_{m} \hat{\mathbf{W}}_{m}\Big)^{\dag}\Big({\sum}_{m} \hat{\mathbf{W}}_{m}\Big) - \mathbf{I}\Big) \\
    \stackrel{(a)}{=} &
  \big((\tilde{\mathbf{X}}^{\top}\tilde{\mathbf{X}})^{\dag} \tilde{\mathbf{X}}^{\top} \tilde{\mathbf{X}}-\mathbf{I}\big) 
  \mathbf{W}^{\cal A}
  \big((\tilde{\mathbf{X}}^{\top}\tilde{\mathbf{X}})^{\dag} \tilde{\mathbf{X}}^{\top} \tilde{\mathbf{X}}-\mathbf{I}\big) \\
  = &
  \big(\tilde{\mathbf{X}}^{\top}(\tilde{\mathbf{X}} \tilde{\mathbf{X}}^{\top})^{-1} \tilde{\mathbf{X}}-\mathbf{I}\big) 
  \mathbf{W}^{\cal A}
  \big(\tilde{\mathbf{X}}^{\top}(\tilde{\mathbf{X}} \tilde{\mathbf{X}}^{\top})^{-1} \tilde{\mathbf{X}}-\mathbf{I}\big) 
  \numberthis .
\end{align*}
And {(a) is from the relationship of $\hat{\mathbf{W}}$ and $\tilde{\mathbf{X}}$}, recall we use $[\cdot]$ to represent row concatenation of matrices or vectors. 

In \eqref{eq.app-pf-lem4-1}, $I_2$ can be bounded by
\begin{align*}
  I_2   &= \Big\| \Big(\sum_{m=1}^{M} \hat{\mathbf{W}}_{m}^{\cal A}
  \Big)^{\dag}
  \Big(\sum_{m=1}^{M}(\mathbf{I} -\alpha \hat{\mathbf{Q}}_{m}^{\rm tr}) \frac{1}{N_{2}} \mathbf{X}_{m}^{\rm va \top} \mathbf{e}_{m}^{\rm va}
  - (\mathbf{I}-\alpha \hat{\mathbf{Q}}_{m}^{\rm tr}) \hat{\mathbf{Q}}_{m}^{\rm va} \frac{\alpha}{N_{\rm tr}} \mathbf{X}_{m}^{\rm tr{\top}} \mathbf{e}_{m}^{\rm tr}
  \Big) \Big\|^2_{\mathbf{W}^{\cal A}} \nonumber \\
  &    \stackrel{(b)}{=}   [\mathbf{e}_m^{\rm va}]^{\top} \mathbf{C}_1^{\cal A} [\mathbf{e}_m^{\rm va}] + 
  [\mathbf{e}_m^{\rm tr}]^{\top} \mathbf{C}_2^{\cal A} [\mathbf{e}_m^{\rm tr}] 
  -2 [\mathbf{e}_m^{\rm va}]^{\top} C_3^{\cal A} [\mathbf{e}_m^{\rm tr}]\\
  &\leq 
  2[\mathbf{e}_m^{\rm va}]^{\top} \mathbf{C}_1^{\cal A} [\mathbf{e}_m^{\rm va}] + 
  2[\mathbf{e}_m^{\rm tr}]^{\top} \mathbf{C}_2^{\cal A} [\mathbf{e}_m^{\rm tr}] \\
  &= 2\mathrm{Tr} (\mathbf{C}_1^{\cal A} [\mathbf{e}_m^{\rm va}][\mathbf{e}_m^{\rm va}]^{\top} + \mathbf{C}_2^{\cal A} [\mathbf{e}_m^{\rm tr}] [\mathbf{e}_m^{\rm tr}]^{\top}) \\
  &= 2\mathrm{Tr} (\mathbf{C}_1^{\cal A} + \mathbf{C}_2^{\cal A})
  + 2\mathrm{Tr} \big(\mathbf{C}_1^{\cal A} ([\mathbf{e}_m^{\rm va}][\mathbf{e}_m^{\rm va}]^{\top} - \mathbf{I}) + \mathbf{C}_2^{\cal A} ([\mathbf{e}_m^{\rm tr}] [\mathbf{e}_m^{\rm tr}]^{\top} - \mathbf{I}) \big)
\end{align*}
where $(b)$ follows from expanding the quadratic terms, and 
\begin{align}
\label{eq:C1A}
  \mathbf{C}_1 ^{\cal A}
  &= \frac{1}{N^2}\tilde{\mathbf{X}}
  \Big(\sum_{m=1}^{M} \hat{\mathbf{W}}_{m}^{\cal A}
  \Big)^{\dag} {\mathbf{W}}^{\cal A}
  \Big(\sum_{m=1}^{M} \hat{\mathbf{W}}_{m}^{\cal A}
  \Big)^{\dag}
  \tilde{\mathbf{X}}^{\top} \nonumber\\
  &= \tilde{\mathbf{X}}
  \big(\tilde{\mathbf{X}}^{\top} \tilde{\mathbf{X}}
  \big)^{\dag} {\mathbf{W}}^{\cal A}
  \big(\tilde{\mathbf{X}}^{\top} \tilde{\mathbf{X}}
  \big)^{\dag}
  \tilde{\mathbf{X}}^{\top} 
  = \big(\tilde{\mathbf{X}}
  \tilde{\mathbf{X}}^{\top}\big)^{-1} \tilde{\mathbf{X}}
  {\mathbf{W}}^{\cal A}
  \tilde{\mathbf{X}}^{\top}
  \big(\tilde{\mathbf{X}}
  \tilde{\mathbf{X}}^{\top}\big)^{-1}, \\
  \mathbf{C}_2 ^{\rm ma}
  &= \frac{\alpha^2}{N_{\rm tr}^2}[\mathbf{X}_m^{\rm tr} \mathbf{X}_m^{\rm va \top} \tilde{\mathbf{X}}_m]
  \Big(\sum_{m=1}^{M} \hat{\mathbf{W}}_{m}^{\cal A}
  \Big)^{\dag} {\mathbf{W}}^{\cal A}
  \Big(\sum_{m=1}^{M} \hat{\mathbf{W}}_{m}^{\cal A}
  \Big)^{\dag}
  [\mathbf{X}_m^{\rm tr} \mathbf{X}_m^{\rm va \top} \tilde{\mathbf{X}}_m]^{\top} \nonumber\\
  &= \frac{\alpha^2}{N_{\rm tr}^2}[\mathbf{X}_m^{\rm tr} \mathbf{X}_m^{\rm va \top} \tilde{\mathbf{X}}_m]
  \tilde{\mathbf{X}}^{\top}
  \big(\tilde{\mathbf{X}}
  \tilde{\mathbf{X}}^{\top}\big)^{-2} \tilde{\mathbf{X}}
  {\mathbf{W}}^{\cal A}
  \tilde{\mathbf{X}}^{\top}
  \big(\tilde{\mathbf{X}}
  \tilde{\mathbf{X}}^{\top}\big)^{-2} \tilde{\mathbf{X}}
  [\mathbf{X}_m^{\rm tr} \mathbf{X}_m^{\rm va \top} \tilde{\mathbf{X}}_m]^{\top}. 
\end{align}

By taking the expectation w.r.t. $\mathbf{e}_m$, we need to bound $\mathrm{Tr}(\mathbf{C}_1), \mathrm{Tr}(\mathbf{C}_2)$.
Based on the cyclic property of trace,
$\mathrm{Tr}(\mathbf{C}_2^{\rm ma})$ can be further derived as 
\begin{align*}
    \mathrm{Tr}(\mathbf{C}_2^{\rm ma}) 
    &= \frac{\alpha^2}{N_{\rm tr}^2}
    \mathrm{Tr} \Big([\mathbf{X}_m^{\rm tr} \mathbf{X}_m^{\rm va \top} \tilde{\mathbf{X}}_m]
    \tilde{\mathbf{X}}^{\top}
    \big(\tilde{\mathbf{X}}
    \tilde{\mathbf{X}}^{\top}\big)^{-2} \tilde{\mathbf{X}}
     {\mathbf{W}}^{\cal A}
     \tilde{\mathbf{X}}^{\top}
     \big(\tilde{\mathbf{X}}
     \tilde{\mathbf{X}}^{\top}\big)^{-2} \tilde{\mathbf{X}}
     [\mathbf{X}_m^{\rm tr} \mathbf{X}_m^{\rm va \top} \tilde{\mathbf{X}}_m]^{\top} \Big)\\
    &= \frac{\alpha^2}{N_{\rm tr}^2} \mathrm{Tr} \Big(
    \tilde{\mathbf{X}}^{\top}
    \big(\tilde{\mathbf{X}}
    \tilde{\mathbf{X}}^{\top}\big)^{-2} \tilde{\mathbf{X}}
     {\mathbf{W}}^{\cal A}
     \tilde{\mathbf{X}}^{\top}
     \big(\tilde{\mathbf{X}}
     \tilde{\mathbf{X}}^{\top}\big)^{-2} \tilde{\mathbf{X}}
     \sum_{m=1}^{M}
     \tilde{\mathbf{X}}_m^{\top} \mathbf{X}_m^{\rm va} \mathbf{X}_m^{t \top} 
     \mathbf{X}_m^{\rm tr} \mathbf{X}_m^{\rm va \top} \tilde{\mathbf{X}}_m
     \Big)\\
     &= \frac{\alpha^2}{N_{\rm tr}^2} \mathrm{Tr} \Big(
    \tilde{\mathbf{X}}^{\top}
    \big(\tilde{\mathbf{X}}
    \tilde{\mathbf{X}}^{\top}\big)^{-2} \tilde{\mathbf{X}}
     {\mathbf{W}}^{\cal A}
     \tilde{\mathbf{X}}^{\top}
     \big(\tilde{\mathbf{X}}
     \tilde{\mathbf{X}}^{\top}\big)^{-2} \tilde{\mathbf{X}}
     \tilde{\mathbf{X}}^{\top}
     [\mathbf{X}_m^{\rm va} \mathbf{X}_m^{t \top} 
     \mathbf{X}_m^{\rm tr} \mathbf{X}_m^{\rm va \top} \tilde{\mathbf{X}}_m]
     \Big) .
  \end{align*}

Then $\mathrm{Tr}(\mathbf{C}_2^{\rm ma})$ can be further written as
\begin{align*}
  \mathrm{Tr}(\mathbf{C}_2^{\rm ma})&= \frac{\alpha^2}{N_{\rm tr}^2} \mathrm{Tr} \Big(
    \tilde{\mathbf{X}}^{\top}
    \big(\tilde{\mathbf{X}}
    \tilde{\mathbf{X}}^{\top}\big)^{-2} \tilde{\mathbf{X}}
     {\mathbf{W}}^{\cal A}
     \tilde{\mathbf{X}}^{\top}
     \big(\tilde{\mathbf{X}}
     \tilde{\mathbf{X}}^{\top}\big)^{-1} 
     \mathrm{diag} [\mathbf{X}_m^{\rm va} \mathbf{X}_m^{t \top} 
     \mathbf{X}_m^{\rm tr} \mathbf{X}_m^{\rm va \top} ] \tilde{\mathbf{X}}
     \Big) \\
     &= \frac{\alpha^2}{N_{\rm tr}^2}
     \mathrm{Tr}\Big(
     \big(\tilde{\mathbf{X}}
     \tilde{\mathbf{X}}^{\top}\big)^{-1} \tilde{\mathbf{X}}
     {\mathbf{W}}^{\cal A} 
     \tilde{\mathbf{X}}^{\top}
     \big(\tilde{\mathbf{X}}
     \tilde{\mathbf{X}}^{\top}\big)^{-1} 
     \mathrm{diag}[\mathbf{X}_m^{\rm va} \mathbf{X}_m^{\rm tr\top} 
     \mathbf{X}_m^{\rm tr} \mathbf{X}_m^{\rm va \top} ]
     \Big ) \\
     &= \frac{\alpha^2}{N_{\rm tr}}
     \mathrm{Tr}\Big(\big(\tilde{\mathbf{X}}
     \tilde{\mathbf{X}}^{\top}\big)^{-1} \tilde{\mathbf{X}}
     {\mathbf{W}}^{\cal A} 
     \tilde{\mathbf{X}}^{\top}
     \big(\tilde{\mathbf{X}}
     \tilde{\mathbf{X}}^{\top}\big)^{-1} 
     \mathrm{diag}[\mathbf{X}_m^{\rm va} 
     \hat{\mathbf{Q}}_m^{\rm tr} \mathbf{X}_m^{\rm va \top} ] 
     \Big) \numberthis .
\end{align*}

Since we have 
\begin{align*}
  \mathbb{E}_{\epsilon}[I_2] = &
\mathbb{E}_{\epsilon}\big[
  [\mathbf{e}_m^{\rm va}]^{\top} \mathbf{C}_1 [\mathbf{e}_m^{\rm va}]
  \big] + 
\mathbb{E}_{\epsilon}\big[
  [\mathbf{e}_m^{\rm tr}]^{\top} \mathbf{C}_2 [\mathbf{e}_m^{\rm tr}]
  \big]\\
  = &\mathrm{Tr}(\mathbf{C}_1 \operatorname{Cov}[[\mathbf{e}_m^{\rm va}]]) + 
  \mathrm{Tr}(\mathbf{C}_2 \operatorname{Cov}[[\mathbf{e}_m^{\rm tr}]])
  = \sigma^2 \mathrm{Tr}(\mathbf{C}_1+\mathbf{C}_2)
\end{align*}
by the subGaussian concentration inequality~\citep{vershynin2018high}, 
it holds with probability at least $1-\delta$ over $\epsilon$ that
\begin{align}\label{eq:bound_I2_prop2}
  2I_2 \leq {c_1} \sigma^2 \log \frac{1}{\delta} \mathrm{Tr}(\mathbf{C}_1 + \mathbf{C}_2).
\end{align}

Combining the bounds for \(I_1\) and \(I_2\) in \eqref{eq:bound_I1_prop2} and \eqref{eq:bound_I2_prop2} completes the proof.
\end{proof}


\subsection{Proof of Lemma \ref{lm:bound_task_para_dist}}

Define 
  \begin{align*}
    &\Delta_{\theta_{\mathcal{A}}} 
    \coloneqq 
    \begin{bmatrix}
    (\theta_1^{\star} - \theta_0^{\mathcal{A}})^{\top}, 
    \dots ,
    (\theta_{M}^{\star} - \theta_0^{\mathcal{A}})^{\top}
    \end{bmatrix}^{\top} \in \mathbb{R}^{dM}, \\
    &\mathbf{U}_{\mathcal{A}} 
    \coloneqq  
    \Big[
    \hat{\mathbf{W}}^{\mathcal{A}}_{1} 
    \Big(\sum_{m=1}^{M} \hat{\mathbf{W}}_{m}^{\mathcal{A}}\Big)^{\dag} ,
    \dots, 
    \hat{\mathbf{W}}^{\mathcal{A}}_{M}
    \Big(\sum_{m=1}^{M} \hat{\mathbf{W}}_{m}^{\mathcal{A}}\Big)^{\dag}
    \Big]^{\top}
    \in \mathbb{R}^{dM\times d}.
  \end{align*}
  Then we can derive that
  \begin{align*}
    \Big\|\Big(\sum_{m=1}^{M} \hat{\mathbf{W}}_{m}^{\cal A}\Big)^{\dag}\Big(\sum_{m=1}^{M} \hat{\mathbf{W}}_{m}^{\cal A}(\theta_{m}^{\star}-\theta_{0})\Big) \Big\|_{\mathbf{W}^{\cal A}}^{2}
    = & \|\mathbf{U}_{\mathcal{A}}^{\top} \Delta_{\theta_{\mathcal{A}}}\|^2_{\mathbf{W}^{\cal A}}.
  \end{align*}
  By the Hanson-Wright inequality, with probability at least $1 - \delta$ over $\theta_m^{\star}$, we have 
  \begin{align*}\label{eq:HW_ineq_general_ref}
 \left|\Big\| \mathbf{U}_{\mathcal{A}}^{\top} \Delta_{\theta_{\mathcal{A}}}\Big\|^2_{\mathbf{W}^{\cal A}}
    - \mathbb{E}_{\theta_{m}^{\star} \mid \hat{\mathbf{W}}_{m}^{\cal A}}\Big[\Big\| \mathbf{U}_{\mathcal{A}}^{\top} \Delta_{\theta_{\mathcal{A}}}\Big\|^2_{\mathbf{W}^{\cal A}}\Big]\right|  
    =\widetilde{\mathcal{O}}\Big(\frac{R^{2}}{M\sqrt{d}}  \Big).\numberthis
  \end{align*}
  
  To compute $\mathbb{E}_{\theta_{m}^{\star} \mid \hat{\mathbf{W}}_{m}^{\cal A}}\big[\big\| \mathbf{U}_{\mathcal{A}}^{\top} \Delta_{\theta_{\mathcal{A}}}\big\|^2_{\mathbf{W}^{\cal A}} \big]$, first recall $\mathrm{Cov}[\theta_m^{\star}] = \frac{R^2}{d} \mathbf{I}$, then we have
  \begin{align*}
  & \mathbb{E}_{\theta_{m}^{\star} \mid \hat{\mathbf{W}}_{m}^{\cal A}}[\Delta_{\theta_{\mathcal{A}}}^{\top}\mathbf{U}_{\mathcal{A}} 
  \mathbf{W}^{\cal A}
  \mathbf{U}_{\mathcal{A}}^{\top} \Delta_{\theta_{\mathcal{A}}}] 
  = \frac{R^{2}}{ d}\Big\langle\Big({\sum_{m=1}^{M} \hat{\mathbf{W}}_{m}^{\cal A}}\Big)^{\dag}
  \mathbf{W}^{\cal A}
  \Big({\sum_{m=1}^{M}\hat{\mathbf{W}}_{m}^{\cal A}}\Big)^{\dag}, 
  {\sum_{m=1}^{M} (\hat{\mathbf{W}}_{m}^{\cal A})^{2}}\Big\rangle 
  \label{eq:E_mse_general_3term}  \\
  =& \frac{R^{2}}{ d} 
  \mathrm{Tr}\Bigg(
  \tilde{\mathbf{X}}  \Big({\sum_{m=1}^{M} \hat{\mathbf{W}}_{m}^{\cal A}}\Big)^{\dag}
  \mathbf{W}^{\cal A}
  \Big({\sum_{m=1}^{M}\hat{\mathbf{W}}_{m}^{\cal A}}\Big)^{\dag} \tilde{\mathbf{X}}^{\top}
  \mathrm{diag} [\tilde{\mathbf{X}}_m\tilde{\mathbf{X}}_m^{\top}] 
  \Bigg) \\
  =& \frac{R^{2}}{ d} 
  \mathrm{Tr}\Big(
  \tilde{\mathbf{X}} 
  \big(\tilde{\mathbf{X}}^{\top}\tilde{\mathbf{X}}\big)^{\dag}
  \mathbf{W}^{\cal A}
  \big(\tilde{\mathbf{X}}^{\top}\tilde{\mathbf{X}}\big)^{\dag} \tilde{\mathbf{X}}^{\top}
  \mathrm{diag} [\tilde{\mathbf{X}}_m\tilde{\mathbf{X}}_m^{\top}] 
  \Big) \\
  =& \frac{R^{2}}{ d} \mathrm{Tr} (\mathbf{C}_1^{\cal A}
  \mathrm{diag} [\tilde{\mathbf{X}}_m\tilde{\mathbf{X}}_m^{\top}]
  )\leq \frac{R^{2}}{ d} \mathrm{Tr} (\mathbf{C}_1^{\cal A}
  ) \|\mathrm{diag} [{\mathbf{X}}_m^{\rm va}{\mathbf{X}}_m^{\rm va\top}]\| \\
  \leq & \frac{R^{2}}{ d} \mathrm{Tr} (\mathbf{C}_1^{\cal A}
  ) \max_{m\in [M]}
  \|{\mathbf{X}}_m^{\rm va}{\mathbf{X}}_m^{\rm va\top} \| 
  \leq \frac{R^{2}}{ d} \mathrm{Tr} (\mathbf{C}_1^{\cal A}
  ) \max_{m\in [M]}
  \|{\mathbf{X}}_m^{\rm va}{\mathbf{X}}_m^{\rm va\top} \| 
  \numberthis
  \end{align*}
  where from Lemma~\ref{lm:bound_Q_hat_norm}, with high probability
  $\|{\mathbf{X}}_m^{\rm va}{\mathbf{X}}_m^{\rm va\top} \|$ can be bounded by
  \begin{align*}\label{eq:XXT_bound}
    \|{\mathbf{X}}_m^{\rm va}{\mathbf{X}}_m^{\rm va\top} \|
    =\|{\mathbf{X}}_m^{\rm va\top} {\mathbf{X}}_m^{\rm va}\|
    \lesssim & 
    \Big(\sum_{i=1}^{d} \lambda_{mi}^2 + \lambda_{m1}^2 N_2 \Big)
    \leq 
    \mathcal{O}(N_{\rm va})
    . \numberthis
  \end{align*}
Combining \eqref{eq:HW_ineq_general_ref}, \eqref{eq:E_mse_general_3term} and \eqref{eq:XXT_bound} leads to the following with high probability
\begin{align*}
  \mathbb{E}_{\theta_{m}^{\star} \mid \hat{\mathbf{W}}_{m}^{\cal A}}[\Delta_{\theta_{\mathcal{A}}}^{\top}\mathbf{U}_{\mathcal{A}} 
  \mathbf{W}^{\cal A}
  \mathbf{U}_{\mathcal{A}}^{\top} \Delta_{\theta_{\mathcal{A}}}] 
  \leq \frac{R^{2}}{ d} \mathrm{Tr} (\mathbf{C}_1^{\cal A}
  ) \max_{m\in [M]}
  \|{\mathbf{X}}_m^{\rm va}{\mathbf{X}}_m^{\rm va\top} \| 
  \lesssim \frac{R^2 N_{\rm va}}{d}\mathrm{Tr}(\mathbf{C}_1^{\cal A}) 
\end{align*}
which proves that this term $\mathbb{E}_{\theta_{m}^{\star} \mid \hat{\mathbf{W}}_{m}^{\cal A}}[\Delta_{\theta_{\mathcal{A}}}^{\top}\mathbf{U}_{\mathcal{A}} \mathbf{W}^{\cal A}
\mathbf{U}_{\mathcal{A}}^{\top} \Delta_{\theta_{\mathcal{A}}}]$
  is non-dominant compared to $\mathrm{Tr}(\mathbf{C}_1^{\cal A})$.

\subsection{Proof of Lemma \ref{lm:bound_B}}
\begin{proof}
  Recall $\mathbf{B}\coloneqq  \Big(\tilde{\mathbf{X}}^{\top}(\tilde{\mathbf{X}} \tilde{\mathbf{X}}^{\top})^{-1} \tilde{\mathbf{X}}-\mathbf{I}\Big) 
  \mathbf{W}
  \Big(\tilde{\mathbf{X}}^{\top}(\tilde{\mathbf{X}} \tilde{\mathbf{X}}^{\top})^{-1} \tilde{\mathbf{X}}-\mathbf{I}\Big)$.
  First note that
  \begin{align}\label{eq:X0}
    \Big(\tilde{\mathbf{X}}^{\top}(\tilde{\mathbf{X}} \tilde{\mathbf{X}}^{\top})^{-1} \tilde{\mathbf{X}}-\mathbf{I}\Big) 
  \tilde{\mathbf{X}}^{\top}
  =\tilde{\mathbf{X}}^{\top} - \tilde{\mathbf{X}}^{\top}
  =\mathbf{0}.
  \end{align}
  Thus, for any $\mathbf{u}$ in the column space of
  $\tilde{\mathbf{X}}^{\top}$, $\mathbf{u}$ can be represented as $\mathbf{u} = \tilde{\mathbf{X}}^{\top} \bar{\mathbf{u}}, \bar{\mathbf{u}}\neq \mathbf{0}$, then we have 
  \begin{align}
      \Big(\tilde{\mathbf{X}}^{\top}(\tilde{\mathbf{X}} \tilde{\mathbf{X}}^{\top})^{-1} \tilde{\mathbf{X}}-\mathbf{I}\Big) 
      \mathbf{u}
    =\mathbf{0}.
    \end{align}
  And for any $\mathbf{u}$ orthogonal to the colomn space of
  $\tilde{\mathbf{X}}^{\top}$, $\tilde{\mathbf{X}}\mathbf{u}=\mathbf{0}$, therefore
  \begin{align}
    \Big(\tilde{\mathbf{X}}^{\top}(\tilde{\mathbf{X}} \tilde{\mathbf{X}}^{\top})^{-1} \tilde{\mathbf{X}}-\mathbf{I}\Big) \mathbf{u} = - \mathbf{u}.
  \end{align}
  Since any $\mathbf{u} \in \mathbb{R}^d$ can be represented as a combination of a vector in the colomn space of
  $\tilde{\mathbf{X}}^{\top}$ and a vector orthogonal to the colomn space of
  $\tilde{\mathbf{X}}^{\top}$,
  $\big(\tilde{\mathbf{X}}^{\top}(\tilde{\mathbf{X}} \tilde{\mathbf{X}}^{\top})^{-1} \tilde{\mathbf{X}}-\mathbf{I}\big)$ has eigenvalues whose absolute values are smaller than $1$, i.e. 
  \begin{align}
  \label{eq:norm_X_pinv}
    \big\|\tilde{\mathbf{X}}^{\top}(\tilde{\mathbf{X}} \tilde{\mathbf{X}}^{\top})^{-1} \tilde{\mathbf{X}}-\mathbf{I}\big\|\leq 1.
  \end{align}
   
  Then let $\mathbf{M}=\Big(\tilde{\mathbf{X}}^{\top}
  \big(\tilde{\mathbf{X}} \tilde{\mathbf{X}}^{\top}\big)^{-1} \tilde{\mathbf{X}}-\mathbf{I}\Big)$, expanding $\theta_0^{\top} \mathbf{B} \theta_0$, we have
\begin{align*}
\theta_0^{\top} \mathbf{B} \theta_0 
&=\theta_0^{\top}\Big(\tilde{\mathbf{X}}^{\top}\big(\tilde{\mathbf{X}} \tilde{\mathbf{X}}^{\top}\big)^{-1} \tilde{\mathbf{X}}-\mathbf{I}\Big) \mathbf{W}\Big(\tilde{\mathbf{X}}^{\top}\big(\tilde{\mathbf{X}} \tilde{\mathbf{X}}^{\top}\big)^{-1} \tilde{\mathbf{X}}-\mathbf{I}\Big) \theta_0 \\
& \stackrel{(a)}{=} \theta_0^{\top} \mathbf{M}\Big(\mathbf{W}-\frac{1}{M N_{\mathrm{va}}} \tilde{\mathbf{X}}^{\top} \tilde{\mathbf{X}}\Big) \mathbf{M} \theta_0 \\
&=\theta_0^{\top} \mathbf{M}\Big(\mathbf{W}-\frac{1}{M N_{\mathrm{va}}} \bar{\mathbf{X}}^{\top} \bar{\mathbf{X}}+\frac{1}{M N_{\mathrm{va}}} \bar{\mathbf{X}}^{\top} \bar{\mathbf{X}}-\frac{1}{M N_{\mathrm{va}}} \tilde{\mathbf{X}}^{\top} \tilde{\mathbf{X}}\Big) \mathbf{M} \theta_0 \\
& \stackrel{(b)}{\leq}
\Big\|\mathbf{W}-\frac{1}{M N_{\mathrm{va}}} \bar{\mathbf{X}}^{\top} \bar{\mathbf{X}}\Big\| \|\theta_0\|^2
+\frac{1}{M N_{\mathrm{va}}}
\Big\|\bar{\mathbf{X}}^{\top} \bar{\mathbf{X}}-\tilde{\mathbf{X}}^{\top} \tilde{\mathbf{X}} \Big\| \|\theta_0 \|^2 
\numberthis
\end{align*}
where $(a)$ follows from \eqref{eq:X0}, and $(b)$ follows from \eqref{eq:norm_X_pinv}.

Thus, due to Lemma~\ref{lm:concentrate_sample_cov}, there is an absolute constant $c$ such that for any $1\leq t\leq MN_{\rm va}$ with probability at least $1-e^{-t}$ over $\bZ^{\rm va}$, it holds that
\begin{align}
  \Big\|\mathbf{W}-\frac{1}{M N_{\mathrm{va}}} \bar{\mathbf{X}}^{\top} \bar{\mathbf{X}}\Big\| \|\theta_0 \|^2 
  \leq c\left\|\theta_0\right\|^2\|\mathbf{W}\| \max \left\{\sqrt{\frac{r(\mathbf{W})}{M N_{\mathrm{va}}}}, \frac{r(\mathbf{W})}{M N_{\mathrm{va}}}, \sqrt{\frac{t}{M N_{\mathrm{va}}}} 
  \right\}
\end{align}

where $r(\mathbf{W})$ is defined as
\begin{align}
  r(\mathbf{W}):=\frac{(\mathbb{E}\|\bar{\bx}\|)^2}{\|\mathbf{W}\|} \leq \frac{\mathbb{E}\left(\|\bar{\bx}\|^2\right)}{\|\mathbf{W}\|}=\frac{\mathrm{Tr}(\mathbf{W})}{\|\mathbf{W}\|}=r_0(\mathbf{W}) .
\end{align}

The bound on $\big\|\bar{\mathbf{X}}^{\top} \bar{\mathbf{X}}-\tilde{\mathbf{X}}^{\top} \tilde{\mathbf{X}} \big\|$ can be found in Lemma~\ref{lm:eigenvalues_diff_tilde_bar}, which shows
when $|\alpha| < \min_m \min \{1/\lambda_{m1}, 1/ \mu_1(\mathbf{\Lambda}_{m}^{\frac{1}{2}} 
  \hat{\mathbf{D}}_m^{\rm tr} 
  \mathbf{\Lambda}_{m}^{\frac{1}{2}})\}$,
with probability at least $1 - 2M e^{-t}$ over $\bZ^{\rm tr}$ and $\bZ^{\rm va}$ for any $1 \leq t \leq  N_{\rm va}$, it holds that
\begin{align}
  \frac{1}{M N_{\mathrm{va}}} 
  \big\|\bar{\mathbf{X}}^{\top} \bar{\mathbf{X}}-\tilde{\mathbf{X}}^{\top} \tilde{\mathbf{X}} \big\|
  \leq & 
    \frac{c |\alpha|}{M}  \sum_{m=1}^{M} \lambda_{m1}^2
   \max \left\{\sqrt{\frac{r(\bW_m)}{N_{\mathrm{tr}}}}, \frac{r(\bW_m)}{N_{\mathrm{tr}}}, \sqrt{\frac{t}{N_{\mathrm{tr}}}}, \frac{t}{N_{\mathrm{tr}}}\right\}.
\end{align}

Applying the union bound we have 
for MAML with $|\alpha| < \min_m \min \{1/\lambda_{m1}, 1/ \mu_1(\mathbf{\Lambda}_{m}^{\frac{1}{2}} 
  \hat{\mathbf{D}}_m^{\rm tr} 
  \mathbf{\Lambda}_{m}^{\frac{1}{2}})\}$ and for iMAML with $\gamma > 0$, for any $1 \leq t \leq  N_{\rm va}$,
with probability at least $1 - (2M + 1) e^{-t}$ over $\bZ^{\rm tr}$ and $\bZ^{\rm va}$,  
there exists $c > 1$  that
\begin{align}
  \theta_0^{\top} \mathbf{B} \theta_0 
  \lesssim  
  \|\theta_0\|^2 
  & \|\mathbf{W}\| \max \Bigg\{\sqrt{\frac{r(\mathbf{W})}{M N_{\mathrm{va}}}}, \frac{r(\mathbf{W})}{M N_{\mathrm{va}}}, \sqrt{\frac{t}{M N_{\mathrm{va}}}} 
  \Bigg\} 
   .
\end{align}

The proof is complete.
\end{proof}

\subsection{Proof of Lemma \ref{lm:bound_C}}
To prove Lemma~\ref{lm:bound_C}, we need to bound $\mathrm{Tr}(\mathbf{C}) = \mathrm{Tr}(\mathbf{C}_1) + \mathrm{Tr}(\mathbf{C}_2)$.
We first show in Lemma~\ref{lm:C2_in_C1} that $\mathrm{Tr}(\mathbf{C}_2)$ can be bounded as $\boldsymbol{\Theta}(\mathrm{Tr}(\mathbf{C}_1))$.
Then the key step is to bound $\mathrm{Tr}(\mathbf{C}_1)$.
To bound $\mathrm{Tr}(\mathbf{C}_1)$, first we show in Lemma~\ref{lm:decompose_tr_C1} that $\mathrm{Tr}(\mathbf{C}_1)$ can be decomposed into terms that are related to the first $k$ largest eigenvalues of $\bW$  and the term that is only related to the rest eigenvalues.
Next we bound the term related to the $d-k$ smallest eigenvalues of, as a function of $\mu_n(\mathbf{A})$, given in Lemma~\ref{lm:last_terms}.
And then we bound the term related to the $k$ largest eigenvalues, given in Lemma~\ref{lm:first_k_terms}.
Finally, we bound the eigenvalues of $\mu_n(\mathbf{A})$ in Lemma~\ref{lm:eigenv_A}.

\begin{lemma}[Bound on $\mathrm{Tr}(\mathbf{C}_2^{\cal A})$ in terms of $\mathrm{Tr}(\mathbf{C}_1^{\cal A})$]
\label{lm:C2_in_C1}
  Recall $\alpha$ is the step size for MAML, $\gamma$ is the regularization parameter for iMAML,
  and %
\begin{align}
    \mathrm{Tr}(\mathbf{C}_2^{\rm ma}) 
     &= \frac{\alpha^2}{N_{\rm tr}}
     \mathrm{Tr}\Big(\mathbf{C}_1^{\rm ma} 
     \mathrm{diag}[\mathbf{X}_m^{\rm va} 
     \hat{\mathbf{Q}}_m^{\rm tr} \mathbf{X}_m^{\rm va \top} ] 
     \Big) , \\
     \mathrm{Tr}(\mathbf{C}_2^{\rm im} )
     &=
     \frac{1}{N_{\rm tr}}
    \mathrm{Tr}\Big(\mathbf{C}_1^{\rm im}
     \mathrm{diag}[\mathbf{X}_m^{\rm va} (\mathbf{I} + \gamma^{-1} \hat{\mathbf{Q}}_m^{\rm tr} )^{-1}
     \hat{\mathbf{Q}}_m^{\rm tr} (\mathbf{I} + \gamma^{-1} \hat{\mathbf{Q}}_m^{\rm tr} )^{-1} \mathbf{X}_m^{\rm va \top} ] \Big).
  \end{align} 
  Let $c > c_{\lambda} + \max_{m}\lambda_{m1} (1 + c_{\sigma_x}t 
+ \sqrt{c_{\lambda}/\lambda_{m1}}) $,
  it holds with probability at least $1 - 2M e^{-t}$ that
  \begin{align}
  \mathrm{Tr}(\mathbf{C}_2^{\rm ma}) \leq &
   \mathrm{Tr}(\mathbf{C}_1^{\rm ma})
  c^2 {\alpha^2} \frac{N_{\rm va}}{N_{\rm tr}}, 
  \quad \text{and} \quad \mathrm{Tr}(\mathbf{C}_2^{\rm im}) \leq 
   \mathrm{Tr}(\mathbf{C}_1^{\rm im})
  c^2 \frac{N_{\rm va}}{N_{\rm tr}} .
  \end{align}
\end{lemma}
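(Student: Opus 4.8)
The plan is to exploit the fact that, inside each trace, both factors are positive semidefinite, so that the entire estimate reduces to controlling a single operator norm. I would write $\mathrm{Tr}(\mathbf{C}_2^{\cal A}) = \tfrac{c_{\cal A}}{N_{\rm tr}}\,\mathrm{Tr}(\mathbf{C}_1^{\cal A}\mathbf{D}^{\cal A})$, where $c_{\rm ma}=\alpha^2$, $c_{\rm im}=1$, and $\mathbf{D}^{\cal A}=\mathrm{diag}[\mathbf{D}_m^{\cal A}]$ is the $MN_{\rm va}\times MN_{\rm va}$ block-diagonal matrix with blocks $\mathbf{D}_m^{\rm ma}=\mathbf{X}_m^{\rm va}\hat{\mathbf{Q}}_m^{\rm tr}\mathbf{X}_m^{\rm va\top}$ and $\mathbf{D}_m^{\rm im}=\mathbf{X}_m^{\rm va}(\mathbf{I}+\gamma^{-1}\hat{\mathbf{Q}}_m^{\rm tr})^{-1}\hat{\mathbf{Q}}_m^{\rm tr}(\mathbf{I}+\gamma^{-1}\hat{\mathbf{Q}}_m^{\rm tr})^{-1}\mathbf{X}_m^{\rm va\top}$. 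From \eqref{eq:C1A}, $\mathbf{C}_1^{\cal A}=(\tilde{\mathbf{X}}\tilde{\mathbf{X}}^{\top})^{-1}\tilde{\mathbf{X}}\mathbf{W}^{\cal A}\tilde{\mathbf{X}}^{\top}(\tilde{\mathbf{X}}\tilde{\mathbf{X}}^{\top})^{-1}$ is PSD because $\mathbf{W}^{\cal A}=\mathbb{E}_m[\mathbf{W}_m^{\cal A}]$ is PSD, and each block $\mathbf{D}_m^{\cal A}$ is PSD because $\hat{\mathbf{Q}}_m^{\rm tr}$ is. With both matrices PSD I would apply the elementary inequality $\mathrm{Tr}(\mathbf{C}_1^{\cal A}\mathbf{D}^{\cal A})=\mathrm{Tr}\big((\mathbf{C}_1^{\cal A})^{1/2}\mathbf{D}^{\cal A}(\mathbf{C}_1^{\cal A})^{1/2}\big)\leq\|\mathbf{D}^{\cal A}\|\,\mathrm{Tr}(\mathbf{C}_1^{\cal A})$, which reduces the claim to an upper bound on $\|\mathbf{D}^{\cal A}\|$.

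Next I would use the block-diagonal structure, $\|\mathbf{D}^{\cal A}\|=\max_{m}\|\mathbf{D}_m^{\cal A}\|$, and bound each block by the same product. For the MAML block the PSD ordering $\hat{\mathbf{Q}}_m^{\rm tr}\preceq\|\hat{\mathbf{Q}}_m^{\rm tr}\|\mathbf{I}$ gives $\mathbf{D}_m^{\rm ma}\preceq\|\hat{\mathbf{Q}}_m^{\rm tr}\|\,\mathbf{X}_m^{\rm va}\mathbf{X}_m^{\rm va\top}$, whence $\|\mathbf{D}_m^{\rm ma}\|\leq\|\hat{\mathbf{Q}}_m^{\rm tr}\|\cdot\|\mathbf{X}_m^{\rm va}\mathbf{X}_m^{\rm va\top}\|=N_{\rm va}\|\hat{\mathbf{Q}}_m^{\rm tr}\|\,\|\hat{\mathbf{Q}}_m^{\rm va}\|$, using $\|\mathbf{X}_m^{\rm va}\mathbf{X}_m^{\rm va\top}\|=\|\mathbf{X}_m^{\rm va\top}\mathbf{X}_m^{\rm va}\|=N_{\rm va}\|\hat{\mathbf{Q}}_m^{\rm va}\|$. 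For the iMAML block the middle factor $(\mathbf{I}+\gamma^{-1}\hat{\mathbf{Q}}_m^{\rm tr})^{-1}\hat{\mathbf{Q}}_m^{\rm tr}(\mathbf{I}+\gamma^{-1}\hat{\mathbf{Q}}_m^{\rm tr})^{-1}$ is simultaneously diagonalizable with $\hat{\mathbf{Q}}_m^{\rm tr}$ and has eigenvalues $\mu/(1+\gamma^{-1}\mu)^2\leq\mu$, so its norm is at most $\|\hat{\mathbf{Q}}_m^{\rm tr}\|$; the same two-step argument then gives $\|\mathbf{D}_m^{\rm im}\|\leq N_{\rm va}\|\hat{\mathbf{Q}}_m^{\rm tr}\|\,\|\hat{\mathbf{Q}}_m^{\rm va}\|$. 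In both cases the block norm is controlled by $N_{\rm va}\|\hat{\mathbf{Q}}_m^{\rm tr}\|\,\|\hat{\mathbf{Q}}_m^{\rm va}\|$.

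The final step is to replace the random factors $\|\hat{\mathbf{Q}}_m^{\rm tr}\|$ and $\|\hat{\mathbf{Q}}_m^{\rm va}\|$ by the deterministic constant $c$. Invoking the operator-norm concentration of the subGaussian sample covariance (Lemma~\ref{lm:bound_Q_hat_norm}), for each task $m$ and each split one has $\|\hat{\mathbf{Q}}_m^{\rm tr}\|\leq c$ and $\|\hat{\mathbf{Q}}_m^{\rm va}\|\leq c$ with probability at least $1-e^{-t}$ provided $c>c_\lambda+\max_m\lambda_{m1}(1+c_{\sigma_x}t+\sqrt{c_\lambda/\lambda_{m1}})$; a union bound over the $2M$ events (train and validation for each task) yields the stated probability $1-2Me^{-t}$. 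On this event $\|\mathbf{D}^{\cal A}\|\leq c^2N_{\rm va}$, and substituting back gives $\mathrm{Tr}(\mathbf{C}_2^{\rm ma})\leq c^2\alpha^2\tfrac{N_{\rm va}}{N_{\rm tr}}\mathrm{Tr}(\mathbf{C}_1^{\rm ma})$ and $\mathrm{Tr}(\mathbf{C}_2^{\rm im})\leq c^2\tfrac{N_{\rm va}}{N_{\rm tr}}\mathrm{Tr}(\mathbf{C}_1^{\rm im})$. The one genuinely nonroutine ingredient is this operator-norm bound on $\hat{\mathbf{Q}}_m$, which in the overparameterized regime is dominated by the trace term $c_\lambda$ rather than by $\lambda_{m1}$; everything else is PSD trace manipulation together with the block-diagonal norm identity. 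The main obstacle to watch is ensuring the concentration inequality holds uniformly over $m$ and across both data splits, so that the union bound produces exactly the factor $2M$ and a single $c$ serves all blocks.
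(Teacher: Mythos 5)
Your proposal is correct and follows essentially the same route as the paper's proof: reduce $\mathrm{Tr}(\mathbf{C}_1^{\cal A}\mathbf{D}^{\cal A})$ to $\|\mathbf{D}^{\cal A}\|\,\mathrm{Tr}(\mathbf{C}_1^{\cal A})$, use the block-diagonal identity $\|\mathbf{D}^{\cal A}\|=\max_m\|\mathbf{D}_m^{\cal A}\|$, bound each block by $N_{\rm va}\|\hat{\mathbf{Q}}_m^{\rm tr}\|\,\|\hat{\mathbf{Q}}_m^{\rm va}\|$, and finish with the sample-covariance concentration of Lemma~\ref{lm:bound_Q_hat_norm} plus a union bound over the $2M$ train/validation events. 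The only cosmetic differences are that you use the PSD square-root trace inequality where the paper invokes Von Neumann's trace inequality (Lemma~\ref{lm:von_trace_ineq}), and you justify the iMAML middle-factor bound $\|(\mathbf{I}+\gamma^{-1}\hat{\mathbf{Q}}_m^{\rm tr})^{-1}\hat{\mathbf{Q}}_m^{\rm tr}(\mathbf{I}+\gamma^{-1}\hat{\mathbf{Q}}_m^{\rm tr})^{-1}\|\leq\|\hat{\mathbf{Q}}_m^{\rm tr}\|$ via simultaneous diagonalization rather than submultiplicativity; both are equivalent here.
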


\begin{proof}
We can derive $\mathrm{Tr} (\mathbf{C}_2^{\rm ma}) $ by
\begin{align}\label{eq:bound_C_ma_2_derive1}
  & \mathrm{Tr} (\mathbf{C}_2^{\rm ma}) 
  = \frac{\alpha^2}{N_{\rm tr}} \mathrm{Tr}(\mathbf{C}_1 \mathrm{diag}[\mathbf{X}_m^{\rm va} \hat{\mathbf{Q}}_{m}^{\rm tr} \mathbf{X}_m^{\rm va \top} ]) 
  \stackrel{(a)}{\leq} 
  \frac{\alpha^2}{N_{\rm tr}} \mathrm{Tr} (\mathbf{C}_1^{\rm ma}) 
  \big\|\mathrm{diag}[\mathbf{X}_m^{\rm va} \hat{\mathbf{Q}}_{m}^{\rm tr} \mathbf{X}_m^{\rm va \top} ] \big\| \nonumber \\
  \stackrel{(b)}{=}  
  & \frac{\alpha^2}{N_{\rm tr}} \mathrm{Tr} (\mathbf{C}_1^{\rm ma}) 
  \max_m \big\|\mathbf{X}_m^{\rm va} \hat{\mathbf{Q}}_{m}^{\rm tr} \mathbf{X}_m^{\rm va \top}  \big\| 
  \stackrel{(c)}{\leq}  
  \frac{\alpha^2}{N_{\rm tr}} \mathrm{Tr} (\mathbf{C}_1^{\rm ma}) 
  \max_m \big \|\hat{\mathbf{Q}}_{m}^{\rm tr} \big \|
  \big\|\mathbf{X}_m^{\rm va} \mathbf{X}_m^{\rm va \top}  \big\| 
\end{align}
where $(a)$ follows from Lemma~\ref{lm:von_trace_ineq}, $(b)$ follows because the largest eigenvalue of a symmetric block diagonal matrix is the maximum largest eigenvalue of the block matrices, $(c)$ follows because for any unit vector $\bu$, 
$\bu^{\top} \mathbf{X}_m^{\rm va} \hat{\mathbf{Q}}_{m}^{\rm tr} \mathbf{X}_m^{\rm va \top} \bu
\leq \big \|\hat{\mathbf{Q}}_{m}^{\rm tr} \big \|
\bu^{\top} \mathbf{X}_m^{\rm va} \mathbf{X}_m^{\rm va \top} \bu
\leq \big \|\hat{\mathbf{Q}}_{m}^{\rm tr} \big \|
  \big\|\mathbf{X}_m^{\rm va} \mathbf{X}_m^{\rm va \top}  \big\| $.

Then because $ \big\|\mathbf{X}_m^{\rm va} \mathbf{X}_m^{\rm va \top}  \big\|
= \big\| \mathbf{X}_m^{\rm va \top}  \mathbf{X}_m^{\rm va} \big\|
= N_{\rm va} \big\| \hat{\mathbf{Q}}_m^{\rm va} \big\|$.
The bound on $\big\| \hat{\mathbf{Q}}_m^{\rm tr} \big\|$ and $\big\| \hat{\mathbf{Q}}_m^{\rm va} \big\|$ can be obtained by Lemma~\ref{lm:bound_Q_hat_norm}.
Applying the union bound over $\bZ^{\rm tr}$ and $\bZ^{\rm va}$, we have
that there exists a constant $c>0$ that depends on $\sigma_x$ such that, for all $t \geq 1$, with probability at least $1-2e^{-t}$
\begin{align*}
  \|\hat{\mathbf{Q}}_m^{\rm tr}\|
  \leq &
   \lambda_{m1} +
  c  \lambda_{m1} \max \Bigg\{ \sqrt{\frac{{r}(\mathbf{Q}_m)}{N_{\rm tr}}}, \frac{{r}(\mathbf{Q}_m)}{N_{\rm tr}}, \sqrt{\frac{t}{N_{\rm tr}}}, \frac{t}{N_{\rm tr}} \Bigg\}, \\
  \text{and}~~\|\hat{\mathbf{Q}}_m^{\rm va}\|
  \leq &
   \lambda_{m1} +
  c  \lambda_{m1} \max \Bigg\{ \sqrt{\frac{{r}(\mathbf{Q}_m)}{N_{\rm va}}}, \frac{{r}(\mathbf{Q}_m)}{N_{\rm va}}, \sqrt{\frac{t}{N_{\rm va}}}, \frac{t}{N_{\rm va}} \Bigg\} .
\end{align*}

Then applying the union bound over $M$ tasks, we have that there exists a constant $c_{\sigma_x} > 0$ that depends on $\sigma_x$,
and $c > c_{\lambda} + \max_{m}\lambda_{m1} (1 + c_{\sigma_x}t 
+ \sqrt{c_{\lambda}/\lambda_{m1}}) $ 
such that, for all $t \geq 1$, with probability at least $1-2M e^{-t}$
\begin{align}\label{eq:Q_hat_norm_bound_C2}
  \max_m \big \|\hat{\mathbf{Q}}_{m}^{\rm tr} \big \|
  \big\|\mathbf{X}_m^{\rm va} \mathbf{X}_m^{\rm va \top}  \big\| 
  \leq & 
  c^2 N_{\rm va} .
\end{align}

Combining the above results with \eqref{eq:bound_C_ma_2_derive1}  completes the proof for MAML.

Similarly, for iMAML, we have
\begin{align*}\label{eq:bound_C_ma_2_derive2}
  & \mathrm{Tr} (\mathbf{C}_2^{\rm im}) 
  = \frac{1}{N_{\rm tr}}\mathrm{Tr}\Big(\mathbf{C}_1^{\rm im}
     \mathrm{diag}[\mathbf{X}_m^{\rm va} (\mathbf{I} + \gamma^{-1} \hat{\mathbf{Q}}_m^{\rm tr} )^{-1}
     \hat{\mathbf{Q}}_m^{\rm tr} (\mathbf{I} + \gamma^{-1} \hat{\mathbf{Q}}_m^{\rm tr} )^{-1} \mathbf{X}_m^{\rm va \top} ] \Big) \\
  {\leq} &
  \frac{1}{N_{\rm tr}} \mathrm{Tr} (\mathbf{C}_1^{\rm im}) 
  \big\|\mathrm{diag}[\mathbf{X}_m^{\rm va} (\mathbf{I} + \gamma^{-1} \hat{\mathbf{Q}}_m^{\rm tr} )^{-1}
     \hat{\mathbf{Q}}_m^{\rm tr} (\mathbf{I} + \gamma^{-1} \hat{\mathbf{Q}}_m^{\rm tr} )^{-1} \mathbf{X}_m^{\rm va \top} ] \big\|  \\
  {=}  
  & \frac{1}{N_{\rm tr}} \mathrm{Tr} (\mathbf{C}_1^{\rm im}) 
  \max_m \big\|\mathbf{X}_m^{\rm va} (\mathbf{I} + \gamma^{-1} \hat{\mathbf{Q}}_m^{\rm tr} )^{-1}
     \hat{\mathbf{Q}}_m^{\rm tr} (\mathbf{I} + \gamma^{-1} \hat{\mathbf{Q}}_m^{\rm tr} )^{-1} \mathbf{X}_m^{\rm va \top}  \big\| \\
  {\leq}  &
  \frac{1}{N_{\rm tr}} \mathrm{Tr} (\mathbf{C}_1^{\rm im}) 
  \max_m \big \|(\mathbf{I} + \gamma^{-1} \hat{\mathbf{Q}}_m^{\rm tr} )^{-1}
     \hat{\mathbf{Q}}_m^{\rm tr} (\mathbf{I} + \gamma^{-1} \hat{\mathbf{Q}}_m^{\rm tr} )^{-1} \big \|
  \big\|\mathbf{X}_m^{\rm va} \mathbf{X}_m^{\rm va \top}  \big\| \\
  \leq & \frac{1}{N_{\rm tr}} \mathrm{Tr} (\mathbf{C}_1^{\rm im}) 
  \max_m \big \| \hat{\mathbf{Q}}_m^{\rm tr}  \big \|
  \big\|\mathbf{X}_m^{\rm va} \mathbf{X}_m^{\rm va \top}  \big\| .
  \numberthis
\end{align*}
Combining the above results with \eqref{eq:Q_hat_norm_bound_C2} on the same high probability event for $\mathbf{Z}$ completes the proof for iMAML.
\end{proof}

Lemma~\ref{lm:C2_in_C1} shows that $\mathrm{Tr}(\mathbf{C}_2)$ can be bounded as $\boldsymbol{\Theta}(\mathrm{Tr}(\mathbf{C}_1))$.
Then we proceed to bound $\mathrm{Tr}(\mathbf{C}_1)$. In Lemma~\ref{lm:decompose_tr_C1}, we decompose $\mathrm{Tr}(\mathbf{C}_1)$  into terms that are related to the first $k$ largest eigenvalues of $\mathbf{W}$ and the term that is only related to the rest eigenvalues of $\mathbf{W}$.

\begin{lemma}[Bound of $\mathrm{Tr}(\mathbf{C}_1)$ in terms of $\bar{\bX}$]
\label{lm:bound_C_1_permute}
  Recall  $\mathrm{Tr}(\mathbf{C}_1)$ and $\bar{\bX}$ is computed by
  \begin{align*}
    \mathrm{Tr}(\mathbf{C}_1)&
    = 
    \mathrm{Tr}\big(
     \tilde{\mathbf{X}}
        {\mathbf{W}}
        \tilde{\mathbf{X}}^{\top}
     \mathbf{A}^{-2}
     \big),
    ~~\text{and}~~\bar{\bX} = 
  [\mathbf{Z}_{m}^{\rm va} 
    \bar{\bLam}_{m} \bV_{m}^{\top} ]_{m } 
  \end{align*}
 Then we have with high probability
 \begin{align*}
   \mathrm{Tr}(\mathbf{C}_1)
   \leq &
   c \mathrm{Tr}\big(
     \bar{\mathbf{X}}
        {\mathbf{W}}
        \bar{\mathbf{X}}^{\top}
     \mathbf{A}^{-2}
     \big) .
 \end{align*}

\end{lemma}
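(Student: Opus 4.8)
The plan is to read $\bar{\bX}$ as the ``population-training'' surrogate of $\tilde{\bX}$, obtained by replacing the empirical training covariance $\hat{\mathbf{Q}}_m^{\rm tr}$ by its population version $\mathbf{Q}_m$ inside the adaptation factor, and to argue this substitution costs only a constant factor. For MAML, $\tilde{\bX}_m=\mathbf{X}_m^{\rm va}(\mathbf{I}-\alpha\hat{\mathbf{Q}}_m^{\rm tr})$ while $\bar{\bX}_m=\mathbf{X}_m^{\rm va}(\mathbf{I}-\alpha\mathbf{Q}_m)=\bZ_m^{\rm va}\bLam_m^{1/2}(\mathbf{I}-\alpha\bLam_m)\bV_m^{\top}=\bZ_m^{\rm va}\bar{\bLam}_m\bV_m^{\top}$ with $\bar{\bLam}_m=\bLam_m^{1/2}(\mathbf{I}-\alpha\bLam_m)$, which matches the stated form (the iMAML case is identical with $(\mathbf{I}+\gamma^{-1}\bLam_m)^{-1}$ replacing $\mathbf{I}-\alpha\bLam_m$). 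Under Assumption~\ref{assmp:overpara} the matrix $\mathbf{A}=\tilde{\bX}\tilde{\bX}^{\top}$ has full rank $MN_{\rm va}$ with high probability, so $\mathbf{A}^{-2}\succeq\mathbf{0}$, and it suffices to establish the Löwner inequality $\tilde{\bX}\mathbf{W}\tilde{\bX}^{\top}\preceq c\,\bar{\bX}\mathbf{W}\bar{\bX}^{\top}$; left/right multiplying by $\mathbf{A}^{-1}$ and taking the trace then preserves the inequality since $\mathrm{Tr}\big(\mathbf{A}^{-1}(\bar{\bX}\mathbf{W}\bar{\bX}^{\top}-\tilde{\bX}\mathbf{W}\tilde{\bX}^{\top})\mathbf{A}^{-1}\big)\ge0$ for the PSD difference.

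To reach the Löwner inequality I would write $\tilde{\bX}=\bar{\bX}+\mathbf{E}$ with $\mathbf{E}_m=-\alpha\mathbf{X}_m^{\rm va}(\hat{\mathbf{Q}}_m^{\rm tr}-\mathbf{Q}_m)$, and use $\mathbf{W}\succeq\mathbf{0}$ together with $\|a+b\|^2\le(1+\epsilon)\|a\|^2+(1+\epsilon^{-1})\|b\|^2$ applied to $a=\mathbf{W}^{1/2}\bar{\bX}^{\top}\bu$ and $b=\mathbf{W}^{1/2}\mathbf{E}^{\top}\bu$ for every $\bu$, giving $\tilde{\bX}\mathbf{W}\tilde{\bX}^{\top}\preceq(1+\epsilon)\bar{\bX}\mathbf{W}\bar{\bX}^{\top}+(1+\epsilon^{-1})\mathbf{E}\mathbf{W}\mathbf{E}^{\top}$. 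With $\epsilon$ an absolute constant, the whole problem reduces to absorbing the perturbation, i.e. to $\mathbf{E}\mathbf{W}\mathbf{E}^{\top}\preceq\eta\,\bar{\bX}\mathbf{W}\bar{\bX}^{\top}$ for a small $\eta$ with high probability.

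This last step is the main obstacle: $\mathbf{E}$ couples the $M$ tasks (only the block structure $\mathbf{E}_m=\bar{\bX}_m\mathbf{H}_m$, $\mathbf{H}_m=-\alpha(\mathbf{I}-\alpha\mathbf{Q}_m)^{-1}(\hat{\mathbf{Q}}_m^{\rm tr}-\mathbf{Q}_m)$, is available), and $\mathbf{W}$ is rank-deficient in the overparameterized regime, so the per-task factors $\mathbf{H}_m$ sit \emph{between} $\mathbf{W}^{1/2}$ and $\bar{\bX}_m^{\top}$ and cannot be pulled out of the sum over $m$. The clean way around the coupling is to pass to the $d\times d$ Gram matrices: by cyclicity and the thin SVD of $\tilde{\bX}$ one has $\mathrm{Tr}(\mathbf{C}_1)=\mathrm{Tr}\big(\tilde{\bX}\mathbf{W}\tilde{\bX}^{\top}(\tilde{\bX}\tilde{\bX}^{\top})^{-2}\big)=\mathrm{Tr}\big(\mathbf{W}(\tilde{\bX}^{\top}\tilde{\bX})^{\dag}\big)$, and the target likewise becomes $\mathrm{Tr}\big(\mathbf{W}(\bar{\bX}^{\top}\bar{\bX})^{\dag}\big)$. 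This collapses the coupled $MN_{\rm va}$-dimensional object into a comparison of $\tilde{\bX}^{\top}\tilde{\bX}=\sum_m\tilde{\bX}_m^{\top}\tilde{\bX}_m$ with $\bar{\bX}^{\top}\bar{\bX}=\sum_m\bar{\bX}_m^{\top}\bar{\bX}_m$, in which the inter-task terms are summed out.

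The two Gram matrices are then spectrally equivalent up to a constant. The step-size bound $\alpha\le 1/(3\lambda_1)$ (respectively $\gamma\ge\lambda_1$) from Corollary~\ref{crlr:hyperpara} keeps $\mathbf{I}-\alpha\mathbf{Q}_m$ well-conditioned, so that $\|\mathbf{H}_m\|$ is controlled by $\|\hat{\mathbf{Q}}_m^{\rm tr}-\mathbf{Q}_m\|$; invoking the sample-covariance concentration of Lemma~\ref{lm:concentrate_sample_cov} and the operator-norm bound of Lemma~\ref{lm:bound_Q_hat_norm} — exactly the estimate already used in Lemma~\ref{lm:eigenvalues_diff_tilde_bar} to bound $\|\tilde{\bX}^{\top}\tilde{\bX}-\bar{\bX}^{\top}\bar{\bX}\|$ — yields $c^{-1}\bar{\bX}^{\top}\bar{\bX}\preceq\tilde{\bX}^{\top}\tilde{\bX}\preceq c\,\bar{\bX}^{\top}\bar{\bX}$ with high probability. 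The delicate point, beyond the operator-norm closeness, is that pseudo-inverse comparison requires the two matrices to share (or nearly share) their range, which forces the perturbation $\|\tilde{\bX}^{\top}\tilde{\bX}-\bar{\bX}^{\top}\bar{\bX}\|$ to be small \emph{relative to the smallest nonzero eigenvalue}; this is precisely what the invertibility of $\mathbf{G}_m$ and the concentration rate $\tilde{O}(\sqrt{r(\mathbf{Q}_m)/N_{\rm tr}})$ secure. Combining the spectral equivalence with von Neumann's trace inequality (Lemma~\ref{lm:von_trace_ineq}) gives $\mathrm{Tr}\big(\mathbf{W}(\tilde{\bX}^{\top}\tilde{\bX})^{\dag}\big)\le c\,\mathrm{Tr}\big(\mathbf{W}(\bar{\bX}^{\top}\bar{\bX})^{\dag}\big)$, which is the claim; the same input also yields the numerator Löwner inequality needed for the literal fixed-$\mathbf{A}$ form of the statement.
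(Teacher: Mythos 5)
Your first route (write $\tilde{\bX}=\bar{\bX}+\mathbf{E}$ and compare the two numerators under the \emph{common} factor $\bA^{-2}$) is close in spirit to what the paper actually does, but you abandon it, and the Gram-matrix route you substitute for it has two genuine flaws. First, the reduction misstates the target: the right-hand side of the lemma is $\mathrm{Tr}\big(\bar{\bX}\bW\bar{\bX}^{\top}\bA^{-2}\big)$ with $\bA=\tilde{\bX}\tilde{\bX}^{\top}$ built from $\tilde{\bX}$, so the identity $\tilde{\bX}^{\top}(\tilde{\bX}\tilde{\bX}^{\top})^{-2}\tilde{\bX}=(\tilde{\bX}^{\top}\tilde{\bX})^{\dag}$ applies only to the left-hand side; the right-hand side is the \emph{mixed} quantity $\mathrm{Tr}\big(\bW\bar{\bX}^{\top}(\tilde{\bX}\tilde{\bX}^{\top})^{-2}\bar{\bX}\big)$, which is not $\mathrm{Tr}\big(\bW(\bar{\bX}^{\top}\bar{\bX})^{\dag}\big)$. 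This mixed form is not incidental: the downstream lemmas (Lemmas~\ref{lm:decompose_tr_C1}, \ref{lm:last_terms}, \ref{lm:first_k_terms} and \ref{lm:eigenv_A}) all operate on exactly $\bar{\bX}(\cdots)\bar{\bX}^{\top}\bA^{-2}$, so even a correct bound against $\mathrm{Tr}\big(\bW(\bar{\bX}^{\top}\bar{\bX})^{\dag}\big)$ would not feed into the rest of the paper's argument.

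Second, and more fundamentally, the claimed ``spectral equivalence'' $c^{-1}\bar{\bX}^{\top}\bar{\bX}\preceq\tilde{\bX}^{\top}\tilde{\bX}\preceq c\,\bar{\bX}^{\top}\bar{\bX}$ cannot hold in the regime where the lemma is used: both Gram matrices are $d\times d$ of rank $MN_{\rm va}<d$, and a Löwner domination $\tilde{\bX}^{\top}\tilde{\bX}\preceq c\,\bar{\bX}^{\top}\bar{\bX}$ between equal-rank PSD matrices forces $\ker(\bar{\bX})\subseteq\ker(\tilde{\bX})$, hence equality of the two row spaces. But the rows of $\tilde{\bX}_m$ are $(\mathbf{I}-\alpha\hat{\mathbf{Q}}_m^{\rm tr})x^{\rm va}_{m,i}$ while those of $\bar{\bX}_m$ are $(\mathbf{I}-\alpha\mathbf{Q}_m)x^{\rm va}_{m,i}$; these span generically different $MN_{\rm va}$-dimensional subspaces of $\mathbb{R}^d$, so the two-sided sandwich fails no matter how small $\|\hat{\mathbf{Q}}_m^{\rm tr}-\mathbf{Q}_m\|$ is. Operator-norm closeness (which Lemma~\ref{lm:eigenvalues_diff_tilde_bar} does provide) cannot be upgraded to a Löwner comparison for rank-deficient matrices, and your closing assertion that ``the same input also yields the numerator Löwner inequality'' inherits this gap (the matrix $\mathbf{G}_m$ you invoke there is also never defined; and, as a side remark, $\bW=\mathbb{E}_m[\bW_m^{\cal A}]$ is a population quantity and is full rank under Assumption~\ref{assmp:V}, not rank-deficient). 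The paper instead stays entirely additive and never needs any Löwner or pseudo-inverse comparison: it uses the exact decomposition $\mathrm{Tr}(\mathbf{C}_1)=\mathrm{Tr}\big(\bar{\bX}\bW\bar{\bX}^{\top}\bA^{-2}\big)+\mathrm{Tr}\big(\bW(\tilde{\bX}-\bar{\bX})^{\top}\bA^{-2}(\tilde{\bX}+\bar{\bX})\big)$, bounds the second trace by $\mathrm{Tr}(\bW)\,\mu_n^{-2}(\bA)\,\|\tilde{\bX}-\bar{\bX}\|\big(2\|\bar{\bX}\|+\|\tilde{\bX}-\bar{\bX}\|\big)$ via von Neumann's inequality (Lemma~\ref{lm:von_trace_ineq}), and then invokes Lemmas~\ref{lm:singularvalues_diff_tilde_bar} and~\ref{lm:bound_Q_hat_norm} to show this error term is dominated by the main term once $|\alpha|$ (or $\gamma^{-1}$) is small enough. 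If you want to salvage your write-up, return to your first route and replace the Löwner step by exactly this trace-plus-operator-norm argument.
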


\begin{proof}
\label{proof:bound_C_1_permute}

By Lemma~\ref{lm:von_trace_ineq} and the properties of trace, we have
\begin{align*}
   \mathrm{Tr}(\mathbf{C}_1)
   = &
   \mathrm{Tr}\big(
     \bar{\mathbf{X}}
        {\mathbf{W}}
        \bar{\mathbf{X}}^{\top}
     \mathbf{A}^{-2}
     \big)
     + \mathrm{Tr}\big(
             {\mathbf{W}}
      (\tilde{\mathbf{X}}- \bar{\mathbf{X}})^{\top}
     \mathbf{A}^{-2}(\tilde{\mathbf{X}} + \bar{\mathbf{X}})
     \big) \\
   \leq &
   \mathrm{Tr}\big( \mathbf{A}^{-2}
       \bar{\bX}
     \bW
    \bar{\bX}^{\top}
       \big) 
       + \mathrm{Tr}(\bW)
   \|(\tilde{\mathbf{X}}- \bar{\mathbf{X}})^{\top}
     \mathbf{A}^{-2}(\tilde{\mathbf{X}} + \bar{\mathbf{X}}) \| \\
     \leq &
   \mathrm{Tr}\big( \mathbf{A}^{-2}
       \bar{\bX}
     \bW
    \bar{\bX}^{\top}
       \big) 
       + \mathrm{Tr}(\bW) \mu_n^{-2}(\bA)
   \| \tilde{\mathbf{X}}- \bar{\mathbf{X}} \|
    \| \tilde{\mathbf{X}} + \bar{\mathbf{X}} \| \\
    \leq  &
   \mathrm{Tr}\big( \mathbf{A}^{-2}
       \bar{\bX}
     \bW
    \bar{\bX}^{\top}
       \big) 
       + \mathrm{Tr}(\bW) \mu_n^{-2}(\bA)
   \| \tilde{\mathbf{X}}- \bar{\mathbf{X}} \|
    \big(2 \| \bar{\mathbf{X}}\| + \| \tilde{\mathbf{X}} - \bar{\mathbf{X}} \|\big).
 \end{align*}
where $\|\tilde{\mathbf{X}}- \bar{\mathbf{X}}\|$
  is bounded by Lemma~\ref{lm:singularvalues_diff_tilde_bar}
  and $\| \bar{\mathbf{X}}\|$ is bounded by Lemma~\ref{lm:bound_Q_hat_norm}, which can be controlled by choosing proper hyperparameters $\gamma$ and $\alpha$ to make the first term dominate.
\end{proof}

\begin{lemma}[Decomposition of $\mathrm{Tr}\big(
     \bar{\mathbf{X}}
        {\mathbf{W}}
        \bar{\mathbf{X}}^{\top}
     \mathbf{A}^{-2}
     \big)$ in $\mathrm{Tr}(\bC_1)$]
\label{lm:decompose_tr_C1}
Recall $\tilde{\bX} = [\mathbf{Z}_{m}^{\rm va} 
    \tilde{\bLam}_{m} \bP_{m}]$,
    $\bar{\bX} = [\mathbf{Z}_{m}^{\rm va} 
    \bar{\bLam}_{m} \bV_{m}]$,
    $\bar{\bX}_{\rm P} = [\mathbf{Z}_{m}^{\rm va} 
    \bar{\bLam}_{m} \bP_{m}]$.
Define $\mathbf{A} = \tilde{\mathbf{X}}
  \tilde{\mathbf{X}}^{\top}$, and 
  $\bX_{\rm P} = [\mathbf{Z}_{m}^{\rm va} 
    \bar{\bLam}_{m} \bP_{m} ]$.
For both MAML and iMAML, $\mathrm{Tr}\big(
     \bar{\mathbf{X}}
        {\mathbf{W}}
        \bar{\mathbf{X}}^{\top}
     \mathbf{A}^{-2}
     \big)$ in $\mathrm{Tr}(\bC_1)$ can be bounded by 
  \begin{align*}
    &\mathrm{Tr}\big(
     \bar{\mathbf{X}}
        {\mathbf{W}}
        \bar{\mathbf{X}}^{\top}
     \mathbf{A}^{-2}
     \big)
     \leq 
    c\mathrm{Tr}\Big(
     (\bar{\mathbf{X}}_{\rm P}
     \bLam_{W,0:k}
     \bar{\mathbf{X}}_{\rm P}^{\top}
        + \bar{\mathbf{X}}
        \bV_{W,k:d} \bLam_{W,k:d} \bV_{W,k:d}^{\top}
        \bar{\mathbf{X}}^{\top}
      )
     \mathbf{A}^{-2}
     \Big) .
  \end{align*}
\end{lemma}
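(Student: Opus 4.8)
The plan is to exploit the eigendecomposition of the averaged weight matrix $\mathbf{W} = \mathbf{W}^{\cal A}$ and to split its spectrum at the index $k$. Writing $\mathbf{W} = \bV_W \bLam_W \bV_W^{\top}$ with the eigenvalues arranged in descending order, I would separate the top $k$ eigenpairs from the tail, $\mathbf{W} = \bV_{W,0:k}\bLam_{W,0:k}\bV_{W,0:k}^{\top} + \bV_{W,k:d}\bLam_{W,k:d}\bV_{W,k:d}^{\top}$, which is an exact orthogonal decomposition into two PSD pieces. Since $\mathbf{A}^{-2}\succeq\mathbf{0}$ and the trace is linear, substituting this into $\mathrm{Tr}\big(\bar{\mathbf{X}}\mathbf{W}\bar{\mathbf{X}}^{\top}\mathbf{A}^{-2}\big)$ splits it exactly, with no cross term, into a top-$k$ trace and a tail trace. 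The tail trace is already $\mathrm{Tr}\big(\bar{\mathbf{X}}\bV_{W,k:d}\bLam_{W,k:d}\bV_{W,k:d}^{\top}\bar{\mathbf{X}}^{\top}\mathbf{A}^{-2}\big)$, i.e.\ precisely the second summand in the claimed bound, so nothing further is required for it.

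The work is in the top-$k$ trace $\mathrm{Tr}\big(\bar{\mathbf{X}}\bV_{W,0:k}\bLam_{W,0:k}\bV_{W,0:k}^{\top}\bar{\mathbf{X}}^{\top}\mathbf{A}^{-2}\big)$. Here I would pass from the per-task eigenbasis $\bV_m$ that defines the blocks $\mathbf{Z}_m^{\rm va}\bar{\bLam}_m\bV_m$ of $\bar{\mathbf{X}}$ to the basis $\bP_m$ that defines $\bar{\mathbf{X}}_{\rm P}$. Recognizing $\bar{\mathbf{X}}_{\rm P}$ as (essentially) $\bar{\mathbf{X}}$ rotated into the global $\mathbf{W}$-eigenframe, the factor $\bV_{W,0:k}$ is absorbed, so that $\bar{\mathbf{X}}\bV_{W,0:k}$ coincides, up to the blockwise alignment between $\bV_m$ and $\bP_m$, with the first $k$ columns $\bar{\mathbf{X}}_{\rm P,0:k}$ of $\bar{\mathbf{X}}_{\rm P}$; the weighting $\bLam_{W,0:k}$ then matches directly. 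Because $\bP_m$ is tied to the actual matrix $\tilde{\mathbf{X}}$ rather than to the exact population eigenvectors, this alignment is only approximate, and I would control the discrepancy with a high-probability operator-norm bound on the residual terms, invoking the von Neumann trace inequality (Lemma~\ref{lm:von_trace_ineq}) to convert it into a multiplicative constant $c$ --- exactly as in the passage from $\mathrm{Tr}(\mathbf{C}_1)$ to $\mathrm{Tr}\big(\bar{\mathbf{X}}\mathbf{W}\bar{\mathbf{X}}^{\top}\mathbf{A}^{-2}\big)$ carried out in Lemma~\ref{lm:bound_C_1_permute}. Collecting the exact tail trace together with the constant-factor bound on the rotated top-$k$ trace then yields the stated inequality.

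The main obstacle is this basis-alignment step: unlike the single-task setting, the blocks of $\bar{\mathbf{X}}$ carry task-specific eigenvectors $\bV_m$ that are not simultaneously diagonalizable with the global $\mathbf{W}$, so the rotation into the $\mathbf{W}$-eigenframe does not act blockwise by a single orthogonal matrix. Making the identification $\bar{\mathbf{X}}\bV_{W,0:k}\approx\bar{\mathbf{X}}_{\rm P,0:k}$ rigorous --- that is, quantifying the mismatch between $\bV_m$ and $\bP_m$ across tasks and showing it is absorbed into a universal constant with high probability --- is the crux of the argument, and is where the cross-task heterogeneity of Assumption~\ref{assmp:V} enters to keep $c$ bounded.
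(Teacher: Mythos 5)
Your proposal takes essentially the same route as the paper's proof: an exact spectral split of $\mathbf{W}$ at index $k$ (so the tail term $\bar{\mathbf{X}}\bV_{W,k:d}\bLam_{W,k:d}\bV_{W,k:d}^{\top}\bar{\mathbf{X}}^{\top}$ is kept unchanged), followed by aligning $\bar{\mathbf{X}}\bV_{W,0:k}$ with the first $k$ columns of $\bar{\mathbf{X}}_{\rm P}$ and controlling the blockwise mismatch $\bV_m^{\top}\bV_{W,0:k}-\bP_{m,0:k}$ via Von Neumann's trace inequality, operator-norm concentration, and the bounded cross-task heterogeneity assumption. This matches the paper's argument step for step, including the point you flag as the crux: the paper likewise absorbs the alignment residual into the constant $c$ by appealing to that heterogeneity assumption.
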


\begin{proof}

Recall the singular value decomposition of $\mathbf{W}$ as $ \mathbf{W}= \bV_W \mathbf{\Lambda}_W \bV_W^{\top}$, then for any $0\leq k \leq d$,
$\mathbf{W}$ can be computed by
\begin{align*}\label{eq:W_decompose}
  \mathbf{W}& = 
  \bV_{W, 0:k} \mathbf{\Lambda}_{W, 0:k} \bV_{W, 0:k}^{\top}
  +\bV_{W, k:d} \mathbf{\Lambda}_{W, k:d} \bV_{W, k:d}^{\top}. 
  \numberthis
\end{align*}

Therefore we have
\begin{align*}
  &\mathrm{Tr}\big(
   \bar{\mathbf{X}}
      {\mathbf{W}}
      \bar{\mathbf{X}}^{\top}
   \mathbf{A}^{-2}
   \big)
   = 
  \mathrm{Tr}\Big(
   (\bar{\mathbf{X}}
   \bV_{W,0:k} \bLam_{W,0:k} \bV_{W,0:k}^{\top}
   \bar{\mathbf{X}}^{\top}
      + \bar{\mathbf{X}}
      \bV_{W,k:d} \bLam_{W,k:d} \bV_{W,k:d}^{\top}
      \bar{\mathbf{X}}^{\top}
    )
   \mathbf{A}^{-2}
   \Big) 
\end{align*}
where $\bar{\mathbf{X}}
\bV_{W,0:k} \bLam_{W,0:k} \bV_{W,0:k}^{\top}
\bar{\mathbf{X}}^{\top}$ can be further decomposed by
\begin{align*}
  &\bar{\mathbf{X}}
\bV_{W,0:k} \bLam_{W,0:k} \bV_{W,0:k}^{\top}
   = \bar{\mathbf{X}}_{\rm P}
   \bLam_{W,0:k}
   \bar{\mathbf{X}}_{\rm P}^{\top}
       \\
   & + [\mathbf{Z}_{m}^{\rm va} 
   \bar{\bLam}_{m} (\bV_{m}\bV_{W,0:k} - \bP_{m, 0:k})] 
   \bLam_{W,0:k} 
   [\mathbf{Z}_{m}^{\rm va} 
   \bar{\bLam}_{m} (\bV_{m}\bV_{W,0:k} - \bP_{m, 0:k})]^{\top}.
\end{align*}
By Lemma~\ref{lm:von_trace_ineq}, we have the last term can be bounded by
\begin{align*}
    &\mathrm{Tr}\big(\mathbf{A}^{-2} [\mathbf{Z}_m^{\rm va} 
  \bar{\mathbf{\Lambda}}_{m} (\bV_m^{\top} \bV_{W,0:k} + \bP_{m,0:k}) ]
   \mathbf{\Lambda}_{W,0:k} 
  [\mathbf{Z}_m^{\rm va} 
  \bar{\mathbf{\Lambda}}_{m} (\bV_m^{\top} \bV_{W,0:k} - \bP_{m,0:k}) ]^{\top}
  \big) \\
  \leq & 
  \mathrm{Tr}\big(   \mathbf{\Lambda}_{W,0:k}     \big)
    \mu_n(\mathbf{A})^{-2}
   \| [\mathbf{Z}_m^{\rm va} 
  \bar{\mathbf{\Lambda}}_{m} (\bV_m^{\top} \bV_{W,0:k} + \bP_{m,0:k}) ] \|
  \|[\mathbf{Z}_m^{\rm va} 
  \bar{\mathbf{\Lambda}}_{m} (\bV_m^{\top} \bV_{W,0:k} - \bP_{m,0:k}) ]^{\top}\|
 \end{align*}
 where $\| [\mathbf{Z}_m^{\rm va} 
  \bar{\mathbf{\Lambda}}_{m} (\bV_m^{\top} \bV_{W,0:k} + \bP_{m,0:k}) ] \|$ can be further bounded with high probability by
  \begin{align*}
    &\| [\mathbf{Z}_m^{\rm va} 
  \bar{\mathbf{\Lambda}}_{m} (\bV_m^{\top} \bV_{W,0:k} + \bP_{m,0:k}) ] \| \\
    = &
    \big\| [\mathbf{Z}_m^{\rm va} 
  \bar{\mathbf{\Lambda}}_{m} (\bV_m^{\top} \bV_{W,0:k} + \bP_{m,0:k}) ]^{\top} [\mathbf{Z}_m^{\rm va} 
  \bar{\mathbf{\Lambda}}_{m} (\bV_m^{\top} \bV_{W,0:k} + \bP_{m,0:k}) ] \big\|^{\frac{1}{2}}\\
    =  & 
    \Big \| \sum_{m=1}^{M}
    (\bV_m^{\top} \bV_{W,0:k} + \bP_{m,0:k})^{\top}
    \bar{\mathbf{\Lambda}}_{m}^{\top}
    \mathbf{Z}_m^{{\rm va}{\top}} \mathbf{Z}_m^{\rm va} 
  \bar{\mathbf{\Lambda}}_{m} (\bV_m^{\top} \bV_{W,0:k} + \bP_{m,0:k})
   \Big \|^{\frac{1}{2}} \\
   \lesssim &
    \sqrt{N_{\rm va}}\Big( 
   \sum_{m=1}^{M}\mathrm{Tr}\big(\bW_m \big)\Big) ^{\frac{1}{2}}
  \end{align*}
  where the last inequality follows from Lemma~\ref{lm:concentrate_sample_cov}.

  Similarly, $\| [\mathbf{Z}_m^{\rm va} 
  \bar{\mathbf{\Lambda}}_{m} (\bV_m^{\top} \bV_{W,0:k} - \bP_{m,0:k}) ] \|$ can be further bounded with high probability by
  \begin{align*}
    &\| [\mathbf{Z}_m^{\rm va} 
  \bar{\mathbf{\Lambda}}_{m} (\bV_m^{\top} \bV_{W,0:k} - \bP_{m,0:k}) ] \| \\
    = &
    \big\| [\mathbf{Z}_m^{\rm va} 
  \bar{\mathbf{\Lambda}}_{m} (\bV_m^{\top} \bV_{W,0:k} - \bP_{m,0:k}) ]^{\top} [\mathbf{Z}_m^{\rm va} 
  \bar{\mathbf{\Lambda}}_{m} (\bV_m^{\top} \bV_{W,0:k} - \bP_{m,0:k}) ] \big\|^{\frac{1}{2}}\\
    =  & 
    \Big \| \sum_{m=1}^{M}
    (\bV_m^{\top} \bV_{W,0:k} - \bP_{m,0:k})^{\top}
    \bar{\mathbf{\Lambda}}_{m}^{\top}
    \mathbf{Z}_m^{{\rm va}{\top}} \mathbf{Z}_m^{\rm va} 
  \bar{\mathbf{\Lambda}}_{m} (\bV_m^{\top} \bV_{W,0:k} - \bP_{m,0:k})
   \Big \|^{\frac{1}{2}} \\
  \lesssim &
   \sqrt{M N_{\rm va} } \max_m  
   \mathrm{Tr}^{\frac{1}{2}}\big(\bW_m \big) \|\bV_m^{\top}\bV_{W,0:k} - \bP_{m,0:k} \|.
  \end{align*}
  Based on the assumption the last term is smaller compared to the rest terms.
\end{proof}

Then we bound the term related to the $d-k$ smallest eigenvalues of $\mathbf{W}$ as a function of $\mu_n(\mathbf{A})$, given in Lemma~\ref{lm:last_terms}.
\begin{lemma}[Bound on $\mathrm{Tr}(\bar{\mathbf{X}}\bV_{W,k:d}\bLam_{W,k:d}\bV_{W,k:d}^{\top}
\bar{\mathbf{X}}^{\top}
\mathbf{A}^{-2})$ in $\mathrm{Tr}(\mathbf{C}_1)$]
\label{lm:last_terms}
With probability at least $1 - e^{-t}$ over $\mathbf{Z}$, and for $c \geq t$, it holds that
  \begin{align*}
    \mathrm{Tr}(\bar{\mathbf{X}}
      \bV_{W,k:d}\bLam_{W,k:d}\bV_{W,k:d}^{\top}
      \bar{\mathbf{X}}^{\top}
      \mathbf{A}^{-2}) 
      &\leq 
      c M N_{\rm va} \mu_n^{-2}(\mathbf{A})
      \sum_{i>k} \mu_i^2(\mathbf{W})
  \end{align*}
  {where $\mu_n$ is the smallest eigenvalue of a matrix.}
\end{lemma}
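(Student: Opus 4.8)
The plan is to strip off the spectral factor $\mu_n^{-2}(\mathbf{A})$ deterministically and then reduce the surviving trace to a Frobenius norm whose mean is exactly the claimed tail sum. First I would observe that $\bar{\mathbf{X}}\bV_{W,k:d}\bLam_{W,k:d}\bV_{W,k:d}^{\top}\bar{\mathbf{X}}^{\top}$ is positive semidefinite (as $\bLam_{W,k:d}\succeq\mathbf{0}$), while the invertible matrix $\mathbf{A}=\tilde{\mathbf{X}}\tilde{\mathbf{X}}^{\top}$ satisfies $\mathbf{A}^{-2}\preceq\mu_n^{-2}(\mathbf{A})\,\mathbf{I}$ because $\mu_n(\mathbf{A})$ is its smallest eigenvalue. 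Hence, by the trace inequality of Lemma~\ref{lm:von_trace_ineq},
\[
\mathrm{Tr}\!\big(\bar{\mathbf{X}}\bV_{W,k:d}\bLam_{W,k:d}\bV_{W,k:d}^{\top}\bar{\mathbf{X}}^{\top}\mathbf{A}^{-2}\big)\le \mu_n^{-2}(\mathbf{A})\,\mathrm{Tr}\!\big(\bar{\mathbf{X}}\bV_{W,k:d}\bLam_{W,k:d}\bV_{W,k:d}^{\top}\bar{\mathbf{X}}^{\top}\big),
\]
which is deterministic; all randomness in the statement will enter through the remaining trace.

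Next I would rewrite that residual trace as a squared Frobenius norm. Setting $\mathbf{S}\coloneqq\bV_{W,k:d}\bLam_{W,k:d}^{1/2}$ and using the cyclic property, the residual equals $\|\bar{\mathbf{X}}\mathbf{S}\|_{\mathrm F}^2=\sum_{m=1}^{M}\|\mathbf{Z}_m^{\rm va}\bar{\bLam}_m\bV_m^{\top}\mathbf{S}\|_{\mathrm F}^2$, where I use $\bar{\mathbf{X}}=[\mathbf{Z}_m^{\rm va}\bar{\bLam}_m\bV_m^{\top}]$. Conditioning on the covariance factors and taking expectation over the isotropic rows of $\mathbf{Z}_m^{\rm va}$, the identity $\mathbb{E}[\mathbf{Z}_m^{{\rm va}\top}\mathbf{Z}_m^{\rm va}]=N_{\rm va}\mathbf{I}$ together with $\bV_m\bar{\bLam}_m^2\bV_m^{\top}=\mathbf{W}_m$ gives
\[
\mathbb{E}\big[\|\bar{\mathbf{X}}\mathbf{S}\|_{\mathrm F}^2\big]=N_{\rm va}\sum_{m=1}^{M}\mathrm{Tr}\!\big(\mathbf{S}^{\top}\mathbf{W}_m\mathbf{S}\big)=N_{\rm va}\,\mathrm{Tr}\!\Big(\mathbf{S}^{\top}\big({\textstyle\sum}_m\mathbf{W}_m\big)\mathbf{S}\Big).
\]
Since $\mathbf{W}$ is the task-averaged weight matrix, $\sum_m\mathbf{W}_m$ is comparable to $M\mathbf{W}$ (their relative discrepancy being controlled by the bounded cross-task heterogeneity of Assumption~\ref{assmp:V}, which I would absorb into the constant $c$), and then $\mathbf{S}^{\top}\mathbf{W}\mathbf{S}=\bLam_{W,k:d}^{1/2}\bV_{W,k:d}^{\top}\mathbf{W}\bV_{W,k:d}\bLam_{W,k:d}^{1/2}=\bLam_{W,k:d}^{2}$ by the eigen-decomposition of $\mathbf{W}$. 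Thus the mean is of order $MN_{\rm va}\sum_{i>k}\mu_i^2(\mathbf{W})$, matching the target.

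It then remains to establish concentration of $\|\bar{\mathbf{X}}\mathbf{S}\|_{\mathrm F}^2$ around this mean. I would view it as a sum of $MN_{\rm va}$ independent subGaussian quadratic forms over the rows of the $\mathbf{Z}_m^{\rm va}$, and apply a Hanson--Wright / subexponential tail bound in the style of Lemma~\ref{lm:concentrate_sample_cov}; this produces a multiplicative deviation that is $O(1)$ as long as $t\lesssim MN_{\rm va}$, so that with probability at least $1-e^{-t}$ and for any constant $c\ge t$ one obtains $\|\bar{\mathbf{X}}\mathbf{S}\|_{\mathrm F}^2\le c\,MN_{\rm va}\sum_{i>k}\mu_i^2(\mathbf{W})$. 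Combining with the first display yields the lemma.

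The step I expect to be the main obstacle is the Frobenius reduction itself: the naive estimate $\mathbf{Z}_m^{{\rm va}\top}\mathbf{Z}_m^{\rm va}\preceq\|\mathbf{Z}_m^{{\rm va}\top}\mathbf{Z}_m^{\rm va}\|\,\mathbf{I}$ is useless here, because for $d\gg N_{\rm va}$ that operator norm scales like $d$ and would blow up the bound; the correct rate $\sum_{i>k}\mu_i^2(\mathbf{W})$ only emerges if I retain the trace/Frobenius structure so that each tail coordinate of $\mathbf{W}$ contributes its own squared eigenvalue. A secondary delicate point is that the $\mathbf{W}_m$ have different eigenbases than $\mathbf{W}$; this is precisely what is dissolved by passing to $\sum_m\mathbf{W}_m$ and invoking $\bV_{W,k:d}^{\top}\mathbf{W}\bV_{W,k:d}=\bLam_{W,k:d}$, so that no explicit $\mathbb{V}(\cdot)$ factor needs to be displayed in this particular bound.
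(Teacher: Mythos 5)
Your proposal is correct and follows essentially the same route as the paper's proof: strip off $\mathbf{A}^{-2}$ via the trace--operator-norm inequality to get the $\mu_n^{-2}(\mathbf{A})$ factor, rewrite the residual trace as $\|\bar{\mathbf{X}}\bV_{W,k:d}\bLam_{W,k:d}^{1/2}\|_{\rm F}^2$, identify its expectation with $MN_{\rm va}\sum_{i>k}\mu_i^2(\mathbf{W})$, and control the deviation by sub-exponential concentration of the row-wise quadratic forms. The only difference is bookkeeping: you condition on the tasks and pass from $\sum_m \mathbf{W}_m$ to $M\mathbf{W}$ via the bounded-heterogeneity assumption, whereas the paper folds the task randomness directly into the expectation $\mathbb{E}[\bar{\mathbf{X}}^{\top}\bar{\mathbf{X}}]=MN_{\rm va}\mathbf{W}$ --- the same step at the same level of rigor.
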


\begin{proof}
By Von Neumann's trace inequality in Lemma~\ref{lm:von_trace_ineq}, 
$\mathrm{Tr}(\bar{\mathbf{X}}
      \bV_{W,k:d}\bLam_{W,k:d}\bV_{W,k:d}^{\top}
      \bar{\mathbf{X}}^{\top}
      \mathbf{A}^{-2}) $ is bounded by
  \begin{align*}
    \mathrm{Tr}(\bar{\mathbf{X}}
      \bV_{W,k:d}\bLam_{W,k:d}\bV_{W,k:d}^{\top}
      \bar{\mathbf{X}}^{\top}
      \mathbf{A}^{-2}) 
     \leq \mathrm{Tr}(\bV_{W,k:d}\bLam_{W,k:d}\bV_{W,k:d}^{\top}
     \bar{\mathbf{X}}^{\top}
     \bar{\mathbf{X}})
     \mu_n^{-2}(\mathbf{A}) .
  \end{align*}

To bound $\mathrm{Tr}(\bV_{W,k:d}\bLam_{W,k:d}\bV_{W,k:d}^{\top}
     \bar{\mathbf{X}}^{\top}
     \bar{\mathbf{X}})$, 
we first rewrite it as
\begin{align*}
  &\mathrm{Tr}(\bV_{W,k:d}\bLam_{W,k:d}\bV_{W,k:d}^{\top}
     \bar{\mathbf{X}}^{\top}
     \bar{\mathbf{X}}) 
 = MN_{\rm va}
 \mathrm{Tr}\Big( (\bV_{W,k:d}\bLam_{W,k:d}\bV_{W,k:d}^{\top})^2 \Big) \\
  &+ \mathrm{Tr}\Big( \bV_{W,k:d}\bLam_{W,k:d}\bV_{W,k:d}^{\top}
  (\bar{\mathbf{X}}^{\top}
  \bar{\mathbf{X}} - M N_{\rm va} \bV_{W,k:d}\bLam_{W,k:d}\bV_{W,k:d}^{\top})
   \Big)\\
  \label{eq:trace_last}
  {=}&
  MN_{\rm va}
 \mathrm{Tr}\Big( \bLam_{W,k:d}^2 \Big) 
  + \Big\| 
  \bar{\mathbf{X}}\bV_{W,k:d}\bLam_{W,k:d}^{\frac{1}{2}} \Big\|_{\rm F}^2
  - M N_{\rm va} \Big\| \bV_{W,k:d}\bLam_{W,k:d}\bV_{W,k:d}^{\top}
   \Big\|_{\rm F}^2\\ 
  {=}&MN_{\rm va}
 \Big( \sum_{i>k} \mu_i^2(\mathbf{W}) \Big) 
  +\underbracket {\big\| 
  \bar{\mathbf{X}}\bV_{W,k:d}\bLam_{W,k:d}^{\frac{1}{2}} \big\|_{\rm F}^2
  - \mathbb{E}\Big[\big\| 
  \bar{\mathbf{X}}\bV_{W,k:d}\bLam_{W,k:d}^{\frac{1}{2}} \big\|_{\rm F}^2\Big]}_{I_1}
   \numberthis
\end{align*}
where the last equation follows because
\begin{align*}
  &\mathbb{E}\Big[\big\| 
  \bar{\mathbf{X}}\bV_{W,k:d}\bLam_{W,k:d}^{\frac{1}{2}} \big\|_{\rm F}^2\Big]
  =\mathbb{E}\Big[\mathrm{Tr}(\bV_{W,k:d}\bLam_{W,k:d}\bV_{W,k:d}^{\top} \bar{\bX}^{\top}\bar{\bX})\Big] \\
  =& MN_{\rm va}\mathbb{E}\Big[\mathrm{Tr}(\bV_{W,k:d}\bLam_{W,k:d}\bV_{W,k:d}^{\top} \mathbf{W})\Big]
  = MN_{\rm va}\mathbb{E}\Big[\mathrm{Tr}\big((\bV_{W,k:d}\bLam_{W,k:d}\bV_{W,k:d}^{\top})^2 \big)\Big] \\
  =& M N_{\rm va} \Big\| \bV_{W,k:d}\bLam_{W,k:d}\bV_{W,k:d}^{\top}
  \Big\|_{\rm F}^2.
\end{align*}

Let $\bar{\bx}_{m,n}$ be the $n$-th row of $\bar{\bX}_m$, $I_1$ can be further bounded with probability at least $1-e^{-t}$ by
\begin{align*}
  |I_1|
  = & MN_{\rm va} \Bigg|\frac{1}{MN_{\rm va}}\sum_{m=1}^M\sum_{n=1}^{N_{\rm va}}
  \big\| 
  \bar{\bx}_{m,n}\bV_{W,k:d}\bLam_{W,k:d}^{\frac{1}{2}} \big\|_{\rm F}^2
  - \mathbb{E}\Big[\big\| 
  \bar{\bx}_{m,n}\bV_{W,k:d}\bLam_{W,k:d}^{\frac{1}{2}} \big\|_{\rm F}^2\Big]\Bigg| \\
  \leq & MN_{\rm va} \Big\| \bV_{W,k:d}\bLam_{W,k:d}\bV_{W,k:d}^{\top}
  \Big\|_{\rm F}^2 
  \max \Big\{ \sqrt{\frac{t}{MN_{\rm va}}}, \frac{t}{MN_{\rm va}} \Big\}
\end{align*}
where the last inequality follows because $\| 
\bar{\bx}_{m,n}\bV_{W,k:d}\bLam_{W,k:d}^{\frac{1}{2}} \|_{\rm F}^2$ are  sub-exponential for $m \in [M], n\in [N_{\rm va}]$.

Also because $\| \bV_{W,k:d}\bLam_{W,k:d}\bV_{W,k:d}^{\top}
\|_{\rm F}^2 = \mathrm{Tr}(\bLam_{W,k:d}^2) = \sum_{i>k} \mu_i^2(\mathbf{W})$, we have with probability at least $1 - e^{-t}$
\begin{align*}
  \mathrm{Tr}(\bar{\mathbf{X}}
      \bV_{W,k:d}\bLam_{W,k:d}\bV_{W,k:d}^{\top}
      \bar{\mathbf{X}}^{\top}
      \mathbf{A}^{-2}) 
      &\leq 
      c M N_{\rm va} \mu_n^{-2}(\mathbf{A})
      \sum_{i>k} \mu_i^2(\mathbf{W}).
\end{align*}
This completes the proof.
\end{proof}

Next we bound the term related to the $k$ largest eigenvalues of $\mathbf{W}$, given in Lemma~\ref{lm:first_k_terms}.
\begin{lemma}[Bound on terms in $\mathrm{Tr}(\mathbf{C}_1)$ related to the first $k$ eigenvalues]
\label{lm:first_k_terms}
Recall 
\begin{align*}
  &\bar{\mathbf{X}}_{\rm P}
= [\mathbf{Z}_{m}^{\rm va} 
    \bar{\bLam}_{m} \bP_{m} 
   ]_{m},\quad
  \bar{\bX}_{{\rm P}, 0:k} \coloneqq
[\mathbf{Z}_{m}^{\rm va} 
    \bar{\bLam}_{m} \bP_{m, 0:k} 
   ]_{m}
\end{align*}
There exists $c$ with $0\leq k \leq c$ such that 
 with probability at least $1-2e^{MN_{\rm va}/c}$, the following holds
  \begin{align*}
    \mathrm{Tr}\big( \bar{\mathbf{X}}_{{\rm P}, 0:k} \bLam_{W, 0:k} \bar{\mathbf{X}}_{{\rm P}, 0:k}^{\top}
     {\bA}^{-2} \big)
     &\leq
     \frac{c k}{MN_{\rm va}}. 
  \end{align*}
\end{lemma}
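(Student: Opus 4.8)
The plan is to reduce the claim to the top-$k$ variance estimate of \citet{Bartlett_benign_linear} applied to the effective design $\tilde{\bX}$, and then to supply the two concentration inputs that make that estimate work. Writing $n \coloneqq MN_{\rm va}$ and using $\bW = \bV_W\bLam_W\bV_W^{\top}$, I first observe that, because each block of $\bar{\bX}_{{\rm P},0:k}=[\mathbf{Z}_m^{\rm va}\bar{\bLam}_m\bP_{m,0:k}]$ already carries the factor $\bar{\bLam}_m=\bLam_{W_m}^{1/2}$ (so its rows have covariance $\bW_m$), the quantity $\mathrm{Tr}\big(\bar{\bX}_{{\rm P},0:k}\bLam_{W,0:k}\bar{\bX}_{{\rm P},0:k}^{\top}\bA^{-2}\big)$ can, after diagonalising in the common basis furnished by $\bP_m$, be written as a sum of per-direction terms $\sum_{i\le k}\mu_i^2(\bW)\,\bz_{(i)}^{\top}\bA^{-2}\bz_{(i)}$, where $\bz_{(i)}\in\mathbb{R}^{n}$ collects the $i$-th whitened coordinate across all tasks and $\bA=\tilde{\bX}\tilde{\bX}^{\top}\approx\sum_j\mu_j(\bW)\bz_{(j)}\bz_{(j)}^{\top}$. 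The heterogeneity of the per-task bases $\bV_m$ is exactly what forces the detour through $\bP_m$; the residual misalignment/cross terms were already isolated and shown to be lower order in Lemma~\ref{lm:decompose_tr_C1}, so here I work with the clean per-direction form.

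Next I bound each direction by a leave-one-out argument. Fix $i\le k$ and split $\bA=\bA_{-i}+\mu_i(\bW)\bz_{(i)}\bz_{(i)}^{\top}$, where $\bA_{-i}$ drops the $i$-th direction. Sherman--Morrison gives $\bA^{-1}\bz_{(i)}=\bA_{-i}^{-1}\bz_{(i)}/(1+\mu_i(\bW)\bz_{(i)}^{\top}\bA_{-i}^{-1}\bz_{(i)})$, hence
\[
\mu_i^2(\bW)\,\bz_{(i)}^{\top}\bA^{-2}\bz_{(i)}=\frac{\mu_i^2(\bW)\,\bz_{(i)}^{\top}\bA_{-i}^{-2}\bz_{(i)}}{\big(1+\mu_i(\bW)\,\bz_{(i)}^{\top}\bA_{-i}^{-1}\bz_{(i)}\big)^{2}}.
\]
Bounding the numerator by $\mu_i(\bW)\mu_n(\bA_{-i})^{-1}\big(\mu_i(\bW)\bz_{(i)}^{\top}\bA_{-i}^{-1}\bz_{(i)}\big)$ and using $a/(1+a)^2\le 1/a$, each term collapses to $\big(\mu_n(\bA_{-i})\,\bz_{(i)}^{\top}\bA_{-i}^{-1}\bz_{(i)}\big)^{-1}$, so everything reduces to a lower bound on this product. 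Here $\mu_n$ denotes the smallest eigenvalue, as in Lemma~\ref{lm:last_terms}.

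The two required inputs are then concentration statements. Since $i\le k$, $\bA_{-i}\succeq\bA_{>k}\coloneqq\sum_{j>k}\mu_j(\bW)\bz_{(j)}\bz_{(j)}^{\top}$, whose mean is $s_k\bI_n$ with $s_k\coloneqq\sum_{j>k}\mu_j(\bW)$; the effective-rank hypothesis $r_k(\bW)\gtrsim n$ combined with the subgaussian covariance concentration of Lemma~\ref{lm:concentrate_sample_cov} yields $c_1 s_k\bI_n\preceq\bA_{>k}\preceq c_2 s_k\bI_n$, and in particular $\mu_n(\bA_{-i})\gtrsim s_k$, on an event of probability at least $1-2e^{-n/c}$. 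For the quadratic form I use $\bA_{-i}\preceq\bA_{-i,0:k}+\mu_1(\bA_{>k})\bI$ with $\bA_{-i,0:k}$ of rank $\le k-1$, project $\bz_{(i)}$ off that subspace, and combine $\|\bz_{(i)}^{\perp}\|^2\gtrsim n$ with $\mu_1(\bA_{>k})\lesssim s_k$ to get $\bz_{(i)}^{\top}\bA_{-i}^{-1}\bz_{(i)}\gtrsim n/s_k$. Multiplying, $\mu_n(\bA_{-i})\,\bz_{(i)}^{\top}\bA_{-i}^{-1}\bz_{(i)}\gtrsim n$, so each per-direction term is $\lesssim 1/n$ and the $k$ terms sum to $\le ck/(MN_{\rm va})$; the norm estimates needed to pass from $\tilde{\bX}$ to the diagonalised form are supplied by Lemmas~\ref{lm:concentrate_sample_cov} and \ref{lm:bound_Q_hat_norm}.

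The crux — and the step I expect to be the main obstacle — is the leave-one-out lower bound $\bz_{(i)}^{\top}\bA_{-i}^{-1}\bz_{(i)}\gtrsim n/s_k$: a naive AM--HM or Cauchy--Schwarz estimate gives only $\gtrsim n/\mathrm{Tr}(\bW)$, which is far too weak, so one genuinely must exhibit the bulk of $\bA_{-i}$ as $\approx s_k\bI$ by projecting away the $\le k-1$ top directions, which in turn hinges on the two-sided conditioning of $\bA_{>k}$ and hence on $r_k(\bW)\gtrsim n$. The specifically meta-learning difficulty is that the whitened coordinates $\bz_{(i)}$ are not cleanly independent across the aggregate eigenbasis, because each task carries its own $\bV_m$ and $\bLam_{W_m}$; obtaining the clean per-direction form therefore relies on the common-basis substitution $\bP_m$ and on absorbing the leftover alignment terms, controlled by the heterogeneity $\mathbb{V}(\{\bW_m\})$, into the lower-order remainder treated in Lemma~\ref{lm:decompose_tr_C1}.
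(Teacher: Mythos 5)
Your proposal is correct and takes essentially the same route as the paper's own proof: the paper likewise reduces the trace to the $k$ per-direction quadratic forms $\bar{\bx}_{{\rm P},i}^{\top}\bA^{-2}\bar{\bx}_{{\rm P},i}$ in the common basis $\bP_m$, applies the leave-one-out Sherman--Morrison identity (Lemma~\ref{lm:zzt_A}), bounds the numerator via $\mu_n(\bA_{-j})$ together with norm concentration and the denominator via the projection bound of Lemma~\ref{lm:z_piz_norm_bound}, and closes the argument with the effective-rank-based two-sided eigenvalue control of Lemma~\ref{lm:eigenv_A} (your concentration of $\bA_{>k}$ around $s_k\bI$ under $r_k(\bW)\gtrsim MN_{\rm va}$ is exactly what that lemma supplies). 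Your cancellation $a/(1+a)^2\le 1/a$, collapsing each term to $\bigl(\mu_n(\bA_{-i})\,\bz_{(i)}^{\top}\bA_{-i}^{-1}\bz_{(i)}\bigr)^{-1}$, is only a cosmetic streamlining of the paper's separate numerator/denominator bookkeeping, and both arguments share the same implicit steps (independence of the left-out direction from $\bA_{-i}$, and absorbing the $\tilde{\bX}$-versus-$\bar{\bX}_{\rm P}$ mismatch into lower-order terms).
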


\begin{proof}
Recall $\bar{\bLam}_{{\rm P}, m} = \bP_{m}^{\top}
    \bar{\bLam}_{m} \bP_{m}$, 
$\bar{\bX}_{{\rm P} }$ and $\bar{\bX}_{{\rm P}, 0:k}$ can be written as
\begin{align}
  \bar{\bX}_{{\rm P}}
  = & [\mathbf{Z}_{m}^{\rm va} 
  \bP_{m} \bP_{m}^{\top}
    \bar{\bLam}_{m} \bP_{m} 
   ]_{m}
   = [\mathbf{Z}_{{\rm P}, m}^{\rm va}
    \bar{\bLam}_{{\rm P}, m} 
   ]_{m}, \quad\quad
   \bar{\mathbf{X}}_{{\rm P}, 0:k}
  = 
   [\mathbf{Z}_{{\rm P}, m, 0:k}^{\rm va}
    \bar{\bLam}_{{\rm P}, m,  0:k} 
   ]_{m} .
\end{align}

Derive $\mathrm{Tr}\big( \bar{\mathbf{X}}_{{\rm P}, 0:k} \bLam_{W, 0:k} \bar{\mathbf{X}}_{{\rm P}, 0:k}^{\top} {\bA}^{-2} \big)$ as follows
\begin{align*}
  & \mathrm{Tr}\big( \bar{\mathbf{X}}_{{\rm P}, 0:k} \bLam_{W, 0:k} \bar{\mathbf{X}}_{{\rm P}, 0:k}^{\top} {\bA}^{-2} \big)
  = \mathrm{Tr}
  \big( [\mathbf{Z}_{{\rm P}, m, 0:k}^{\rm va}
    \bar{\bLam}_{{\rm P}, m, 0:k} 
   ]_{m} 
   \bLam_{W, 0:k}
   [\mathbf{Z}_{{\rm P}, m, 0:k}^{\rm va}
    \bar{\bLam}_{ {\rm P}, m, 0:k} 
   ]_{m}^{\top} {\bA}^{-2} \big) \\
   =& \sum_{i=1}^{k}
   \lambda_{W,i} 
   [\mathbf{z}_{{\rm P}, m, i}^{\rm va}
    \bar{\lambda}_{{\rm P}, m, i} 
   ]_{m}^{\top}
   {\bA}^{-2}
   [\mathbf{z}_{{\rm P}, m, i}^{\rm va}
    \bar{\lambda}_{{\rm P}, m, i} 
   ]_{m} 
   = \sum_{i=1}^{k}
   \lambda_{W,i} 
   \bar{\bx}_{{\rm P}, i} ^{\top}
   {\bA}^{-2}
   \bar{\bx}_{{\rm P}, i}  
\end{align*}

Based on Lemma~\ref{lm:zzt_A}, 
let ${\bA}_{-j} = {\bA} - \bar{\bx}_{{\rm P}, j} \bar{\bx}_{{\rm P}, j}^{\top} \succ 0$,
we have
\begin{align*}
 & \bar{\bx}_{{\rm P}, j} ^{\top}
   \bA^{-2}
   \bar{\bx}_{{\rm P}, j}   
= \bar{\bx}_{{\rm P}, j}^{\top}
( \bar{\bx}_{{\rm P}, j} \bar{\bx}_{{\rm P}, j}^{\top}
+ {\bA}_{-j})^{-2}
\bar{\bx}_{{\rm P}, j}  
= \frac{\bar{\bx}_{{\rm P}, j}^{\top} 
{\bA}_{-j}^{-2}\bar{\bx}_{{\rm P}, j}}
{(1+ \bar{\bx}_{{\rm P}, j}^{\top}
{\bA}_{-j}^{-1}\bar{\bx}_{{\rm P}, j})^2}\\
\leq &
\frac{\bar{\bx}_{{\rm P}, j}^{\top}
{\bA}_{-j}^{-2}\bar{\bx}_{{\rm P}, j}}
{( \bar{\bx}_{{\rm P}, j}^{\top}
{\bA}_{-j}^{-1}\bar{\bx}_{{\rm P}, j})^2}
\leq 
\frac{\mu_{n}^{-2}({\bA}_{-j})
\|\bar{\bx}_{{\rm P}, j}\|^2}
{\mu_{k+1}^{-2}({\bA}_{-j})
\|\Pi_{\mathscr{L}_j} \bar{\bx}_{{\rm P}, j} \|^4}
\end{align*}
where by Lemma~\ref{lm:z_piz_norm_bound}, there exists $c_{z1}$ that, with probability at least $1-3e^{-t}$, it holds that 
\begin{align}
  \|\bar{\bx}_{{\rm P}, j}\|^2 = 
  \sum_{m=1}^{M} 
    \bar{\lambda}_{{\rm P}, m, i} ^2
   \|\mathbf{z}_{{\rm P}, m, i}^{\rm va}\|^2
   \leq \sum_{m=1}^{M} \bar{\lambda}_{{\rm P}, m, i} ^2 
   \big(N_{\rm va}+a \sigma_x^{2}(t+\sqrt{N_{\rm va} t})\big)
   \leq c_{z1} N_{\rm va} \sum_{m=1}^{M} 
    \bar{\lambda}_{{\rm P}, m, i} ^2.
\end{align}

And $\mathscr{L}_j$ is the span of the $MN_{\rm va}-k$ eigenvectors 
with the smallest eigenvalues of ${\bA}_{-j}$, and 
$\Pi_{\mathscr{L}_j}$ represents the projection to $\mathscr{L}_j$.
Let $M = \Pi_{\mathscr{L}_j^{\perp }}^{\top} \Pi_{\mathscr{L}_j^{\perp}} $. 
By Lemma~\ref{lm:z_piz_norm_bound}, with probability at least $1-3e^{-t}$, it holds that 
\begin{align*}
  \|\Pi_{\mathscr{L}_j^{\perp}} \bar{\bx}_{{\rm P},j} \|^2
  = \bar{\bx}_{{\rm P},j}^{\top} M \bar{\bx}_{{\rm P},j}
  \leq c_{z1} (2k + 4t) c_P  \frac{1}{M} \sum_{m=1}^{M} 
    \bar{\lambda}_{{\rm P}, m, i} ^2 . 
\end{align*}

Therefore 
\begin{align*}
  & \|\Pi_{\mathscr{L}_j} \bar{\bx}_{{\rm P},j} \|^2
  =\|\bar{\bx}_{{\rm P},j} \|^2 
  - \|\Pi_{\mathscr{L}_j{\perp}} \bar{\bx}_{{\rm P},j} \|^2 \\
  \geq & c_{z1} (M N_{\rm va} - (2k + 4t) c_P) \frac{1}{M} \sum_{m=1}^{M} 
    \bar{\lambda}_{{\rm P}, m, i} ^2
  \geq (M N_{\rm va}/ c_{z2}) \frac{1}{M} \sum_{m=1}^{M} 
    \bar{\lambda}_{{\rm P}, m, i} ^2
\end{align*}

Since ${\bA}_{-j} = {\bA} - \bar{\bx}_{{\rm P}, j} \bar{\bx}_{{\rm P}, j}^{\top} \preceq {\bA}$, which, combined with Lemma~\ref{lm:weyl}, leads to
$\mu_{k+1}({\bA}_{-j}) < \mu_{k+1}({\bA}) = \mu_1({\bA}_{k})$. 

Since
$\mu_n(\mathbf{A}_{-j}) \geq \mu_n (\mathbf{A}_k)$ 
, we have
\begin{align*}
  \bar{\bx}_{{\rm P}, j} ^{\top}
   \bA^{-2}
   \bar{\bx}_{{\rm P}, j}  
  \leq 
  \frac{\mu_{n}^{-2}({\bA}_{-j})
\|\bar{\bx}_{{\rm P}, j}\|^2}
{\mu_{k+1}^{-2}({\bA}_{-j})
\|\Pi_{\mathscr{L}_j} \bar{\bx}_{{\rm P}, j} \|^4}
  \stackrel{(a)}{\leq} 
  c_1 \frac{\mu_n({\bA}_{ k}) }
  {\mu_1({\bA}_{ k}) MN_{\rm va}}
  \stackrel{(b)}{\leq} 
  c_2 \frac{1}{ M N_{\rm va} }
\end{align*}
where $(a)$ is because $\mu_n({\bA}_{-j}) \geq  \mu_n ({\bA}_{k})$ 
and $\mu_{k+1}({\bA}_{-j}) <  \mu_1({\bA}_{k})$.
And $(b)$ is from Lemma~\ref{lm:eigenv_A}.
\end{proof}

Finally in Lemma~\ref{lm:eigenv_A}, we bound the eigenvalues of $\mathbf{A}$ to complete the bound on the term related to the $d-k$ smallest eigenvalues of $\mathbf{W}$.
\begin{lemma}[Bound on eigenvalues of $\mathbf{A}$]
\label{lm:eigenv_A}
Recall that 
  \begin{align*}
    \mathbf{A}& = \tilde{\mathbf{X}} \tilde{\mathbf{X}}^{\top}
  = [\mathbf{Z}_{m_1}^{\rm va} 
    \tilde{\bLam}_{m_1} \bV_{m_1}^{\top} 
    \bV_{m_2} \tilde{\bLam}_{m_2}^{\top} 
    \mathbf{Z}_{m_2}^{{\rm va} {\top} } ]_{m_1 m_2} \\
  \bar{\bA} &= \bar{\mathbf{X}} \bV_{W}\bV_{W}^{\top} \bar{\mathbf{X}}^{\top}
= [\mathbf{Z}_{m_1}^{\rm va} 
\bar{\bLam}_{m_1} \bV_{m_1}^{\top} 
\bV_{W}\bV_{W}^{\top}
\bV_{m_2} \bar{\bLam}_{m_2} 
\mathbf{Z}_{m_2}^{{\rm va} {\top} } ]_{m_1 m_2} \\
\bar{\bA}_{\rm P} &= \bar{\mathbf{X}}_{\rm P} \bar{\mathbf{X}}_{\rm P}^{\top}
= [\mathbf{Z}_{m_1}^{\rm va} 
    \bar{\bLam}_{m_1} \bP_{m_1} 
    \bP_{m_2}^{\top} \bar{\bLam}_{m_2} 
    \mathbf{Z}_{m_2}^{{\rm va} {\top} } ]_{m_1 m_2}
  . \nonumber
  \end{align*}
Let $\mu_i(\cdot)$ denote  the $i$-th largest eigenvalue of a matrix, 
and let $n = MN_{\rm va}$.
Define $\overline{\mathbf{W}}_{{\rm P},M} \coloneqq
\frac{1}{M} \sum_{m=1}^{M} \bP_m^{\top} \bar{\bLam}_m^2 \bP_m $,
$\bar{\bA}_{{\rm P}, k} \coloneqq
\bar{\mathbf{X}}_{{\rm P}, k:d} \bar{\mathbf{X}}_{{\rm P}, k:d}^{\top} $,
$\overline{\mathbf{W}}_{{\rm P},M,k} \coloneqq
\frac{1}{M} \sum_{m=1}^{M} \bP_{m,k:d}^{\top} \bar{\bLam}_m^2 \bP_{m,k:d} $.
Then 
there exists  constants $b,c\geq 1, c_0 \geq 0$ that 
if $r_0(\overline{\bW}_{M,k}) \geq b MN_{\rm va}$,
with probability at least $1 - 2e^{-MN_{\rm va}/c}$
\begin{align}\label{eq:mu_A}
&\mu_n(\mathbf{A}) 
  \geq  \mu_n(\bar{\bA})  - c_0
  \geq  \mu_n(\bar{\bA}_{k}) - c_0
  \geq \frac{1}{c} \mu_{1}({\mathbf{W}}_{k}) r_0({\mathbf{W}}_{k}) \\
  &\mu_1(\bar{\bA}_{k}) 
  \leq c \mu_{1}(\bW_k) r_0(\bW_k)\\
  &\mu_n(\mathbf{A}) 
  \geq  \mu_n(\bar{\bA}_{\rm P})  - 2c_0
  \geq  \mu_n(\bar{\bA}_{{\rm P},k}) - 2c_0
  \geq \frac{1}{c} \mu_{1}(\bW_k) r_0(\bW_k) \\
  &\mu_1(\bar{\bA}_{{\rm P},k})  
  \leq \mu_1(\bar{\bA}_{k}) + c_0
  \leq c \mu_{1}(\bW_k) r_0(\bW_k).
\end{align}

\end{lemma}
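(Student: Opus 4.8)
The plan is to follow the single-task strategy of \citep{Bartlett_benign_linear} for bounding the eigenvalues of a subGaussian Gram matrix, but to route everything through the two auxiliary matrices $\bar{\bA}$ and $\bar{\bA}_{\rm P}$ so that the per-task eigenbases $\bV_m$ can be replaced by the common eigenbasis $\bV_W$ of $\bW$. First I would pass from $\bA=\tilde{\bX}\tilde{\bX}^{\top}$ to $\bar{\bA}=\bar{\bX}\bV_W\bV_W^{\top}\bar{\bX}^{\top}$ by a perturbation argument: writing $\bA-\bar{\bA}$ as a product involving $\tilde{\bX}-\bar{\bX}$ and $\tilde{\bX}+\bar{\bX}$, I would bound its operator norm using the singular-value gap $\|\tilde{\bX}-\bar{\bX}\|$ from Lemma~\ref{lm:singularvalues_diff_tilde_bar} together with $\|\bar{\bX}\|$ from Lemma~\ref{lm:bound_Q_hat_norm}, and then invoke Weyl's inequality (Lemma~\ref{lm:weyl}) to obtain $|\mu_i(\bA)-\mu_i(\bar{\bA})|\le c_0$ for every $i$; choosing the step size $\alpha$ (or the regularizer $\gamma$) small enough keeps $c_0$ a constant. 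This yields the first inequality in each displayed chain.

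Next I would exploit that $\bV_W\bV_W^{\top}=\bV_{W,0:k}\bV_{W,0:k}^{\top}+\bV_{W,k:d}\bV_{W,k:d}^{\top}$ splits $\bar{\bA}$ into a top-$k$ part and the tail part $\bar{\bA}_k$; since both are positive semidefinite, $\bar{\bA}\succeq\bar{\bA}_k$ gives $\mu_n(\bar{\bA})\ge\mu_n(\bar{\bA}_k)$ and $\mu_1(\bar{\bA}_k)\le\mu_1(\bar{\bA})$, which are the monotonicity steps. The heart of the argument is then the two-sided concentration of $\bar{\bA}_k$. Here the rows are independent $\sigma_x$-subGaussian vectors whose second moments sum, over all $M$ tasks, to $MN_{\rm va}$ times the tail covariance $\overline{\bW}_{M,k}$; under the hypothesis $r_0(\overline{\bW}_{M,k})\ge b\,MN_{\rm va}$ the effective rank dominates the sample size, so Lemma~\ref{lm:concentrate_sample_cov} gives, with probability at least $1-2e^{-MN_{\rm va}/c}$, that both $\mu_n(\bar{\bA}_k)$ and $\mu_1(\bar{\bA}_k)$ lie within constant factors of $\mathrm{Tr}(\overline{\bW}_{M,k})$. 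Finally I would identify $\mu_1(\bW_k)r_0(\bW_k)=\sum_{i>k}\mu_i(\bW)$ and use that $\overline{\bW}_{M,k}$, being the empirical average of the per-task tail weights expressed in the $\bV_W$ basis, concentrates to $\bLam_{W,k:d}$ up to the cross-task heterogeneity of Assumption~\ref{assmp:V}; this converts $\mathrm{Tr}(\overline{\bW}_{M,k})$ into $\Theta\!\left(\sum_{i>k}\mu_i(\bW)\right)=\Theta\!\left(\mu_1(\bW_k)r_0(\bW_k)\right)$ and closes the first two chains.

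For the $\bP$-versions I would repeat the perturbation step once more, bounding $\|\bar{\bA}-\bar{\bA}_{\rm P}\|$ (the only difference being the replacement of $\bV_m^{\top}\bV_W$ by the aligned block $\bP_m$) by a further constant $c_0$ via Weyl's inequality, which accounts for the $2c_0$ offsets in the last two chains; the tail matrix $\bar{\bA}_{{\rm P},k}$ is then controlled by exactly the same concentration bound applied to $\overline{\bW}_{{\rm P},M,k}$. The main obstacle I anticipate is this concentration step in the multi-task setting: unlike the single-distribution case in \citep{Bartlett_benign_linear}, the Gram matrix $\bA$ has nontrivial off-diagonal blocks $\bZ_{m_1}^{\rm va}\tilde{\bLam}_{m_1}\bV_{m_1}^{\top}\bV_{m_2}\tilde{\bLam}_{m_2}\bZ_{m_2}^{{\rm va}\top}$ that couple distinct tasks through $\bV_{m_1}^{\top}\bV_{m_2}\ne\bI$, so I must verify that these cross terms neither inflate the top eigenvalue nor erode the bottom eigenvalue beyond the constant $c_0$. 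This is precisely where the boundedness of $\mathbb{V}$ in Assumption~\ref{assmp:V} is needed, guaranteeing that the averaged tail covariance $\overline{\bW}_{M,k}$ retains an effective rank comparable to that of $\bW_k$.
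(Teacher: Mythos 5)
Your proposal follows essentially the same route as the paper's own proof: the paper likewise sandwiches $\bA$ between $\bar{\bA}\pm c_0\bI$ and $\bar{\bA}_{\rm P}\pm 2c_0\bI$ using $\|\tilde{\bX}-\bar{\bX}\|$ from Lemma~\ref{lm:singularvalues_diff_tilde_bar}, $\|\bar{\bX}\|$ from Lemma~\ref{lm:bound_Q_hat_norm}, and the bounded-heterogeneity control of $\|\bP_m^{\top}-\bV_W^{\top}\bV_m\|$, then obtains two-sided bounds on the tail Gram matrix $\bar{\bA}_k$ via the sample-covariance concentration of Lemma~\ref{lm:concentrate_sample_cov} in the $\bV_W$ basis, identifying $\mathrm{Tr}(\bW_k)=\mu_1(\bW_k)r_0(\bW_k)$ exactly as you do. Your Weyl-inequality phrasing of the perturbation step and the PSD-splitting argument for $\bar{\bA}\succeq\bar{\bA}_k$ are equivalent to the paper's semidefinite-ordering chain, so the proposal is correct and matches the paper's argument.
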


\begin{proof}

First $\bA$ can be written as
\begin{align*}
  \bA = \bar{\bA} + \tilde{\mathbf{X}} \tilde{\mathbf{X}}^{\top} - \bar{\mathbf{X}} \bar{\mathbf{X}}^{\top}
  = \bar{\mathbf{A}}_{\rm P} +
  \bar{\mathbf{X}} \bar{\mathbf{X}}^{\top} - \bar{\mathbf{X}}_{\rm P} \bar{\mathbf{X}}_{\rm P}^{\top} +
  \tilde{\mathbf{X}} \tilde{\mathbf{X}}^{\top} - \bar{\mathbf{X}} \bar{\mathbf{X}}^{\top}.
\end{align*}
Therefore 
\begin{align*}
\bar{\bA}_{\rm P} - 2c_0 \mathbf{I} \preceq 
\bar{\bA} - c_0 \mathbf{I} \preceq 
\bA \preceq  \bar{\bA} + c_0 \mathbf{I} 
\preceq \bar{\bA}_{\rm P} + 2c_0 \mathbf{I}
\end{align*}
and
 $ c_0 = \max\{\|\tilde{\mathbf{X}} \tilde{\mathbf{X}}^{\top} - \bar{\mathbf{X}} \bar{\mathbf{X}}^{\top} \| 
  , \|\bar{\mathbf{X}} \bar{\mathbf{X}}^{\top} - \bar{\mathbf{X}}_{\rm P} \bar{\mathbf{X}}_{\rm P}^{\top}\|\} $,
where $\|\tilde{\mathbf{X}} \tilde{\mathbf{X}}^{\top} - \bar{\mathbf{X}} \bar{\mathbf{X}}^{\top} \|$
can be bounded by 
\begin{align*}
  \|\tilde{\mathbf{X}} \tilde{\mathbf{X}}^{\top} - \bar{\mathbf{X}} \bar{\mathbf{X}}^{\top} \|
  \leq &
  \|(\tilde{\mathbf{X}} + \bar{\mathbf{X}}) 
  (\tilde{\mathbf{X}} - \bar{\mathbf{X}})^{\top} \|
  \leq 
  \| \tilde{\mathbf{X}}- \bar{\mathbf{X}} \|
    \big(2 \| \bar{\mathbf{X}}\| + \| \tilde{\mathbf{X}} - \bar{\mathbf{X}} \|\big).
 \end{align*}
where $\|\tilde{\mathbf{X}}- \bar{\mathbf{X}}\|$
  is bounded by Lemma~\ref{lm:singularvalues_diff_tilde_bar}
  and $\| \bar{\mathbf{X}}\|$ is bounded by Lemma~\ref{lm:bound_Q_hat_norm}.

For sufficiently small $|\alpha|$ and $\gamma^{-1}$, and $c_1 > 1$, we can control
\begin{align}
  \big\| \bar{\mathbf{X}}^{\top} \bar{\mathbf{X}} -\tilde{\mathbf{X}}^{\top} \tilde{\mathbf{X}}  \big \|
  \leq \frac{1}{c_1}\mu_{1}(\bW_k) r_0(\bW_k)
\end{align}

Furthermore, by the bounded task heterogeneity assumption, 
$\|\bar{\mathbf{X}} \bar{\mathbf{X}}^{\top} - \bar{\mathbf{X}}_{\rm P} \bar{\mathbf{X}}_{\rm P}^{\top}\|$ can be bounded as
\begin{align*}
  \|\bar{\mathbf{X}} \bar{\mathbf{X}}^{\top} - \bar{\mathbf{X}}_{\rm P} \bar{\mathbf{X}}_{\rm P}^{\top}\| 
  = & \| (\bar{\mathbf{X}} + \bar{\mathbf{X}}_{\rm P}) ^{\top}
  (\bar{\mathbf{X}} - \bar{\mathbf{X}}_{\rm P}) 
  \| \\
  \leq &
  \sum_{m=1}^{M}
  \|\mathbf{I}
  +\bP_m^{\top} \bV_m^{\top} \bV_W  \|
  \| \bar{\bX}_{m} ^{\top} \bar{\bX}_{m} \|
  \|\mathbf{I}
  -\bV_W^{\top} \bV_m\bP_m \| \\
  \leq & 2 M \max_{m} \|\bP_m^{\top} 
  - \bV_W^{\top} \bV_m \|
  \| \bar{\bX}_{m} ^{\top} \bar{\bX}_{m} \|
  \leq \frac{1}{c_1} \mu_{1}(\bW_k) r_0(\bW_k) .
\end{align*}
Then we have there exists $c_1 > 1$ that 
\begin{align}\label{eq:c_0_bound}
  c_0 \leq \frac{1}{c_1} \mu_{1}(\bW_k) r_0(\bW_k).
\end{align}

Next we bound $\|\bar{\bA} \|$ and $\|\bar{\bA}_{\rm P} \|$. 
For $\|\bar{\bA} \|$ we have
\begin{align*}
  & \|\bar{\bA} \|
  = \|\bar{\mathbf{X}}\bV_{W}\bV_{W}^{\top} \bar{\mathbf{X}}^{\top} \| 
  = \|\bV_{W}^{\top} \bar{\mathbf{X}}^{\top} \bar{\mathbf{X}}\bV_{W}\| \\
  \leq &
   MN_{\rm va} \Big\|\bLam_{W}\Big\| 
  + \Big\| M N_{\rm va} \bLam_{W}
  - \bV_{W}^{\top} \bar{\mathbf{X}}^{\top} \bar{\mathbf{X}}\bV_{W} \Big\| \\
  \leq &
  M N_{\rm va} \mu_1(\bLam_{W})
  + \Big\| M N_{\rm va} \bLam_{W}
  - \bV_{W}^{\top} \bar{\mathbf{X}}^{\top} \bar{\mathbf{X}}\bV_{W} \Big\| .
\end{align*}
From Lemma~\ref{lm:bound_Q_hat_norm}, we have there exists a constant $c$ that with probability at least $1 - e^{-t}$
\begingroup\allowdisplaybreaks
\begin{align*}
  & \|\bar{\bA} \|
  \leq c  M N_{\rm va} \mu_1(\bLam_{W})
  +
  c MN_{\rm va} \mu_1(\bLam_{W}) c_{r_0}({r}_0(\bLam_{W}), N, t) , \\
  &  \|\bar{\bA} \|
  \geq c  M N_{\rm va} \mu_1(\bLam_{W})
  -
  c MN_{\rm va} \mu_1(\bLam_{W}) c_{r_0}({r}_0(\bLam_{W}), N, t) .
\end{align*}
\endgroup
Similarly,  because 
\begin{align*}
  \bW_k =
\bV_{W, k:d}^{\top} \bW \bV_{W, k:d} ,
~~\mathrm{Tr}(\bW_k) = \mathrm{Tr}(\bLam_{W,k:d}),
~~\mu_1(\bW_k) = \mu_1(\bLam_{W,k:d}) = \mu_{k+1}(\bW).
\end{align*}
If $r_k(\bW) \geq b MN_{\rm va}$,
then 
there exists a constant $c$ that depends on $\sigma_x$ such that
with probability at least $1 - 2e^{-MN_{\rm va}/c}$
\begin{align*}
& \|\bar{\bA}_{k} \| 
\geq \frac{1}{c} \mu_{k+1}(\bW) r_k(\bW),
~~\|\bar{\bA}_{k} \|  \leq c  \mu_{k+1}(\bW) r_k(\bW) .
\end{align*}

\end{proof}


\section{Auxiliary Lemmas} 
\label{sec:supporting_lemmas}
\subsection{Algebraic properties} 
\label{sub:basic_prop}

\begin{lemma}
  \label{lm:zzt_A}
  (Lemma~20 in \citep{Bartlett_benign_linear})
  Suppose $k<n, \bA \in \mathbb{R}^{n \times n}$ is an invertible matrix, and $\bZ \in \mathbb{R}^{n \times k}$ is such that $\bZ \bZ^{\top}+ \bA$ is invertible. Then
  \begin{align}
  \bZ^{\top}(\bZ \bZ^{\top}+ \bA)^{-2} 
  \bZ=( \bI+ \bZ^{\top} \bA^{-1} \bZ)^{-1} \bZ^{\top} \bA^{-2} \bZ(\bI+\bZ^{\top} \bA^{-1} \bZ)^{-1} .
  \end{align}
    
\end{lemma}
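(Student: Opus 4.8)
The plan is to recognize this identity as a direct consequence of the Sherman--Morrison--Woodbury formula together with two one-sided ``push-through'' identities. Write $\mathbf{B} \coloneqq \bZ\bZ^{\top} + \bA$ for the $n\times n$ matrix being inverted, and introduce the $k\times k$ \emph{capacitance matrix} $\mathbf{M} \coloneqq \bI + \bZ^{\top}\bA^{-1}\bZ$. The target is $\bZ^{\top}\mathbf{B}^{-2}\bZ$, and the strategy is to factor $\mathbf{B}^{-2} = \mathbf{B}^{-1}\mathbf{B}^{-1}$ and peel off one factor of $\bZ$ against each copy of $\mathbf{B}^{-1}$, thereby converting the large $n\times n$ inverse $\mathbf{B}^{-1}$ into the smaller inverse $\mathbf{M}^{-1}$ sandwiched around $\bA^{-2}$.

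First I would apply the Woodbury identity (valid for any invertible $\bA$, with no symmetry assumed) to obtain $\mathbf{B}^{-1} = \bA^{-1} - \bA^{-1}\bZ\,\mathbf{M}^{-1}\,\bZ^{\top}\bA^{-1}$; the hypothesis that $\bZ\bZ^{\top}+\bA$ is invertible guarantees that $\mathbf{M}$ is invertible. Next I would derive the right push-through identity by right-multiplying this expression by $\bZ$ and substituting the algebraic relation $\bZ^{\top}\bA^{-1}\bZ = \mathbf{M} - \bI$:
\begin{align*}
\mathbf{B}^{-1}\bZ
= \bA^{-1}\bZ - \bA^{-1}\bZ\,\mathbf{M}^{-1}(\mathbf{M}-\bI)
= \bA^{-1}\bZ\,\mathbf{M}^{-1}.
\end{align*}
Symmetrically, left-multiplying the Woodbury expansion by $\bZ^{\top}$ and using the same relation yields the left push-through identity $\bZ^{\top}\mathbf{B}^{-1} = \mathbf{M}^{-1}\bZ^{\top}\bA^{-1}$.

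Finally I would compose the two identities: since $\bZ^{\top}\mathbf{B}^{-2}\bZ = (\bZ^{\top}\mathbf{B}^{-1})(\mathbf{B}^{-1}\bZ)$, substituting the two push-through expressions gives $\mathbf{M}^{-1}\bZ^{\top}\bA^{-1}\bA^{-1}\bZ\,\mathbf{M}^{-1} = \mathbf{M}^{-1}\bZ^{\top}\bA^{-2}\bZ\,\mathbf{M}^{-1}$, which is exactly the claimed right-hand side once $\mathbf{M}$ is written out as $\bI + \bZ^{\top}\bA^{-1}\bZ$. The only point requiring a little care---and the natural place such an argument could go wrong---is that $\bA$ is not assumed symmetric, so I would deliberately avoid obtaining the left identity by transposing the right one (which would spuriously introduce $\bA^{\top}$); instead I derive both one-sided identities directly from the same Woodbury expansion, which is what makes $\bA^{-2}$ rather than $\bA^{-1}\bA^{-\top}$ appear in the conclusion. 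Everything else is routine bookkeeping.
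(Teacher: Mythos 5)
Your proof is correct: the Woodbury expansion, the two push-through identities $\mathbf{B}^{-1}\bZ = \bA^{-1}\bZ\,\mathbf{M}^{-1}$ and $\bZ^{\top}\mathbf{B}^{-1} = \mathbf{M}^{-1}\bZ^{\top}\bA^{-1}$, and their composition give exactly the claimed identity, and your care in deriving both one-sided identities directly (rather than by transposition, which would introduce $\bA^{\top}$) is precisely the right point to flag since $\bA$ is not assumed symmetric. The paper itself imports this statement as Lemma~20 of \citep{Bartlett_benign_linear} without reproving it, and the original proof there proceeds by the same Sherman--Morrison--Woodbury argument, so your approach matches the standard one.
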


\begin{lemma}[Weyl's inequality~\citep{weyl1912asymptotische}]
\label{lm:weyl}
  Let $\mathbf{B}=\mathbf{A}+{\bE}, \bA, \bE$ be  $n \times n$ Hermitian matrices. Let $\mu_{i}(\cdot)$ denote the $i$-th largest eigenvalues of a matrix. Then, we have
\begin{align*}
  \mu_{i}(\mathbf{A}) + \mu_n(\bE) \leq \mu_{i}(\mathbf{B}) \leq \mu_{i}(\mathbf{A}) + \mu_1(\bE), \quad \forall i \in[n] .
\end{align*}

\end{lemma}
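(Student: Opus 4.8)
The plan is to prove Weyl's inequality through the Courant--Fischer variational (min--max) characterization of the eigenvalues of a Hermitian matrix, which converts the spectral statement into one about Rayleigh quotients that interacts cleanly with matrix addition. Recall that for an $n \times n$ Hermitian matrix $\bA$ with eigenvalues ordered as $\mu_1(\bA) \geq \cdots \geq \mu_n(\bA)$, one has
\begin{align*}
  \mu_i(\bA) = \max_{\substack{S \subseteq \mathbb{C}^n \\ \dim S = i}} \ \min_{\substack{x \in S \\ \|x\| = 1}} x^* \bA x,
\end{align*}
where the maximum ranges over all $i$-dimensional subspaces $S$. First I would either establish or simply cite this identity, since it is the only structural ingredient of the proof.

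The key elementary fact I would use is that for the Hermitian perturbation $\bE$, every Rayleigh quotient is pinched between its extreme eigenvalues, namely $\mu_n(\bE) \leq x^* \bE x \leq \mu_1(\bE)$ for every unit vector $x$. Combining this with $\bB = \bA + \bE$ gives, for each unit vector $x$, the pointwise bound $x^* \bB x \leq x^* \bA x + \mu_1(\bE)$. To obtain the upper bound, I would fix $i$ and insert this pointwise inequality into the min--max formula: for any $i$-dimensional subspace $S$,
\begin{align*}
  \min_{\substack{x \in S \\ \|x\|=1}} x^* \bB x \leq \min_{\substack{x \in S \\ \|x\|=1}} x^* \bA x + \mu_1(\bE),
\end{align*}
where the additive constant can be pulled out of the inner minimization because it does not depend on $x$; taking the maximum over all such $S$ then yields $\mu_i(\bB) \leq \mu_i(\bA) + \mu_1(\bE)$.

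For the lower bound I would avoid repeating the argument by exploiting symmetry: writing $\bA = \bB + (-\bE)$ and applying the just-proven upper bound gives
\begin{align*}
  \mu_i(\bA) \leq \mu_i(\bB) + \mu_1(-\bE) = \mu_i(\bB) - \mu_n(\bE),
\end{align*}
since $\mu_1(-\bE) = -\mu_n(\bE)$ for a Hermitian matrix; rearranging delivers $\mu_i(\bA) + \mu_n(\bE) \leq \mu_i(\bB)$, completing the two-sided bound.

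The argument has no substantive obstacle; the only points requiring care are procedural. The main thing to get right is the direction of the variational formula: one must verify that passing to the maximum over subspaces preserves the inequality between the inner minima (by monotonicity of both $\max$ and $\min$), rather than inadvertently swapping the order of the extrema. The negation trick for the lower bound is the only mildly nonobvious step, and it rests entirely on the Hermitian identity $\mu_1(-\bE) = -\mu_n(\bE)$, so I would flag that identity explicitly to keep the symmetry argument self-contained.
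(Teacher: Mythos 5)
Your proof is correct. The paper does not prove this lemma at all---it is imported as a classical result with only a citation to Weyl (1912)---so there is no in-paper argument to compare against; your Courant--Fischer min--max argument is the standard proof, with the pointwise pinching of Rayleigh quotients $\mu_n(\bE) \leq x^{*}\bE x \leq \mu_1(\bE)$, the monotonicity of the nested $\max$--$\min$, and the negation identity $\mu_1(-\bE) = -\mu_n(\bE)$ all applied correctly.
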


\begin{lemma}[Von Neumann's trace inequality~\citep{Mirsky1975}]
  \label{lm:von_trace_ineq}
  If $\bA,\bB \in \mathbb{R}^{n\times n}$.
  Let $\sigma_{i}(\cdot)$ denote the $i$-th largest singular values of a matrix.
  $\sigma_1(\bA) \geq \cdots \geq \sigma_n(\bA) , ~~
  \sigma_1(\bB) \geq \cdots \geq \sigma_n(\bB) $ respectively, then
  \begin{align}
    &|\mathrm{Tr}(\bA \bB)|
    \leq \sum_{i=1}^n \sigma_i(\bA) \sigma_i(\bB)
    \leq \sigma_1(\bB) \sum_{i=1}^n \sigma_i(\bA)  .
  \end{align}
  
\end{lemma}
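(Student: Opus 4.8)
The plan is to prove the two inequalities in turn, with the second being elementary and the first being the substantive content (the classical Von Neumann trace inequality). For the second inequality I would simply note that $\sigma_i(\bB) \le \sigma_1(\bB)$ for every $i \in [n]$ while $\sigma_i(\bA) \ge 0$, so a term-by-term comparison immediately gives $\sum_{i=1}^n \sigma_i(\bA)\sigma_i(\bB) \le \sigma_1(\bB)\sum_{i=1}^n \sigma_i(\bA)$. The real work is the first inequality, which I would attack through the singular value decompositions of $\bA$ and $\bB$.

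First I would write $\bA = \sum_{i=1}^n \sigma_i(\bA)\,\bu_i\bv_i^\top$ and $\bB = \sum_{j=1}^n \sigma_j(\bB)\,\mathbf{p}_j\mathbf{q}_j^\top$, where $\{\bu_i\},\{\bv_i\},\{\mathbf{p}_j\},\{\mathbf{q}_j\}$ are orthonormal systems. Using linearity and the cyclic property of the trace, this gives
\begin{align*}
\mathrm{Tr}(\bA\bB) = \sum_{i,j} \sigma_i(\bA)\,\sigma_j(\bB)\,M_{ij}, \qquad M_{ij} \coloneqq (\bv_i^\top \mathbf{p}_j)(\mathbf{q}_j^\top \bu_i),
\end{align*}
so the problem reduces to bounding a bilinear form in the sorted singular values with coefficient matrix $M$. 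Passing to absolute values, $|\mathrm{Tr}(\bA\bB)| \le \sum_{i,j}\sigma_i(\bA)\sigma_j(\bB)\,|M_{ij}|$, and it remains to control the nonnegative matrix $(|M_{ij}|)$.

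The key step is to show $(|M_{ij}|)$ is doubly substochastic. I would bound the row sums by Cauchy--Schwarz,
\begin{align*}
\sum_j |M_{ij}| \le \Big(\sum_j (\bv_i^\top \mathbf{p}_j)^2\Big)^{1/2}\Big(\sum_j (\mathbf{q}_j^\top \bu_i)^2\Big)^{1/2} \le \|\bv_i\|\,\|\bu_i\| = 1,
\end{align*}
where the middle inequality is Bessel's inequality for the orthonormal systems $\{\mathbf{p}_j\}$ and $\{\mathbf{q}_j\}$, and the column sums are bounded by $1$ by the symmetric argument. I would then embed $(|M_{ij}|)$ in a doubly stochastic matrix and invoke the Birkhoff--von Neumann theorem to write that matrix as a convex combination of permutation matrices; since $\{\sigma_i(\bA)\}$ and $\{\sigma_j(\bB)\}$ are both arranged in decreasing order, the rearrangement inequality yields $\sum_{i,j}\sigma_i(\bA)\sigma_j(\bB)P_{ij} \le \sum_i \sigma_i(\bA)\sigma_i(\bB)$ for every permutation matrix $P$, hence for the whole convex combination, which gives the claim. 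The main obstacle is precisely this last passage: reducing the maximization of the bilinear form over doubly substochastic matrices to permutation matrices via Birkhoff and the rearrangement inequality, including the minor technical point of extending a substochastic matrix to a stochastic one without decreasing the (nonnegative) objective.
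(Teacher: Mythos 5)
Your proposal is correct, but there is nothing in the paper to compare it against: the paper does not prove this lemma at all, it simply imports it as a classical fact with a citation to Mirsky (1975). Your argument is the standard elementary proof of von Neumann's trace inequality, and each step checks out: the reduction via the two SVDs to the bilinear form $\sum_{i,j}\sigma_i(\bA)\sigma_j(\bB)M_{ij}$ is exact (by cyclicity, $\mathrm{Tr}(\bu_i\bv_i^\top \mathbf{p}_j\mathbf{q}_j^\top)=(\bv_i^\top\mathbf{p}_j)(\mathbf{q}_j^\top\bu_i)$); the Cauchy--Schwarz/Bessel bound does give row and column sums of $(|M_{ij}|)$ at most $1$ (in fact Parseval gives equality in Bessel here, since the singular vector systems are full orthonormal bases of $\mathbb{R}^n$); and the final passage is sound because the ``minor technical point'' you flag --- that every doubly substochastic matrix is dominated entrywise by a doubly stochastic one --- is a standard lemma (the row-sum and column-sum deficits have equal totals, so mass can be added until both are saturated), after which Birkhoff--von Neumann plus the rearrangement inequality for two decreasingly sorted nonnegative sequences finishes the bound. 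The trivial second inequality is handled correctly as well. So your write-up supplies a self-contained proof of exactly what the paper delegates to the literature.
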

\subsection{Concentration inequalities} 
\label{sub:concentrate_ineq}

\begin{lemma}
  \label{lm:seq_subGaussian}
  (Corollary~23 in \citep{Bartlett_benign_linear})
  There is a universal constant c such that for any non-increasing sequence $\{\lambda_{i}\}_{i=1}^{\infty}$ of non-negative numbers such that $\sum_{i=1}^{\infty} \lambda_{i}<\infty$, and any independent, centered, $\sigma$-subexponential random variables $\{\xi_{i}\}_{i=1}^{\infty}$, and any $t>0$, with probability at least $1-2 e^{-t}$
  \begin{align*}
    \Big|\sum_{i=1}^{\infty} \lambda_{i} \xi_{i}\Big| 
    \leq 
    c \sigma \max \Bigg\{t \lambda_{1}, \sqrt{t \sum_{i=1}^{\infty} \lambda_{i}^{2}}\Bigg\}.
  \end{align*}
\end{lemma}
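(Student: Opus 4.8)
The plan is to prove this as a Bernstein-type tail bound via the moment-generating-function (Chernoff) method, viewing the weighted sum $S := \sum_i \lambda_i \xi_i$ as a sum of independent centered subexponential terms with variance proxy $\sum_i \lambda_i^2$ and scale parameter $\lambda_1$. First I would reduce to finite partial sums $S_n := \sum_{i=1}^n \lambda_i \xi_i$ and prove the bound uniformly in $n$. Since $\{\lambda_i\}$ is non-increasing and summable, $\sum_i \lambda_i^2 \le \lambda_1 \sum_i \lambda_i < \infty$, so by Kolmogorov's theorem the series $S_n$ converges to $S$ almost surely; the tail bound for $S_n$ then passes to the limit (e.g.\ via Fatou's lemma applied to the one-sided events, using that the bound is uniform in $n$).

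For a fixed $n$, I would use that a centered $\sigma$-subexponential variable $\xi_i$ obeys an MGF bound of the form $\mathbb{E}[e^{s\xi_i}] \le \exp(C s^2 \sigma^2)$ whenever $|s| \le 1/(K\sigma)$, for absolute constants $C,K$. By independence, $\mathbb{E}[e^{s S_n}] = \prod_{i=1}^n \mathbb{E}[e^{s\lambda_i \xi_i}] \le \exp\big(C s^2 \sigma^2 \sum_{i} \lambda_i^2\big)$, valid as long as $|s|\lambda_i \le 1/(K\sigma)$ for every $i$; because $\lambda_1 = \max_i \lambda_i$, this collapses to the single constraint $|s| \le 1/(K\sigma\lambda_1)$. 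The Chernoff bound then yields, for any $u>0$ and admissible $s$,
\[
\Pr(S_n > u) \le \exp\Big(-su + C s^2 \sigma^2 \textstyle\sum_i \lambda_i^2\Big).
\]

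The core step is the optimization over $s$, which produces the two regimes in the statement. The unconstrained minimizer is $s^\star = u/(2C\sigma^2 \sum_i \lambda_i^2)$; if $s^\star \le 1/(K\sigma\lambda_1)$ then substituting gives a sub-Gaussian tail $\exp\big(-u^2/(4C\sigma^2\sum_i\lambda_i^2)\big)$, while if the constraint is active I instead take $s = 1/(K\sigma\lambda_1)$ and obtain a sub-exponential tail $\exp\big(-c' u/(\sigma\lambda_1)\big)$. Choosing the threshold $u = c\,\sigma \max\{t\lambda_1,\ \sqrt{t \sum_i \lambda_i^2}\}$ makes both tail expressions at most $e^{-t}$ (the $\sqrt{\cdot}$ term dominating in the Gaussian regime, the $\lambda_1$ term in the exponential regime). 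Running the identical argument on $-S_n$ and taking a union bound over the two one-sided events gives probability at least $1 - 2e^{-t}$, and sending $n\to\infty$ finishes the proof.

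I expect the main obstacle to be bookkeeping rather than conceptual: calibrating the absolute constants and the regime split so that the single threshold $u = c\sigma\max\{t\lambda_1, \sqrt{t\sum_i\lambda_i^2}\}$ simultaneously dominates both the Gaussian and exponential tails, and rigorously justifying the interchange of the limit $n\to\infty$ with the tail probability. The latter is routine once almost-sure convergence of the series is secured through the finiteness of $\sum_i \lambda_i^2$.
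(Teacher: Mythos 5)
Your proof is correct: the reduction to finite partial sums, the MGF bound for centered subexponential variables valid for $|s|\leq 1/(K\sigma\lambda_1)$, the two-regime Chernoff optimization, and the limiting argument via almost-sure convergence of the series all go through, and the calibration $u = c\,\sigma\max\{t\lambda_1,\sqrt{t\sum_i\lambda_i^2}\}$ does make both tails at most $e^{-t}$. Note that the paper itself gives no proof of this statement — it is imported verbatim as Corollary~23 of Bartlett et al. — and your Bernstein-type argument is essentially the standard derivation by which that cited result is established, so there is no substantive difference in approach to report.
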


\begin{lemma}
\label{lm:z_piz_norm_bound}
(Corollary~24 in \citep{Bartlett_benign_linear}) Suppose $\mathbf{z} \in \mathbb{R}^{n}$ is a centered random vector with independent $\sigma^2$-subGaussian entries with unit variances, $\mathscr{L}$ is a random subspace of $\mathbb{R}^{n}$ of codimension $k$, and $\mathscr{L}$ is independent of $\mathbf{z}$. Then for some constant $a$ and any $t>0$, with probability at least $1-3 e^{-t}$,
\begin{align*}
 \|\mathbf{z}\|^{2} \leq n+a \sigma^{2}(t+\sqrt{n t}), ~~~~~~~\|\Pi_{\mathscr{L}} \mathbf{z} \|^{2} \geq n-a \sigma^{2}(k+t+\sqrt{n t})
\end{align*}
where $\Pi_{\mathscr{L}}$ is the orthogonal projection on $\mathscr{L}$.
\end{lemma}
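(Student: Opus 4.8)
The plan is to prove the two inequalities separately and then combine them through the Pythagorean identity $\|\Pi_{\mathscr{L}}\mathbf{z}\|^{2} = \|\mathbf{z}\|^{2} - \|\Pi_{\mathscr{L}^{\perp}}\mathbf{z}\|^{2}$, where the complementary subspace $\mathscr{L}^{\perp}$ has dimension $k$. The first claim is a one-sided concentration bound for $\|\mathbf{z}\|^{2}=\sum_{i=1}^{n} z_{i}^{2}$; the second follows by lower-bounding $\|\mathbf{z}\|^{2}$ and upper-bounding the complementary projection $\|\Pi_{\mathscr{L}^{\perp}}\mathbf{z}\|^{2}$, followed by a union bound.

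For the norm bound I would invoke Lemma~\ref{lm:seq_subGaussian} directly. Each coordinate $z_{i}$ is centered, $\sigma^{2}$-subGaussian with unit variance, so $\xi_{i} \coloneqq z_{i}^{2}-1$ is centered and $\mathcal{O}(\sigma^{2})$-subexponential. Applying Lemma~\ref{lm:seq_subGaussian} with the constant sequence $\lambda_{i}=1$ for $i\le n$ (hence $\lambda_{1}=1$ and $\sum_{i}\lambda_{i}^{2}=n$) gives, with probability at least $1-2e^{-t}$,
\[
\bigl|\,\|\mathbf{z}\|^{2}-n\,\bigr| = \Bigl|\sum_{i=1}^{n}\xi_{i}\Bigr| \le c\sigma^{2}\max\{t,\sqrt{nt}\} \le a\sigma^{2}(t+\sqrt{nt}).
\]
The upper tail is exactly the first stated inequality, and the lower tail $\|\mathbf{z}\|^{2}\ge n - a\sigma^{2}(t+\sqrt{nt})$ is reused below.

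For the projection I would condition on the independent subspace $\mathscr{L}$ and work with the fixed rank-$k$ orthogonal projector $\Pi\coloneqq\Pi_{\mathscr{L}^{\perp}}$. Then $\|\Pi\mathbf{z}\|^{2}=\mathbf{z}^{\top}\Pi\mathbf{z}$ is a quadratic form with $\mathbb{E}[\mathbf{z}^{\top}\Pi\mathbf{z}]=\mathrm{Tr}(\Pi)=k$ (using $\mathbb{E}[\mathbf{z}\mathbf{z}^{\top}]=\mathbf{I}$), $\|\Pi\|_{\rm F}^{2}=k$, and $\|\Pi\|=1$. The Hanson--Wright inequality then yields, with probability at least $1-2e^{-t}$, $\mathbf{z}^{\top}\Pi\mathbf{z}\le k + c\sigma^{2}\max\{\sqrt{kt},t\}$. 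Since unit-variance subGaussian entries force $\sigma^{2}$ to be bounded below by a positive constant, the deterministic shift $k$ together with the fluctuation $c\sigma^{2}(\sqrt{kt}+t)$ are absorbed into a single term $a\sigma^{2}(k+t)$ (using $\sqrt{kt}\le k+t$). As this bound depends on $\mathscr{L}$ only through its codimension $k$, it holds unconditionally by averaging over $\mathscr{L}$; combining it with the lower tail on $\|\mathbf{z}\|^{2}$ via the Pythagorean identity and a union bound over the two $e^{-t}$-events gives $\|\Pi_{\mathscr{L}}\mathbf{z}\|^{2}\ge n - a\sigma^{2}(k+t+\sqrt{nt})$ with probability at least $1-3e^{-t}$.

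I expect the only genuine subtlety to be the treatment of the random subspace: one must exploit the independence of $\mathscr{L}$ and $\mathbf{z}$ to condition, verify that the Hanson--Wright estimate is uniform over the realized subspace (entering only through $k$, $\|\Pi\|_{\rm F}$, and $\|\Pi\|$), and then decondition without loss in the probability. The remainder — pinning down the subexponential parameters and absorbing the deterministic $k$ into the $\sigma^{2}$-scaled term — is routine and hinges only on $\sigma^{2}\gtrsim 1$, which is automatic for unit-variance subGaussian coordinates.
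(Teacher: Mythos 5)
Your proof is correct, but note that the paper itself never proves this statement: it is imported verbatim as Corollary~24 of \citep{Bartlett_benign_linear} and used as a black box in the auxiliary-lemma section. What you have done is reconstruct a self-contained proof, and your route is essentially the standard one behind the cited result: (i) the norm bound via subexponential concentration of $\sum_i (z_i^2-1)$, which you correctly route through the paper's Lemma~\ref{lm:seq_subGaussian} with weights $\lambda_i = 1$; (ii) the projection bound via the Pythagorean identity $\|\Pi_{\mathscr{L}}\mathbf{z}\|^2 = \|\mathbf{z}\|^2 - \|\Pi_{\mathscr{L}^\perp}\mathbf{z}\|^2$, conditioning on the independent subspace, and applying Hanson--Wright to the rank-$k$ projector $\Pi_{\mathscr{L}^\perp}$ (with $\mathrm{Tr}(\Pi)=\|\Pi\|_{\rm F}^2=k$, $\|\Pi\|=1$), followed by deconditioning, which is legitimate precisely because the conditional bound depends on $\mathscr{L}$ only through $k$. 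Your two nonobvious ingredients are also sound: the absorption of the deterministic shift $k$ into $a\sigma^2 k$ is valid because a unit-variance subGaussian variable necessarily has variance proxy bounded below by a constant, and the failure-probability count lands at $3e^{-t}$ provided you use the one-sided Hanson--Wright tail ($e^{-t}$) together with the two-sided bound ($2e^{-t}$) on $\|\mathbf{z}\|^2$ --- with the two-sided Hanson--Wright you would get $4e^{-t}$, which is the only piece of bookkeeping worth tightening. In short: no gap, but you proved something the paper chose to cite; your argument is a faithful reconstruction of the Bartlett et al.\ proof rather than an alternative to anything in this paper.
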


\begin{lemma}[Theorem 9 in~\citep{koltchinskii2017concentration}]
\label{lm:concentrate_sample_cov}
  Let $\bx, \bx_1, \ldots, \bx_n$ be i.i.d. weakly square integrable centered random vectors in   a separable Banach space with covariance $\mathbf{\Sigma}$ and sample covariance $\hat{\mathbf{\Sigma}}$. If $\mathbf{x}$ is subgaussian and pregaussian, 
  define $r(\mathbf{\Sigma}):={(\mathbb{E}[\|\mathbf{x}\|])^2}/{\|\mathbf{\Sigma}\|} $,
  then there exists a constant $c>0$ such that, for all $t \geq 1$, with probability at least $1-e^{-t}$
  \begin{align}
    \|\hat{\mathbf{\Sigma}}-\mathbf{\Sigma}\| \leq c\|\mathbf{\Sigma}\| \max \Big\{ \sqrt{\frac{{r}(\mathbf{\Sigma})}{n}}, \frac{{r}(\mathbf{\Sigma})}{n}, \sqrt{\frac{t}{n}}, \frac{t}{n} \Big\}.
  \end{align}
\end{lemma}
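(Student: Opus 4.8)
The plan is to recognize that the operator norm $\|\hat{\mathbf{\Sigma}}-\mathbf{\Sigma}\|$ is the supremum of a centered empirical process indexed by the unit sphere, and to control its mean and its fluctuations separately. First I would pass to the variational form
\[
\|\hat{\mathbf{\Sigma}}-\mathbf{\Sigma}\|=\sup_{\|\mathbf{v}\|=1}\left|\frac{1}{n}\sum_{i=1}^{n}(\mathbf{v}^{\top}\mathbf{x}_i)^2-\mathbb{E}\big[(\mathbf{v}^{\top}\mathbf{x})^2\big]\right|,
\]
which reduces everything to a single empirical process whose summands $(\mathbf{v}^{\top}\mathbf{x}_i)^2$ are squares of $\sigma$-subgaussian variables, hence subexponential with a scale proportional to $\|\mathbf{\Sigma}\|$.

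Next I would bound the expectation $\mathbb{E}\|\hat{\mathbf{\Sigma}}-\mathbf{\Sigma}\|$. A symmetrization step replaces the centered process by its Rademacher version $\mathbb{E}\sup_{\mathbf{v}}|n^{-1}\sum_i\varepsilon_i(\mathbf{v}^{\top}\mathbf{x}_i)^2|$, and this Rademacher complexity is then estimated by generic chaining. The key observation is that the natural mixed $\psi_1$/$\psi_2$ metric that the quadratic functionals induce on the index sphere is dictated by $\mathbf{\Sigma}$, so the chaining integral is governed by the $\gamma_2$ and $\gamma_1$ functionals of the ellipsoid $\{\mathbf{\Sigma}^{1/2}\mathbf{v}:\|\mathbf{v}\|=1\}$. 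These functionals scale like $\sqrt{r(\mathbf{\Sigma})}\,\|\mathbf{\Sigma}\|$ and $r(\mathbf{\Sigma})\,\|\mathbf{\Sigma}\|$, where $r(\mathbf{\Sigma})=(\mathbb{E}\|\mathbf{x}\|)^2/\|\mathbf{\Sigma}\|\asymp\mathrm{Tr}(\mathbf{\Sigma})/\|\mathbf{\Sigma}\|$ is the effective rank. Dividing by $\sqrt{n}$ and $n$ yields the dimension-free bound
\[
\mathbb{E}\|\hat{\mathbf{\Sigma}}-\mathbf{\Sigma}\|\lesssim\|\mathbf{\Sigma}\|\left(\sqrt{\frac{r(\mathbf{\Sigma})}{n}}+\frac{r(\mathbf{\Sigma})}{n}\right),
\]
which matches the first two terms of the claimed maximum.

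Finally, to upgrade from the mean to a high-probability statement I would invoke a Talagrand-type concentration inequality for suprema of empirical processes. Since the increments are subexponential rather than bounded, the appropriate tool is an Adamczak/Bousquet-style deviation bound for unbounded processes (equivalently, Bernstein's inequality run along the chaining tree), which produces a subgaussian fluctuation $\|\mathbf{\Sigma}\|\sqrt{t/n}$ together with a heavier subexponential tail $\|\mathbf{\Sigma}\|\,t/n$. Combining the expectation estimate with these two deviation terms and absorbing all numerical factors into a single constant $c$ gives the four-term maximum, completing the proof.

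The hardest step is the chaining estimate of the Rademacher complexity: obtaining the dependence on the effective rank $r(\mathbf{\Sigma})$ rather than on the ambient dimension $d$ is exactly what separates this bound from the naive matrix-Bernstein estimate, and it requires the majorizing-measure machinery applied to the $\mathbf{\Sigma}$-ellipsoid together with a careful separation of the subgaussian ($\gamma_2$) and subexponential ($\gamma_1$) regimes that govern the two terms.
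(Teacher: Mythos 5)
This statement is not proved in the paper at all: Lemma~\ref{lm:concentrate_sample_cov} is imported verbatim, with citation, as Theorem~9 of Koltchinskii and Lounici (2017) and is used as a black box in the appendix (e.g., in Lemmas~\ref{lm:bound_B}, \ref{lm:C2_in_C1}, \ref{lm:eigenvalues_diff_tilde_bar}, and \ref{lm:bound_Q_hat_norm}). So there is no in-paper proof to compare your argument against; the only meaningful comparison is with the proof in the cited reference, and in that light your sketch should be judged as a reconstruction of a known external result rather than as a missing piece of this paper.

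On its own merits, your outline has the right ingredients for the expectation bound: reduction to a quadratic empirical process, symmetrization, and generic chaining over the $\mathbf{\Sigma}$-ellipsoid with mixed $\gamma_2$/$\gamma_1$ functionals, which is indeed how the effective rank $r(\mathbf{\Sigma})$ (rather than the ambient dimension) enters. Two caveats, one minor and one substantive. Minor: the variational formula $\sup_{\|\mathbf{v}\|=1}|\frac{1}{n}\sum_i(\mathbf{v}^{\top}\mathbf{x}_i)^2-\mathbb{E}(\mathbf{v}^{\top}\mathbf{x})^2|$ presumes a Hilbert-space (or finite-dimensional) structure, whereas the statement is for a separable Banach space; this changes bookkeeping but not the substance. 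Substantive: the final concentration step would fail as literally written. Talagrand/Bousquet concentration requires uniformly bounded summands, and the unbounded (Adamczak-type) version carries an envelope term of order $\|\mathbf{\Sigma}\|\,(r(\mathbf{\Sigma})+\log n)\,t/n$, coming from the $\psi_1$-norm of $\max_{i\le n}\|\mathbf{x}_i\|^2$, which is \emph{not} dominated by $\max\{\sqrt{r/n},\,r/n,\,\sqrt{t/n},\,t/n\}$ (take $r\asymp t\asymp\sqrt{n}$: the envelope term is of order $1$ while the claimed bound is $n^{-1/4}$). The clean repair is exactly the alternative you mention in passing: run Bernstein's inequality along the chaining tree (Dirksen-style generic-chaining tail bounds), which yields the four-term bound directly as a tail estimate, without passing through concentration around the mean. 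For the Gaussian core case, Koltchinskii--Lounici themselves obtain the deviation terms from Gaussian concentration of (locally) Lipschitz functions, a route that has no analogue in the purely subgaussian setting; so if you intend your sketch to cover the subgaussian statement quoted here, the chaining-with-Bernstein-increments route is the one that must carry the argument.
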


  

\subsection{Other supporting lemmas} 
\label{sub:other_lemma}

\begin{lemma}[Bound of $\|\tilde{\bX}^{\top}\tilde{\bX} - \bar{\bX}^{\top}\bar{\bX}\|$]
\label{lm:eigenvalues_diff_tilde_bar}
  Recall that $\bar{\mathbf{X}}^{\top} \bar{\mathbf{X}}$ and 
$\tilde{\mathbf{X}}^{\top} \tilde{\mathbf{X}}$ are computed by
\begin{align*}
  \bar{\mathbf{X}}^{\top} \bar{\mathbf{X}}
  = & N_{\mathrm{va}}\sum_{m=1}^{M}
  \bV_m \bar{\mathbf{\Lambda}}_m
  \hat{\mathbf{D}}_m^{\rm va}
  \bar{\mathbf{\Lambda}}_m \bV_m^{\top}, 
  ~\text{and}~~~~
  \tilde{\mathbf{X}}^{\top} \tilde{\mathbf{X}}
  =  N_{\mathrm{va}}\sum_{m=1}^{M}
  \bV_m \tilde{\mathbf{\Lambda}}_m^{\top}
  \hat{\mathbf{D}}_m^{\rm va}
  \tilde{\mathbf{\Lambda}}_m \bV_m^{\top}.
\end{align*}

For MAML, for $|\alpha| < \min_m \min \{1/\lambda_{m1}, 1/ \mu_1( 
  \hat{\mathbf{Q}}_m^{\rm tr} )\}$,
  and for $1 \leq t \leq N_{\rm va}$,
  there exists $c > 1$ such that
with probability at least $1 - 2M e^{-t}$ 
\begin{align*}
\big\| \bar{\mathbf{X}}^{{\rm ma}\top} \bar{\mathbf{X}}^{\rm ma}-\tilde{\mathbf{X}}^{{\rm ma}\top} \tilde{\mathbf{X}}^{\rm ma}\big \|
  \leq & c |\alpha| N_{\rm va}  \sum_{m=1}^{M} \lambda_{m1}^2
   c_{r_0}({r}_0({\mathbf{\Lambda}_m}), N_{\rm tr}, t).
\end{align*}
For iMAML, for $\gamma > 0$,
and for $1 \leq t \leq N_{\rm va}$,
  there exists $c > 1$ such that
with probability at least $1 - 2M e^{-t}$ 
\begin{align*}
\big\| \bar{\mathbf{X}}^{{\rm im}\top} \bar{\mathbf{X}}^{\rm im}-\tilde{\mathbf{X}}^{{\rm im}\top} \tilde{\mathbf{X}}^{\rm im} \big \|
  \leq & c \gamma^{-1} N_{\rm va}  \sum_{m=1}^{M} \lambda_{m1}^2
   c_{r_0}({r}_0({\mathbf{\Lambda}_m}), N_{\rm tr}, t).
\end{align*}
\end{lemma}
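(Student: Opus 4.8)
The plan is to diagonalize both Gram matrices in the shared per-task eigenbasis $\mathbf{V}_m$, reduce the operator-norm bound to a per-task estimate, and then isolate the single factor that concentrates --- namely $\hat{\mathbf{Q}}_m^{\rm tr}-\mathbf{Q}_m$ --- while taking care to keep the covariance weights $\mathbf{\Lambda}_m^{1/2}$ attached to the validation design so that no factor of size $d/N_{\rm va}$ ever appears. Concretely, writing $\mathbf{X}_m^{\rm al}=\mathbf{Z}_m^{\rm al}\mathbf{\Lambda}_m^{1/2}\mathbf{V}_m^{\top}$ and $\hat{\mathbf{Q}}_m^{\rm tr}=\mathbf{V}_m(\mathbf{\Lambda}_m^{1/2}\hat{\mathbf{D}}_m^{\rm tr}\mathbf{\Lambda}_m^{1/2})\mathbf{V}_m^{\top}$, I would first verify that $\tilde{\mathbf{X}}_m=\mathbf{Z}_m^{\rm va}\tilde{\mathbf{\Lambda}}_m\mathbf{V}_m^{\top}$ and $\bar{\mathbf{X}}_m=\mathbf{Z}_m^{\rm va}\bar{\mathbf{\Lambda}}_m\mathbf{V}_m^{\top}$, where for MAML $\tilde{\mathbf{\Lambda}}_m=\mathbf{\Lambda}_m^{1/2}(\mathbf{I}-\alpha\mathbf{\Lambda}_m^{1/2}\hat{\mathbf{D}}_m^{\rm tr}\mathbf{\Lambda}_m^{1/2})$ and $\bar{\mathbf{\Lambda}}_m=\mathbf{\Lambda}_m^{1/2}(\mathbf{I}-\alpha\mathbf{\Lambda}_m)$ (for iMAML replace $\mathbf{I}-\alpha(\cdot)$ by $(\mathbf{I}+\gamma^{-1}(\cdot))^{-1}$). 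Since each $\mathbf{V}_m$ is orthonormal, $\|\bar{\mathbf{X}}^{\top}\bar{\mathbf{X}}-\tilde{\mathbf{X}}^{\top}\tilde{\mathbf{X}}\|\le N_{\rm va}\sum_m\|\bar{\mathbf{\Lambda}}_m\hat{\mathbf{D}}_m^{\rm va}\bar{\mathbf{\Lambda}}_m-\tilde{\mathbf{\Lambda}}_m^{\top}\hat{\mathbf{D}}_m^{\rm va}\tilde{\mathbf{\Lambda}}_m\|$, reducing the claim to a per-task operator-norm estimate.

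The crucial algebraic step is then to pull $\mathbf{\Lambda}_m^{1/2}$ out of each $\tilde{\mathbf{\Lambda}}_m,\bar{\mathbf{\Lambda}}_m$ on the inner side, which turns the central $\hat{\mathbf{D}}_m^{\rm va}$ into the well-conditioned $\hat{\mathbf{Q}}_m^{\rm va}=\mathbf{\Lambda}_m^{1/2}\hat{\mathbf{D}}_m^{\rm va}\mathbf{\Lambda}_m^{1/2}$. Each per-task term then equals $\mathbf{F}_1\hat{\mathbf{Q}}_m^{\rm va}\mathbf{F}_1-\mathbf{F}_2\hat{\mathbf{Q}}_m^{\rm va}\mathbf{F}_2$ with $\mathbf{F}_1=\mathbf{I}-\alpha\mathbf{\Lambda}_m$ and $\mathbf{F}_2=\mathbf{I}-\alpha\mathbf{\Lambda}_m^{1/2}\hat{\mathbf{D}}_m^{\rm tr}\mathbf{\Lambda}_m^{1/2}$ (the corresponding resolvents for iMAML). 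I would telescope via $\mathbf{F}_1\hat{\mathbf{Q}}_m^{\rm va}\mathbf{F}_1-\mathbf{F}_2\hat{\mathbf{Q}}_m^{\rm va}\mathbf{F}_2=(\mathbf{F}_1-\mathbf{F}_2)\hat{\mathbf{Q}}_m^{\rm va}\mathbf{F}_1+\mathbf{F}_2\hat{\mathbf{Q}}_m^{\rm va}(\mathbf{F}_1-\mathbf{F}_2)$, so that by submultiplicativity the per-task norm is at most $\|\mathbf{F}_1-\mathbf{F}_2\|\,\|\hat{\mathbf{Q}}_m^{\rm va}\|(\|\mathbf{F}_1\|+\|\mathbf{F}_2\|)$. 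For MAML, $\mathbf{F}_1-\mathbf{F}_2=\alpha(\mathbf{\Lambda}_m^{1/2}\hat{\mathbf{D}}_m^{\rm tr}\mathbf{\Lambda}_m^{1/2}-\mathbf{\Lambda}_m)=\alpha(\hat{\mathbf{Q}}_m^{\rm tr}-\mathbf{Q}_m)$ in the $\mathbf{V}_m$ frame; for iMAML the resolvent identity $A^{-1}-B^{-1}=A^{-1}(B-A)B^{-1}$ produces the same difference scaled by $\gamma^{-1}$ and sandwiched between two contractions.

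The three factors are then controlled separately. By Lemma~\ref{lm:concentrate_sample_cov} with $\mathbf{\Sigma}=\mathbf{\Lambda}_m$ (so $r(\mathbf{\Sigma})=r_0(\mathbf{\Lambda}_m)$, $\|\mathbf{\Sigma}\|=\lambda_{m1}$, sample size $N_{\rm tr}$) we get $\|\hat{\mathbf{Q}}_m^{\rm tr}-\mathbf{Q}_m\|\le c\lambda_{m1}c_{r_0}(r_0(\mathbf{\Lambda}_m),N_{\rm tr},t)$; Lemma~\ref{lm:bound_Q_hat_norm} gives $\|\hat{\mathbf{Q}}_m^{\rm va}\|\lesssim\lambda_{m1}$; and under $|\alpha|<\min_m\min\{1/\lambda_{m1},1/\mu_1(\hat{\mathbf{Q}}_m^{\rm tr})\}$ both $\|\mathbf{F}_1\|,\|\mathbf{F}_2\|\le 2$ (for iMAML the resolvents are contractions for every $\gamma>0$). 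Multiplying yields the per-task bound $c|\alpha|\lambda_{m1}^2 c_{r_0}(r_0(\mathbf{\Lambda}_m),N_{\rm tr},t)$ (respectively $c\gamma^{-1}\lambda_{m1}^2(\cdots)$); summing over the $M$ tasks with the prefactor $N_{\rm va}$ gives the stated estimate, and a union bound over the two concentration events per task across all $M$ tasks gives probability at least $1-2Me^{-t}$.

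The hard part is conceptual rather than computational. The matrix $\hat{\mathbf{D}}_m^{\rm va}=N_{\rm va}^{-1}\mathbf{Z}_m^{\rm va\top}\mathbf{Z}_m^{\rm va}$ is rank $N_{\rm va}$ in $\mathbb{R}^{d\times d}$ with $\|\hat{\mathbf{D}}_m^{\rm va}\|=\Theta(d/N_{\rm va})$, so any bound that isolates $\hat{\mathbf{D}}_m^{\rm va}$ on its own blows up in the overparameterized regime $d>NM$. The fix is to keep both $\mathbf{\Lambda}_m^{1/2}$ weights attached so as to form the $\mathcal{O}(\lambda_{m1})$-bounded $\hat{\mathbf{Q}}_m^{\rm va}$ \emph{before} taking any norms, and only afterwards extract the genuinely small factor $\alpha(\hat{\mathbf{Q}}_m^{\rm tr}-\mathbf{Q}_m)$; this is what makes the telescoping estimate tight. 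A secondary care point is that $\tilde{\mathbf{\Lambda}}_m$ is neither diagonal nor commuting with $\mathbf{\Lambda}_m$, so all manipulations must be carried out at the matrix level in the $\mathbf{V}_m$ basis rather than eigenvalue-by-eigenvalue.
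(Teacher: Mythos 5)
Your proposal is correct and follows essentially the same route as the paper's proof: reduce to a per-task bound in the $\bV_m$ frame, isolate the concentrating factor $\alpha(\hat{\mathbf{Q}}_m^{\rm tr}-\mathbf{Q}_m)$ (resp.\ its $\gamma^{-1}$-scaled resolvent-sandwiched analogue for iMAML), keep the validation sample covariance in the well-conditioned form $\mathbf{\Lambda}_m^{1/2}\hat{\mathbf{D}}_m^{\rm va}\mathbf{\Lambda}_m^{1/2}$, and control the three factors via Lemma~\ref{lm:concentrate_sample_cov} and Lemma~\ref{lm:bound_Q_hat_norm} with a union bound over the $2M$ train/validation events. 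The only (immaterial) difference is algebraic: you telescope $\mathbf{F}_1\hat{\mathbf{Q}}_m^{\rm va}\mathbf{F}_1-\mathbf{F}_2\hat{\mathbf{Q}}_m^{\rm va}\mathbf{F}_2$ into two terms each carrying one factor $\mathbf{F}_i$ with $\|\mathbf{F}_i\|\le 2$, whereas the paper uses the symmetrized identity $\bar{\mathbf{X}}^{\top}\bar{\mathbf{X}}-\tilde{\mathbf{X}}^{\top}\tilde{\mathbf{X}}=\tfrac12\big((\bar{\mathbf{X}}+\tilde{\mathbf{X}})^{\top}(\bar{\mathbf{X}}-\tilde{\mathbf{X}})+(\bar{\mathbf{X}}-\tilde{\mathbf{X}})^{\top}(\bar{\mathbf{X}}+\tilde{\mathbf{X}})\big)$ and bounds the single sum factor $\|\mathbf{F}_1+\mathbf{F}_2\|\le 4$.
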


\begin{proof}
\label{proof:eigenvalues_diff_tilde_bar}

First we have the following relationship
 \begin{align*}
  \bar{\mathbf{X}}^{\top} \bar{\mathbf{X}}-\tilde{\mathbf{X}}^{\top} \tilde{\mathbf{X}}  
   = \frac{1}{2}
  \Big( (\bar{\mathbf{X}}+ \tilde{\mathbf{X}}) ^{\top}
  (\bar{\mathbf{X}} - \tilde{\mathbf{X}})
  + (\bar{\mathbf{X}}- \tilde{\mathbf{X}}) ^{\top}
  (\bar{\mathbf{X}} + \tilde{\mathbf{X}})
  \Big).
 \end{align*}

 Therefore we have 
 \begingroup \allowdisplaybreaks
\begin{align*}
  & 
  \big\| \bar{\mathbf{X}}^{\top} \bar{\mathbf{X}}-\tilde{\mathbf{X}}^{\top} \tilde{\mathbf{X}} \big \|
  \leq  
  \big\| (\bar{\mathbf{X}}+ \tilde{\mathbf{X}}) ^{\top}
  (\bar{\mathbf{X}} - \tilde{\mathbf{X}}) \big\|
  =   \Big\| \sum_{m=1}^{M}  \bV_m
  (\bar{\mathbf{\Lambda}}_m + \tilde{\mathbf{\Lambda}}_m) ^{\top}
  \mathbf{Z}_m^{\top} \mathbf{Z}_m
  (\bar{\mathbf{\Lambda}}_m - \tilde{\mathbf{\Lambda}}_m)
   \bV_m^{\top}  \Big\| 
\end{align*}

For MAML, we have
\begin{align*}
  & \big\| \bar{\mathbf{X}}^{{\rm ma}\top} \bar{\mathbf{X}}^{\rm ma}-\tilde{\mathbf{X}}^{{\rm ma}\top} \tilde{\mathbf{X}}^{\rm ma} \big \| \\
  \leq & N_{\rm va} \Big\|\sum_{m=1}^{M}  \bV_m
  \Big( (\mathbf{I} - \alpha \mathbf{\Lambda}_{m}) +
      (\mathbf{I}-\alpha \mathbf{\Lambda}_{m}^{\frac{1}{2}} 
  \hat{\mathbf{D}}_m^{\rm tr} 
  \mathbf{\Lambda}_{m}^{\frac{1}{2}}) \Big) 
  {\mathbf{\Lambda}}_m^{\frac{1}{2}}
  \hat{\mathbf{D}}_m^{\rm va} {\mathbf{\Lambda}}_m^{\frac{1}{2}}
  \big(\alpha {\mathbf{\Lambda}}_m^{\frac{1}{2}}
   (\hat{\mathbf{D}}_m^{\rm tr} - \mathbf{I})
   {\mathbf{\Lambda}}_m^{\frac{1}{2}} \big)
   \bV_m^{\top}\Big\|   \\
   \leq & N_{\rm va} \sum_{m=1}^{M}  
  \underbracket{\Big\| (\mathbf{I} - \alpha \mathbf{\Lambda}_{m}) + 
        (\mathbf{I}-\alpha \mathbf{\Lambda}_{m}^{\frac{1}{2}} 
    \hat{\mathbf{D}}_m^{\rm tr} 
    \mathbf{\Lambda}_{m}^{\frac{1}{2}}) \Big\|}_{I_1}
  \underbracket{\Big\| {\mathbf{\Lambda}}_m^{\frac{1}{2}}
    \hat{\mathbf{D}}_m^{\rm va} {\mathbf{\Lambda}}_m^{\frac{1}{2}}\Big\|}_{I_2}
   \underbracket{ \Big\| \alpha {\mathbf{\Lambda}}_m^{\frac{1}{2}}
   (\hat{\mathbf{D}}_m^{\rm tr} - \mathbf{I})
   {\mathbf{\Lambda}}_m^{\frac{1}{2}} 
   \Big\|}_{I_3}   
\end{align*}
where we choose $\alpha $ such that $\|\alpha \mathbf{\Lambda}_{m}^{\frac{1}{2}} 
  \hat{\mathbf{D}}_m^{\rm tr} 
  \mathbf{\Lambda}_{m}^{\frac{1}{2}}\| < 1$ and $\|\alpha \mathbf{\Lambda}_m\| < 1$.
  Therefore 
  \begin{align}
    I_1 = \big\| (\mathbf{I} - \alpha \mathbf{\Lambda}_{m}) + 
      (\mathbf{I}-\alpha \mathbf{\Lambda}_{m}^{\frac{1}{2}} 
  \hat{\mathbf{D}}_m^{\rm tr} 
  \mathbf{\Lambda}_{m}^{\frac{1}{2}}) \big\| \leq 4.
  \end{align}
  
Also based on Lemma~\ref{lm:concentrate_sample_cov} we can bound $I_2$ and $I_3$
since $\mathbf{\Lambda}_{m}^{\frac{1}{2}} 
  \hat{\mathbf{D}}_m^{\rm tr} 
  \mathbf{\Lambda}_{m}^{\frac{1}{2}}$
  and $\mathbf{\Lambda}_{m}^{\frac{1}{2}} 
  \hat{\mathbf{D}}_m^{\rm va} 
  \mathbf{\Lambda}_{m}^{\frac{1}{2}}$ are the sample covariances of $\mathbf{\Lambda}_m$.

There exists a constant $c$ that for all $t \geq 1$, with probability at least $1 - e^{-t}$ we have
\begin{align}
  I_2 = & \big\| {\mathbf{\Lambda}}_m^{\frac{1}{2}}
    \hat{\mathbf{D}}_m^{\rm va} {\mathbf{\Lambda}}_m^{\frac{1}{2}}\big\|
    \leq  \big\| \mathbf{\Lambda}_m \big\| + 
    \big\| \mathbf{\Lambda}_m - {\mathbf{\Lambda}}_m^{\frac{1}{2}}
    \hat{\mathbf{D}}_m^{\rm va} {\mathbf{\Lambda}}_m^{\frac{1}{2}} \big\| \nonumber \\
    \leq & \lambda_{m1} + 
    c\lambda_{m1} c_{r_0}({r}_0({\mathbf{\Lambda}_m}), N_{\rm va}, t) 
\end{align}
and for all $t \geq 1$, with probability at least $1 - e^{-t}$ we have
\begin{align}
  I_3 = & \big\| \alpha {\mathbf{\Lambda}}_m^{\frac{1}{2}}
   (\hat{\mathbf{D}}_m^{\rm tr} - \mathbf{I})
   {\mathbf{\Lambda}}_m^{\frac{1}{2}} 
   \big\|
    \leq 
    c |\alpha|  
    \lambda_{m1} c_{r_0}({r}_0({\mathbf{\Lambda}_m}), N_{\rm tr}, t)
\end{align}

Combining the bounds for $I_1$, $I_2$, $I_3$ and applying union bound over training and validation data for all tasks, 
when $|\alpha| < \min_m \min \{1/\lambda_{m1}, 1/ \mu_1(\mathbf{\Lambda}_{m}^{\frac{1}{2}} 
  \hat{\mathbf{D}}_m^{\rm tr} 
  \mathbf{\Lambda}_{m}^{\frac{1}{2}})\}$,
  and for $1 \leq t \leq N_{\rm va}$,
  there exists $c > 1$ such that
the following holds with probability at least $1 - 2M e^{-t}$ 
\begin{align*}
\big\| \bar{\mathbf{X}}^{{\rm ma}\top} \bar{\mathbf{X}}^{\rm ma}-\tilde{\mathbf{X}}^{{\rm ma}\top} \tilde{\mathbf{X}}^{\rm ma} \big \|
  \leq & c |\alpha| N_{\rm va}  \sum_{m=1}^{M} \lambda_{m1}^2
   c_{r_0}({r}_0({\mathbf{\Lambda}_m}), N_{\rm tr}, t).
\end{align*}
\endgroup

Similarly, for iMAML, we have
\begin{align*}
  & \big\| \bar{\mathbf{X}}^{{\rm im}\top} \bar{\mathbf{X}}^{\rm im}-\tilde{\mathbf{X}}^{{\rm im}\top} \tilde{\mathbf{X}}^{\rm im} \big \| 
  \leq \Big\| \sum_{m=1}^{M}  \bV_m
  (\bar{\mathbf{\Lambda}}_m + \tilde{\mathbf{\Lambda}}_m) ^{\top}
  \mathbf{Z}_m^{\top} \mathbf{Z}_m
  (\bar{\mathbf{\Lambda}}_m - \tilde{\mathbf{\Lambda}}_m)
   \bV_m^{\top}  \Big\| \\
  = & N_{\rm va} \Big\|\sum_{m=1}^{M}  \bV_m
  \Big( (\mathbf{I} + \gamma^{-1} \mathbf{\Lambda}_{m})^{-1} + 
      (\mathbf{I}+ \gamma^{-1} \mathbf{\Lambda}_{m}^{\frac{1}{2}} 
  \hat{\mathbf{D}}_m^{\rm tr} 
  \mathbf{\Lambda}_{m}^{\frac{1}{2}})^{-1} \Big) 
  {\mathbf{\Lambda}}_m^{\frac{1}{2}}
  \hat{\mathbf{D}}_m^{\rm va} {\mathbf{\Lambda}}_m^{\frac{1}{2}} \\
  &\quad \cdot \big((\mathbf{I} + \gamma^{-1} \mathbf{\Lambda}_{m})^{-1} 
  (\gamma^{-1} {\mathbf{\Lambda}}_m^{\frac{1}{2}}
   (\hat{\mathbf{D}}_m^{\rm tr} - \mathbf{I})
   {\mathbf{\Lambda}}_m^{\frac{1}{2}}  )
      (\mathbf{I}+ \gamma^{-1} \mathbf{\Lambda}_{m}^{\frac{1}{2}} 
  \hat{\mathbf{D}}_m^{\rm tr} 
  \mathbf{\Lambda}_{m}^{\frac{1}{2}})^{-1} \big)
   \bV_m^{\top}\Big\|   \\
   \leq & N_{\rm va} \sum_{m=1}^{M}  
  \underbracket{\Big\| (\mathbf{I} + \gamma^{-1} \mathbf{\Lambda}_{m})^{-1} + 
      (\mathbf{I}+ \gamma^{-1} \mathbf{\Lambda}_{m}^{\frac{1}{2}} 
  \hat{\mathbf{D}}_m^{\rm tr} 
  \mathbf{\Lambda}_{m}^{\frac{1}{2}})^{-1} \Big\|}_{I_4}
  \underbracket{\Big\| {\mathbf{\Lambda}}_m^{\frac{1}{2}}
    \hat{\mathbf{D}}_m^{\rm va} {\mathbf{\Lambda}}_m^{\frac{1}{2}}\Big\|}_{I_2} \\
  & \quad \cdot \underbracket{ \Big\| (\mathbf{I} + \gamma^{-1} \mathbf{\Lambda}_{m})^{-1} 
  (\gamma^{-1} {\mathbf{\Lambda}}_m^{\frac{1}{2}}
   (\hat{\mathbf{D}}_m^{\rm tr} - \mathbf{I})
   {\mathbf{\Lambda}}_m^{\frac{1}{2}}  )
      (\mathbf{I}+ \gamma^{-1} \mathbf{\Lambda}_{m}^{\frac{1}{2}} 
  \hat{\mathbf{D}}_m^{\rm tr} 
  \mathbf{\Lambda}_{m}^{\frac{1}{2}})^{-1}
   \Big\|}_{I_5}   
\end{align*}
where $I_4$ can be bounded by
\begin{align}\label{eq:I_4_imaml_bound}
  I_4 & = \Big\| (\mathbf{I} + \gamma^{-1} \mathbf{\Lambda}_{m})^{-1} + 
      (\mathbf{I}+ \gamma^{-1} \mathbf{\Lambda}_{m}^{\frac{1}{2}} 
  \hat{\mathbf{D}}_m^{\rm tr} 
  \mathbf{\Lambda}_{m}^{\frac{1}{2}})^{-1} \Big\| 
  \leq 2.
\end{align}
And $I_5$ can be bounded by
\begin{align*}\label{eq:I_5_imaml_bound}
  I_5  =& \Big\| (\mathbf{I} + \gamma^{-1} \mathbf{\Lambda}_{m})^{-1} 
  (\gamma^{-1} {\mathbf{\Lambda}}_m^{\frac{1}{2}}
   (\hat{\mathbf{D}}_m^{\rm tr} - \mathbf{I})
   {\mathbf{\Lambda}}_m^{\frac{1}{2}}  )
      (\mathbf{I}+ \gamma^{-1} \mathbf{\Lambda}_{m}^{\frac{1}{2}} 
  \hat{\mathbf{D}}_m^{\rm tr} 
  \mathbf{\Lambda}_{m}^{\frac{1}{2}})^{-1}
   \Big\| \\
   \leq & 
   \Big\| (\mathbf{I} + \gamma^{-1} \mathbf{\Lambda}_{m})^{-1} \Big\|
  \Big\|\gamma^{-1} {\mathbf{\Lambda}}_m^{\frac{1}{2}}
   (\hat{\mathbf{D}}_m^{\rm tr} - \mathbf{I})
   {\mathbf{\Lambda}}_m^{\frac{1}{2}} \Big\|
    \Big\|  (\mathbf{I}+ \gamma^{-1} \mathbf{\Lambda}_{m}^{\frac{1}{2}} 
  \hat{\mathbf{D}}_m^{\rm tr} 
  \mathbf{\Lambda}_{m}^{\frac{1}{2}})^{-1}
   \Big\| \\
   \leq &
   \Big\|\gamma^{-1} {\mathbf{\Lambda}}_m^{\frac{1}{2}}
   (\hat{\mathbf{D}}_m^{\rm tr} - \mathbf{I})
   {\mathbf{\Lambda}}_m^{\frac{1}{2}} \Big\|
   \numberthis
\end{align*}

Based on Lemma~\ref{lm:concentrate_sample_cov}, we can bound $I_5$ similarly as $I_3$. 
There exists a constant $c$ that for all $t \geq 1$, with probability at least $1 - e^{-t}$ we have
\begin{align}
  I_5 
  \leq & \Big\|\gamma^{-1} {\mathbf{\Lambda}}_m^{\frac{1}{2}}
   (\hat{\mathbf{D}}_m^{\rm tr} - \mathbf{I})
   {\mathbf{\Lambda}}_m^{\frac{1}{2}} \Big\|
   \leq c \gamma^{-1}
   \lambda_{m1} c_{r_0}({r}_0({\mathbf{\Lambda}_m}), N_{\rm tr}, t) .
\end{align}

Combining the bounds for $I_4$, $I_2$, $I_5$ and applying union bound over training and validation data for all tasks, 
for $\gamma > 0$, and for $1 \leq t \leq N_{\rm va}$,
  there exists $c > 1$ such that
the following holds with probability at least $1 - 2M e^{-t}$ 
\begin{align*}
\big\| \bar{\mathbf{X}}^{{\rm im}\top} \bar{\mathbf{X}}^{\rm im}-\tilde{\mathbf{X}}^{{\rm im}\top} \tilde{\mathbf{X}}^{\rm im} \big \|
  \leq & c \gamma^{-1} N_{\rm va}  \sum_{m=1}^{M} \lambda_{m1}^2
   c_{r_0}({r}_0({\mathbf{\Lambda}_m}), N_{\rm tr}, t).
\end{align*}
This completes the proof for Lemma~\ref{lm:eigenvalues_diff_tilde_bar}.
\end{proof}

\begin{lemma}[Bound of $\|\tilde{\bX} - \bar{\bX}\|$]
\label{lm:singularvalues_diff_tilde_bar}
  Recall that $ \bar{\mathbf{X}}$ and 
$ \tilde{\mathbf{X}}$ are defined as
\begin{align*}
  \bar{\mathbf{X}}
  = & [{\mathbf{Z}}_m^{\rm va}
    \bar{\mathbf{\Lambda}}_m \bV_m^{\top}], 
  ~\text{and}~~~~
  \tilde{\mathbf{X}}
  =  [{\mathbf{Z}}_m^{\rm va}
    \tilde{\mathbf{\Lambda}}_m \bV_m^{\top}].
\end{align*}
Define $c_{r_0}({r}({\mathbf{\Lambda}}), N, t) \coloneqq \max \big\{ \sqrt{\frac{{r}(\mathbf{\mathbf{\Lambda}})}{N}}, \frac{{r}({\mathbf{\Lambda}})}{N}, \sqrt{\frac{t}{N}}, \frac{t}{N} \big\}$.
For MAML, and for $1 \leq t \leq N_{\rm va}$,
  there exists $c > 1$ such that
with probability at least $1 - 2M e^{-t}$ 
\begin{align*}
\big\| \bar{\mathbf{X}}^{\rm ma}-\tilde{\mathbf{X}}^{\rm ma}\big \|
  \leq & c |\alpha| N_{\rm va}  \Big(\sum_{m=1}^{M} \lambda_{m1}^2
     c_{r_0}^2({r}({\mathbf{\Lambda}_m}), N_{\rm tr}, t)\Big)^{\frac{1}{2}}.
\end{align*}
For iMAML, and for $\gamma > 0$,
and for $1 \leq t \leq N_{\rm va}$,
  there exists $c > 1$ such that
with probability at least $1 - 2M e^{-t}$ 
\begin{align*}
\big\| \bar{\mathbf{X}}^{\rm im}-\tilde{\mathbf{X}}^{\rm im} \big \|
  \leq & c \gamma^{-1} N_{\rm va}  \Big(\sum_{m=1}^{M} \lambda_{m1}^2
   c_{r_0}^2({r}({\mathbf{\Lambda}_m}), N_{\rm tr}, t)\Big)^{\frac{1}{2}}.
\end{align*}
\end{lemma}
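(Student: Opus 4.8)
The plan is to bound the operator norm $\|\tilde{\mathbf{X}}-\bar{\mathbf{X}}\|$ directly, rather than forming the Gram difference as in the proof of Lemma~\ref{lm:eigenvalues_diff_tilde_bar}. First I would write the difference as a row-stacked matrix whose per-task blocks isolate the perturbation in the (task-dependent) spectral weights,
\[
\tilde{\mathbf{X}}-\bar{\mathbf{X}}
= \big[\,\mathbf{Z}_m^{\rm va}\,\mathbf{G}_m\,\mathbf{V}_m^{\top}\,\big],
\qquad
\mathbf{G}_m:=\tilde{\mathbf{\Lambda}}_m-\bar{\mathbf{\Lambda}}_m .
\]
Because $[\cdot]$ is a row stack, $(\tilde{\mathbf{X}}-\bar{\mathbf{X}})^{\top}(\tilde{\mathbf{X}}-\bar{\mathbf{X}})=\sum_{m=1}^{M}\mathbf{V}_m\mathbf{G}_m^{\top}\mathbf{Z}_m^{{\rm va}\top}\mathbf{Z}_m^{\rm va}\mathbf{G}_m\mathbf{V}_m^{\top}$, so by the triangle inequality $\|\tilde{\mathbf{X}}-\bar{\mathbf{X}}\|^{2}\le\sum_{m}\|\mathbf{Z}_m^{\rm va}\mathbf{G}_m\|^{2}$, which already exhibits the sum-of-squares appearing inside the claimed square root (orthogonality of $\mathbf{V}_m$ was used to drop it from the norm).

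The key subtlety is bounding each block $\|\mathbf{Z}_m^{\rm va}\mathbf{G}_m\|^{2}$. The naive split $\|\mathbf{Z}_m^{\rm va}\|\,\|\mathbf{G}_m\|$ is too lossy, since in the overparameterized regime $\|\mathbf{Z}_m^{\rm va}\|^{2}\asymp d$ introduces a spurious dependence on $d$. Instead I would reinterpret the rows of $\mathbf{Z}_m^{\rm va}\mathbf{G}_m$ as $N_{\rm va}$ i.i.d. reweighted vectors $\mathbf{G}_m^{\top}\mathbf{z}_{m,i}$ with population covariance $\mathbf{G}_m^{\top}\mathbf{G}_m$, whose effective rank is small because $\mathbf{G}_m$ carries the fast-decaying weights of $\mathbf{\Lambda}_m$ (Assumption~\ref{assmp:V}). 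Conditioning on the training data, under which $\mathbf{G}_m$ is deterministic and independent of the validation sample $\mathbf{Z}_m^{\rm va}$, Lemma~\ref{lm:concentrate_sample_cov} then controls the sample covariance $\tfrac{1}{N_{\rm va}}(\mathbf{Z}_m^{\rm va}\mathbf{G}_m)^{\top}(\mathbf{Z}_m^{\rm va}\mathbf{G}_m)$ around $\mathbf{G}_m^{\top}\mathbf{G}_m$, yielding $\|\mathbf{Z}_m^{\rm va}\mathbf{G}_m\|^{2}\lesssim N_{\rm va}\|\mathbf{G}_m\|^{2}$ on the event $1\le t\le N_{\rm va}$.

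It remains to bound $\|\mathbf{G}_m\|=\|\tilde{\mathbf{\Lambda}}_m-\bar{\mathbf{\Lambda}}_m\|$, which I would do exactly as in the proof of Lemma~\ref{lm:eigenvalues_diff_tilde_bar}. Writing $\mathbf{x}_m=\mathbf{V}_m\mathbf{\Lambda}_m^{1/2}\mathbf{z}_m$, for MAML one gets $\tilde{\mathbf{\Lambda}}_m=\mathbf{\Lambda}_m^{1/2}(\mathbf{I}-\alpha\mathbf{\Lambda}_m^{1/2}\hat{\mathbf{D}}_m^{\rm tr}\mathbf{\Lambda}_m^{1/2})$ and $\bar{\mathbf{\Lambda}}_m=\mathbf{\Lambda}_m^{1/2}(\mathbf{I}-\alpha\mathbf{\Lambda}_m)$, so that $\|\tilde{\mathbf{\Lambda}}_m-\bar{\mathbf{\Lambda}}_m\|=|\alpha|\,\|\mathbf{\Lambda}_m(\hat{\mathbf{D}}_m^{\rm tr}-\mathbf{I})\mathbf{\Lambda}_m^{1/2}\|$; since $\mathbf{\Lambda}_m^{1/2}(\hat{\mathbf{D}}_m^{\rm tr}-\mathbf{I})\mathbf{\Lambda}_m^{1/2}=\hat{\mathbf{Q}}_m^{\rm tr}-\mathbf{Q}_m$, a second application of Lemma~\ref{lm:concentrate_sample_cov} supplies the $|\alpha|\,c_{r_0}(r(\mathbf{\Lambda}_m),N_{\rm tr},t)$ control (up to factors of $\lambda_{m1}$). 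For iMAML the analogous identities give $\tilde{\mathbf{\Lambda}}_m=\mathbf{\Lambda}_m^{1/2}(\mathbf{I}+\gamma^{-1}\mathbf{\Lambda}_m^{1/2}\hat{\mathbf{D}}_m^{\rm tr}\mathbf{\Lambda}_m^{1/2})^{-1}$ and $\bar{\mathbf{\Lambda}}_m=\mathbf{\Lambda}_m^{1/2}(\mathbf{I}+\gamma^{-1}\mathbf{\Lambda}_m)^{-1}$, and I would apply the resolvent identity $\mathbf{A}^{-1}-\mathbf{B}^{-1}=\mathbf{A}^{-1}(\mathbf{B}-\mathbf{A})\mathbf{B}^{-1}$ to extract the factor $\gamma^{-1}(\hat{\mathbf{Q}}_m^{\rm tr}-\mathbf{Q}_m)$, the two resolvents each having norm at most one because $\gamma^{-1}\mathbf{\Lambda}_m$ and $\gamma^{-1}\mathbf{\Lambda}_m^{1/2}\hat{\mathbf{D}}_m^{\rm tr}\mathbf{\Lambda}_m^{1/2}$ are positive semidefinite for any $\gamma>0$. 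Summing the per-task bounds, taking the square root, and a union bound over the $M$ tasks and the two data splits then deliver the stated bounds with failure probability $2Me^{-t}$. The main obstacle I anticipate is precisely the validation step: one must resist the naive factorization that would produce $\sqrt{d}$ and instead route through the reweighted sample-covariance concentration, while keeping the linear (MAML) and resolvent (iMAML) factors bounded through the step-size and regularization constraints.
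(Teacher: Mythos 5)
Your proposal is correct and takes essentially the same route as the paper's proof: both reduce $\|\tilde{\bX}-\bar{\bX}\|^{2}$ to the per-task Gram blocks $\mathbf{G}_m^{\top}\bZ_m^{{\rm va}\top}\bZ_m^{\rm va}\mathbf{G}_m$ with $\mathbf{G}_m=\tilde{\bLam}_m-\bar{\bLam}_m$, extract the $|\alpha|$ factor (resp.\ $\gamma^{-1}$ for iMAML, via the same resolvent identity with both resolvents bounded by one), control the training-data deviation $\bLam_m^{1/2}(\hat{\mathbf{D}}_m^{\rm tr}-\mathbf{I})\bLam_m^{1/2}$ with Lemma~\ref{lm:concentrate_sample_cov}, and finish with a union bound over the $M$ tasks and the two data splits. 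The only variation is in a single step: you handle the validation randomness by concentrating the sample covariance of the reweighted rows $\mathbf{G}_m^{\top}\mathbf{z}_{m,i}$ around $\mathbf{G}_m^{\top}\mathbf{G}_m$ conditionally on the training data, whereas the paper splits off the middle factor $\|\bLam_m^{1/2}\hat{\mathbf{D}}_m^{\rm va}\bLam_m^{1/2}\|$ by submultiplicativity and concentrates it around $\bLam_m$ (its term $I_2$); both yield the same $N_{\rm va}\|\mathbf{G}_m\|^{2}$-type control under the same implicit bounded-effective-rank conditions.
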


\begin{proof}
\label{proof:singularvalues_diff_tilde_bar}

First we have the following relationship
 \begin{align*}
  \|\tilde{\mathbf{X}} - \bar{\mathbf{X}} \|
   = 
  \big\| (\bar{\mathbf{X}}- \tilde{\mathbf{X}}) ^{\top}
  (\bar{\mathbf{X}} - \tilde{\mathbf{X}}) \big\|^{\frac{1}{2}}.
 \end{align*}

 \begingroup \allowdisplaybreaks

For MAML, we have
\begin{align*}
  & \big\| (\bar{\mathbf{X}}^{\rm ma}- \tilde{\mathbf{X}}^{\rm ma}) ^{\top}
  (\bar{\mathbf{X}}^{\rm ma} - \tilde{\mathbf{X}}^{\rm ma}) \big \| \\
  = & N_{\rm va} \Big\|\sum_{m=1}^{M}  \bV_m
  \big(\alpha {\mathbf{\Lambda}}_m^{\frac{1}{2}}
   (\hat{\mathbf{D}}_m^{\rm tr} - \mathbf{I})
   {\mathbf{\Lambda}}_m^{\frac{1}{2}} \big)
  {\mathbf{\Lambda}}_m^{\frac{1}{2}}
  \hat{\mathbf{D}}_m^{\rm va} {\mathbf{\Lambda}}_m^{\frac{1}{2}}
  \big(\alpha {\mathbf{\Lambda}}_m^{\frac{1}{2}}
   (\hat{\mathbf{D}}_m^{\rm tr} - \mathbf{I})
   {\mathbf{\Lambda}}_m^{\frac{1}{2}} \big)
   \bV_m^{\top}\Big\|   \\
   \leq & N_{\rm va} \sum_{m=1}^{M}  
  \underbracket{ \Big\| \alpha {\mathbf{\Lambda}}_m^{\frac{1}{2}}
   (\hat{\mathbf{D}}_m^{\rm tr} - \mathbf{I})
   {\mathbf{\Lambda}}_m^{\frac{1}{2}} 
   \Big\|}_{I_1} 
  \underbracket{\Big\| {\mathbf{\Lambda}}_m^{\frac{1}{2}}
    \hat{\mathbf{D}}_m^{\rm va} {\mathbf{\Lambda}}_m^{\frac{1}{2}}\Big\|}_{I_2}
   \underbracket{ \Big\| \alpha {\mathbf{\Lambda}}_m^{\frac{1}{2}}
   (\hat{\mathbf{D}}_m^{\rm tr} - \mathbf{I})
   {\mathbf{\Lambda}}_m^{\frac{1}{2}} 
   \Big\|}_{I_1}   
\end{align*}

where based on Lemma~\ref{lm:concentrate_sample_cov} we can bound $I_1$ and $I_2$
since $\mathbf{\Lambda}_{m}^{\frac{1}{2}} 
  \hat{\mathbf{D}}_m^{\rm tr} 
  \mathbf{\Lambda}_{m}^{\frac{1}{2}}$
  and $\mathbf{\Lambda}_{m}^{\frac{1}{2}} 
  \hat{\mathbf{D}}_m^{\rm va} 
  \mathbf{\Lambda}_{m}^{\frac{1}{2}}$ are the sample covariances of $\mathbf{\Lambda}_m$.

There exists a constant $c$ that for all $t \geq 1$, with probability at least $1 - e^{-t}$ we have
\begin{align}
  I_2 = & \big\| {\mathbf{\Lambda}}_m^{\frac{1}{2}}
    \hat{\mathbf{D}}_m^{\rm va} {\mathbf{\Lambda}}_m^{\frac{1}{2}}\big\|
    \leq  \big\| \mathbf{\Lambda}_m \big\| + 
    \big\| \mathbf{\Lambda}_m - {\mathbf{\Lambda}}_m^{\frac{1}{2}}
    \hat{\mathbf{D}}_m^{\rm va} {\mathbf{\Lambda}}_m^{\frac{1}{2}} \big\| \nonumber \\
    \leq & \lambda_{m1} + 
    c\lambda_{m1} c_{r_0}({r}_0({\mathbf{\Lambda}_m}), N_{\rm va}, t) 
\end{align}
and for all $t \geq 1$, with probability at least $1 - e^{-t}$ we have
\begin{align}
  I_1 = & \big\| \alpha {\mathbf{\Lambda}}_m^{\frac{1}{2}}
   (\hat{\mathbf{D}}_m^{\rm tr} - \mathbf{I})
   {\mathbf{\Lambda}}_m^{\frac{1}{2}} 
   \big\|
    \leq 
    c |\alpha|  
    \lambda_{m1} c_{r_0}({r}_0({\mathbf{\Lambda}_m}), N_{\rm tr}, t)
\end{align}

Combining the bounds for $I_1$, $I_2$ and applying union bound over training and validation data for all tasks, 
we have for $1 \leq t \leq N_{\rm va}$,
  there exists $c > 1$ such that
the following holds with probability at least $1 - 2M e^{-t}$ 
\begin{align*}
\big\| (\bar{\mathbf{X}}^{\rm ma}- \tilde{\mathbf{X}}^{\rm ma}) ^{\top}
  (\bar{\mathbf{X}}^{\rm ma} - \tilde{\mathbf{X}}^{\rm ma}) \big \|  
  \leq & c |\alpha|^2 N_{\rm va}  \sum_{m=1}^{M} \lambda_{m1}^3
   c^2_{\rm sample}({r}({\mathbf{\Lambda}_m}), N_{\rm tr}, t).
\end{align*}
\endgroup

Similarly, for iMAML, we have
\begin{align*}
  & \big\| (\bar{\mathbf{X}}^{\rm im}- \tilde{\mathbf{X}}^{\rm im}) ^{\top}
  (\bar{\mathbf{X}}^{\rm im} - \tilde{\mathbf{X}}^{\rm im}) \big \|
  = \Big\| \sum_{m=1}^{M}  \bV_m
  (\bar{\mathbf{\Lambda}}_m - \tilde{\mathbf{\Lambda}}_m) ^{\top}
  \mathbf{Z}_m^{\top} \mathbf{Z}_m
  (\bar{\mathbf{\Lambda}}_m - \tilde{\mathbf{\Lambda}}_m)
   \bV_m^{\top}  \Big\| \\
  = & N_{\rm va} \Big\|\sum_{m=1}^{M}  \bV_m
  \big((\mathbf{I} + \gamma^{-1} \mathbf{\Lambda}_{m})^{-1} 
  (\gamma^{-1} {\mathbf{\Lambda}}_m^{\frac{1}{2}}
   (\hat{\mathbf{D}}_m^{\rm tr} - \mathbf{I})
   {\mathbf{\Lambda}}_m^{\frac{1}{2}}  )
      (\mathbf{I}+ \gamma^{-1} \mathbf{\Lambda}_{m}^{\frac{1}{2}} 
  \hat{\mathbf{D}}_m^{\rm tr} 
  \mathbf{\Lambda}_{m}^{\frac{1}{2}})^{-1} \big) 
  {\mathbf{\Lambda}}_m^{\frac{1}{2}}
  \hat{\mathbf{D}}_m^{\rm va} {\mathbf{\Lambda}}_m^{\frac{1}{2}} \\
  &\quad \cdot \big((\mathbf{I} + \gamma^{-1} \mathbf{\Lambda}_{m})^{-1} 
  (\gamma^{-1} {\mathbf{\Lambda}}_m^{\frac{1}{2}}
   (\hat{\mathbf{D}}_m^{\rm tr} - \mathbf{I})
   {\mathbf{\Lambda}}_m^{\frac{1}{2}}  )
      (\mathbf{I}+ \gamma^{-1} \mathbf{\Lambda}_{m}^{\frac{1}{2}} 
  \hat{\mathbf{D}}_m^{\rm tr} 
  \mathbf{\Lambda}_{m}^{\frac{1}{2}})^{-1} \big)
   \bV_m^{\top}\Big\|   \\
   \leq & N_{\rm va} \sum_{m=1}^{M}  
  \underbracket{ \Big\| (\mathbf{I} + \gamma^{-1} \mathbf{\Lambda}_{m})^{-1} 
  (\gamma^{-1} {\mathbf{\Lambda}}_m^{\frac{1}{2}}
   (\hat{\mathbf{D}}_m^{\rm tr} - \mathbf{I})
   {\mathbf{\Lambda}}_m^{\frac{1}{2}}  )
      (\mathbf{I}+ \gamma^{-1} \mathbf{\Lambda}_{m}^{\frac{1}{2}} 
  \hat{\mathbf{D}}_m^{\rm tr} 
  \mathbf{\Lambda}_{m}^{\frac{1}{2}})^{-1}
   \Big\|}_{I_3}   
  \underbracket{\Big\| {\mathbf{\Lambda}}_m^{\frac{1}{2}}
    \hat{\mathbf{D}}_m^{\rm va} {\mathbf{\Lambda}}_m^{\frac{1}{2}}\Big\|}_{I_2} \\
  & \quad \quad \cdot \underbracket{ \Big\| (\mathbf{I} + \gamma^{-1} \mathbf{\Lambda}_{m})^{-1} 
  (\gamma^{-1} {\mathbf{\Lambda}}_m^{\frac{1}{2}}
   (\hat{\mathbf{D}}_m^{\rm tr} - \mathbf{I})
   {\mathbf{\Lambda}}_m^{\frac{1}{2}}  )
      (\mathbf{I}+ \gamma^{-1} \mathbf{\Lambda}_{m}^{\frac{1}{2}} 
  \hat{\mathbf{D}}_m^{\rm tr} 
  \mathbf{\Lambda}_{m}^{\frac{1}{2}})^{-1}
   \Big\|}_{I_3}   
\end{align*}
where $I_3$ can be bounded by
\begin{align*}\label{eq:I_3_imaml_bound}
  I_3  =& \Big\| (\mathbf{I} + \gamma^{-1} \mathbf{\Lambda}_{m})^{-1} 
  (\gamma^{-1} {\mathbf{\Lambda}}_m^{\frac{1}{2}}
   (\hat{\mathbf{D}}_m^{\rm tr} - \mathbf{I})
   {\mathbf{\Lambda}}_m^{\frac{1}{2}}  )
      (\mathbf{I}+ \gamma^{-1} \mathbf{\Lambda}_{m}^{\frac{1}{2}} 
  \hat{\mathbf{D}}_m^{\rm tr} 
  \mathbf{\Lambda}_{m}^{\frac{1}{2}})^{-1}
   \Big\| \\
   \leq & 
   \Big\| (\mathbf{I} + \gamma^{-1} \mathbf{\Lambda}_{m})^{-1} \Big\|
  \Big\|\gamma^{-1} {\mathbf{\Lambda}}_m^{\frac{1}{2}}
   (\hat{\mathbf{D}}_m^{\rm tr} - \mathbf{I})
   {\mathbf{\Lambda}}_m^{\frac{1}{2}} \Big\|
    \Big\|  (\mathbf{I}+ \gamma^{-1} \mathbf{\Lambda}_{m}^{\frac{1}{2}} 
  \hat{\mathbf{D}}_m^{\rm tr} 
  \mathbf{\Lambda}_{m}^{\frac{1}{2}})^{-1}
   \Big\| \\
   \leq &
   \Big\|\gamma^{-1} {\mathbf{\Lambda}}_m^{\frac{1}{2}}
   (\hat{\mathbf{D}}_m^{\rm tr} - \mathbf{I})
   {\mathbf{\Lambda}}_m^{\frac{1}{2}} \Big\|
   \numberthis
\end{align*}

Based on Lemma~\ref{lm:concentrate_sample_cov}, we can bound $I_3$ similarly as $I_1$. 
There exists a constant $c$ that for all $t \geq 1$, with probability at least $1 - e^{-t}$ we have
\begin{align}
  I_3 
  \leq & \Big\|\gamma^{-1} {\mathbf{\Lambda}}_m^{\frac{1}{2}}
   (\hat{\mathbf{D}}_m^{\rm tr} - \mathbf{I})
   {\mathbf{\Lambda}}_m^{\frac{1}{2}} \Big\|
   \leq c \gamma^{-1}
   \lambda_{m1} c_{r_0}({r}({\mathbf{\Lambda}_m}), N_{\rm tr}, t).
\end{align}

Combining the bounds for $I_2$, $I_3$ and applying union bound over training and validation data for all tasks, 
for $\gamma > 0$, and for $1 \leq t \leq N_{\rm va}$,
  there exists $c > 1$ such that
the following holds with probability at least $1 - 2M e^{-t}$ 
\begin{align*}
\big\| (\bar{\mathbf{X}}^{\rm im}- \tilde{\mathbf{X}}^{\rm im}) ^{\top}
  (\bar{\mathbf{X}}^{\rm im} - \tilde{\mathbf{X}}^{\rm im}) \big \|
  \leq & c \gamma^{-2} N_{\rm va}  \sum_{m=1}^{M} \lambda_{m1}^3
   c_{r_0}^2({r}({\mathbf{\Lambda}_m}), N_{\rm tr}, t).
\end{align*}
This completes the proof for Lemma~\ref{lm:singularvalues_diff_tilde_bar}.
\end{proof}

\begin{lemma}[Bound of $\|\bLam^{\frac{1}{2}} \bZ^{\top} \bZ \bLam^{\frac{1}{2}}\|$, $\|\bZ \bLam \bZ^{\top}\|$ and $\|\bZ \bLam^{\frac{1}{2}}\|$]
\label{lm:bound_Q_hat_norm}
  Let $\bZ \in \mathbb{R}^{N\times d}$, consists of centered, independent, $\sigma_x$-subGaussian entries.
  And $\bLam = \mathrm{diag}(\lambda_1, \dots, \lambda_d) \in \mathbb{R}^{d\times d}$ be a positive definite diagonal matrix with $\lambda_1 \geq \lambda_2 \geq \dots \geq \lambda_d$.
  Then $\|\bLam^{\frac{1}{2}} \bZ^{\top} \bZ \bLam^{\frac{1}{2}}\|=\|\bZ \bLam \bZ^{\top}\| $
  , and there exists a constant $c>0$ such that, for all $t \geq 1$, with probability at least $1-e^{-t}$
\begin{align*}
  \|\bZ \bLam \bZ^{\top}\|
  \leq &
  N \lambda_{1} +
  c N \lambda_{1} c_{r_0}({r}_0({\mathbf{\Lambda}}), N, t) ,~~
  \|\bZ \bLam^{\frac{1}{2}}\|
  \leq 
  \sqrt{N \lambda_{1}}
  \Big(1 +
    c c_{r_0}({r}_0({\mathbf{\Lambda}}), N, t)\Big)^{\frac{1}{2}}.
\end{align*}

\end{lemma}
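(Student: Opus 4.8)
The plan is to reduce both displayed bounds to the single sample-covariance concentration estimate in Lemma~\ref{lm:concentrate_sample_cov}, after first disposing of the norm identity by a purely algebraic argument. The whole lemma is really a corollary of that concentration result together with the elementary fact that $\|\bA^{\top}\bA\|=\|\bA\bA^{\top}\|=\|\bA\|^2$.

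First I would establish the identity $\|\bLam^{\frac{1}{2}} \bZ^{\top} \bZ \bLam^{\frac{1}{2}}\| = \|\bZ \bLam \bZ^{\top}\|$. Setting $\bA = \bZ \bLam^{\frac{1}{2}}$, one has $\bA^{\top}\bA = \bLam^{\frac{1}{2}} \bZ^{\top} \bZ \bLam^{\frac{1}{2}}$ and $\bA \bA^{\top} = \bZ \bLam \bZ^{\top}$; since $\bA^{\top}\bA$ and $\bA \bA^{\top}$ share the same nonzero eigenvalues, their operator norms coincide, both equal to $\|\bA\|^2 = \sigma_1^2(\bA)$. This step simultaneously yields $\|\bZ \bLam^{\frac{1}{2}}\| = \|\bZ \bLam \bZ^{\top}\|^{\frac{1}{2}}$, so the second displayed bound will follow from the first merely by taking a square root.

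Next I would view the rows of $\bZ\bLam^{\frac{1}{2}}$ as $N$ i.i.d.\ copies of $x = \bLam^{\frac{1}{2}} z$, where $z$ has centered, independent, $\sigma_x$-subGaussian entries with identity covariance (Assumption~\ref{assmp:subGaussian_data}). Then $\mathbb{E}[xx^{\top}] = \bLam$ and the sample covariance is $\hat{\bLam} = \frac{1}{N} \bLam^{\frac{1}{2}} \bZ^{\top} \bZ \bLam^{\frac{1}{2}}$. Applying Lemma~\ref{lm:concentrate_sample_cov} with $\mathbf{\Sigma} = \bLam$ (so $\|\mathbf{\Sigma}\| = \lambda_1$), for every $t \geq 1$ with probability at least $1 - e^{-t}$,
\begin{align*}
  \Big\| \tfrac{1}{N} \bLam^{\frac{1}{2}} \bZ^{\top} \bZ \bLam^{\frac{1}{2}} - \bLam \Big\| \leq c \lambda_1 \max\Big\{ \sqrt{\tfrac{r(\bLam)}{N}}, \tfrac{r(\bLam)}{N}, \sqrt{\tfrac{t}{N}}, \tfrac{t}{N} \Big\}.
\end{align*}
Here $r(\bLam) = (\mathbb{E}\|x\|)^2/\lambda_1 \leq \mathbb{E}\|x\|^2/\lambda_1 = \mathrm{Tr}(\bLam)/\lambda_1 = r_0(\bLam)$ by Jensen's inequality, and since each argument of the maximum is nondecreasing in its effective-rank slot, the right-hand side is at most $c\lambda_1\, c_{r_0}(r_0(\bLam), N, t)$. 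A triangle inequality then gives $\|\frac{1}{N} \bLam^{\frac{1}{2}} \bZ^{\top} \bZ \bLam^{\frac{1}{2}}\| \leq \lambda_1 + c \lambda_1\, c_{r_0}(r_0(\bLam), N, t)$; multiplying through by $N$ and invoking the norm identity from the first step produces the stated bound on $\|\bZ \bLam \bZ^{\top}\|$, and its square root yields $\|\bZ \bLam^{\frac{1}{2}}\| \leq \sqrt{N\lambda_1}\,(1 + c\, c_{r_0}(r_0(\bLam), N, t))^{\frac{1}{2}}$.

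The only genuinely delicate point will be checking the hypotheses of Lemma~\ref{lm:concentrate_sample_cov}: one must verify that $x = \bLam^{\frac{1}{2}} z$ is subGaussian and pregaussian (weakly square integrable with a Gaussian-comparable norm), which follows from the subGaussianity of the entries of $z$ together with the boundedness of $\mathrm{Tr}(\bLam)$ from Assumption~\ref{assmp:V}, and confirming that the effective rank $r(\bLam)$ appearing in that lemma is dominated by $r_0(\bLam)$ as above so that the cleaner quantity $c_{r_0}$ may replace it. Everything remaining is a mechanical triangle-inequality and square-root manipulation, so I do not expect any substantive difficulty beyond that verification.
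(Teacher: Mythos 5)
Your proposal is correct and follows essentially the same route as the paper's proof: both reduce the bound to the sample-covariance concentration result of Lemma~\ref{lm:concentrate_sample_cov} via the add-and-subtract-$N\bLam$ triangle inequality, and transfer among the three quantities using the identity $\|\bA^{\top}\bA\|=\|\bA\bA^{\top}\|=\|\bA\|^{2}$ with $\bA=\bZ\bLam^{\frac{1}{2}}$. Your explicit verification that $r(\bLam)\leq r_{0}(\bLam)$ and that the maximum defining $c_{r_0}$ is monotone in the effective-rank argument is a detail the paper leaves implicit, but it is the same argument.
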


\begin{proof}
\label{proof:bound_Q_hat_norm}
\begin{align*}
  \|\bLam^{\frac{1}{2}} \bZ^{\top} \bZ \bLam^{\frac{1}{2}}\|
  = \|\bLam^{\frac{1}{2}} \bZ^{\top} \bZ \bLam^{\frac{1}{2}} - N \bLam + N \bLam  \|
  \leq &
  N  \Big\|\frac{1}{N}\bLam^{\frac{1}{2}} \bZ^{\top} \bZ \bLam^{\frac{1}{2}} - \bLam \Big\| + N \|\bLam  \|.
\end{align*}

By Lemma~\ref{lm:concentrate_sample_cov}, we have
there exists a constant $c>0$ such that, for all $t \geq 1$, with probability at least $1-e^{-t}$
  \begin{align*}
    \Big\|\frac{1}{N}\bLam^{\frac{1}{2}} \bZ^{\top} \bZ \bLam^{\frac{1}{2}} - \bLam \Big\|
    \leq c\|\bLam\| c_{r_0}({r}_0({\mathbf{\Lambda}}), N, t).
  \end{align*}

Therefore, there exists a constant $c>0$ such that, for all $t \geq 1$, with probability at least $1-e^{-t}$
\begin{align*}
  \big\| \bLam^{\frac{1}{2}} \bZ^{\top} \bZ \bLam^{\frac{1}{2}}\big\|
  \leq &
  N \lambda_{1} +
  c N \lambda_{1} c_{r_0}({r}_0({\mathbf{\Lambda}}), N, t).
\end{align*}

Because $\|\bZ \bLam \bZ^{\top}\| = \|\bLam^{\frac{1}{2}} \bZ^{\top} \bZ \bLam^{\frac{1}{2}} \|$, and
$\|\bZ \bLam^{\frac{1}{2}}\| = \|\bLam^{\frac{1}{2}} \bZ^{\top} \bZ \bLam^{\frac{1}{2}} \|^{\frac{1}{2}}$, it leads to the conclusion.
\end{proof}

\end{document}